\documentclass[runningheads,orivec]{llncs}

\usepackage[T1]{fontenc}

\usepackage{graphicx}



\usepackage{mathtools}
\usepackage{amsfonts}       
\usepackage{amsmath}
\usepackage{amssymb}
\usepackage{amsthm}
\usepackage{stmaryrd}
\usepackage{accents}

\usepackage{etoolbox} 

\usepackage{microtype}      
\usepackage[prologue,dvipsnames]{xcolor}         

\usepackage{booktabs}       
\usepackage{array}
\usepackage{multirow} 
\usepackage{graphicx}
\usepackage[subrefformat=parens]{subcaption}

\usepackage{tikz}
\usetikzlibrary{arrows}
\usetikzlibrary{arrows.meta}
\usetikzlibrary{backgrounds}
\usetikzlibrary{calc}
\usetikzlibrary{decorations.pathreplacing}
\usetikzlibrary{intersections}
\usetikzlibrary{fit}
\usetikzlibrary{patterns}
\usetikzlibrary{positioning}
\usetikzlibrary{shapes}
\usetikzlibrary{shadows}
\usetikzlibrary{shapes.geometric}
\usetikzlibrary{tikzmark}

\usepackage{paralist}
\usepackage{eqparbox}
\usepackage[inline]{enumitem}

\usepackage{listings}
\usepackage{algorithm}
\usepackage{algorithmic}

\usepackage[square,sort,comma,numbers]{natbib}
\usepackage{url}      
\usepackage[bookmarksnumbered,unicode,colorlinks=true]{hyperref}
\usepackage[capitalize,noabbrev]{cleveref}
\usepackage{orcidlink}

\usepackage[misc,geometry]{ifsym}
\usepackage[disable]{todonotes}
\usepackage{marginnote}
\usepackage{etoc}
\usepackage{comment}
\usepackage{adjustbox}
\usepackage[shortcuts]{extdash} 



\newcommand*{\myhfill}[1]{%
    {
     \hfill#1%
     \parfillskip=0pt \finalhyphendemerits=0 \par}
}

\newcommand*{\myqedd}{\myhfill{$\blacksquare$}}

\newtheorem{condition}{Condition}

\usepackage{letltxmacro}
\LetLtxMacro\oldttfamily\ttfamily
\DeclareRobustCommand{\ttfamily}{\oldttfamily\csname ttsize\endcsname}
\newcommand{\setttsize}[1]{\def\ttsize{#1}}%
\setttsize{\small}

\definecolor[named]{ACMBlue}{cmyk}{1,0.1,0,0.1}
\definecolor[named]{ACMYellow}{cmyk}{0,0.16,1,0}
\definecolor[named]{ACMOrange}{cmyk}{0,0.42,1,0.01}
\definecolor[named]{ACMRed}{cmyk}{0,0.90,0.86,0}
\definecolor[named]{ACMLightBlue}{cmyk}{0.49,0.01,0,0}
\definecolor[named]{ACMGreen}{cmyk}{0.20,0,1,0.19}
\definecolor[named]{ACMPurple}{cmyk}{0.55,1,0,0.15}
\definecolor[named]{ACMDarkBlue}{cmyk}{1,0.58,0,0.21}
\definecolor{CColor}{rgb}{0.01,0.31,0.59}
\definecolor{GGray}{rgb}{0.80,0.90,1}
\definecolor{Shady}{rgb}{0.9,0.9,0.9}
\definecolor{kaistblue}{RGB}{20,135,200}
\definecolor{kaistdarkblue}{RGB}{0,65,145}
\definecolor{urbanablue}{RGB}{19,41,75}
\definecolor{urbanaorange}{RGB}{232,74,39}
\definecolor{drp}{rgb}{0.53,0.15,0.34}


\setlist[itemize,1]{label=$\bullet$}

\newcommand{\listingsttfamily}{\small\oldttfamily}
\lstdefinestyle{prettycode}{
  basicstyle=\listingsttfamily,
  keywordstyle=,
  morekeywords={prog, func, if, else, return},
  keepspaces=true,
  mathescape=true,
}
\lstset{style=prettycode}

\hypersetup{
  colorlinks,
  linkcolor=blue, 
  citecolor=blue, 
  urlcolor =blue, 
  filecolor=blue, 
}


\crefname{assumption}{Assumption}{Assumptions}
\crefname{figure}{Fig{.}}{Figs{.}}
\crefname{table}{Table}{Tables}
\crefname{definition}{Definition}{Definitions}
\crefname{theorem}{Theorem}{Theorems}
\crefname{lemma}{Lemma}{Lemmas}
\crefname{proposition}{Proposition}{Propositions}
\crefname{corollary}{Corollary}{Corollaries}
\crefname{problem}{Problem}{Problems}
\crefname{example}{Example}{Examples}
\crefname{fact}{Fact}{Facts}
\crefname{conjecture}{Conjecture}{Conjectures}
\crefname{remark}{Remark}{Remarks}
\crefname{condition}{Condition}{Conditions}
\crefname{requirement}{Requirement}{Requirements}
\crefname{enumi}{}{}
\crefname{equation}{Eq{.}}{Eqs{.}}

\newcommand{\crefrangeconjunction}{--}
 \AtBeginDocument{%
    \crefname{section}{Section}{Sections}%
    \crefname{appendix}{Appendix}{Appendices}%
    \crefname{subsection}{Section}{Sections}%
    \crefname{figure}{Figure}{Figures}%
}

 \crefformat{section}{#2\S#1#3}
 \Crefformat{section}{#2\S#1#3}
 \crefmultiformat{section}{#2\S#1#3}{ and~#2\S#1#3}{, #2\S#1#3}{, and~#2\S#1#3}
 \Crefmultiformat{section}{#2\S#1#3}{ and~#2\S#1#3}{, #2\S#1#3}{, and~#2\S#1#3}
 \crefrangeformat{section}{#3\S#1#4\crefrangeconjunction{}#5\S#2#6}

 \crefformat{appendix}{#2\S#1#3}
 \Crefformat{appendix}{#2\S#1#3}
 \crefmultiformat{appendix}{#2\S#1#3}{ and~#2\S#1#3}{, #2\S#1#3}{, and~#2\S#1#3}
 \Crefmultiformat{appendix}{#2\S#1#3}{ and~#2\S#1#3}{, #2\S#1#3}{, and~#2\S#1#3}
 \crefrangeformat{appendix}{#3\S#1#4\crefrangeconjunction{}#5\S#2#6}




\newcommand{\mathboldcommand}[1]{\mathbb{#1}}

\newcommand{\bbF}{\mathboldcommand{F}}

\newcommand{\bbI}{\mathboldcommand{I}}

\newcommand{\bbN}{\mathboldcommand{N}}

\newcommand{\bbR}{\mathboldcommand{R}}

\newcommand{\bbZ}{\mathboldcommand{Z}}


\newcommand{\bfb}{\mathbf{b}}

\newcommand{\bfx}{\mathbf{x}}
\newcommand{\bfy}{\mathbf{y}}


\usepackage{euscript}
\newcommand{\mathcalcommand}[1]{\mathcal{#1}}

\newcommand{\mcB}{\mathcalcommand{B}}
\newcommand{\mcC}{\mathcalcommand{C}}
\newcommand{\mcD}{\mathcalcommand{D}}

\newcommand{\mcF}{\mathcalcommand{F}}
\newcommand{\mcG}{\mathcalcommand{G}}

\newcommand{\mcI}{\mathcalcommand{I}}

\newcommand{\mcK}{\mathcalcommand{K}}

\newcommand{\mcN}{\mathcalcommand{N}}

\newcommand{\mcR}{\mathcalcommand{R}}
\newcommand{\mcS}{\mathcalcommand{S}}
\newcommand{\mcT}{\mathcalcommand{T}}

\newcommand{\mcX}{\mathcalcommand{X}}


\DeclareMathAlphabet{\mathpzc}{T1}{pzc}{m}{it}





\allowdisplaybreaks

\newcommand{\tcr}[1]{#1} 

\newcommand{\cmt}[3]{{{\color{#3}{\bf [}{\bf #1:} {\it #2}{\bf ]}}}}

\newcommand{\wl}[1]{\cmt{WL}{#1}{purple}}

\newcommand*{\commentout}[1]{}


\let\svthefootnote\thefootnote
\newcommand\freefootnote[1]{
  \let\thefootnote\relax%
  \footnotetext{#1}%
  \let\thefootnote\svthefootnote%
}

\definecolor{lred}{rgb}{1.0, 0.5, 0.5}
\definecolor{lorange}{rgb}{1.00, 0.90, 0.20}
\definecolor{lgreen}{rgb}{0.35, 0.95, 0.35}
\definecolor{lime}{rgb}{0.9, 1.0, 0.6}
\definecolor{lblue}{rgb}{1.0, 0.85, 0.75}

\makeatletter

\newcommand*\wthelper[2]{%
        \hbox{\dimen@\accentfontxheight#1%
                \accentfontxheight#11.1\dimen@
                $\m@th#1\widetilde{#2}$%
                \accentfontxheight#1\dimen@
        }%
}
\newcommand*\accentfontxheight[1]{%
        \fontdimen5\ifx#1\displaystyle
                \textfont
        \else\ifx#1\textstyle
                \textfont
        \else\ifx#1\scriptstyle
                \scriptfont
        \else
                \scriptscriptfont
        \fi\fi\fi3
}

\newcommand*\whhelper[2]{%
        \hbox{\dimen@\accentfontxheight#1%
                \accentfontxheight#11.2\dimen@
                $\m@th#1\widehat{#2}$%
                \accentfontxheight#1\dimen@
        }%
}
\makeatother

\makeatletter
\newcommand{\oset}[3][0ex]{%
  \mathrel{\mathop{#3}\limits^{
    \vbox to#1{\kern-3\ex@
    \hbox{$\scriptstyle#2$}\vss}}}}
\makeatother

\newcommand{\newhat}{\scalebox{1.3}[.90]{\trimbox{0pt 1.2ex}{\normalfont\textasciicircum}}}
\renewcommand{\widehat}[1]{\accentset{\newhat}{#1}}

\newcommand*{\defeq}{\coloneq}

\newcommand*{\relu}{\mathrm{ReLU}}

\newcommand*{\lrelu}{\mathrm{LeakyReLU}}

\newcommand*{\SoftPlus}{\mathrm{softplus}}
\newcommand*{\mish}{\mathrm{Mish}}
\newcommand*{\gelu}{\mathrm{GELU}}

\newcommand*{\Sigmoid}{\mathrm{sigmoid}}

\newcommand*{\elu}{\mathrm{ELU}}

\newcommand*{\class}{\mathrm{class}}


\DeclareMathOperator*{\argmax}{arg\,max}
\DeclareMathOperator*{\argmin}{arg\,min}
\newcommand*{\fpq}{\mathbb{F}}
\newcommand*{\efpq}{{\overline{\mathbb{F}}}}
\newcommand*{\fmin}{\omega}
\newcommand*{\fmax}{\Omega}
\newcommand*{\feps}{\varepsilon}

\newcommand*{\round}[1]{ \mathrm{rnd}{\ifstrempty{#1} {} {({#1})}} }
\newcommand*{\aff}{{\mathrm{aff}}}

\newcommand*{\ceilZ}[1]{ \left\lceil{#1}\right\rceil_{\mathbb{Z}} }
\newcommand*{\floorZ}[1]{ \left\lfloor{#1}\right\rfloor_{\mathbb{Z}} }

\makeatletter
\def\moverlay{\mathpalette\mov@rlay}
\def\mov@rlay#1#2{\leavevmode\vtop{%
   \baselineskip\z@skip \lineskiplimit-\maxdimen
   \ialign{\hfil$\m@th#1##$\hfil\cr#2\crcr}}}
\newcommand{\charfusion}[3][\mathord]{
    #1{\ifx#1\mathop\vphantom{#2}\fi
        \mathpalette\mov@rlay{#2\cr#3}
      }
    \ifx#1\mathop\expandafter\displaylimits\fi}
\makeatother

\newcommand{\lrp}[1]{\left({#1}\right)}

\newcommand{\emin}{\mathfrak{e}_{\min}}
\newcommand{\emax}{\mathfrak{e}_{\max}}

\newcommand*{\mant}[1]{\mathfrak{s}_{#1}}
\newcommand*{\expo}[1]{\mathfrak{e}_{#1}}
\newcommand*{\mantd}[2]{\mathfrak{s}_{#1,#2}}
\newcommand{\nan}{\bot} 

\newcommand*{\intv}[1]{\langle {#1} \rangle}

\newcommand*{\conc}[1]{\gamma\left(#1\right)}

\newcommand{\mbit}{M}
\newcommand{\ebit}{E}
\newcommand{\lips}{\lambda}

\newcommand{\Mod}[1]{\ (\mathrm{mod}\ #1)}
\DeclarePairedDelimiter\set{\lbrace}{\rbrace}
\DeclarePairedDelimiter\abs{|}{|}
\DeclarePairedDelimiter\paren{(}{)}

\DeclarePairedDelimiter\ceil{\lceil}{\rceil}
\DeclarePairedDelimiter\floor{\lfloor}{\rfloor}

\makeatletter
\newcommand*{\sumcirc}{%
  \DOTSB
  \mathop{
    \mathchoice
      {\rlap{\kern.1em\raisebox{-.25em}{\scalebox{2.0}{$\circ$}}}{\sum}}
      {\vcenter{\rlap{\kern.0em\raisebox{-.25em}{\scalebox{2.0}{$\scriptstyle\circ$}}}}{\sum}}
      {\sum}{\sum}
  }\slimits@
}
\renewcommand*{\bigoplus}{\sumcirc}
\makeatother

\usepackage{color}

\urlstyle{rm}

\begin{document}

\title{Floating-Point Neural Networks Are\\ Provably Robust Universal Approximators}
\titlerunning{Floating-Point Networks Are Provably Robust Universal Approximators}

\def\corresp{\unskip$^{\mbox{\tiny(\Letter)}}$}

\author{%
Geonho Hwang\inst{1}${}^{\star}$\orcidlink{0000-0001-7137-426X}
\and
Wonyeol Lee\inst{2}${}^{\star}$\orcidlink{0000-0003-0301-0872}
\and
Yeachan Park\inst{3}\orcidlink{0000-0002-4211-6226}
\and
\\
Sejun Park\inst{4}\corresp\orcidlink{0000-0003-1580-5664}
\and
Feras Saad\inst{5}\orcidlink{0000-0002-0505-795X}
\protect\freefootnote{${}^{\star}$Equal contribution.~ {\corresp}Corresponding author.}
\protect\freefootnote{This manuscript is the full version of a conference paper appearing in CAV 2025.}
}
\authorrunning{G. Hwang et al.}

\institute{%
GIST, Gwangju, Republic of Korea
\\ \email{hgh2134@gist.ac.kr}
\and
POSTECH, Pohang, Republic of Korea
\\ \email{wonyeol.lee@postech.ac.kr}
\and
Sejong University, Seoul, Republic of Korea
\\ \email{ychpark@sejong.ac.kr}
\and
Korea University, Seoul, Republic of Korea
\\ \email{sejun.park000@gmail.com}
\\
\and
Carnegie Mellon University, Pittsburgh, PA, USA
\\ \email{fsaad@cmu.edu}
}

\maketitle              

\begin{abstract}
The classical universal approximation (UA) theorem for neural networks
establishes mild conditions under which a feedforward neural network
can approximate a continuous function $f$ with arbitrary accuracy.
A recent result shows that neural networks also enjoy a more general
\textit{interval} universal approximation (IUA) theorem, in the sense that
the abstract interpretation semantics of the network using the
interval domain can approximate the direct image map of $f$
(i.e., the result of applying $f$ to a set of inputs)
with arbitrary accuracy.
These theorems, however, rest on the unrealistic assumption that the neural
network computes over infinitely precise real numbers, whereas their
software implementations in practice compute over finite-precision
floating-point numbers.
An open question is \mbox{whether the IUA theorem still holds in the floating-point
setting.}

This paper introduces the first IUA theorem for \textit{floating-point} neural networks
that proves their remarkable ability to \textit{perfectly capture}
the direct image map of any rounded target function $f$,
showing no limits exist on their expressiveness.
Our IUA theorem in the floating-point setting exhibits material differences
from the real-valued setting, which reflects the fundamental distinctions
between these two computational models.
This theorem also implies surprising corollaries, which include
\begin{enumerate*}[label=(\roman*)]
\item the existence of \textit{provably robust} floating-point neural networks; and
\item the \textit{computational completeness} of the class of straight-line programs
that use only floating-point additions and multiplications for the class of
all floating-point programs that halt.
\end{enumerate*}
\keywords{Neural networks \and Robust machine learning \and Floating point \and Universal approximation \and Abstract interpretation.}
\end{abstract}

\setcounter{tocdepth}{2}
\AddToHook{cmd/appendix/before}{\gdef\theHsection{\Alph{section}}}


\section{Introduction}
\label{sec:intro}

\paragraph{\bf Background.}

Despite the remarkable success of neural networks on diverse tasks, these
models often lack \emph{robustness} and are subject to adversarial attacks.
Slight perturbations to the network inputs can cause the network to
produce significantly different outputs~\citep{SzegedyZS13,
GoodfellowSS14}, raising serious concerns in safety-critical domains such
as healthcare~\citep{FinlaysonBI19}, cybersecurity~\citep{Rosenberg2021},
{and autonomous driving~\citep{EykholtEF18}.}

These issues have brought about significant advances in new algorithms for
\textit{robustness verification}~\citep{Katz2017,Albarghouth2021,Liu2021},
which prove the robustness of a given network;
and \textit{robust training}~\citep{Raghunathan2018,Wong2018,Mirman2018,GowalDSBQUAMK19},
which train a network to be {provably} robust.
%
But despite these advances, provably robust networks do not yet achieve state-of-the-art accuracy~\citep{Li2023}.
For example, on the CIFAR-10 image classification benchmark,
non-robust networks achieve over 99\% accuracy, whereas
the best provably robust networks achieve less than 63\%~\citep{Li2023b}.
This performance gap has prompted researchers to explore whether there exists
fundamental limits on the \emph{expressiveness} of provably robust
networks that restrict their accuracy~\citep{baader24thesis}.

Surprisingly, it has been proven that no such fundamental limit exists.
Informally, for any continuous function $f: \bbR^d \to \bbR$ and compact set
$\mcK \subset \bbR^d$, there exists a neural network
$g: \bbR^d \to \bbR$ whose robustness properties are ``sufficiently close'' to
those of $f$ over $\mcK$ and easily provable using abstract interpretation~\citep{Cousot1977}
over the interval domain.
This result, known as the \emph{interval universal approximation} (IUA)
theorem~\citep{baader20,wang2022interval}, generalizes the classical universal approximation (UA)
theorem~\citep{cybenko89, hornik89} from pointwise-values to intervals, 
and confirms that provably robust networks
{do not suffer from a fundamental loss of expressive power.}

\paragraph{\bf Key challenges.}

The IUA theorem in \cite{baader20,wang2022interval}
overlooks a critical aspect of real-world computation, which is
the use of \emph{floating-point arithmetic} instead of real arithmetic.
It assumes that neural networks and interval analyses
operate on arbitrary real numbers with exact operations.
In reality, numerical implementations of neural networks use {floating-point numbers} and
operations~\citep[\S4.1]{GoodfellowBC16}, sometimes with extremely
low-precision to speed-up performance~\citep{Hubara2017,dettmers2023}.
This discrepancy means that the existing IUA theorem does not directly
apply to neural networks that are implemented in software and {actually used in practice.}


\begin{figure}[t]

\makeatletter
\define@key{Template}{curve}{\def\TemplateCurve{#1}}
\define@key{Template}{curveF}{\def\TemplateCurveF{#1}}
\define@key{Template}{curveS}{\def\TemplateCurveS{#1}}
\def\TemplateDefaults{%
  \setkeys{Template}{%
    curve=black,
    curveF=$f$,
    curveS=solid,
  }}
\makeatother

\newenvironment{Template}[1][]{
  \TemplateDefaults
  \setkeys{Template}{#1}
  \begin{tikzpicture}
  \def\xscale{5}
  \def\yscale{2.5}
  \def\yoff{0.15}

  \tikzset{point/.style={circle,inner sep=1pt}}

  \draw[-latex, line width=.25mm] (0,0) -- (1.1*\xscale,0);
  \draw[-latex, line width=.25mm] (-\yoff,0) -- (-\yoff,1.1*\yscale);

  \foreach \i in {0, 1/8, 1/4, 1/2, 1} {
    \draw[thick] ([xshift=-0.1cm]-\yoff,\i*\yscale) -- ([xshift=+0.1cm]-\yoff,\i*\yscale);
    \draw[thick] ([yshift=-0.15cm]\i*\xscale,0) -- ([yshift=+0.15cm]\i*\xscale,0);
  }

  \foreach \i in {1/32, 2/32, 3/32, 5/32, 6/32, 7/32,  5/16, 6/16, 7/16, 5/8, 6/8, 7/8} {
      \draw[thick] ([xshift=-0.05cm]-\yoff,\i*\yscale) -- ([xshift=+0.05cm]-\yoff,\i*\yscale);
      \draw[thick] (\i*\xscale,-0.1) -- (\i*\xscale, +0.1);
  }

  \node[name=p1,at={(0,\yscale)},coordinate]{};
  \node[name=p2,at={(0.5*\xscale,0.5*\yscale)},coordinate]{};
  \node[name=p3,at={(0.9*\xscale,0.5*\yscale)},coordinate]{};
  \node[name=p11,at={(0.25*\xscale,1.1*\yscale)},coordinate]{};
  \node[name=p12,at={(0.3*\xscale,-0.17*\yscale)},coordinate]{};
  \node[name=p21,at={(0.7*\xscale,1.25*\yscale)},coordinate]{};
  \node[name=p22,at={(0.7*\xscale,0.5*\yscale)},coordinate]{};

  \draw[name path=Seg1,thick,draw=\TemplateCurve,\TemplateCurveS]
    (p1)
    .. controls (p11) and (p12) ..
    (p2)
    .. controls (p21) and (p22) ..
    (p3)
    node[pos=1,below]{\TemplateCurveF};

  \def\xL{7/32 * \xscale}
  \def\xR{5/16 * \xscale}

  \draw[thick, decoration={brace, raise=0.15cm}, decorate]
    (\xL,0) -- (\xR,0)
    node[anchor=south,pos=0.5,yshift=.2cm]{$\mcB$};

  \draw[name path=xL,draw=none] (\xL, 0) -- (\xL, \yscale);
  \draw[name path=xR,draw=none] (\xR, 0) -- (\xR, \yscale);
}{\end{tikzpicture}}

\begin{subfigure}{.48\linewidth}
\centering
\begin{Template}

  \path[name intersections={of=Seg1 and xL,by=iL}] node[at=(iL),point,fill=black]{};
  \path[name intersections={of=Seg1 and xR,by=iR}] node[at=(iR),point,fill=black]{};

  \draw[color=black,ultra thick] (\xR,0) -- (\xL,0)
    node[pos=0,point,fill=black]{} node[pos=1,point,fill=black]{};

  \node[name=yL, at={(-\yoff,0 |- iL)},point,fill=blue]{};
  \node[name=yR, at={(-\yoff,0 |- iR)},point,fill=blue]{};
  \draw[color=blue,ultra thick] (yL.center) -- (yR.center);

  \node[name=yLa, xshift=-\yoff cm, yshift=-.25cm, point,fill=red, at={(-\yoff,0 |- iL)}]{};
  \node[name=yRa, xshift=-\yoff cm, yshift=-0.175cm, point,fill=red, at={(-\yoff,0 |- iR)}]{};
  \draw[color=red,ultra thick] (yLa.center) -- (yRa.center);

  \foreach \n in {yR, yL}{
  \draw[decorate, decoration={brace, amplitude=.075cm, raise=0.1cm}]
    (\n.center) -- (\n a.center -| \n.center)
    node[pos=0.5,right,font=\scriptsize, xshift=0.1cm]{${\le} \delta$};
  }

  \node[at={(\xscale,0.975*\yscale)},anchor=north east,draw=none, inner sep=0pt] {
    \begin{tabular}{ll}
    \tikz{\draw[color=blue,thick](0,0)--(0.25,0) node[pos=1,right]{$f(\mcB)$}} \\
    \tikz{\draw[color=red,thick](0,0)--(0.25,0) node[pos=1,right]{$\nu^{\sharp}(\mcB)$}}
    \end{tabular}
  };
\end{Template}
\captionsetup{skip=0pt}
\caption{$f : \bbR^d \to \bbR$ is a continuous target function; $\nu : \bbR^d \to \bbR$ is a neural network.}
\label{fig:iua-statement-real}
\end{subfigure}
\hfill
%
\begin{subfigure}{.48\linewidth}
\centering
\begin{Template}[curve=black!30!white, curveF=$\widehat{f}$, curveS=densely dashed]

  \begin{scope}[on background layer]
  \foreach \x [count = \i] in {
      0,    1/32, 2/32, 3/32,
      1/8,  5/32, 6/32, 7/32,
      1/4,  5/16, 6/16, 7/16,
      1/2,  5/8,  6/8,  7/8,
      1} {
      \draw[name path global/.expanded=guidex-\i,draw=gray!20!white] (\x*\xscale, 0) -- (\x*\xscale, \yscale);
      \draw[name path global/.expanded=guidey-\i,draw=gray!20!white] (-\yoff,\x*\yscale) -- (\xscale, \x*\yscale);
    }
   \foreach \x [count = \i] in {
      0,    1/32, 2/32, 3/32,
      1/8,  5/32, 6/32, 7/32,
      1/4,  5/16, 6/16, 7/16,
      1/2,  5/8,  6/8,  7/8,
      1} {
   }
  \end{scope}

  \begin{scope}
    \node[point,fill=black,at={(0/32*\xscale, 8/8*\yscale)}] {};
    \node[point,fill=black,at={(1/32*\xscale, 8/8*\yscale)}] {};
    \node[point,fill=black,at={(2/32*\xscale, 8/8*\yscale)}] {};
    \node[point,fill=black,at={(3/32*\xscale, 8/8*\yscale)}] {};
    \node[point,fill=black,at={(4/32*\xscale, 7/8*\yscale)}] {};
    \node[point,fill=black,at={(5/32*\xscale, 7/8*\yscale)}] {};
    \node[point,fill=black,at={(6/32*\xscale, 6/8*\yscale)}] {};
    \node[point,fill=black,at={(7/32*\xscale, 6/8*\yscale)}] {};

    \node[point,fill=black,at={(4/16*\xscale, 5/8*\yscale)}] {};
    \node[point,fill=black,at={(5/16*\xscale, 7/16*\yscale)}] {};
    \node[point,fill=black,at={(6/16*\xscale, 5/16*\yscale)}] {};
    \node[point,fill=black,at={(7/16*\xscale, 5/16*\yscale)}] {};

    \node[point,fill=black,at={(4/8*\xscale, 4/8*\yscale)}] {};
    \node[point,fill=black,at={(5/8*\xscale, 7/8*\yscale)}] {};
    \node[point,fill=black,at={(6/8*\xscale, 5/8*\yscale)}] {};
    \node[point,fill=black,at={(7/8*\xscale, 4/8*\yscale)}] {};
  \end{scope}

  \foreach \x in {\xL, 1/4 * \xscale, \xR} {
    \node[point,fill=black,at={(\x,0)}]{};
  }

  \node[name=yLa,xshift=-1.25*\yoff cm, at={(-\yoff,0 7/16 * \yscale)}, point,fill=red]{};
  \node[name=yLb,xshift=-1.25*\yoff cm, at={(-\yoff,0 6/8 * \yscale)}, point,fill=red]{};
  \draw[color=red,ultra thick,densely dotted] (yLa.center) -- (yLb.center);
  \foreach \shift/\color in {0/blue}{
    \node[xshift=-\shift cm, at={(-\yoff,0 7/16 * \yscale)}, point,fill=\color]{};
    \node[xshift=-\shift cm, at={(-\yoff,0 5/8 * \yscale)}, point,fill=\color]{};
    \node[xshift=-\shift cm, at={(-\yoff,0 6/8 * \yscale)}, point,fill=\color]{};
  }

  \node[at={(\xscale,0.975*\yscale)},anchor=north east,draw=none, inner sep=0pt] {
    \begin{tabular}{ll}
    \tikz{\draw[color=blue,thick](0,0)--(0.25,0) node[pos=1,right]{$\widehat{f}(\mcB)$}} \\
    \tikz{\draw[color=red,thick](0,0)--(0.25,0) node[pos=1,right]{$\nu^{\sharp}(\mcB)$}}
    \end{tabular}
  };
\end{Template}
\captionsetup{skip=0pt}
\caption{$\widehat{f} : \efpq{}^d \to \efpq$ is a rounded target function; $\nu : \efpq{}^d \to \efpq$ is a neural network.}
\label{fig:iua-statement-float}
\end{subfigure}

\caption{%
  Illustration and comparison of the IUA theorems.
  \subref{fig:iua-statement-real}
  In the real-valued setting, the neural network abstract interpretation $\nu^{\sharp}$ forms a $\delta$-approximation
  to the image map of $f$.
  \subref{fig:iua-statement-float}
  In the floating-point setting,
  $\nu^{\sharp}$ exactly computes the upper and lower points of the image map of $f$:
  $\nu^{\sharp}(\mcB) = [\min \protect\widehat{f}(\mcB), \max \protect\widehat{f}(\mcB)] \cap \efpq$.
}
\label{fig:iua-statement}

\end{figure}

To our knowledge, no prior work has studied the robustness and
expressiveness properties of floating-point neural networks or established
an IUA theorem for them.
The unique complexities of floating-point arithmetic introduce daunting
challenges to any such theoretical study.
For example, floating-point numbers are discretized and bounded, and their
operations have rounding errors that become infinite in cases of overflow.
Whereas the IUA proof over reals requires very large real numbers for
network weights or intermediate computations, these values cannot be
represented as floats.
Naively rounding reals to floats causes approximation errors that
invalidate many steps of the IUA proofs in \citep{baader20,wang2022interval}.

\paragraph{\bf This work.}

We formally study the IUA theorem over floating point,
as a step toward bridging the theory and practice of provably robust neural networks.

We first formulate a floating-point analog of the IUA theorem, considering the details of floating point.
Let $f : \bbR^d \to \bbR$ be a target function to approximate.
Since all floating-point neural networks are functions between floating-point values,
they can at-best approximate the rounded version $\widehat{f} : \efpq{}^d \to \efpq$
of $f$ over floats, where $\efpq{}$ denotes the set of all floats.
The floating-point version of the IUA theorem asks the following: is there a
floating-point neural network $\nu: \efpq{}^d \to \efpq$ whose \emph{interval semantics} is
arbitrarily close to the \emph{direct image map} of the rounded target
$\widehat{f}$ over $[-1,1]^d$?
More formally, this property means that
for any $\delta > 0$, there exists a neural network $\nu$ such that
for all boxes $\mcB \subseteq [-1, 1]^d \cap \efpq{}^d$,
\begin{align}
  & \abs[\big]{\min \nu^\sharp(\mcB) - \min \widehat{f}(\mcB)} \leq \delta,
  &
  & \abs[\big]{\max \nu^\sharp(\mcB) - \max \widehat{f}(\mcB)} \leq \delta.
  \label{eq:approximate-direct-image}
\end{align}
In \cref{eq:approximate-direct-image},
$\nu^\sharp(\mcB)$ is the result of abstract interpretation
of $\mcB$ under $\nu$ (using the interval domain),
and ${\widehat{f}}(\mcB) \defeq \set{\widehat{f}(\bfx) \mid \bfx \in \mcB } \subset \bbR$
is the image of $\mcB$ under $\widehat{f}$.

We prove that the IUA theorem holds for floating-point networks,
despite all their numerical complexities.
In particular, we show that for \emph{any} target function $f$ and a
\emph{large} class of activation functions $\sigma$, including most
practical ones (e.g., ReLU, GELU, sigmoid), it is possible to find a
floating-point network $\nu$ with $\sigma$ whose interval semantics
\emph{exactly} matches the direct image map of the rounded target
$\widehat{f}$ over $[-1,1]^d \cap \efpq{}^d$ (\cref{fig:iua-statement}).
This result implies that no fundamental limit exists on the expressiveness of
provably robust floating-point neural networks.

Our result is considerably different from the previous IUA theorem over the
reals in three key aspects.
The previous theorem considers continuous target functions; requires
a restricted class of so-called squashable activation functions;
and finds networks that are arbitrarily close to target functions.
In contrast, our result considers arbitrary target functions;
allows almost all activation functions used in practice;
and find networks that are precisely equal to (rounded) target functions.
Our IUA theorem even holds for the \emph{identity} activation function,
which is not the case for the traditional IUA or UA theorems over real numbers,
because any network that uses the identity activation is affine over the reals.

As a corollary of our main theorem,
we prove the following existence of provably robust floating-point neural networks:
given an ideal floating-point classifier $\widehat{f}$ (not necessarily a neural network)
that is robust (not necessarily provably robust),
we can find a floating-point neural network $\nu$
that is \emph{identical} to $\widehat{f}$ and is \emph{provably} robust with interval analysis.
We also prove a nontrivial result about ``floating-point completeness'',
as an unexpected byproduct of the main theorem.
Specifically, we show that the class of straight-line floating-point
programs that use only floating-point $+$ and $\times$ operations is
\emph{floating-point interval-complete}: it can simulate \emph{any}
terminating floating-point program that takes finite floats as input and
returns arbitrary floats as output.
The same statement holds under the interval semantics.
To our knowledge, no prior work has identified such a small yet powerful
class of floating-point programs, suggesting that this corollary is of
\mbox{significant independent interest to the extensive floating-point literature.}

\paragraph{\bf Contributions.}

This article makes the following contributions:

\begin{itemize}
\item
  We formalize a \emph{floating-point} analog of the \emph{interval universal approximation} (IUA) theorem,
  to bridge the theory and practice of \emph{provably robust} neural networks
  (\cref{sec:prelim}, \cref{sec:iua}).
  It asks if there is a floating-point network
  whose interval semantics is close to the direct image map
  of a given target function.

\item
  We prove the floating-point version of the IUA theorem does hold,
  for \emph{all} target functions and a \emph{broad} class of activation functions
  that includes most of the activations used in practice
  (\cref{sec:iua-conditions}, \cref{sec:iua-main-result}, \cref{sec:iua-proof}).
  This shows no fundamental limit exists on the expressiveness of provably robust networks over floats.

\item
  We rigorously analyze the essential differences
  between the previous IUA theorem over reals and our IUA theorem over floats (\cref{sec:comparison}).
  Unlike real-valued networks, floating-point networks can \emph{perfectly} capture
  the behavior of \emph{any} rounded target function,
  even with the \emph{identity} activation function.

\item
  We prove that if there exists an ideal robust floating-point classifier,
  then one can always find a \emph{provably} robust floating-point network
  that makes \emph{exactly} the same prediction as the classifier
  (\cref{sec:provable-robustness}).

\item
  We prove that the set of straight-line floating-point programs with only
  $(+,\times)$ is \emph{floating-point interval-complete}: it can simulate
  \emph{any} terminating floating-point programs that take finite inputs
  and return finite/infinite outputs, under the usual floating-point
  semantics and interval semantics (\cref{sec:completeness}).
\end{itemize}


\section{Preliminaries}
\label{sec:prelim}

This section introduces floating-point arithmetic (\cref{sec:floating-point}),
neural networks that compute over floating-point numbers (\cref{sec:neural-networks}),
and interval analysis for neural networks (\cref{sec:interval-semantics}).
Throughout the paper, we define $\bbN$ to be the set of positive integers
and let $[n] \defeq \set{1,\dots,n}$ for each $n \in \bbN$.

\subsection{Floating Point}
\label{sec:floating-point}

\paragraph{\bf Floating-point numbers.}
Let $E, M \in \bbN$.
The set of \emph{finite} floating-point numbers with $E$-bit exponent and \tcr{$(M+1)$-bit significand} 
is typically defined by
\begin{align}
\label{eq:fpq}
\fpq^E_M
\defeq \set[\big]{
  (-1)^b \times (s_0.s_1 \ldots s_\mbit)_2 \times 2^e
  \,\big|\, b, s_i \in\{0, 1\}, e \in \{\emin, ..., \emax\}
},
\end{align}
where $\emin \defeq -2^{\ebit-1}+2$ and $\emax \defeq 2^{\ebit-1}-1$~\citep{Muller2018}.
The set of \emph{all} floating-point numbers, including non-finite ones, is then defined by
$\efpq{}^E_M \defeq \fpq^E_M \cup \set{-\infty, +\infty, \nan}$,
where $\nan$ denotes NaN (i.e., not-a-number).
For brevity, we call a floating-point number simply a \emph{float},
and write $\fpq^E_M$ and $\efpq{}^E_M$ simply as $\fpq$ and $\efpq$.
In this paper, we assume {$\ebit \ge 5$} 
and {$2^{\ebit-1} \geq M \geq 3$},
which hold for nearly all practical floating-point formats,
including bfloat16~\citep{tensorflow16} and all the formats defined in the IEEE-754 standard~\citep{ieee754}
such as float16, float32, and float64.

We introduce several notations and terms related to finite floats.
First, we define three key constants:
the \emph{smallest} positive float $\fmin \defeq 2^{\emin-\mbit}$,
the \emph{largest} positive float $\fmax \defeq 2^{\emax} (2-2^{-\mbit})$,
and the \emph{machine epsilon} $\feps\defeq2^{-\mbit-1}$.
%
Next, consider a finite float $x\in\fpq$.
We call $x$ a \emph{subnormal} number if ${0< \abs{x} < 2^{\emin}}$,
and a \emph{normal} number otherwise.
The \emph{exponent} and \emph{significand} of $x$ are defined by
$\expo{x} \defeq \max\set{\floor{\log_2 \abs{x}}, \emin} \in [\emin, \emax]$ and
$\mant{x} \defeq \abs{x} / 2^{\expo{x}} \in [0, 2)$.
We use $\mantd{x}{0},\ldots,\mantd{x}{\mbit}$
to denote the binary expansion of $\mant{x}$, i.e.,
$(\mantd{x}{0}.\,\mantd{x}{1}\dots\mantd{x}{\mbit})_2 = \mant{x}$
with $\mantd{x}{i} \in \set{0,1}$.
The \emph{predecessor} and \emph{successor} of $x$ in $\efpq$ are written as
$x^- \defeq \max\set{y \in \efpq \setminus \set{\nan} \mid x > y}$ and
$x^+ \defeq \min\set{y \in \efpq \setminus \set{\nan} \mid x < y}$.

\paragraph{\bf Floating-point operations.}
We define the \emph{rounding function}
$\round{} : \bbR \cup \{-\infty, +\infty\} \to \efpq$ as follows:
$\round{x} \defeq -\infty$ if $x \in [-\infty, -\fmax-c]$,
$\round{x} \defeq \argmin_{y\in\fpq}|y-x|$ if $x\in(-\fmax-c, \fmax+c)$, and
$\round{x} \defeq +\infty$ if $x \in [\fmax+c, +\infty]$,
where $c \defeq 2^{\emax}\feps$ and
$\argmin$ breaks ties by choosing a float $y$ with $\mantd{y}{\mbit} = 0$.
This function corresponds to the rounding mode
``round to nearest (ties to even)'',
which is the default rounding mode in the IEEE-754 standard~\citep{ieee754}.

The floating-point \emph{arithmetic operations}
$\oplus, \ominus, \otimes : \efpq \times \efpq \to \efpq$
are defined via the rounding function:
for finite floats $x, y \in \fpq$,
$x \oplus  y \defeq \round{x+y}$,
$x \ominus y \defeq \round{x-y}$, and
$x \otimes y \defeq \round{x \times y}$.
We omit the definition for non-finite operands
because they are unimportant in this paper,
except that $x \oplus 0 = x \ominus 0 = x$ for all $x \in \set{-\infty, +\infty}$.
{For the full definition, refer to the IEEE-754 standard~\citep{ieee754}.}

We introduce two more floating-point operations: $\aff_{W, \bfb}$ and $\round{f}$.
First, we define the floating-point \emph{affine transformation}:
for a matrix $W = (w_{i,j})_{{i\in [m],j\in [n]}}\in {\fpq{}^{m \times n}}$
and a vector $\bfb = \lrp{b_1,\dots, b_{m}}\in {\fpq{}^{m}}$,
$\aff_{W,\bfb} : \efpq{}^n \to \efpq{}^m$ is defined by
\begin{align}
  \label{eq:aff}
  \aff_{W,\bfb}(x_1, \allowbreak \ldots, \allowbreak x_n) \defeq \!
  \paren[\Bigg]{\!
    \paren[\Bigg]{
      \bigoplus_{j=1}^{n} x_j\otimes w_{1,j} \!
    } \oplus b_1,\,
    \allowbreak
    \ldots, \,
    \allowbreak
    \paren[\Bigg]{
      \bigoplus_{j=1}^{n} x_j\otimes w_{m,j} \!
    } \oplus b_m \!
  }.
\end{align}
Here, $\bigoplus$ denotes the floating-point summation defined in the left-associative way:
\(
\bigoplus_{i=1}^{n} y_i \defeq (\cdots((y_1 \oplus y_2) \oplus y_3) \cdots) \oplus y_n,
\)
where the order of $\oplus$ is important because $\oplus$ is not associative.
Next, we define the \emph{correctly rounded version} of a real-valued function.
For $f:\bbR \to \bbR$, the function $\round{f}:\efpq \to \efpq$ {is defined by}
\begin{align}
  \label{eq:round-f}
  \round{f}(x) \defeq
  \begin{cases}
    \round{f(x)}
    & \text{if } x \in (-\infty, +\infty)
    \\
    \mathrm{rnd}\left({\displaystyle \lim_{t \to x} f(t)}\right)
    & \text{if }
    x \in \set{-\infty, +\infty} \land 
    {\displaystyle \lim_{t \to x} f(t) \in \bbR \cup \set{-\infty, +\infty}}
    \\[-1pt]
    \nan
    & \text{otherwise}.
  \end{cases}
  \raisetag{12pt}
\end{align}

\subsection{Neural Networks}
\label{sec:neural-networks}


\begin{figure}[t]
\centering
\begin{adjustbox}{max width=\linewidth}
\begin{tikzpicture}[thick]
\node[name=x1,draw,minimum width=6.1cm,inner xsep=0pt]{$\bfx$};
\node[name=y1,draw,minimum width=6.1cm,inner xsep=0pt,above = .4cm of x1,fill=red!15!white]{
  $\begin{aligned}
  \bfy_{L-1} &\defeq (\tilde{\sigma}_{d_{L-1}} \circ \aff_{W_{L-1},\bfb_{L-1}}) (\bfy_{L-2})\\[-10pt]
  &\;\;\vdots\\[-5pt]
  \bfy_{1} &\defeq (\tilde{\sigma}_{d_{1}} \circ \aff_{W_1,\bfb_1}) (\bfx)
  \end{aligned}$
};
\node[name=z1,draw,minimum width=6.1cm,inner xsep=0pt,above = 2.75cm of y1]{
  $\begin{aligned}
  \bfy &\defeq \textcolor{black}{\aff_{W_{L},\bfb_{L}}(\bfy_{L-1}) \equiv \bfy_{L-1}}
  \end{aligned}$
};
\draw[-latex] (x1) -- (y1);
\draw[-latex] (y1) -- (z1);

\node[name=x2,draw,minimum width=6.1cm,inner xsep=0pt, right=3.25 of x1.center]{$\bfx$};
\node[name=y2,draw,minimum width=6.1cm,inner xsep=0pt,above = 3.3cm of x2.north, anchor=north,fill=cyan!15!white]{
  $\begin{aligned}
  \bfy_1 &\defeq \textcolor{black}{(\tilde{\sigma}_{d'_{1}} \circ \aff_{W'_{1},\bfb'_{1}}) (\bfx)}\\[-2.5pt]
         &\equiv \textcolor{black}{(\tilde{\sigma}_{d'_{0}}(\bfx), \sigma(b'_{1, d'_{0}+1}), \dots, \sigma(b'_{1,d'_1}))}
  \end{aligned}$
};
\node[name=z2,draw,minimum width=6.1cm,inner xsep=0pt,anchor=north,at={(z1.north-|y2)},fill=olive!15!white]{
  $\begin{aligned}
  \bfy_{L'} &\defeq \aff_{W'_{L'},\bfb'_{L'}}(\bfy_{L'-1})\\[-10pt]
  &\;\;\vdots\\[-5pt]
  \bfy_{2} &\defeq (\tilde{\sigma}_{d'_2}\circ \aff_{W'_2,\bfb'_2})(\bfy_1)
  \end{aligned}$
};
\draw[-latex] (x2) -- (y2);
\draw[-latex] (y2) -- (z2);

\node[name=x3,draw,minimum width=6.1cm,inner xsep=0pt,right=3.25 of x2.center]{$\bfx$};
\node[name=y3,draw,minimum width=6.1cm,inner xsep=0pt,above = .4cm of x3,fill=red!15!white]{
  $\begin{aligned}
  \bfy_{L-1} &\defeq (\tilde{\sigma}_{d_{L-1}} \circ \aff_{W_{L-1},\bfb_{L-1}}) (\bfy_{L-2})\\[-10pt]
  &\;\;\vdots\\[-5pt]
  \bfy_{1} &\defeq (\tilde{\sigma}_{d_{1}} \circ \aff_{W_1,\bfb_1}) (\bfx)
  \end{aligned}$
};
\node[name=z3,draw,minimum width=6.1cm,inner xsep=0pt,above = 3.3cm of x3.north,anchor=north,fill=cyan!15!white]{
  $\begin{aligned}
  \bfy_L &\defeq \textcolor{black}{(\tilde{\sigma}_{d'_{1}} \circ \aff_{W'_{1},\bfb'_{1}}) (\bfy_{L-1})}\\[-2.5pt]
         &\equiv \textcolor{black}{(\tilde{\sigma}_{d'_{0}}(\bfy_{L-1}), \sigma(b'_{1,d'_{0}+1}), \dots, \sigma(b'_{1,d'_1}))}
  \end{aligned}$
};
\node[name=w3,draw,minimum width=6.1cm,inner xsep=0pt,anchor=north, at=(z1.north-|y3),fill=olive!15!white]{
  $\begin{aligned}
  \bfy_{L+L'-1} &\defeq \aff_{W'_{L'},\bfb'_{L'}}(\bfy_{L+L'-2})\\[-10pt]
  &\;\;\vdots\\[-5pt]
  \bfy_{L+1} &\defeq (\tilde{\sigma}_{d'_2}\circ \aff_{W'_2,\bfb'_2})(\bfy_{L})
  \end{aligned}$
};
\draw[-latex] (x3) -- (y3);
\draw[-latex] (y3) -- (z3);
\draw[-latex] (z3) -- (w3);

\node[name=cap1,minimum width=6.1cm,inner xsep=0pt,below=0.2 of x1,anchor=north,draw=none,font=\Large]{$\nu_1$};
\node[name=cap2,minimum width=6.1cm,inner xsep=0pt,below=0.2 of x2,anchor=north,draw=none,font=\Large]{$\nu_2$};
\node[name=cap2,minimum width=6.1cm,inner xsep=0pt,below=0.2 of x3,anchor=north,draw=none,font=\Large]{$\nu_2\circ\nu_1$};
\end{tikzpicture}
\end{adjustbox}
\caption{%
  Illustrations of a network $\nu_1$ without the last affine layer (left),
  a network $\nu_2$ without the first affine layer (middle),
  and their composition $\nu_2\circ\nu_1$ (right).
  Note that $\aff_{W'_1,{\mathbf b}'_1}\circ\aff_{W_L,\mathbf b_L}=\aff_{W'_1,{\mathbf b}'_1}$
  is a floating-point affine transformation.
}
\label{fig:network}
\end{figure}
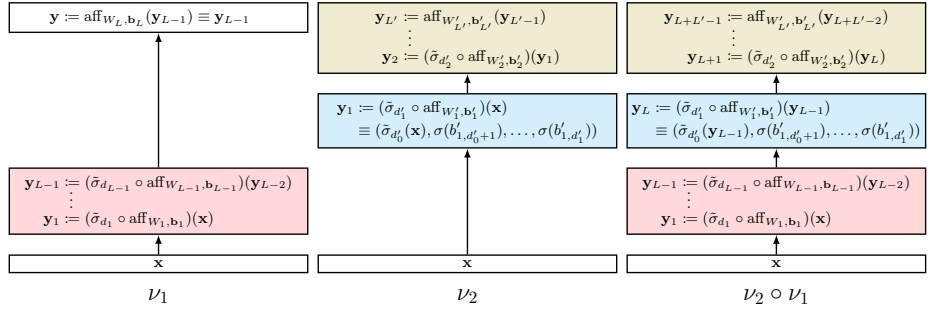

A neural network typically refers to a composition of affine transformations and activation functions.
Formally, for $L \in \bbN$ and $\sigma : \efpq \to \efpq$,
we call a function $\nu$ a \emph{depth-$L$ $\sigma$-neural network} (or a \emph{neural network})
if $\nu$ is defined by
\begin{align}
  \label{eq:nn}
  \nu &: \efpq{}^{d_0} \to \efpq{}^{d_L},
  &
  \nu
  & \defeq
  \aff_{W_L,\bfb_L} \circ
  \tilde{\sigma}_{d_{L-1}} \circ \aff_{W_{L-1},\bfb_{L-1}} \circ  \cdots \circ
  \tilde{\sigma}_{d_1}   \circ \aff_{W_1,\bfb_1}
\end{align}
for some $d_\ell \in \bbN$, $W_\ell \in \fpq{}^{d_\ell \times d_{\ell-1}}$, and $\bfb_\ell \in \fpq{}^{d_\ell}$,
where $\tilde{\sigma}_n : \efpq{}^{n} \to \efpq{}^{n}$ is the coordinatewise application of $\sigma$.
Here, $L$ denotes the number of layers,
$\sigma$ the floating-point activation function,
$d_0$ and $d_L$ the input and output dimensions,
$d_\ell$ the number of hidden neurons in the $\ell$-th layer ($\ell \in [L-1]$),
and $W_\ell$ and $\bfb_\ell$ the parameters of the floating-point affine transformation
in the $\ell$-th layer ($\ell \in [L]$).
We emphasize that a neural network in this paper is a function over \emph{floating-point} values,
defined in terms of \emph{floating-point} activation function and arithmetic.
For instance, a depth-1 neural network is a floating-point affine transformation.

Let $\nu$ be a neural network defined as \cref{eq:nn}.
We say $\nu$ is \emph{without the last affine layer}
if $d_L=d_{L-1}$, $W_L$ is the identity matrix, and $\bfb_L = \mathbf{0}$.
Similarly, we say $\nu$ is \emph{without the first affine layer}
if $d_1 \geq d_0$, $W_1$ is a rectangular diagonal matrix whose diagonal entries are all $1$,
and $b_{1,i} = 0$ for all $i \in [d_0]$.
The two definitions are not perfectly symmetric
due to some technical details arising in our proofs.
We note that a neural network can be constructed by composing
networks without the first/last affine layer(s) and arbitrary networks (\cref{fig:network}).
For example, consider arbitrary networks $\nu_1 : \efpq{}^{n_0} \to \efpq{}^{n_1}$
and $\nu_4 : \efpq{}^{n_3} \to \efpq{}^{n_4}$,
a network without the first affine layer $\nu_2 : \efpq{}^{n_1} \to \efpq{}^{n_2}$, and
a network without the first and last affine layers $\nu_3 : \efpq{}^{n_2} \to \efpq{}^{n_3}$.
It is easily verified that the function $\nu : \efpq{}^{n_0} \to \efpq{}^{n_4}$
specified by $\nu(\bfx) = (\nu_4 \circ \cdots \circ \nu_1)(\bfx)$ denotes a network,
whose definition in the form of \cref{eq:nn}
can be obtained by {``merging'' the last layer of $\nu_1$ and the first layer of $\nu_2$, etc.}

\subsection{Interval Semantics}
\label{sec:interval-semantics}

Interval analysis~\citep{CousotC77,Moore2009} is a technique
for analyzing the behavior of numerical programs soundly and efficiently,
based on abstract interpretation~\citep{Cousot1977}.
It uses intervals to overapproximate the ranges of inputs and expressions,
and propagates them through a program to overapproximate the output range.
Interval analysis has been used to establish the robustness of practical neural
networks~\citep{GehrMDTCV18,GowalDSBQUAMK19,jovanovic22,mao24}.
It can overapproximate the output range of a network over perturbed inputs,
which is required to prove robustness; and it runs efficiently by performing only
simple computations, which is required to analyze large-scale networks.

\paragraph{\bf Interval domain and operations.}

We formalize interval analysis for neural networks as follows.
We first define the \emph{interval domain}
\begin{align}
  \label{eq:interval_domain}
  \bbI &\defeq \set[\big]{\intv{a,b} \,\big|\, a,b \in \efpq \setminus \set{\bot} \text{ with } a \leq b} \cup \set{\top},
\end{align}
on which interval analysis operates.
Here, $\intv{a,b}$ abstracts the floating-point interval $[a,b] \cap \efpq$,
and $\top$ abstracts the entire floating-point set $\efpq$ including $\bot$.
The concrete semantics of an \emph{abstract interval} $\mcI \in \bbI$ and
an \emph{abstract box} $\mcB = (\mcI_1, \ldots, \mcI_d) \in \bbI^d$
are defined through the \emph{concretization function} $\gamma$, where
\begin{align}
  \label{eq:concretization_function}
  \gamma: \cup_{d=1}^{\infty}\bbI^d &\to \cup_{d=1}^{\infty} 2^{\efpq{}^{d}},
  &
  \conc{\mcI} & \defeq
  \begin{cases}
    [a,b] \cap {\efpq} & \text{if } \mcI = \intv{a,b}
    \\
    \efpq & \text{if } \mcI = \top
  \end{cases},
  &
  \conc{\mcB} & \defeq \prod_{i = 1}^d \conc{\mcI_i}.
\end{align}
We say that an abstract box $\mcB \in \bbI^d$ \emph{is in} a set $\mcS \subseteq \bbR^d$ if $\gamma(\mcB) \subseteq \mcS$.

For any function $\phi : \efpq{}^d \to \efpq$ over floats
(which is not a neural network or a floating-point affine transformation),
the \emph{interval operation} $\phi^\sharp : \bbI^d \to \bbI$
extends $\phi$ to the interval domain as follows:
\begin{align}
  \label{eq:interval_operation}
  \phi^\sharp(\mcB)
  & \defeq
  \begin{cases}
    \intv{ \min\mcS, \max\mcS }
    & \text{if } \bot \notin \mcS
    \\
    \top
    & \text{if } \bot \in \mcS
  \end{cases},
  \quad\text{where } \mcS \defeq \phi(\conc{\mcB}).
\end{align}
In the special case that $\phi = \odot \in \set{\oplus, \allowbreak \ominus, \allowbreak \otimes}$
is a floating-point arithmetic operation, the above definition
(using infix notation) is equivalent to the following:
\begin{align}
  \!
  \intv{a,b} \odot^\sharp \intv{c,d}
  & \defeq
  \begin{cases}
    \intv{ \min\mcS, \max\mcS } \!\!\! & \text{if } \bot \notin \mcS
    \\
    \top & \text{if }\bot \in \mcS
  \end{cases},
  \;\;\text{where } \mcS \defeq \set*{\!\begin{aligned} &a \odot c, a \odot d,\\&b \odot c, b \odot d\end{aligned}\!},
\end{align}
and $\odot^\sharp$ returns $\top$ if at least one of its operands is $\top$.%
\footnote{
This definition of $\odot^\sharp$ differs slightly from the standard definition,
as $\odot^\sharp$ uses ``round to nearest'' mode (implicit in $\odot$),
whereas the more common mode is ``round downward/upward''
(e.g., $\intv{a,b} \oplus^\sharp \intv{c,d} \defeq \intv{a \oplus_\downarrow c, b \oplus_\uparrow d}$)~\citep[Section 5]{HickeyJE01}.
This choice is due to different goals to achieve:
our definition overapproximates floating-point operations (e.g., $\oplus$),
while the usual one overapproximates exact operations (e.g., $+$).
}
We remark that $\odot^\sharp$ can be efficiently computed,
and so can $\phi^\sharp$ when $\phi : \efpq \to \efpq$ is piecewise-monotone with finitely many pieces,
which holds for the correctly rounded versions of widely-used activation functions (e.g., ReLU, GELU, sigmoid).
We then define the \emph{interval affine transformation}
$\smash{\aff^\sharp_{W,\bfb}} : \bbI^n \to \bbI^m$,
which extends its floating-point counterpart $\aff_{W,\bfb} : \efpq{}^n \to \efpq{}^m$:
\(
\smash{\aff^\sharp_{W,\bfb}}(\mcI_1, \allowbreak \ldots, \allowbreak \mcI_n) \defeq
\paren[\big]{
  \paren{
    \bigoplus_{j=1}^{n} \! {}^\sharp \, \mcI_j\otimes^\sharp \intv{w_{i,j}, w_{i,j}}
  } \oplus^\sharp \intv{b_i, b_i}
}_{i = 1}^m,
\)
where $\bigoplus \! \smash{{}^\sharp}$ is the interval summation which uses $\smash{\oplus^\sharp}$ instead of $\oplus$.

\paragraph{\bf Interval semantics.}

The \emph{interval semantics} $\nu^\sharp : \bbI^{d_0} \to \bbI^{d_L}$
of a neural network $\nu : \efpq{}^{d_0} \to \efpq{}^{d_L}$
is defined as the result of interval analysis on~$\nu$:
\begin{align}
  \label{eq:interval_semantics}
  \nu^\sharp
  & \defeq
  \aff^\sharp_{W_L,\bfb_L} \circ
  \tilde{\sigma}^\sharp_{d_{L-1}} \circ \aff^\sharp_{W_{L-1},\bfb_{L-1}} \circ  \cdots \circ
  \tilde{\sigma}^\sharp_{d_1}     \circ \aff^\sharp_{W_1,\bfb_1},
\end{align}
where $\nu$ is assumed to be defined as \cref{eq:nn}
and $\tilde{\sigma}^\sharp_n : \bbI^n \to \bbI^n$ is the coordinatewise application of $\sigma^\sharp : \bbI \to \bbI$.
It is easily verified that the interval semantics is sound with respect to the floating-point semantics:
\begin{align}
  \nu\paren[\big]{ \conc{\mcB} } \subseteq \conc{ \nu^\sharp(\mcB) }
  &&
  (\text{$\mcB \in \bbI^{d_0}$}).
\end{align}
That is, the result of interval analysis ${\smash{\nu^\sharp(\mcB)}} \in \bbI{}^{d_L}$
subsumes the set of all possible outputs of the network $\nu$
when the input is in the concrete box $\conc{\mcB} \subseteq \efpq{}^{d_0}$.


\section{Interval Universal Approximation Over Floats}
\label{sec:iua}

This section presents our main result on interval universal
approximation (IUA) for floating-point neural networks.
We first introduce conditions on activation functions for our result
(\cref{sec:iua-conditions}),
and then formally describe our result under these conditions
(\cref{sec:iua-main-result}).
We then compare our IUA theorem over floats with existing IUA theorems over
reals, highlighting several nontrivial differences (\cref{sec:comparison}).

\subsection{Conditions on Activation Functions}
\label{sec:iua-conditions}

Our IUA theorem is for floating-point neural networks that use activation functions
satisfying the following conditions (\cref{fig:condition}).


\begin{figure}[t]
  \begin{subfigure}{0.32\textwidth}
    \centering
    \includegraphics[width=1.0\linewidth]{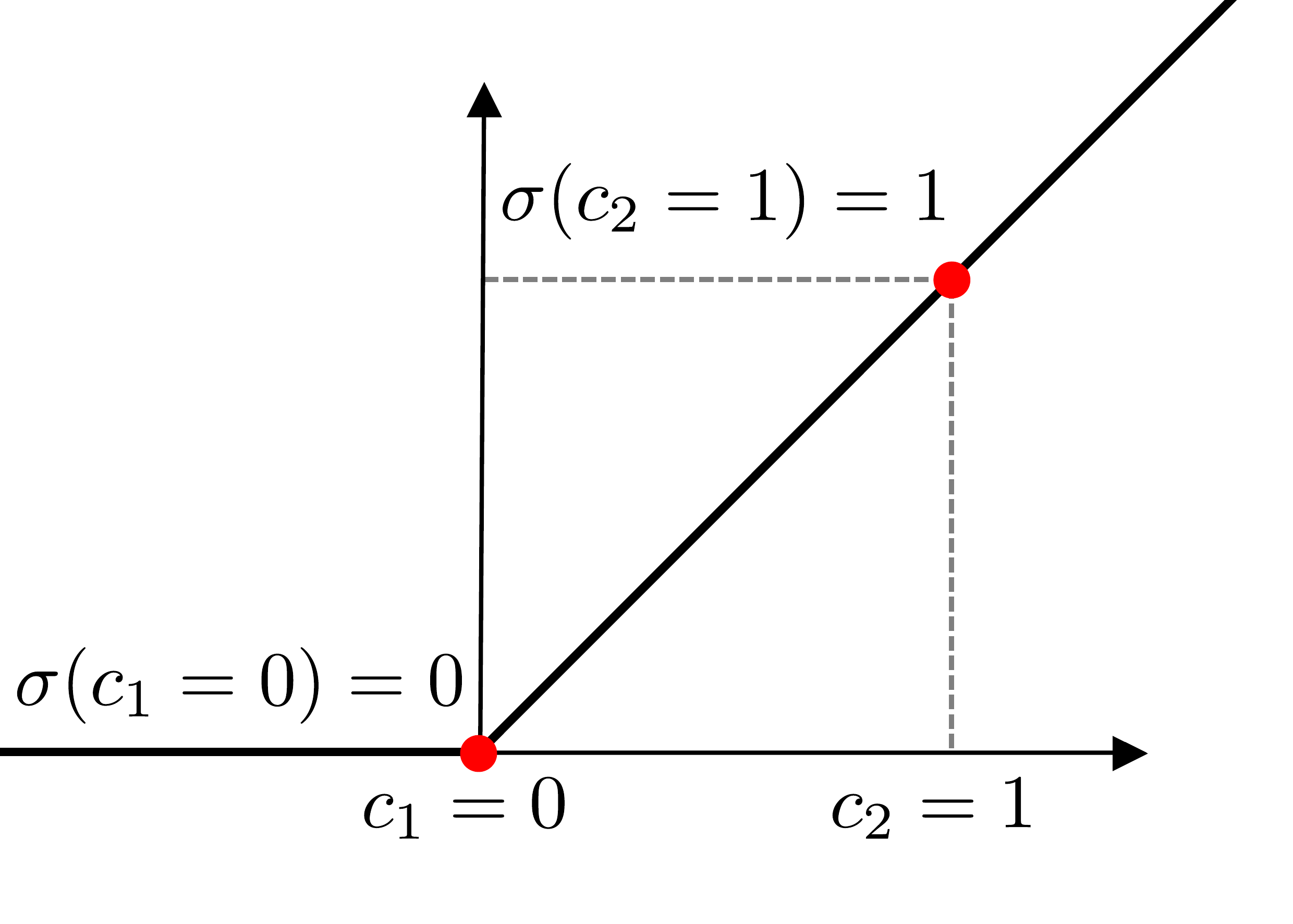}
  \end{subfigure}
  \begin{subfigure}{0.32\textwidth}
    \centering
    \includegraphics[width=1.0\linewidth]{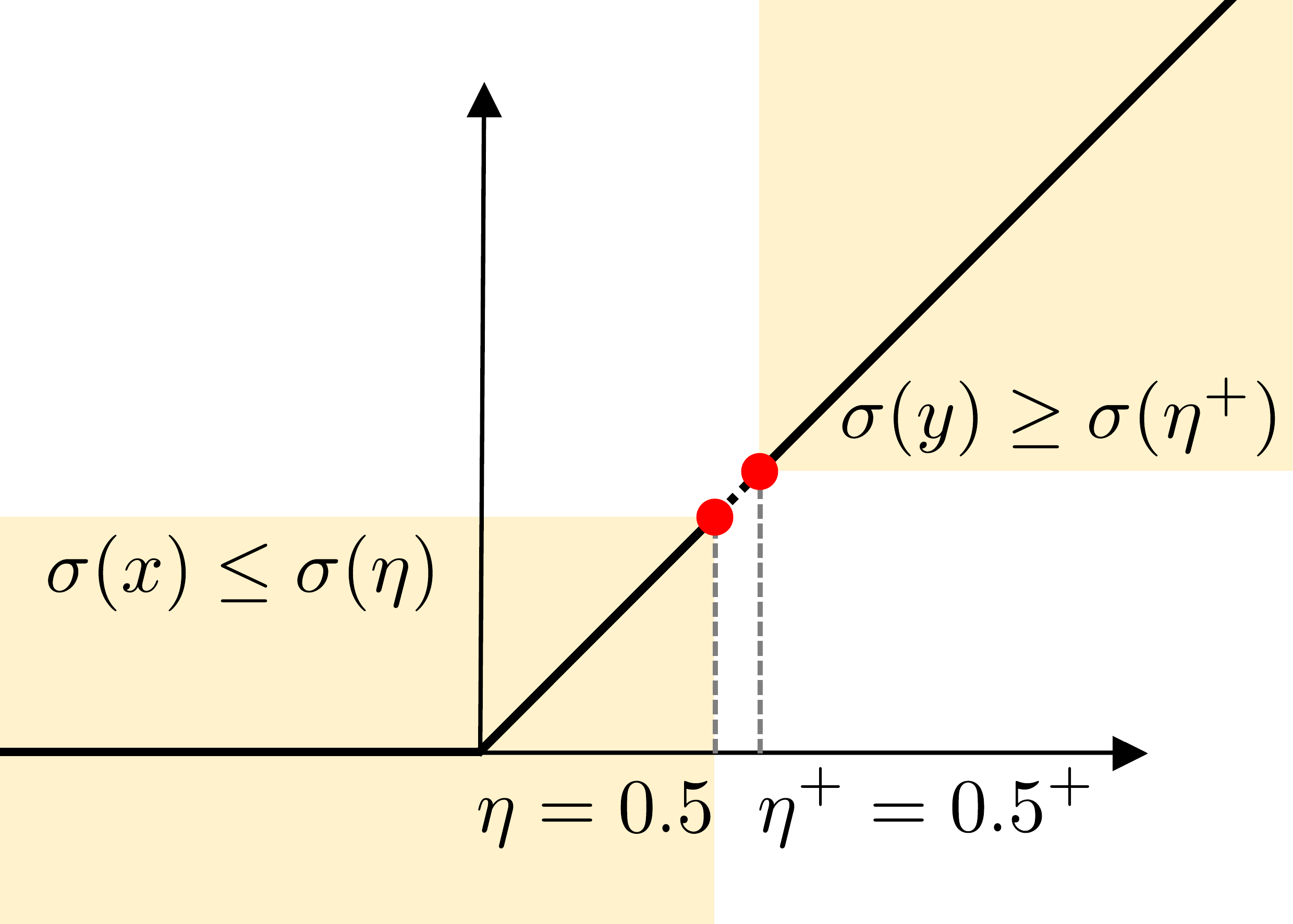}
  \end{subfigure}
  \begin{subfigure}{0.32\textwidth}
    \centering
    \includegraphics[width=1.0\linewidth]{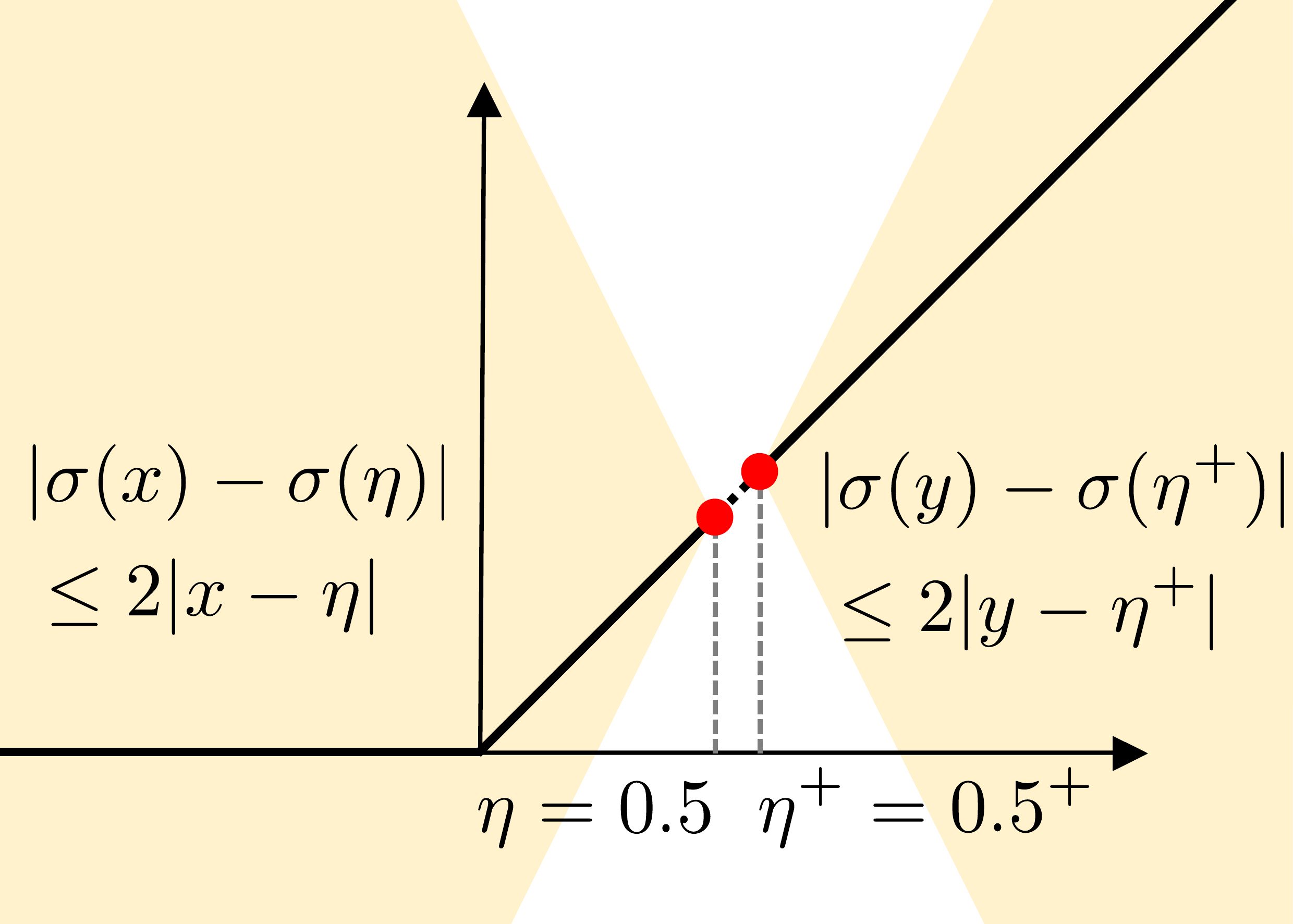}
  \end{subfigure}
  \caption{%
    Illustration of the first (left), second (middle), and third (right) conditions in \cref{cond:activation2}
    for the ReLU activation function: $\sigma(x) \defeq \max\{x,0\}$ for $x \in \efpq$.
  }
  \label{fig:condition}
\end{figure}

\begin{condition}
  \label{cond:activation2}
  An activation function $\sigma:\efpq\to\efpq$ satisfies the following conditions:
  \begin{enumerate}[leftmargin=3.0em, label={\rm{(C\arabic*)}}, ref={\rm{(C\arabic*)}}]
  \item \label{cond:orig-1}
    There exist $c_1, c_2\in \fpq$ such that
    $\sigma(c_1)=0$,
    $|\sigma(c_2)|\in[\tfrac{\feps}{2}+2\feps^2, \tfrac{5}{4}-2\feps]$,
    $\max\{|c_1|,|c_2|\}\ge2^{\emin+1}$,
    and $\sigma(x)$ lies between $\sigma(c_1)$ and $\sigma(c_2)$ for all $x$ between $c_1$ and $c_2$,
    \tcr{where $\feps$ is the machine epsilon (see \cref{sec:floating-point}).}

  \item \label{cond:orig-2}
    There exists $\eta\in\bbF$ with
    $|\eta|\in [2^{\emin+5},4-8\feps]$ and
    $|\sigma(\eta)|,|\sigma(\eta^+)|\in[2^{\emin+5}, \allowbreak 2^{\emax-6} \cdot |\eta|]$
    such that for any $x,y\in \fpq$ with $x\le \eta<\eta^+ \le y$,
    \begin{align}
      \sigma(x)\le \sigma(\eta)<\sigma(\eta^+)\le \sigma(y)
      \quad\text{or}\quad
      \sigma(x)\ge \sigma(\eta)>\sigma(\eta^+)\ge \sigma(y).
    \end{align}

  \item \label{cond:orig-3}
    There exists $\lips \in[0, 2^{\emax-7} \cdot \min\{|\sigma(\eta)|,2^{\mbit+3}\}]$
    such that for any $x,y\in \fpq$ with $x\le \eta<\eta^+ \le y$,
    \begin{align}
      &|\sigma(x) - \sigma(\eta)| \le \lips |x-\eta|
      \quad\text{and}\quad
      |\sigma(y) - \sigma(\eta^+)| \le \lips |y - \eta^+|.
    \end{align}
  \end{enumerate}
\end{condition}

The condition \cref{cond:orig-1} states that the activation function $\sigma $ can output the exact
zero (i.e., $\sigma(c_1)$) and some value whose magnitude is
approximately in $[\frac{\feps}{2}, \frac{5}{4}]$ (i.e., $\sigma(c_2)$);
and its output is within $\sigma(c_1)$ and $\sigma(c_2)$ for all inputs between $c_1$ and $c_2$.
The condition \cref{cond:orig-2} states that there exists some \emph{threshold} $\eta$
such that $\sigma(x)$ is either smaller or greater than $\sigma(\eta)$ or $\sigma(\eta^+)$,
depending on whether $x$ is on the left or right side of $\eta$.
This condition holds automatically for all monotone activation functions
that are non-constant on either $[2^{\emin+5},4-8\varepsilon]\cap\fpq$ or $[-4+8\varepsilon,-2^{\emin+5}]\cap\fpq$.
The condition \cref{cond:orig-3} states that $\sigma$ does not increase or decrease
too rapidly from $\eta$ and $\eta^+$, which implies that \mbox{$\sigma(x)$ is finite
for all finite floats $x\in \fpq$.}

While \cref{cond:activation2} is mild, verifying whether practical
activation functions over floats satisfy \cref{cond:activation2} can be
cumbersome.
Floating-point activation functions are typically
implemented in complicated ways~\citep{Markstein2000, Muller2016, Beebe17}
(e.g., by intermixing floating-point operations with
integer/bit-level operations and if-else branches), which makes it
challenging to rigorously analyze such implementations~\citep{lee2018,Faissole24}.
To bypass this issue, we focus on the correctly rounded version
$\sigma : \efpq \to \efpq$ of a real-valued activation function
$\rho : \bbR \to \bbR$ (i.e., $\sigma(x) \defeq \round{\rho(x)}$),
when verifying \cref{cond:activation2}.
Correctly rounded versions of elementary mathematical functions have been
actively developed in several software
libraries~\citep{SibidanovZG22,LimN21,LimN22,ziv2001,daramyloirat2006}.

Under the correct rounding assumption, we provide an easily verifiable
sufficient condition for activation functions on reals that can be used to
verify \cref{cond:activation2} for their rounded versions.
The proof of \cref{lem:activation} is in \cref{sec:pflem:activation}.

\begin{lemma}
  \label{lem:activation}
  For any activation function $\rho:\bbR\to\bbR$, the correctly rounded activation
  $\round{\rho} : \efpq \to \efpq$ satisfies \cref{cond:activation2} if the following conditions hold:
  \begin{enumerate}[leftmargin=3.3em, label={\rm{(C\arabic*${}^\prime$)}}, ref={\rm{(C\arabic*${}^\prime$)}}]
  \item \label{cond:suff-1}
    There exist $c_1',c_2'\in\fpq$ such that
    $|\rho(c_1')|\le\tfrac{\fmin}{2}$,
    $|\rho(c_2')|\in[\tfrac{\feps}{2}+2\feps^2, \tfrac{5}{4}-2\feps]$,
    $\max\{|c_1'|,|c_2'|\}\ge2^{\emin+1}$,
    and $\rho(x)$ lies between $\rho(c_1')$ and $\rho(c_2')$ for all $x$ between $c_1'$ and $c_2'$,
    \tcr{where $\fmin$ is the smallest positive float (see \cref{sec:floating-point}).}

  \item \label{cond:suff-2}
    There exists $\delta \in \bbR$ with $|\delta|\in[\frac3{8},\frac{7}{8}]$ such that
    \begin{itemize}
    \item for all $x,y\in\bbR$ satisfying $x\le \delta-\tfrac18<\delta+\tfrac18 \le y$,
      \begin{equation*}
        \rho(x) \,{\le}\, \rho(\delta-\tfrac18) \,{<}\, \rho({\delta+\tfrac18}) \,{\le}\, \rho(y)~~\text{or}~~
        \rho(x) \,{\ge}\, \rho(\delta-\tfrac18) \,{>}\, \rho({\delta+\tfrac18}) \,{\ge}\, \rho(y),
      \end{equation*}
    \item $|\rho(x)|\in[\frac14,1]$ and $|\rho(x)-\rho(y)|>\frac18|x-y|$
      for all $x,y\in[\delta-\tfrac1{8},\delta+\tfrac1{8}]$.
    \end{itemize}

  \item \label{cond:suff-3}
    $\rho$ is $\lips$-Lipschitz continuous for some $\lips \in[0 , \tfrac{1}{5} \cdot 2^{\emax-9}]$.
  \end{enumerate}
\end{lemma}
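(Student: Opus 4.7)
The plan is to separately verify \cref{cond:orig-1}, \cref{cond:orig-2}, \cref{cond:orig-3} for $\sigma \defeq \round{\rho}$, constructing the objects $(c_1, c_2, \eta, \lips)$ needed by \cref{cond:activation2} directly from $(c_1', c_2', \delta, \lips)$ furnished by the hypothesis. Two tools will be used throughout: monotonicity of $\round{\cdot}$ (which lets real inequalities and ``lies-between'' statements pass through rounding), and the boundary rule $\round{x}=0$ whenever $|x|\le\fmin/2$ (since round-to-nearest-ties-to-even resolves $\round{\pm\fmin/2}$ to the even significand $0$). The format assumptions $\ebit\ge 5$ and $2^{\ebit-1}\ge\mbit\ge 3$ control the relative sizes of $\fmin$, $\feps$, and the local grid spacing throughout.

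For \cref{cond:orig-1}, I would set $c_1\defeq c_1'$ and $c_2\defeq c_2'$. Then $\sigma(c_1)=\round{\rho(c_1')}=0$ by the boundary rule since $|\rho(c_1')|\le\fmin/2$. The magnitude bound $|\sigma(c_2)|\in[\tfrac{\feps}{2}+2\feps^2,\tfrac54-2\feps]$ follows from monotonicity of $\round{\cdot}$ once one verifies that both endpoints are exactly representable in $\fpq$; this is a direct binary check using $\tfrac{\feps}{2}+2\feps^2 = 2^{-\mbit-2}+2^{-2\mbit-1}$ and $\tfrac54-2\feps=\tfrac54-2^{-\mbit}$ for $\mbit\ge 3$. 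The remaining clauses---$\max\{|c_1|,|c_2|\}\ge 2^{\emin+1}$ and the ``lies-between'' condition---transfer verbatim from \cref{cond:suff-1}.

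For \cref{cond:orig-2}, WLOG $\delta>0$, so $\rho$ is increasing on $[\delta-\tfrac18,\delta+\tfrac18]\subset[\tfrac14,1]$. I would take $\eta$ to be any float inside this interval whose successor $\eta^+$ is also inside and satisfies $\round{\rho(\eta)}<\round{\rho(\eta^+)}$. Existence reduces to a discrete intermediate-value argument on $\round{\rho}$ restricted to $\fpq\cap[\delta-\tfrac18,\delta+\tfrac18]$: the slope lower bound of \cref{cond:suff-2} forces the continuous image of $\rho$ over this interval to have length $>\tfrac1{32}$ inside $[\tfrac14,1]$, and a case split on whether the image lies in $[\tfrac14,\tfrac12)$, $[\tfrac12,1)$, or touches $\{1\}$ bounds the local grid spacing and shows that the monotone sequence of rounded images of consecutive floats must strictly increment somewhere. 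The two outer inequalities in \cref{cond:orig-2} (for $x\le\eta$ and $y\ge\eta^+$) then follow from global monotonicity, which $\sigma$ inherits from $\rho$ via the first bullet of \cref{cond:suff-2} and monotonicity of $\round{\cdot}$. Finally, $|\eta|\in[2^{\emin+5},4-8\feps]$ and $|\sigma(\eta)|,|\sigma(\eta^+)|\in[2^{\emin+5},2^{\emax-6}|\eta|]$ come from $\eta\in[\tfrac14,1]$ and $|\sigma(\eta)|,|\sigma(\eta^+)|\in[\tfrac14,1]$ together with $\ebit\ge 5$.

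For \cref{cond:orig-3}, I would take the constant $\lips$ from \cref{cond:suff-3}, possibly inflated by a small absolute factor. Lipschitz continuity of $\rho$ plus the half-ulp rounding error bound yield
\begin{equation*}
  |\sigma(x)-\sigma(\eta)|
  \le |\rho(x)-\rho(\eta)| + |\round{\rho(x)}-\rho(x)| + |\round{\rho(\eta)}-\rho(\eta)|
  \le \lips\,|x-\eta| + O(\feps),
\end{equation*}
and the additive $O(\feps)$ term is absorbed into an enlarged Lipschitz constant still lying in $[0,\,2^{\emax-7}\min\{|\sigma(\eta)|,2^{\mbit+3}\}]$ thanks to $\lips\le 2^{\emax-9}/5$ and $|\sigma(\eta)|\ge\tfrac14$; the $y\ge\eta^+$ clause is symmetric. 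The hard part will be the pigeonhole step of \cref{cond:orig-2}: uniformly over $\mbit\ge 3$, ruling out that every float in $[\delta-\tfrac18,\delta+\tfrac18]$ rounds to the same $\rho$-value requires a delicate comparison between the slope bound $\tfrac18$ and the local grid spacings of $\efpq$ inside $[\tfrac14,1]$, combined with careful use of the strict inequalities in \cref{cond:suff-2}.
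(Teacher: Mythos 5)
Your handling of \cref{cond:orig-1} and \cref{cond:orig-2} is essentially the paper's argument: take $c_1=c_1'$, $c_2=c_2'$, use that $\round{}$ is order\-/preserving and sends anything of magnitude at most $\tfrac{\fmin}{2}$ to $0$, and pick $\eta$ as a float in $(\delta-\tfrac18,\delta+\tfrac18)$ at which the rounded value strictly increases at the successor, with the outer inequalities following from monotonicity. The existence step you flag as the ``hard part'' is in fact dispatched in the paper in one line: $\rho(\delta+\tfrac18)-\rho(\delta-\tfrac18)>\tfrac18\cdot\tfrac14=2^{-5}\ge 2^{-\mbit-2}$, the minimal float spacing in $[\tfrac14,1]$, so two floats in the interval already have distinct rounded images and the minimal such $\eta$ is well defined; no case split over subranges of the image is needed.

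The genuine gap is in your step for \cref{cond:orig-3}. The inequality $|\sigma(x)-\sigma(\eta)|\le\lips|x-\eta|+O(\feps)$ is false as stated: rounding has \emph{relative} (half\-/ulp) error, so $|\round{\rho(x)}-\rho(x)|$ can be of order $\feps\,|\rho(x)|$, and the quantifier in \cref{cond:orig-3} ranges over all $x,y\in\fpq$, including $|x|\approx\fmax$, where the Lipschitz bound only gives $|\rho(x)|\lesssim\lips\,\fmax$. The rounding error can therefore be of order $\feps\,\lips\,\fmax$, astronomically larger than $\feps$, and it cannot be absorbed by inflating $\lips$ ``by a small absolute factor'' independent of $x$. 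What is missing is the argument that converts the relative rounding error into a multiple of $|x-\eta|$: since $x\ne\eta$ are distinct floats, $|x-\eta|\ge \max\{|x|,|\eta|\}\cdot 2^{-\mbit-1}$, so the error terms $\feps(|\rho(x)|+|\rho(\eta)|)$ are bounded by a constant multiple of $\lips_1|x-\eta|$ with $\lips_1=\max\{\lips,2^{\emin+1}\}$ (the paper arrives at the constant $5\lips_1$ this way, treating the subnormal case, where only the absolute bound $\tfrac{\fmin}{2}$ holds, separately). Equivalently one can write $\feps|\rho(x)|\le\feps|\rho(\eta)|+\feps\lips|x-\eta|$ and use $|\rho(\eta)|\le 1$ together with $|x-\eta|\ge 2^{-\mbit-2}$ (the gap near $\eta\in[\tfrac14,1]$) to absorb the remaining constant. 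With such an absorption the final constant does fit in $[0,\,2^{\emax-7}\min\{|\sigma(\eta)|,2^{\mbit+3}\}]$ because $\lips\le\tfrac15 2^{\emax-9}$ and $|\sigma(\eta)|\ge\tfrac14$, as you anticipated; but the additive\-/$O(\feps)$ accounting of rounding error is not a sound route to it, and this ulp\-/spacing step is exactly the content of the paper's proof of \cref{cond:orig-3}.
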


The conditions \cref{cond:suff-1,cond:suff-2,cond:suff-3} in \cref{lem:activation} correspond to
the conditions \cref{cond:orig-1,cond:orig-2,cond:orig-3} in \cref{cond:activation2}.
The condition \cref{cond:suff-1}, corresponding to \cref{cond:orig-1},
can be easily satisfied since modern activation functions are piecewise-monotone
and either zero at zero (e.g., ReLU, GELU, softplus, tanh)
or close to zero at $-\fmax$ or $\fmax$ (e.g., sigmoid).
The condition \cref{cond:suff-2} roughly states the existence of $\delta \in \bbR$ satisfying the following:
(i) $\rho(\delta-\frac{1}{8})$ and $\rho(\delta+\frac{1}{8})$ are lower/upper bounds of $\rho$
on $(-\infty, \delta-\frac{1}{8})$ and $(\delta+\frac{1}{8}, \infty)$; and
(ii) $\rho$ is bounded and strictly monotone on $[\delta-\frac{1}{8},\delta+\frac{1}{8}]$.
This condition guarantees the existence of $\eta \in \bbF$ in \cref{cond:orig-2}.
The condition \cref{cond:suff-3}, corresponding to \cref{cond:orig-3},
can also be easily satisfied since $\lips <3$ for most practical activation functions.
We note that \cref{lem:activation} gives sufficient but not necessary
\mbox{conditions for a correctly rounded activation function to satisfy
\cref{cond:activation2}.}

The following corollary uses \cref{lem:activation} to show that many prominent
activation functions satisfy \cref{cond:activation2}.
Its proof is in \cref{sec:pfcor:activation}.
\begin{corollary}
  \label{cor:activation}
  The correctly rounded implementations of the $\relu$, $\lrelu$, $\gelu$,
  $\elu$, $\mish$, $\SoftPlus$, $\Sigmoid$, and $\tanh$ activations
  satisfy \cref{cond:activation2}.
\end{corollary}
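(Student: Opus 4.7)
My plan is to invoke \cref{lem:activation} separately for each of the eight activations, so the task reduces to checking the three real-valued conditions \cref{cond:suff-1,cond:suff-2,cond:suff-3} for each $\rho$. Condition \cref{cond:suff-3} is immediate: the familiar Lipschitz bounds (e.g., $\lips = 1$ for $\relu$, $\lrelu$, $\elu$, $\SoftPlus$, and $\tanh$; $\lips = \tfrac14$ for $\Sigmoid$; and small absolute constants for $\gelu$ and $\mish$) are all vastly below $\tfrac15\cdot 2^{\emax-9}$ for the float formats assumed in \cref{sec:floating-point}.

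For \cref{cond:suff-1}, I will give explicit $c_1', c_2' \in \fpq$ in each case. For activations that vanish exactly at $0$ (namely $\relu$, $\lrelu$, $\gelu$, $\elu$, $\mish$, $\tanh$), I take $c_1' = 0$ so $\rho(c_1') = 0 \le \tfrac{\fmin}{2}$ trivially; for $\Sigmoid$ and $\SoftPlus$, which only vanish in the limit, I take $c_1' = -\fmax$, so that $|\rho(c_1')|$ is either $1/(1+e^{\fmax})$ or $\ln(1+e^{-\fmax})$, each exponentially smaller than $\fmin/2$. A modest positive $c_2'$ near $1$ (or slightly negative for $\SoftPlus$) then places $|\rho(c_2')|$ safely inside $[\tfrac{\feps}{2}+2\feps^2,\,\tfrac54-2\feps]$; the between-values requirement follows because each of the eight activations is monotone on the chosen interval $[c_1', c_2']$, even $\gelu$ and $\mish$, whose only non-monotone region lies on the negative axis to the left of $c_1' = 0$.

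The crux is \cref{cond:suff-2}, which demands a single $\delta$ with $|\delta| \in [\tfrac38,\tfrac78]$ making $\rho$ simultaneously (i) bounded in $[\tfrac14,1]$ on $[\delta-\tfrac18,\delta+\tfrac18]$, (ii) strictly monotone there with slope exceeding $\tfrac18$, and (iii) outer-bounded by $\rho(\delta\mp\tfrac18)$ on $(-\infty, \delta-\tfrac18]$ and $[\delta+\tfrac18, \infty)$. For the six globally monotone activations ($\relu$, $\lrelu$, $\elu$, $\SoftPlus$, $\Sigmoid$, $\tanh$), property (iii) is automatic; I plan to take $\delta = \tfrac12$ for $\relu$, $\lrelu$, $\elu$, $\Sigmoid$, $\tanh$, and $\delta = -\tfrac38$ for $\SoftPlus$ (which otherwise exceeds $1$ on $[\tfrac38,\tfrac58]$). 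A direct computation of $\rho(\delta\pm\tfrac18)$ confirms $|\rho| \in [\tfrac14,1]$, and a pointwise lower bound on $\rho'$ on $[\delta-\tfrac18,\delta+\tfrac18]$ (e.g., $\rho'\equiv 1$ for $\relu$; $\rho'(\tfrac12)\approx 0.235$ for $\Sigmoid$; $\rho'(\tfrac12)\approx 0.786$ for $\tanh$) gives $|\rho(x)-\rho(y)| > \tfrac18|x-y|$.

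The real obstacle is $\gelu$ and $\mish$, each of which has a single local minimum on the negative axis (near $x \approx -0.75$ for $\gelu$ and $x \approx -1.19$ for $\mish$) and is strictly increasing to the right of it. My plan is to push $\delta = \tfrac58$, so that $[\tfrac12,\tfrac34] = [\delta-\tfrac18,\delta+\tfrac18]$ sits well inside the increasing region; properties (i) and (ii) then follow from direct evaluation, for instance via $\gelu' = \Phi + x\varphi \ge 0.87$ on $[\tfrac12,\tfrac34]$, with an analogous bound for $\mish$ using $\mish(x) = x\tanh(\SoftPlus(x))$. Property (iii) holds because on $(-\infty,\tfrac12]$ the supremum of $\rho$ is attained at $\tfrac12$: the only competitor is $\lim_{x\to-\infty}\rho(x) = 0$ together with the negative local minimum, both of which are dominated by $\rho(\tfrac12) > \tfrac14$, while on $[\tfrac34,\infty)$ the function is increasing. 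Assembling these activation-by-activation verifications and feeding them into \cref{lem:activation} yields \cref{cor:activation}.
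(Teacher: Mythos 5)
Your proposal is correct and follows essentially the same route as the paper: both reduce \cref{cor:activation} to \cref{lem:activation} and then verify \cref{cond:suff-1,cond:suff-2,cond:suff-3} activation by activation with explicit numerical constants, taking $c_1'=0$ for the activations that vanish at zero and $c_1'=-\fmax$ for $\Sigmoid$ and $\SoftPlus$ (whose tail values are shown to be below $\tfrac{\fmin}{2}$ by exponent arithmetic), and using monotonicity on the chosen intervals for the between-values and outer-bound requirements. The only differences are your specific constants (e.g., $\delta=\tfrac58$ for $\gelu$ and $\mish$, $\delta=-\tfrac38$ with a nonpositive $c_2'$ for $\SoftPlus$, versus the paper's $0.6$, $0.5$, and $0.4$ with $c_2'=1$), and these all check out numerically.
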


\subsection{Main Result}
\label{sec:iua-main-result}

We are now ready to present our IUA theorem over floating-point arithmetic.

\begin{theorem}
  \label{thm:main}
  Let $\sigma:\efpq\to\efpq$ be an activation function satisfying \cref{cond:activation2}.%
  \footnote{\tcr{%
    \cref{cond:activation2} is sufficient for \cref{thm:main} but not necessary.
    E.g., \cref{thm:main} still holds
    under 8-bit floats (both E4M3 and E5M2 formats \citep{micikevicius2022}) for the ReLU activation function;
    this corresponds to the case where \cref{cond:orig-1,cond:orig-2} hold but \cref{cond:orig-3} is violated.
    %
  }}
  Then, for any target function $f : \bbR^d\to\bbR$,
  there exists a $\sigma$-neural network $\nu : \efpq{}^d \to \efpq$ such that
  \begin{align}
    \label{eq:thm:main}
    &\gamma\paren[\big]{ \nu^\sharp(\mcB) }
    =\Big[\min \widehat{f} \paren[\big]{ \gamma(\mcB) },\,\max\widehat{f} \paren[\big]{ \gamma(\mcB) }\Big] \cap \efpq
  \end{align}
  for $\widehat{f} = \round{f} : \efpq{}^d \to \efpq{}^d$ and for all abstract boxes $\mcB$ in $[-1,1]^d$.%
  \footnote{\tcr{%
    In the literature on universal approximation theorems, it is typically assumed that
    the inputs are in $[0,1]$ or in a compact subset of $\bbR$
    (e.g., \citep{cybenko89,yarotsky18,baader20,wang2022interval}).
    Since the inputs are often normalized to $[-1,1]$,
    we focus the theoretical analysis on $[-1,1]^d$.
    %
  }}
\end{theorem}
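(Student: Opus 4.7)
The plan is to construct a floating-point neural network $\nu : \efpq{}^d \to \efpq$ explicitly, leveraging the key observation that $[-1,1]^d \cap \efpq{}^d$ is a \emph{finite} set. Consequently, $\widehat{f}$ restricted to this domain is just a finite table of input--output pairs. The construction must satisfy two requirements simultaneously: (i) pointwise agreement with $\widehat{f}$ on every concrete input in $[-1,1]^d \cap \efpq{}^d$, and (ii) tight interval semantics, meaning $\gamma(\nu^\sharp(\mcB)) = [\min\widehat{f}(\gamma(\mcB)),\, \max\widehat{f}(\gamma(\mcB))] \cap \efpq$ on every abstract box $\mcB$ in $[-1,1]^d$. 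Soundness of interval semantics combined with (i) automatically gives the inclusion $\gamma(\nu^\sharp(\mcB)) \supseteq [\min\widehat{f}(\gamma(\mcB)),\, \max\widehat{f}(\gamma(\mcB))] \cap \efpq$, so the real work is the reverse inclusion.

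I would build $\nu$ in stages. First, using the monotone threshold behavior granted by \cref{cond:orig-2} around $\eta$ and $\eta^+$ together with the exact-zero output granted by \cref{cond:orig-1}, I would construct a \emph{detector} subnetwork that separates any two distinct input floats in $[-1,1]$ into two disjoint output ranges. Iterating such detectors coordinate-by-coordinate and composing with appropriately scaled affine maps produces a subnetwork that canonically ``indexes'' each input $\bfx \in [-1,1]^d \cap \efpq{}^d$. Second, I would build an \emph{evaluator} subnetwork that takes this index and emits the corresponding value $\widehat{f}(\bfx)$. Composing these with the required first and last affine layers, and appealing to the modular composition principle illustrated in \cref{fig:network}, yields a valid $\sigma$-neural network matching $\widehat{f}$ pointwise on $[-1,1]^d \cap \efpq{}^d$.

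The main obstacle is ensuring interval tightness (requirement (ii)). Generic interval analysis is strictly over-approximating, and naive encodings (e.g., one-hot followed by a weighted sum) would blow up the output interval far beyond $[\min,\max]$, because distinct indicator coordinates that cannot simultaneously be $1$ in the concrete semantics can simultaneously take the value $1$ in the interval semantics. The crucial technique is to exploit floating-point rounding itself as a \emph{collapsing} mechanism: by arranging weights and biases so that intermediate quantities that should vanish fall below $\fmin/2$ (hence round to exactly $0$) while quantities that should survive stay bounded away from $0$, each layer can be forced to output intervals whose endpoints correspond only to genuine concrete behaviors rather than spurious interval combinations. The quantitative slack in \cref{cond:activation2}---the bounds involving $\feps$, $2^{\emin}$, and $2^{\emax}$, together with the Lipschitz control of \cref{cond:orig-3}---is precisely what lets these scales be chosen so that rounding snaps interval endpoints to the correct floats at every stage.

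The proof then concludes by a layer-by-layer induction: at each layer, $\gamma$ of the interval output equals exactly the set of concrete outputs over $\gamma(\mcB)$ closed under the convex hull in $\efpq$, and after the final affine layer this equals the right-hand side of \cref{eq:thm:main}. I expect the induction itself to be routine once the scales are fixed; the delicate part will be verifying the base cases, i.e., that a single detector-evaluator pair achieves tight interval semantics. This is where the most careful arithmetic on $\efpq$---bounding how $\oplus$, $\ominus$, $\otimes$ interact with the prescribed activation thresholds---will be needed, and it is the step most likely to require the precise numerical conditions \cref{cond:orig-1}--\cref{cond:orig-3} rather than merely qualitative analogs.
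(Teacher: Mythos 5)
Your detector stage corresponds closely to what the paper builds in \cref{lem:indc} (exact interval implementations of the scaled step indicators $K\iota_{\le z}$, $K\iota_{\ge z}$, $K\iota_{>\eta}$), and your observation that soundness plus pointwise agreement gives one inclusion of \cref{eq:thm:main} for free is correct. The genuine gap is in your evaluator stage. You correctly identify that a one-hot/lookup recombination over-approximates under the interval semantics, but your proposed remedy---choosing scales so that quantities that ``should vanish'' fall below $\fmin/2$ and round to exactly $0$---does not address the actual failure mode. For an abstract box $\mcB$ whose concretization contains two inputs $\bfx \ne \bfx'$, the cell indicators of $\bfx$ and of $\bfx'$ each evaluate, under the interval semantics, to an interval containing both $0$ and their ``on'' value, and \emph{both} endpoints are genuinely attained by concrete points of $\gamma(\mcB)$; no sound interval analysis can collapse them. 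Consequently any weighted sum indexed by cells contains, in its output interval, spurious contributions of size roughly $\widehat{f}(\bfx)\oplus\widehat{f}(\bfx')$---full-sized outputs, not sub-$\fmin$ residues---so the upper endpoint overshoots $\max\widehat{f}(\gamma(\mcB))$. Rounding-to-zero cannot distinguish ``this coordinate is concretely $0$'' from ``the analysis does not know which cell the input lies in.''

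The missing idea, which is the heart of the paper's proof (\cref{lem:indc-to-iua}, in particular \cref{lem:iua}), is to recombine not over a partition of the domain but over the \emph{nested superlevel sets} of the target: after splitting $h$ into positive and negative parts, one sets $\mcS_i=\{\bfx: h(\bfx)\ge z_i\}$ for the increasing enumeration $z_0<z_1<\cdots$ of nonnegative floats and takes $\nu(\bfx)=\bigoplus_i \alpha_i\otimes\tilde{\nu}_{\mcS_i}(\bfx)$. Because the $\mcS_i$ are nested, over any box the interval values of the $\tilde{\nu}_{\mcS_i}$ form a monotone pattern $\intv{K,K},\dots,\intv{0,K},\dots,\intv{0,0}$, so the lower and upper endpoints of the interval sum are exactly the partial sums up to $i_{\min}$ and $i_{\max}$; choosing $\alpha_i$ with $\alpha_i\otimes K\approx z_i-z_{i-1}$ and invoking the floating-point telescoping property (\cref{lem:approx_sum}) makes those partial sums equal exactly $\min\widehat{f}(\gamma(\mcB))$ and $\max\widehat{f}(\gamma(\mcB))$. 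This cumulative level-set structure, together with the overflow-free aggregation of indicator networks for high-dimensional boxes and arbitrary sets (\cref{lem:indc-box,lem:indc-set}), is what your layer-by-layer induction would need; as written, your plan fails precisely at the step where the indexed values are recombined.
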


\cref{thm:main} states that
for any activation function $\sigma:\efpq \to \efpq$ satisfying \cref{cond:activation2}
and any target function $f : \bbR^d \to \bbR$,
there exists a $\sigma$-network $\nu$ whose interval semantics \emph{exactly computes} the
upper and lower points of the direct image map
of the rounded target $\widehat{f} : \efpq{}^d \to \efpq{}$ on $[-1,1]^d \cap \fpq^d$.
A special case of our IUA \cref{thm:main} is the following
universal approximation (UA) theorem for floating-point neural networks:
\begin{align}
  \label{eq:ua}
  \nu(\bfx)=\widehat f(\bfx) && (\bfx \in [-1,1]^d \cap \fpq^d).
\end{align}
That is, floating-point neural networks using an activation function satisfying \cref{cond:activation2}
can represent any function $\widehat{f}: [-1,1]^d \cap \fpq^d \to \fpq\cup\{-\infty,+\infty\}$;
or the rounded version of any real function $f: [-1,1]^d \to \bbR$.
Moreover, \cref{thm:main} easily extends to any target function
$f: \bbR^d \to \bbR^{d'}$ with multiple outputs.

As previous IUA results assume exact operations over reals,
they do not extend to our setting of floating-point arithmetic
(due to rounding errors, overflow, NaNs, discreteness, boundedness, etc.).
As a simple example of these issues, consider the following subnetwork,
which is used in the IUA proof of \citep{baader20}:
\begin{align*}
\mu(x,y)
= \frac{1}{2}\paren[\Big]{ \text{ReLU}(x+y)-\text{ReLU}(-x-y)-\text{ReLU}(x-y)-\text{ReLU}(y-x) }.
\end{align*}
This subnetwork returns $\min\{x,y\}$ if all operations are exact.
However, it does not under floating-point arithmetic due to the
rounding error: if $(+,\times)$ is replaced by $(\oplus,\otimes)$,
then $\mu(x,y) = 0 \ne \feps = \min\set{x,y}$ for $x=1$ and $y=\feps$.

In addition, the network construction in \cite[Theorem 4.10]{wang2022interval}
requires multiplying a large number $z$ that depends on
the target error and the activation function, to the output of some neuron.
However, because $\bbF$ is bounded and floating-point operations are subject to overflow,
the number $z$ and the result of the multiplication are not guaranteed to be within $\bbF$ when using a
small target error (e.g., less than $\fmin$) or when using common
activations functions (e.g., ReLU, softplus).
To bypass these issues, we carefully analyze rounding errors
and design a network without infinities in the intermediate layers, when proving \cref{thm:main}.

\tcr{We present the proof outline of \cref{thm:main} in \cref{sec:iua-proof},
  and the full proof in \cref{sec:pflem:proof_results-0,sec:pflem:proof_results,sec:techlemma_revised}.
  We implemented the proof (i.e., our network construction) in Python
  and made it available at \url{https://github.com/yechanp/floating-point-iua-theorem}.}

\subsection{Comparison With Existing Results Over Reals}
\label{sec:comparison}

\Cref{thm:main}, which gives an IUA theorem over floats, has notable differences from
previous IUA theorems over the reals \citep[Theorem 1.1]{baader20};
\citep[Theorem 3.7]{wang2022interval}.

One difference is the class of target functions and the desired property of networks.
Previous IUA theorems find a network
that \emph{sufficiently approximates} the direct image map of a \emph{continuous} target function
(i.e., $\delta>0$ in \cref{eq:approximate-direct-image}).
In contrast, our IUA theorem finds a network
that \emph{exactly computes} the direct image map of an \emph{arbitrary} rounded target function
(i.e., $\delta=0$ in \cref{eq:approximate-direct-image}).
This difference arises from the domains of the functions being approximated:
the real-valued setting considers functions $f$ over $[-1, 1]^d$ (or a compact $\mcK \subset \bbR^d$);
the floating-point setting considers functions $\widehat{f}$ over $[-1, 1]^d \cap \fpq^d$.

\begin{itemize}
\item
  Since $[-1, 1]^d$ is uncountable, exactly computing the direct image map of $f$
  requires a network to fit \emph{uncountably} many input/output pairs and related box/interval pairs.
  This task is difficult to achieve, and indeed,
  recent works~\cite{mirman22,baader24,baader24thesis}
  prove that it is theoretically unachievable even for simple target functions
  (e.g., continuous piecewise linear functions).

\item
  Since $[-1, 1]^d \cap \fpq^d$ is finite, exactly computing the direct image map of $\widehat{f}$
  requires a network to fit \emph{finitely} many input/output and box/interval pairs.
  Our result proves that, despite all the complexities of floating-point computation,
  this task can be achieved for any rounded target function.
\end{itemize}

Another key difference is the class of activation functions.
There are real-valued activation functions $\rho, \rho' : \bbR \to \bbR$ such that
previous IUA theorems \emph{cannot} hold for $\rho$ but our IUA theorem \emph{does} hold for $\round{\rho} : \efpq \to \efpq$;
\mbox{and vice versa for $\rho'$.}

\begin{itemize}
\item
  An example of $\rho$ is the \emph{identity} function: $\rho(x) = x$.
  \tcr{%
  No classical IUA or UA theorem can hold for $\rho$,
  since all \emph{real-valued} $\rho$-networks $\mu : \bbR^d \to \bbR$ are \emph{affine over the reals}
  (i.e., there exists $A \in \bbR^{1 \times d}$ and $b \in \bbR$
  such that $\mu(\bfx) = A\bfx + b$ for all $\bfx \in \bbR^d$).
  In contrast, our IUA theorem does hold for $\round{\rho}$,
  because $\round{\rho}$ satisfies all the conditions in \cref{lem:activation}
  (with constants $c_1' = 0$, $c_2' = 1$, $\delta = 1/2$, and $\lips = 1$).
  This counterintuitive result is made possible because
  \emph{floating-point} $\round{\rho}$-networks $\nu : \efpq{}^d \to \efpq$ can be \emph{non-affine over the reals}
  (i.e., there may not exist $A \in \bbR^{1 \times d}$ and $b \in \bbR$
  such that $\nu(\bfx) = A\bfx + b$ for all $\bfx \in \fpq^d$).
  This non-affineness arises from rounding errors:
  some floating-point affine transformations $\aff_{W, \bfb}$ are not actually affine over the reals
  due to rounding errors.}
  An interesting implication of this result is discussed in \cref{sec:completeness}.

\item
  An example of $\rho'$ is any function that
  is non-decreasing on $\bbR$,
  is constant on $[-\fmax, \fmax]$,
  and satisfies $\lim_{x \to -\infty} \rho'(x) < \lim_{x \to +\infty} \rho'(x)$, where the two limits exist in $\bbR$.
  The real-valued IUA theorem holds for $\rho'$,
  because $\rho'$ satisfies the condition in \cite[Definition 2.3]{wang2022interval}.
  However, no floating-point IUA or UA theorem can hold for $\round{\rho'}$,
  because all $\round{\rho'}$-networks $\nu : \efpq \to \efpq$
  must be monotone if its depth is 1, and must satisfy $\nu(0) = \nu(\fmin)$ otherwise.
  The monotonicity holds when the depth is 1 since $\oplus,\otimes$ are monotone when an operand is a constant;
  and $\nu(0) = \nu(\fmin)$ holds otherwise since
  $x \otimes a \oplus b \in [-\fmax, \fmax]$ for all $x \in \set{0, \fmin}$ and $a, b \in \fpq$,
  and $\round{\rho'}$ is constant on $[-\fmax, \fmax] \cap \fpq$.
\end{itemize}


\section{Implications of IUA Theorem Over Floats}
\label{sec:iua-implications}

This section presents two important implications of our IUA theorem,
on provable robustness and ``floating-point completeness''.
We first prove the existence of a provably robust floating-point network,
given an ideal robust floating-point classifier (\cref{sec:provable-robustness}).
We then prove that floating-point $+$ and $\times$ are sufficient to simulate
all halting programs that \mbox{return finite/infinite floats when given finite floats (\cref{sec:completeness}).}

\subsection{Provable Robustness of Neural Networks}
\label{sec:provable-robustness}

Consider the task of classifying floating-point inputs $\bfx \in \mcX$
(e.g., images of objects) into $n \in \bbN$ classes (e.g., categories of objects),
where $\mcX \defeq [-1,1]^d \cap \fpq^d$ denotes the space of inputs throughout this subsection.
For this task, a function $f : \mcX \to \fpq^n$
is often viewed as a classifier in the following sense:
$f$ predicts $\bfx$ to be in the $i$-th class ($i \in [n]$),
where $i \defeq \class(f(\bfx))$ and $\class : \fpq^n \to [n]$ is defined by
$\class(y_1, \ldots, y_n) \defeq \argmax_{i \in [n]} y_i$
with an arbitrary tie-breaking rule.

A typical robustness property of a classifier $f$ is that
$f$ should predict the same class for all neighboring inputs
under the $\ell_\infty$ distance~\cite{Li2023}.
We formalize this notion of \emph{robust} classifiers in a way similar to \cite[Definition A.4]{wang2022interval}.

\begin{definition}
  \label{def:robust}
  Let $\delta > 0$ and $\mcD \subseteq \mcX$.
  A classifier $f : \mcX \to \fpq^n$ is called \emph{$\delta$-robust on $\mcD$}
  if for all $\bfx_0 \in \mcD$, $\bfy, \bfy' \in f\left(\mcN_\delta(\bfx_0)\right)$ implies $\class(\bfy) = \class(\bfy')$,
  where $\mcN_\delta(\bfx_0) \defeq \set{ \bfx \in \mcX \mid \| \bfx_0 - \bfx \|_\infty \leq \delta }$
  and $\|\cdot\|_\infty$ denotes the $\ell_\infty$-norm.
\end{definition}

Neural networks have been widely used as classifiers,
but establishing the robustness properties of practical networks as in \cref{def:robust} is intractable
due to the enormous number of inputs to be checked (i.e., $\abs{\mcN_\delta(\bfx_0)} \gg 1$ when $d \gg 1$).
Instead, these properties have been proven often by using interval analysis,
as mentioned in \cref{sec:interval-semantics}.
We formalize the notion of such \emph{provably robust} networks under interval analysis,
{in a way similar to \cite[Definition A.5]{wang2022interval}.}

\begin{definition}
  \label{def:provably-robust}
  Let $\delta > 0$ and $\mcD \subseteq \mcX$.
  A neural network $\nu : \efpq{}^d \to \efpq{}^n$ is called \emph{$\delta$-provably robust on $\mcD$}
  if for all $\bfx_0 \in \mcD$, $\bfy, \bfy' \in \gamma( \nu^\sharp(\mcB) )$ implies $\class(\bfy) = \class(\bfy')$,
  where $\mcB \in \bbI^d$ denotes the abstract box such that $\gamma(\mcB) = \mcN_\delta(\bfx_0)$.
\end{definition}

Under these definitions, we prove that given an ideal robust classifier $f$,
we can always find a neural network $\nu$
(i) whose robustness property is \emph{exactly} the same as that of $f$
and is easily provable using only interval analysis, and
(ii) whose predictions are \emph{precisely} equal to those of $f$.

\begin{theorem}
  \label{thm:provable-robustness}
  Let $f : \mcX \to \fpq^n$ be a classifier that is $\delta$-robust on $\mcD$,
  and $\sigma : \efpq \to \efpq$ be an activation function satisfying \cref{cond:activation2}.
  Then, there exists a $\sigma$-neural network $\nu : \efpq{}^d \to \efpq{}^n$ that is $\delta$-provably robust on $\mcD$
  and makes the same prediction as $f$ on $\mcD$ (i.e., $\class(\nu(\bfx)) = \class(f(\bfx))$ for all $\bfx \in \mcD$).
\end{theorem}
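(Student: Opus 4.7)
The plan is to reduce \cref{thm:provable-robustness} to \cref{thm:main} by approximating a cleaner surrogate target instead of $f$ itself. Define $g : \mcX \to \fpq^n$ by $g(\bfx) \defeq \bfe_{\class(f(\bfx))}$, where $\bfe_i \in \fpq^n$ is the $i$-th standard basis vector. Two observations are immediate: (i) $\class(g(\bfx)) = \class(f(\bfx))$ for every $\bfx \in \mcX$, so any network matching $g$ pointwise on $\mcD$ automatically inherits the desired prediction equivalence; and (ii) because $\class \circ f$ is constant on every neighborhood $\mcN_\delta(\bfx_0)$ with $\bfx_0 \in \mcD$ (by $\delta$-robustness of $f$), the surrogate $g$ is itself \emph{pointwise constant} on each such neighborhood, taking the value $\bfe_{i^\star}$ for $i^\star \defeq \class(f(\bfx_0))$.

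I would then extend each coordinate $g_i$ arbitrarily to a function $\tilde g_i : \bbR^d \to \bbR$ (e.g., $\tilde g_i \equiv 0$ off $\mcX$) and invoke \cref{thm:main} on each $\tilde g_i$ to obtain a scalar $\sigma$-network $\nu_i : \efpq{}^d \to \efpq$ satisfying
\[
  \gamma(\nu_i^\sharp(\mcB)) = \big[\min \widehat{\tilde g_i}(\gamma(\mcB)),\ \max \widehat{\tilde g_i}(\gamma(\mcB))\big] \cap \efpq
\]
for every abstract box $\mcB$ in $[-1,1]^d$. The $\nu_i$ would be assembled into a single $\sigma$-network $\nu : \efpq{}^d \to \efpq{}^n$ with $\nu(\bfx) = (\nu_1(\bfx), \ldots, \nu_n(\bfx))$ by first padding each $\nu_i$ to a common depth using a $\sigma$-based identity block (buildable from the constants $c_1, c_2$ in \cref{cond:orig-1}) and then stacking the layers block-diagonally, as in the composition mechanism of \cref{fig:network}. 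Because no neuron from one sub-network feeds into another, this also yields $\nu^\sharp(\mcB) = (\nu_1^\sharp(\mcB), \ldots, \nu_n^\sharp(\mcB))$ for every $\mcB \in \bbI^d$.

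Both conclusions then follow immediately. For \emph{pointwise agreement} on $\mcD$, I would instantiate the IUA guarantee at the singleton box $\gamma(\mcB) = \{\bfx\}$ with $\bfx \in \mcD$: since $\tilde g_i(\bfx) = g_i(\bfx) \in \fpq$, rounding is trivial and $\gamma(\nu_i^\sharp(\mcB)) = \{g_i(\bfx)\}$, which by soundness forces $\nu(\bfx) = g(\bfx)$ and hence $\class(\nu(\bfx)) = \class(f(\bfx))$. For \emph{$\delta$-provable robustness}, fix $\bfx_0 \in \mcD$ and let $\mcB$ be the abstract box with $\gamma(\mcB) = \mcN_\delta(\bfx_0)$; this box lies in $[-1,1]^d$ because $\mcN_\delta(\bfx_0) \subseteq \mcX$. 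Since $\widehat{\tilde g_i}$ is constant on $\gamma(\mcB)$, each $\gamma(\nu_i^\sharp(\mcB))$ is the singleton $\{(\bfe_{i^\star})_i\}$, so $\gamma(\nu^\sharp(\mcB)) = \{\bfe_{i^\star}\}$, and every pair $\bfy, \bfy' \in \gamma(\nu^\sharp(\mcB))$ trivially satisfies $\class(\bfy) = \class(\bfy') = i^\star$.

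The main obstacle is the parallel-composition step: the depth-padding block must behave as the identity \emph{both} under the concrete floating-point semantics \emph{and} under the interval semantics, since any spurious widening would enlarge some $\nu_i^\sharp(\mcB)$ beyond a singleton and destroy the robustness argument. This is precisely where the controlled values $\sigma(c_1)=0$ and $\sigma(c_2)$ from \cref{cond:orig-1} are essential, as they permit an activation-based identity gadget (in the spirit of the $\relu$ decomposition $x = \sigma(x) - \sigma(-x)$) whose interval semantics coincides with the concrete one on the intervals that actually arise. Once such a gadget is in place, the block-diagonal stacking makes each output coordinate of $\nu^\sharp$ depend only on the neurons of its own sub-network, yielding the decomposition required above.
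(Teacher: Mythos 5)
Your proposal follows essentially the same route as the paper: replace $f$ by its one-hot surrogate $g(\bfx)=\bfe_{\class(f(\bfx))}$ (the paper defines the same surrogate via the neighborhoods $\mcN_\delta(\bfx_0)$), apply \cref{thm:main} coordinatewise to get $\nu_1,\dots,\nu_n$, stack them block-diagonally so that $\nu^\sharp(\mcB)=(\nu_1^\sharp(\mcB),\dots,\nu_n^\sharp(\mcB))$, and conclude both prediction agreement and $\delta$-provable robustness from the fact that each $\gamma(\nu_i^\sharp(\mcB))$ is a singleton on the boxes $\gamma(\mcB)=\mcN_\delta(\bfx_0)$, exactly as in \cref{sec:pfthm:provable-robustness}.

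The one place you diverge is the depth-equalization step, and it is the only soft spot in your argument. The paper does not need any padding gadget: it observes that the networks produced by the proof of \cref{thm:main} all have the same depth for fixed $d$, independently of the target function, so the $\nu_i$ can be stacked directly. Your alternative---a $\sigma$-based identity block built from $c_1,c_2$ of \cref{cond:orig-1}---is not justified as stated: for a general activation satisfying \cref{cond:activation2} (e.g.\ rounded sigmoid), there is no reason a single affine-plus-activation block should act as the exact identity under both the concrete and the interval semantics, and the ReLU-style decomposition $x=\sigma(x)-\sigma(-x)$ has no analogue for such $\sigma$. The gap is repairable within your own framework, since the padding block only needs to fix the values $0,1$ and the intervals $\intv{0,0},\intv{1,1},\intv{0,1}$ arising from the surrogate, which the scaled-indicator networks of \cref{lem:indc} (composed with a suitable output affine layer) can provide; but the cleanest fix is simply to invoke the target-independent depth of the construction in \cref{thm:main}, as the paper does.
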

\begin{proof}[Proof sketch]
  \tcr{%
    We show this
    (i) by applying \cref{thm:main} to $n$ target functions that are constructed from $f$, and
    (ii) by using the following observation:
    the network constructed in the proof of \cref{thm:main} has depth not depending on a target function (when $d$ is fixed).
    The full proof is in~\cref{sec:pfthm:provable-robustness}.}
\end{proof}

\subsection{Floating-Point Interval-Completeness}
\label{sec:completeness}

To motivate our result, we recall the notion of Turing completeness.
A computation model is called \emph{Turing-complete} 
if for every Turing machine $T$, there exists a program in the model 
that can simulate the machine~\citep{Kozen97, MooreM11, Barak23}. 
%
Extensive research has established the Turing completeness of numerous computation models:
from untyped $\lambda$-calculus~\citep{Church33, Turing36} 
and $\mu$-recursive functions~\citep{Godel34, Church36}, 
to type systems (e.g., Haskell~\citep{Wansbrough98}, Java~\citep{Grigore17}) 
and neural networks over the rationals (e.g., RNNs~\cite{SiegelmannS95}, Transformers~\citep{PerezBM21}).
%
These results identify simpler computation models as powerful as Turing machines,
and shed light on the computational power of new models.

We ask an analogous question for \emph{floating-point} computations instead of \emph{binary} computations,
where the former is captured by floating-point programs and the latter by Turing machines.
That is, which small class of floating-point programs can simulate all (or nearly all) floating-point programs?

Formally, let $\mcF$ be the set of all terminating programs
that take finite floats and return finite or infinite floats,
where these programs can use any floating-point constants/operations (e.g., $-\infty$, $\otimes$)
and language constructs (e.g., if-else, while).
%
Then, $\mcF$ semantically denotes the set of all functions
from $\fpq^n$ to $(\fpq \cup \{-\infty, +\infty\})^m$ for all $n,m \in \bbN$,
because each such function can be expressed with if-else branches and floating-point constants.
For this class of programs, we define the notion of
\emph{(interval\=/)simulation} and \emph{floating-point (interval\=/)completeness} as follows.

\begin{definition}
  Let $P, Q \in \mcF$ be programs with arity $n$.
  We say $Q$ \emph{simulates} $P$ if $Q(\bfx) = P(\bfx)$ for all $\bfx \in \fpq^n$,
  where $P(\bfx)$ denotes the concrete semantics of $P$ on $\bfx$.
  We say $Q$ \emph{interval-simulates} $P$
  if $\gamma({Q}^\sharp(\mcB)) = [\min P(\gamma(\mcB)), \max P(\gamma(\mcB))] \cap \efpq$
  for all abstract boxes $\mcB$ in $\fpq^n$,
  where \mbox{${Q}^\sharp(\mcB)$ denotes the interval semantics of $Q$ on $\mcB$.}
\end{definition}

\begin{definition}
  We say a class of programs $\mcG \subseteq \mcF$ is \emph{floating-point (interval\=/)complete}
  if for every $P \in \mcF$, there exists $Q \in \mcG$ such that $Q$ (interval\=/)simulates~$P$.
\end{definition}

We prove that a surprisingly small class of programs is floating-point interval-complete (so floating-point complete).
In particular, we show that only floating-point addition, multiplication, and constants
are sufficient to interval\=/simulate \emph{all} halting programs that output finite/infinite floats when given finite floats.

\begin{theorem}
  \label{thm:turing}
  $\mcF_{\oplus, \otimes} \subset \mcF$ is floating-point interval-complete,
  where $\mcF_{\oplus, \otimes}$ denotes the class of straight-line programs
  that use only $\oplus$, $\otimes$, and floating-point constants.
\end{theorem}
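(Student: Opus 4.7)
My plan is to reduce Theorem~\ref{thm:turing} to Theorem~\ref{thm:main} instantiated with the identity activation $\sigma \defeq \round{\mathrm{id}}$. The key observation is that $\round{\mathrm{id}}(x) = x$ for every finite float, so the coordinatewise lift $\tilde{\sigma}_d$ in \eqref{eq:nn} is the identity on $\fpq^d$; consequently, any $\sigma$-network collapses into a pure composition $\aff_{W_L,\bfb_L}\circ\cdots\circ\aff_{W_1,\bfb_1}$ of floating-point affine transformations, which by the definition of $\aff$ in \S\ref{sec:floating-point} is a straight-line expression built exclusively from $\oplus$, $\otimes$, and the floating-point constants appearing in the $W_\ell$ and $\bfb_\ell$. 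Any such network therefore unrolls into a program in $\mcF_{\oplus,\otimes}$ with identical concrete and interval semantics. This reduction is nontrivial only because of the floating-point setting: over the reals, the composition would collapse to a single affine map and be useless for universal approximation, but over floats the non-associativity and discretization of $\oplus,\otimes$ provide enormous expressive power, as already highlighted in \S\ref{sec:comparison}.

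The first step is to verify that $\sigma = \round{\mathrm{id}}$ satisfies Condition~\ref{cond:activation2}; this is essentially done already in \S\ref{sec:comparison}, where the real-valued identity $\rho(x)=x$ is observed to satisfy the sufficient conditions of Lemma~\ref{lem:activation} with $c_1'=0$, $c_2'=1$, $\delta=1/2$, $\lips=1$. Next, given any $P \in \mcF$ of arity $n$ with $m$ outputs---which, by the paragraph preceding the definition of (interval-)simulation, semantically denotes an arbitrary function $\fpq^n \to \efpq^m$---I would choose a target $f:\bbR^n\to\bbR^m$ satisfying $\round{f}|_{\fpq^n} \equiv P$: set $f_i(\bfx) \defeq P_i(\round{\bfx})$ whenever $P_i(\round{\bfx})$ is finite, and set $f_i(\bfx)$ to any real of sufficiently large magnitude and correct sign when $P_i(\round{\bfx})\in\{-\infty,+\infty\}$, defining $f$ arbitrarily off $\fpq^n$. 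The multi-output extension of Theorem~\ref{thm:main} (noted in the remark following the theorem) then produces a $\sigma$-network $\nu$ whose interval semantics computes the direct image of $\round{f}\equiv P$ exactly on abstract boxes in $[-1,1]^n$; unrolling $\nu$ yields a candidate $Q \in \mcF_{\oplus,\otimes}$.

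The main obstacle, and the only technically nontrivial step, is to upgrade this guarantee from boxes contained in $[-1,1]^n$ to \emph{all} abstract boxes in $\fpq^n$, as demanded by the definition of interval-completeness. I would handle this by revisiting the construction underlying Theorem~\ref{thm:main} and observing that it operates on the finite discrete input grid $[-1,1]^n \cap \fpq^n$, with the bound $1$ used only as a convenient normalization---consistent with the paper's own remark that this choice is conventional rather than essential. Replacing $[-1,1]$ by $[-\fmax,\fmax]$ throughout the construction should give an analogue of Theorem~\ref{thm:main} whose conclusion now covers $[-\fmax,\fmax]^n \cap \fpq^n = \fpq^n$ directly. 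The technical subtlety will be to verify that the magnitude and Lipschitz constraints of Condition~\ref{cond:activation2} (the $2^{\emax-6}\cdot|\eta|$ and $2^{\emax-7}$-type bounds) remain satisfied after the rescaling; since these bounds depend only on the free parameter $\eta$ and since $\round{\mathrm{id}}$ is globally $1$-Lipschitz, I expect the rescaling to go through with only cosmetic adjustments to the intermediate constants, at which point the interval-completeness of $\mcF_{\oplus,\otimes}$ follows immediately from the reduction sketched above.
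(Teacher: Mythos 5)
Your reduction of identity-activation networks to programs in $\mcF_{\oplus,\otimes}$, your handling of infinite outputs by choosing a real target $f$ with $\round{f}\equiv P$, and your observation that $\round{\mathrm{id}}$ satisfies \cref{cond:activation2} are all fine and consistent with the paper. The genuine gap is exactly the step you flag as "the only technically nontrivial" one and then dismiss as cosmetic: extending the guarantee from abstract boxes in $[-1,1]^n$ to all abstract boxes in $\fpq^n$. This is not a matter of renormalizing $[-1,1]$ to $[-\fmax,\fmax]$ in the proof of \cref{thm:main}. The construction behind \cref{lem:indc} separates a threshold $z$ from $z^+$ with a \emph{single} affine layer $\mu_z(x)=(w\otimes x)\oplus b$ (\cref{lem:eta_etaplus-2}), and when $z$ lies near the subnormal range the required slope is of order $2^{\expo{\eta}-\expo{z}-c_z}\approx 2^{\emax}$; the proof only avoids overflow because the inputs are bounded by $1$, so that $\mu_z(\pm 1)=\pm w\oplus b$ stays within $[-2^{\emax},2^{\emax}]$. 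If the inputs may be as large as $\fmax$, then $w\otimes x$ overflows for such $z$, and no choice of a depth-$1$ affine layer can map $\intv{-\fmax,z}$ and $\intv{z^+,\fmax}$ to opposite sides of $\eta$ while keeping both images finite. So the Lipschitz/magnitude bookkeeping in \cref{cond:activation2} is not the obstruction; the single-affine-layer separation step itself breaks, and your argument as written does not repair it.

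The paper avoids this by \emph{not} routing \cref{thm:turing} through \cref{thm:main} at all. It proves a dedicated separability lemma for the identity activation (\cref{lem:turing-indc}): the identity is $(\fpq,1,1,L_\phi,L_\psi)$-separable, i.e., the scaled indicators $\iota_{\le z},\iota_{\ge z},\iota_{>\eta}$ are realized exactly over the \emph{entire} finite float range. The construction there is many layers deep: it iterates the contraction $f_0(x)=\sigma(2^{-1}\otimes\sigma(x\oplus\fmin))$ on the order of $\emax-\emin+M$ times to collapse $[\fmin,\fmax]_\fpq$ to $\{\fmin\}$ and $[-\fmax,0]_\fpq$ to $\{0\}$, and for thresholds with $|z|\ge 2^{\emax-\mbit-1}$ it first pre-scales by $2^{-1}$ precisely to dodge the overflow problem described above. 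With separability on all of $\fpq$ in hand, \cref{thm:turing} follows directly from \cref{lem:indc-to-iua} (whose target class $h:\efpq{}^d\to\efpq\setminus\{\bot\}$ already covers infinite outputs). To fix your proposal you would need to supply an analogue of this full-range separability argument—e.g., the iterated-halving construction—rather than a rescaling of the $[-1,1]$ proof.
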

\begin{proof}[Proof sketch]
  \tcr{%
    We show this by extending the key lemma used in the proof of \cref{thm:main}:
    there exist $\sigma$-networks that capture the direct image maps of indicator functions over $[-1,1]^n \cap \fpq^n$
    (\cref{lem:indc}).
    In particular, we prove that $[-1,1]^n \cap \fpq^n$ can be extended to $\fpq^n$
    if $\sigma$ is the identity function.
    The full proof is in \cref{sec:pfthm:turing}.}
\end{proof}

To our knowledge, this is the first non-trivial result on floating-point (interval\=/)completeness.
This result is an extension of our IUA theorem (\cref{thm:main}) for the identity activation function $\sigma_\mathrm{id}$,
in that floating-point interval-completeness considers the input domain $\fpq^n$ (not $[-1,1]^n \cap \fpq^n$)
and $\mcF_{\oplus, \otimes}$ includes all $\sigma_\mathrm{id}$-networks (but no other $\sigma$-networks).
\cref{thm:turing}, however, \emph{cannot} be extended to
the input domain $(\fpq \cup \set{-\infty, +\infty})^n$ (instead of $\fpq^n$),
since no program in $\mcF_{\oplus, \otimes}$ can represent a non-constant function
that maps an infinite float to a finite float%
---this is because $\oplus$ and $\otimes$ do not return finite floats when applied to $\pm\infty$.


\section{Proof of IUA Theorem Over Floats}
\label{sec:iua-proof}

We now prove \cref{thm:main} by constructing a $\sigma$-neural network
that computes the upper and lower points of the direct image map of a
rounded target function $\widehat{f}$.
For $a, b \in \bbR$,
we let $[a, b]_\fpq \defeq [a, b] \cap \fpq$ and
$\bbI_{[a,b]} \defeq \set{ \mcI \in \bbI \mid \gamma(\mcI) \subseteq [a,b]}$.
With this notation, $\smash{(\bbI_{[a,b]})^d}$
is the set of all abstract boxes in $\smash{[a,b]^d}$.

We start with defining indicator functions for a set of floating-point values and for an abstract box,
which play a key role in our proof.

\begin{definition}
  \label{def:indicator}
  Let $d \in \bbN$.
  For $\mcS \subseteq \efpq{}^d$,
  we define $\iota_{\mcS} : \efpq{}^d\to\efpq$ as
  $\iota_{\mcS}(\bfx) \defeq 1$ if $\bfx\in\mcS$, and $\iota_{\mcS}(\bfx) \defeq 0$ otherwise.
  For $a\in\fpq$, we define $\iota_{> a} : \efpq \to \efpq$ by $\smash{\iota_{\{x> a \,\mid\, x\in\fpq\}}}$,
  and define $\iota_{\ge a}$, $\iota_{< a}$, $\iota_{\le a}$ analogously.
  For $\mcC \in \bbI^d$, we define $\iota_{\mcC} : \efpq{}^d \to \efpq$ by $\iota_{\conc{\mcC}}$.
\end{definition}

Our proof of \cref{thm:main} consists of two parts.
We first show the existence of $\sigma$-networks
that precisely compute indicator functions under the interval semantics.
We then construct a $\sigma$-network stated in \cref{thm:main}
by composing the $\sigma$-networks for indicator functions and using the properties of indicator functions.

Both parts of our proof are centered around a new property of activation functions,
which we call ``$([a,b]_{\fpq}, \eta, K, L_\phi, L_\psi)$-separability'' and define as follows.

\begin{definition}
  \label{def:separability}
  We say that $\sigma:\efpq\to\efpq$ is \emph{$([a,b]_{\fpq}, \eta, K, L_\phi, L_\psi)$-separable}
  for $a,b, \eta, K \in \fpq$ and $L_\phi, L_\psi \in \bbN$ if the following hold:
  \begin{itemize}[leftmargin=15pt]
  \item
    For every $z\in[a,b]_{\fpq}$, there exist depth-$L_\phi$ $\sigma$-networks
    $\phi_{\le z}, \phi_{\ge z} : \efpq \to \efpq$  without the last affine layer
    \mbox{such that $\smash{\phi_{\le z}^\sharp} = \smash{(K\iota_{\le z})^\sharp}$
      and $\smash{\phi_{\ge z}^\sharp} = \smash{(K\iota_{\ge z})^\sharp}$ on $\bbI_{[a,b]}$.}
  \item
    There exists a depth-$L_\psi$ $\sigma$-network $\psi_{>\eta} : \efpq \to \efpq$
    without the first and last affine layers
    such that $\smash{\psi_{>\eta}^\sharp} = \smash{(K\iota_{>\eta})^\sharp}$ on $\bbI_{[a,b]}$.
  \end{itemize}
\end{definition}

The first condition in \cref{def:separability} ensures the existence of $\sigma$-networks
that perfectly implement scaled indicator functions $K \iota_{\leq z}$ and $K \iota_{\geq z}$
under the interval semantics, for all $z \in [a,b]_\fpq$.
Since these networks should have the same depth $L_\phi$ without the last affine layer,
a function $\nu : \efpq{}^n \to \efpq$ defined, e.g., by
\begin{align}
  \label{eq:pfthm:main1}
  \nu(x_1,\dots,x_n)
  & =
  \paren[\Bigg]{ \bigoplus_{i=1}^n \alpha \otimes \phi_{\le z_i}(x_i) } \oplus \beta
\end{align}
denotes a depth-$L_\phi$ $\sigma$-network for any $z_i \in [a,b]_\fpq$ and $\alpha, \beta \in \fpq$.
The second condition in \cref{def:separability} guarantees that
another scaled indicator function $K \iota_{>\eta}$ can be precisely implemented
by a depth-$L_\psi$ $\sigma$-network $\psi_{>\eta}$ without the first and last affine layers.
This implies, e.g., that $\psi_{>\eta} \circ \nu$ denotes a depth-$(L_\phi+L_\psi-1)$ $\sigma$-network,
where $\nu$ denotes the network presented in \cref{eq:pfthm:main1}.

Using the separability property, we can formally state the two parts of our proof as \cref{lem:indc,lem:indc-to-iua}.
\cref{thm:main} is a direct corollary of the two lemmas.
We present the proofs of \cref{lem:indc,lem:indc-to-iua} in the next subsections
(\cref{sec:pflem:indc,sec:pflem:indc-to-iua}).

\begin{lemma}
  \label{lem:indc}
  Suppose that $\sigma : \efpq \to \efpq$ satisfies \cref{cond:activation2} with constants $c_2, \eta \in \fpq$.
  Then, $\sigma$ is $([-1,1]_{\fpq}, \eta, K, L_\phi, L_\psi)$-separable for some $L_\phi, L_\psi \in \bbN$,
  where $\eta$ and $K \defeq \sigma(c_2)$ satisfy $|\eta| \in [2^{\emin+5}, 4-8\feps]$
  and $|K| \in [\tfrac{\feps}{2}+2\feps^2, \tfrac{5}{4}-2\feps]$.
\end{lemma}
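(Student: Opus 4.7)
My plan is to construct $\psi_{>\eta}$ first and then obtain $\phi_{\le z}$ and $\phi_{\ge z}$ by prepending a single floating-point affine layer that shifts the decision threshold from $\eta$ to $z$.

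\textbf{Plan for $\psi_{>\eta}$.} Because $\psi_{>\eta}$ must be without the first affine layer, the very first operation on the input $x$ is a coordinatewise $\sigma$. By \cref{cond:orig-2}, this already separates the input range: $\sigma(x) \le \sigma(\eta)$ whenever $x \le \eta$ and $\sigma(x) \ge \sigma(\eta^+)$ whenever $x \ge \eta^+$ (or the symmetric reversed inequalities), and both $|\sigma(\eta)|$ and $|\sigma(\eta^+)|$ lie in $[2^{\emin+5}, 2^{\emax-6}|\eta|]$, so they are normal floats with a strict gap. I then add a bounded number of hidden layers whose task is to ``crisp'' the two clusters of values down to exactly the two floats $c_1$ and $c_2$ supplied by \cref{cond:orig-1}, after which the final $\sigma$ layer maps them to $0$ and $K$ respectively. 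The crisping is implemented by a scale-and-shift $v \mapsto a \otimes v \oplus b$ with $a, b \in \fpq$ chosen so that the ``low'' cluster lands strictly $\le \eta$ with a margin of at least one ulp and the ``high'' cluster strictly $\ge \eta^+$ with similar margin, followed by another $\sigma$; this step is iterated until each cluster collapses to a single float. The Lipschitz bound in \cref{cond:orig-3} is exactly what keeps the intermediate values in $[-\fmax, \fmax]$, so no NaN or infinity leaks into the computation.

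\textbf{Plan for $\phi_{\le z}$ and $\phi_{\ge z}$.} Given $z \in [-1,1]_{\fpq}$, choose a floating-point affine map $A_z(x) = a_z \otimes x \oplus b_z$ that sends $\{x \in [-1,1]_{\fpq} : x \ge z\}$ into $\{y : y \ge \eta^+\}$ and $\{x \in [-1,1]_{\fpq} : x < z\}$ into $\{y : y \le \eta\}$ monotonically, with enough margin to absorb the rounding errors of $\otimes$ and $\oplus$. Since $|\eta| \ge 2^{\emin+5}$ and the input range is contained in $[-1,1]$, $a_z$ can be taken in the normal-float range with no overflow. Composing with the body of $\psi_{>\eta}$ (the part after its first $\sigma$) produces $\phi_{\ge z}$ by treating $A_z$ as the first affine layer. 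Then $\phi_{\le z}$ is obtained by a final sign flip $y \mapsto K \ominus y$, which is absorbed into an internal affine. Because these prefixes and suffixes have the same depth independent of $z$, a single constant $L_\phi$ works for every $z \in [-1,1]_\fpq$, and $L_\psi$ can likewise be chosen uniform.

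\textbf{Interval semantics.} Each floating-point affine transformation is monotone in its single coordinate, and each $\sigma$ application is monotone on each of the two sides of $\eta$ by \cref{cond:orig-2}. Hence the interval semantics of every layer reduces to evaluating the endpoints of its input interval. By induction on depth, $\psi_{>\eta}^\sharp$ and $\phi_{\le z}^\sharp, \phi_{\ge z}^\sharp$ agree with the direct image of the corresponding scaled indicator on every $\mcI \in \bbI_{[-1,1]}$, which is exactly what \cref{def:separability} demands.

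\textbf{Main obstacle.} The delicate step is the binarization inside $\psi_{>\eta}$: showing that a bounded-depth composition of $\oplus, \otimes$ and $\sigma$ can collapse the two half-lines $\{v \le \sigma(\eta)\}$ and $\{v \ge \sigma(\eta^+)\}$, which in principle contain many distinct floats, into exactly the two floats $c_1$ and $c_2$. I expect to handle this by iterating a scale-shift-activate block whose scaling factor decreases the ulp-diameter of each cluster by a fixed multiplicative factor (controlled by the Lipschitz bound from \cref{cond:orig-3}) while keeping the gap above $\eta^+ - \eta$. Since the float set is discrete and bounded, after a number of iterations depending only on $(E, M)$ each cluster necessarily reduces to a single float, at which point a final affine maps them to $c_1$ and $c_2$ exactly. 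This iteration count gives the uniform depth $L_\psi$, and the whole construction also supplies the uniform depth $L_\phi$ claimed in \cref{def:separability}.
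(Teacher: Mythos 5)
Your overall architecture does match the paper's (prepend a threshold-shifting affine layer $\mu_z$, collapse the two sides of the threshold by an iterated scale--shift--activate contraction that exploits round-off, then push the two surviving values through $\sigma$ to obtain $0$ and $K$), but the step you dismiss as ``a final affine maps them to $c_1$ and $c_2$ exactly'' is a genuine gap---indeed it is the most delicate part of the paper's argument. After the collapse, the two clusters are a pair of adjacent floats (in the paper, $\eta$ and $\eta^+$, one ulp apart), and a single floating-point affine map $x \mapsto (w\otimes x)\oplus b$ cannot in general carry two prescribed adjacent floats onto two arbitrary prescribed targets $c_1,c_2$ exactly: $w$ is forced, up to rounding, to be about $(c_2-c_1)/(\eta^+-\eta)$, and the roundings of $w\otimes\eta$ and of the subsequent $\oplus\,b$ each perturb the result by up to half an ulp at the output scale, so typically at least one of the two targets is missed. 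The paper resolves this with the depth-2 network $\tau_{\theta,\theta'}$ of \cref{lem:sigmaetatoc1-2}, built from a multiplicative quasi-inverse of the mantissa of $K$ (\cref{lem:inverse}) together with a stack of constant-input neurons $\alpha_i\otimes\sigma(c_2)$ and the residue-control lemmas, which cancel the rounding error bit by bit. Without an argument of this kind your network does not output exactly $\{0,K\}$, and the required identities $\phi_{\le z}^\sharp=(K\iota_{\le z})^\sharp$, $\psi_{>\eta}^\sharp=(K\iota_{>\eta})^\sharp$ fail.

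Two further problems. First, the prefix $A_z$ cannot be justified by ``enough margin to absorb the rounding errors'': $z$ and $z^+$ are adjacent floats and so are $\eta$ and $\eta^+$, so this single affine layer must send two inputs one ulp apart to opposite sides of a one-ulp gap---there is no slack at all, and the paper's \cref{lem:eta_etaplus-2} needs an exact ulp-level choice of the weight and bias with a case analysis over the signs of $\eta$ and $z$, powers of two, and subnormal scales. Second, your interval-semantics argument is unsound as stated: \cref{cond:orig-2} does not make $\sigma$ monotone on either side of $\eta$ (it only bounds $\sigma(x)$ by $\sigma(\eta)$ resp.\ $\sigma(\eta^+)$), so you cannot compute interval images by evaluating endpoints. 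The correct route, and the one the paper takes, is that on each side of the threshold the composed map is exactly constant (so its interval image is a singleton regardless of monotonicity), while for intervals straddling $z$ one needs the additional facts $\mu^\sharp(\langle-\fmax,\fmax\rangle)=\langle\eta,\eta^+\rangle$ and $\tau^\sharp(\langle\eta,\eta^+\rangle)=\langle\min\{c_1,c_2\},\max\{c_1,c_2\}\rangle$, combined with the betweenness clause of \cref{cond:orig-1}, to pin the final interval to exactly $\langle 0,K\rangle$. (Relatedly, your iteration count for the contraction is not a function of $(E,M)$ alone---it depends on $\lambda$ and $|\sigma(\eta^+)-\sigma(\eta)|$---though this is harmless since it is independent of $z$, which is all that uniformity of $L_\phi,L_\psi$ requires.)
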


\begin{lemma}
  \label{lem:indc-to-iua}
  Suppose that $\sigma : \efpq \to \efpq$ is $([a,b]_{\fpq},\eta,K,L_\phi,L_\psi)$-separable
  for some $a, b, \eta, K \in \fpq$ and $L_\phi, L_\psi \in \bbN$
  with $|\eta| \in [2^{\emin+5}, 4-8\feps]$ and $|K| \in [\tfrac{\feps}{2}+2\feps^2, \tfrac{5}{4}-2\feps]$.
  Then, for every $d \in \bbN$ and
  function $h : \smash{\efpq{}^d} \to \smash{\efpq} \setminus \set{\bot}$,
  there exists a $\sigma$-neural network $\nu : \efpq{}^d \to \efpq$ such that
  $\smash{\nu^\sharp}(\mcB) = \smash{h^\sharp}(\mcB)$ for all abstract boxes $\mcB$ in $[a,b]^d$.
\end{lemma}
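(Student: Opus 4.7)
The construction of $\nu$ proceeds by writing $h$ on the finite grid $[a,b]^d_\fpq$ as a telescoping sum of indicators of super-level sets, each of which will be realized as a $\sigma$-network with tight interval semantics. Concretely, enumerate the distinct values $v_1 < v_2 < \cdots < v_k$ attained by $h$ on $[a,b]^d_\fpq$, and set $\mcS_j := h^{-1}([v_j,\infty])$ for $j \geq 2$. Then $h(\bfx) = v_1 + \sum_{j=2}^k (v_j - v_{j-1})\,\iota_{\mcS_j}(\bfx)$ pointwise on the grid. If each $\iota_{\mcS_j}$ is realized by a $\sigma$-network whose interval semantics matches $(\iota_{\mcS_j})^\sharp$ on $\smash{\bbI_{[a,b]}^d}$, then the $\sigma$-network $\nu$ obtained by combining them through a final floating-point affine map satisfies $\nu^\sharp(\mcB) = \intv{m, M} = h^\sharp(\mcB)$, where $m := \min h(\gamma(\mcB))$ and $M := \max h(\gamma(\mcB))$: indeed, $\iota_{\mcS_j}^\sharp(\mcB)$ equals $\intv{1,1}$ for $v_j \leq m$, $\intv{0,0}$ for $v_j > M$, and $\intv{0,1}$ in between, so the telescoping sum collapses precisely to $\intv{m, M}$.

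\textbf{Box indicators.} The core building block is, for every floating-point box $\mcC = \prod_i [u_i, v_i]_\fpq \subseteq [a,b]^d_\fpq$, a $\sigma$-network $\chi_\mcC$ realizing $(K \iota_\mcC)^\sharp$ on $\smash{\bbI_{[a,b]}^d}$. Using separability, I first form
\begin{equation*}
T(\bfx) \;:=\; \sum_{i=1}^d \bigl(\phi_{\geq u_i}(x_i) + \phi_{\leq v_i}(x_i)\bigr),
\end{equation*}
whose pointwise value is $K\bigl(d + \#\{i : x_i \in [u_i,v_i]\}\bigr)$; hence $T(\bfx) = 2dK$ iff $\bfx \in \mcC$, and $T(\bfx) \leq (2d-1)K$ otherwise. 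A floating-point affine transformation then aligns $T = 2dK$ strictly above $\eta$ and all other attainable values weakly below $\eta$, after which $\psi_{>\eta}$ acts as a clean threshold. A case analysis over $\mcB \subseteq \mcC$, $\mcB \cap \mcC = \emptyset$, and $\mcB$ straddling $\mcC$ verifies the three required interval outputs.

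\textbf{Set indicators via disjunction.} To realize $\iota_{\mcS_j}$ for the general subset $\mcS_j \subseteq [a,b]^d_\fpq$, I enumerate \emph{every} abstract box $\mcC$ with $\gamma(\mcC) \subseteq \mcS_j$ (there are only finitely many, since $[a,b]_\fpq$ is finite) and form $R(\bfx) := \sum_{\mcC} \chi_\mcC(\bfx)$. Pointwise $R(\bfx) > 0$ iff $\bfx \in \mcS_j$; crucially, when $\gamma(\mcB) \subseteq \mcS_j$ the input box $\mcB$ itself appears in the enumeration, so $\chi_\mcB^\sharp(\mcB) = \intv{K,K}$ forces the lower bound of $R^\sharp(\mcB)$ to be at least $K$. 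A second floating-point affine shift followed by $\psi_{>\eta}$ thresholds at level $\approx K/2$ and yields $\intv{K,K}$ when $\gamma(\mcB) \subseteq \mcS_j$, $\intv{0,0}$ when $\gamma(\mcB) \cap \mcS_j = \emptyset$, and $\intv{0,K}$ otherwise---exactly $(K\iota_{\mcS_j})^\sharp$ up to the trivial rescaling by $1/K$ that is folded into the final telescoping affine combination.

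\textbf{Main obstacle.} The heart of the argument is floating-point exactness: each affine map used for threshold alignment and for the final telescoping must compute the real value $\alpha T + \beta$ with no rounding error, and every intermediate signal must remain inside $[a,b]_\fpq$ so that the separability guarantees $\phi^\sharp_{\leq z}$, $\phi^\sharp_{\geq z}$, and $\psi^\sharp_{>\eta}$ continue to apply. Choosing shift parameters that satisfy both constraints simultaneously relies delicately on the quantitative hypotheses $|K| \in [\tfrac{\feps}{2} + 2\feps^2,\, \tfrac{5}{4} - 2\feps]$ and $|\eta| \in [2^{\emin+5},\, 4 - 8\feps]$. Composing the sub-networks into a single $\sigma$-network of uniform depth requires merging adjacent affine layers at every join, which is made possible precisely by the ``without first/last affine layer'' clauses of \cref{def:separability}. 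Finally, infinite values in the image of $h$ are accommodated by replacing the final affine layer with one that deliberately overflows on the appropriate branch.
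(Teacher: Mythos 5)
Your skeleton (box indicators, then indicators of arbitrary subsets, then a level-set/telescoping combination) is the same as the paper's, but the steps you defer to ``delicate parameter choices'' are exactly where the proof lives, and as stated they fail. First, the box indicator: you form $T(\bfx)$ as a single sum of $2d$ terms each in $\{0,K\}$ and ask one affine layer plus $\psi_{>\eta}$ to separate the count $2d$ from $2d-1$. In floating point this is impossible once $d$ exceeds roughly $2^{M+1}$: the left-associated sum $\bigoplus$ of equal increments saturates (e.g., $\bigoplus_{i=1}^{n}1=2^{M+1}$ for all $n\ge 2^{M+1}$), so the sum itself no longer records whether the last coordinate test passed, and no subsequent affine map or threshold can recover it. This is precisely why \cref{lem:indc-box} aggregates at most $2^{M}$ indicators per layer and recurses, giving the extra depth factor $\lceil\log_{2^M}d\rceil$; your construction has no analogue of this hierarchical step.

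Second, your ``main obstacle'' is the wrong requirement: you demand that the threshold-alignment and telescoping affine maps compute $\alpha T+\beta$ and $v_1+\sum_{j}(v_j-v_{j-1})\,\iota_{\mcS_j}$ with no rounding error. That is unattainable in general --- the gaps $v_j-v_{j-1}$ span the whole exponent range, so adding a small gap to a large accumulated value must round, and the coefficients $(v_j-v_{j-1})$ rescaled by $1/K$ need not even be representable. The paper does not avoid rounding; it exploits it: weights $w_i$ are chosen via \cref{lem:endbit_control} so that $w_i\otimes K$ lies in $\bigl(\tfrac12(z_i-z_{i-1}),\tfrac32(z_i-z_{i-1})\bigr)$, and \cref{lem:approx_sum} shows that the rounded running sum then lands \emph{exactly} on $z_j$ at every step. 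Without this mechanism (or a substitute argument) your final combination, and likewise the threshold alignment after your set-indicator sum, is unjustified. Finally, your telescoping formula with $v_1$ possibly $-\infty$ is ill-defined when $h$ attains $-\infty$ together with larger values; the paper avoids this by splitting $h$ into $h_+$ and $h_-$ and summing the two families with opposite signs, whereas your one-line remark about a ``deliberately overflowing'' last layer does not address it.
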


\tcr{%
To prove \cref{lem:indc}, we construct a $\sigma$-network for the scaled indicator function $K \iota_{\geq z}$ in two steps.
We first construct a $\sigma$-network that maps all inputs smaller than $z$ to some point $x_1$,
and all other inputs to another point $x_2 \neq x_1$ (\cref{lem:sigmaindicator-2,lem:eta_etaplus-2}),
where we exploit round-off errors to obtain such ``contraction''
(\cref{lem:contraction2} in \cref{sec:techlemma_revised}).
We then map $x_1$ to $c_1$ and $x_2$ to $c_2$, and apply $\sigma$ to the result so that
the final network maps all inputs smaller than $z$ to $\sigma(c_1) = 0$
and all other inputs to $\sigma(c_2) = K$ (\cref{lem:sigmaetatoc1-2}).
We construct $\sigma$-networks for $K \iota_{\leq z}$ and $K \iota_{>\eta}$ analogously.

To prove \cref{lem:indc-to-iua}, we construct $\sigma$-networks for the scaled indicator functions of
every box in $([a,b]_\fpq)^d$ (\cref{lem:indc-box}) and
every subset of $([a,b]_\fpq)^d$ (\cref{lem:indc-set}),
using the indicator functions constructed in \cref{lem:indc}.
We construct the final $\sigma$-network (i.e., universal interval approximator)
as a floating-point linear combination of the $\sigma$-networks
that represent the scaled indicator functions of the level sets of the target function (\cref{lem:iua}).}

\subsection{Proof of Lemma~\ref{lem:indc}}
\label{sec:pflem:indc}

To prove \cref{lem:indc}, we assume that the activation function $\sigma : \efpq \to \efpq$
satisfies \cref{cond:activation2} with some constants $c_1, c_2, \eta \in \fpq$.
By \cref{cond:activation2}, the constants $\eta$ and $K \defeq \sigma(c_2)$ clearly satisfy
the range condition in \cref{lem:indc}.
Hence, it remains to show the $([-1, 1]_\fpq, \eta, K, L_\phi, L_\psi)$-separability of $\sigma$
for some $L_\phi, L_\psi \in \bbN$.
This requires us to construct $\sigma$-networks
$\psi_{>\eta}$ and $\phi_{\leq z}, \phi_{\geq z}$ for every $z \in [-1, 1]_\fpq$ such that
$\smash{\psi_{> \eta}^\sharp} = \smash{(K \iota_{> \eta})^\sharp}$,
$\smash{\phi_{\leq z}^\sharp} = \smash{(K \iota_{\leq z})^\sharp}$, and
$\smash{\phi_{\geq z}^\sharp} = \smash{(K \iota_{\geq z})^\sharp}$ on $\bbI_{[-1, 1]}$ (\cref{def:separability}).

We first construct $\psi_{>\eta}$ using \cref{lem:sigmaindicator-2,lem:sigmaetatoc1-2} (\cref{fig:indicator}).
The proofs of these lemmas, presented in \cref{sec:pflem:sigmaindicator-2,sec:pflem:sigmaetatoc1-2},
rely heavily on \cref{cond:orig-1,cond:orig-2,cond:orig-3} of \cref{cond:activation2}.

\begin{lemma}
  \label{lem:sigmaindicator-2}
  There exists a $\sigma$-network $\mu : \efpq \to \efpq$ without the first affine layer
  such that
  $\smash{\mu^\sharp} \left(\intv{-\fmax , \eta} \right) = \intv{\eta,\eta}$,
  $\smash{\mu^\sharp}( \intv{\eta^+ , \fmax}) = \intv{\eta^+,\eta^+}$,
  and $\smash{\mu^\sharp}(\intv{-\fmax,\fmax})=\intv{\eta,\eta^+}$.
\end{lemma}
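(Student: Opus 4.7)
The network $\mu$ will be built as ``apply $\sigma$, then contract, then position,'' combining the first-layer $\sigma$ (forced by $\mu$ being without the first affine layer) with two blocks of floating-point affine transformations (interleaved with $\sigma$ as needed) that exploit rounding. Without loss of generality, assume the increasing case of \cref{cond:orig-2}, i.e., $\sigma(x)\le\sigma(\eta)<\sigma(\eta^+)\le\sigma(y)$ for all $x\le\eta<\eta^+\le y$; the decreasing case is handled symmetrically by sign-flipping the subsequent affine layer.

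\paragraph{\bf Step 1 (apply $\sigma$).}
Take $y=\sigma(x)$ as the first layer. By \cref{cond:orig-2}, $\sigma$ sends $[-\fmax,\eta]_\fpq$ into $[\sigma(-\fmax),\sigma(\eta)]_\fpq$ and $[\eta^+,\fmax]_\fpq$ into $[\sigma(\eta^+),\sigma(\fmax)]_\fpq$. By \cref{cond:orig-3}, both resulting intervals are finite and their widths are bounded by $\lips\cdot 2\fmax$, which together with the upper bound on $|\sigma(\eta)|,|\sigma(\eta^+)|$ in \cref{cond:orig-2} keeps everything well inside $\fpq$. Consequently, under the interval semantics, $\smash{\tilde{\sigma}^\sharp}(\intv{-\fmax,\eta})=\intv{L,\sigma(\eta)}$ and $\smash{\tilde{\sigma}^\sharp}(\intv{\eta^+,\fmax})=\intv{\sigma(\eta^+),R}$ for some $L\le\sigma(\eta)$ and $R\ge\sigma(\eta^+)$ that are finite floats.

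\paragraph{\bf Step 2 (contract to two singletons).}
The heart of the proof is a contraction via rounding: a single floating-point affine operation $z\mapsto(z\ominus t)\otimes w$ (realized as $\aff_{W,\bfb}$) is to be chosen so that, under $\oplus^\sharp,\otimes^\sharp$, every float in $[L,\sigma(\eta)]_\fpq$ is mapped to the same float $p_L$ and every float in $[\sigma(\eta^+),R]_\fpq$ is mapped to the same float $p_R\ne p_L$. I will invoke \cref{lem:contraction2} from \cref{sec:techlemma_revised} for this; the key point is that the gap $\sigma(\eta^+)-\sigma(\eta)$ is at least one ulp of a sub-unit magnitude (since $|\sigma(\eta)|,|\sigma(\eta^+)|\ge2^{\emin+5}$) while the widths of both intervals are bounded above uniformly by the Lipschitz bound $\lips\cdot 2\fmax$; the admissible range $\lips\le2^{\emax-7}\min\{|\sigma(\eta)|,2^{\mbit+3}\}$ in \cref{cond:orig-3} is exactly what is needed so that a single $(w,t)$ can simultaneously collapse both intervals while keeping them separated. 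At the level of interval semantics this produces $\intv{p_L,p_L}$ on the left input, $\intv{p_R,p_R}$ on the right input, and $\intv{p_L,p_R}$ on the full input $\intv{-\fmax,\fmax}$.

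\paragraph{\bf Step 3 (position at $\eta$ and $\eta^+$).}
Finally, a floating-point affine layer $u\mapsto u\otimes\alpha\oplus\beta$ with $\alpha,\beta\in\fpq$ chosen so that $p_L\otimes\alpha\oplus\beta=\eta$ and $p_R\otimes\alpha\oplus\beta=\eta^+$ completes the construction. Such $(\alpha,\beta)$ exist because we have two distinct floats $p_L,p_R$ to map to two distinct floats $\eta,\eta^+$, and by rescaling in Step~2 we may arrange that $p_R-p_L$ is a power of two dividing the one-ulp gap $\eta^+-\eta$, so that the resulting $\alpha,\beta$ are exact floats and the $\oplus,\otimes$ operations incur no rounding error. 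Under interval semantics this final affine layer sends $\intv{p_L,p_L}\to\intv{\eta,\eta}$, $\intv{p_R,p_R}\to\intv{\eta^+,\eta^+}$, and $\intv{p_L,p_R}\to\intv{\eta,\eta^+}$, giving the three claimed identities.

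\paragraph{\bf Main obstacle.}
The crux is Step~2: we need one affine transformation whose rounding behavior \emph{simultaneously} collapses two separated intervals of possibly very different magnitudes to two distinct singletons, without any spurious abstract values appearing in between. This is what the quantitative hypotheses in \cref{cond:orig-2,cond:orig-3} are tuned for, and it is precisely the role of the technical contraction \cref{lem:contraction2}; the remaining steps are essentially algebraic bookkeeping relative to that key lemma.
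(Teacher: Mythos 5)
Your Steps 1 and 3 are broadly consistent with the paper's construction, but Step 2 — which you yourself identify as the heart of the argument — contains a genuine gap. A \emph{single} floating-point affine operation $z\mapsto(z\ominus t)\otimes w$ cannot, in general, collapse both post-$\sigma$ intervals $[L,\sigma(\eta)]_\fpq$ and $[\sigma(\eta^+),R]_\fpq$ to two \emph{distinct} singletons. To send an interval of width $W$ into the rounding cell of one float you need $|w|\cdot W$ to be at most roughly one ulp of the image (at best $\fmin$ near zero), while to keep the two images distinct you need $|w|\cdot(\sigma(\eta^+)-\sigma(\eta))$ to be at least about one ulp there; these two requirements are compatible only when $W$ is comparable to the gap $\sigma(\eta^+)-\sigma(\eta)$. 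In the generic situation they are wildly incomparable: e.g.\ for ReLU-like $\sigma$ one has $R-\sigma(\eta^+)\approx\fmax$ while $\sigma(\eta^+)-\sigma(\eta)$ is a single ulp of a number of size about $1/2$, so no choice of $(w,t)$ works. Your appeal to \cref{cond:orig-3} ("exactly what is needed so that a single $(w,t)$ can simultaneously collapse both intervals") mischaracterizes its role, and your width bound $\lips\cdot 2\fmax$ is far too large to help.

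This is precisely why the paper's proof (via \cref{thm:sigmaindicator}) does not attempt a one-shot collapse. \cref{lem:contraction2} only collapses two \emph{tiny} intervals $\mcI,\mcI^+$ (of width $2^{\expo{\zeta}-\expo{\theta}}$ around $\sigma(\eta)$ and $\sigma(\eta^+)$) exactly onto $\{\eta\}$ and $\{\eta^+\}$, and for all other inputs it only supplies a contraction estimate with factor $2^{\expo{\theta}+2}$. The missing idea in your proposal is the \emph{iteration}: one composes $g_1=\sigma\circ g$ with itself $n$ times, where \cref{cond:orig-3} guarantees that each round is a contraction by $\tilde{\lips}=\lips\cdot 2^{\expo{\theta}+2}\le 1/2$ toward $\sigma(\eta)$ resp.\ $\sigma(\eta^+)$, so that after $n=O(\log(\fmax/\text{gap}))$ layers the images of $[-\fmax,\sigma(\eta)]$ and $[\sigma(\eta^+),\fmax]$ have shrunk into $\mcI$ and $\mcI^+$, and only then does a final application of $g$ produce the singletons $\langle\eta,\eta\rangle$ and $\langle\eta^+,\eta^+\rangle$ (with a separate two-step composition handling the case $\sigma(\eta)>\sigma(\eta^+)$, which your "sign-flip" remark does not cover, since one application of $g$ then swaps the two sides). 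Without this iterated-contraction mechanism your construction fails, and your Step 3 (choosing exact $\alpha,\beta$ with $p_R-p_L$ dividing $\eta^+-\eta$) is likewise asserted rather than proved — in the paper the analogous exact positioning requires the careful constructions of \cref{lem:telescoping,lem:subnorm_inverse}, including special treatment of subnormals and of the last mantissa bit of $\eta$.
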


\begin{lemma}
  \label{lem:sigmaetatoc1-2}
  Let $(\theta,\theta')$ be either $(c_1,c_2)$ or $(c_2,c_1)$.
  Then, there exists a depth\=/2 $\sigma$\=/network $\tau_{\theta,\theta'} : \efpq \to \efpq$
  without the first affine layer such that
  $\smash{\tau_{\theta,\theta'}^\sharp}(\intv{\eta,\eta}) = \intv{\theta,\theta}$,
  $\smash{\tau_{\theta,\theta'}^\sharp}(\intv{\eta^+,\eta^+}) = \intv{\theta',\theta'}$,
  and $\smash{\tau_{\theta,\theta'}^\sharp}(\intv{\eta,\eta^+})
  = \intv{\min\set{\theta,\theta'}, \max\set{\theta,\theta'}}$.
\end{lemma}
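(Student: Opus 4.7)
The plan is to build $\tau_{\theta,\theta'}$ as a depth-2 $\sigma$-network without the first affine layer, whose second (non-trivial) affine layer outputs either the single-neuron combination $w \otimes \sigma(x) \oplus b$ or, if needed, a small multi-neuron combination $\bigoplus_i W_{2,i} \otimes s_i \oplus b$ where $s_1 = \sigma(x)$ and $s_i = \sigma(b_{1,i})$ for $i > 1$ are constants contributed by extra hidden neurons. The task reduces to choosing the float parameters so that both $\tau_{\theta,\theta'}(\eta) = \theta$ and $\tau_{\theta,\theta'}(\eta^+) = \theta'$ hold exactly under floating-point arithmetic.

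By \cref{cond:orig-2} of \cref{cond:activation2}, $\sigma(\eta) \neq \sigma(\eta^+)$; without loss of generality I would assume $\sigma(\eta) < \sigma(\eta^+)$, as the opposite case follows by negating $w$. I would pick $w \in \fpq$ so that $w \otimes (\sigma(\eta^+) \ominus \sigma(\eta))$ approximates $\theta' \ominus \theta$. The magnitude bounds $|\sigma(\eta)|, |\sigma(\eta^+)| \in [2^{\emin+5}, 2^{\emax-6}\cdot|\eta|]$ from \cref{cond:orig-2}, together with $|\eta|\le 4 - 8\feps$ and the bounds on $|c_1|, |c_2|$ from \cref{cond:orig-1}, guarantee such a $w$ exists as a finite float without overflow or underflow. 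Setting $b \defeq \theta \ominus (w \otimes \sigma(\eta))$ makes $\tau_{\theta,\theta'}(\eta) = \theta$ hold exactly. To make the second equation $\tau_{\theta,\theta'}(\eta^+) = \theta'$ hold exactly, I would either refine $w$ via case analysis on the rounding behavior of $\otimes$ and $\oplus$, or introduce one or two additional constant hidden neurons whose contributions absorb the residual ULP-scale error through the left-associative $\bigoplus$, whose non-associativity enables precise control over small offsets a single $b$ cannot realize.

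For the interval semantics, the trivial first affine layer acts as the identity on the input coordinate, and since $\eta$ and $\eta^+$ are adjacent floats, $\gamma(\intv{\eta,\eta^+}) = \set{\eta, \eta^+}$, so $\tilde\sigma^\sharp(\intv{\eta,\eta^+}) = \intv{\sigma(\eta),\sigma(\eta^+)}$ tightly (in the chosen orientation). Because $\round{}$ is order-preserving, each of $x \mapsto w \otimes x$ and $x \mapsto x \oplus b$ (and every single-operand operation in any multi-neuron extension) is monotone in its argument, so the interval $\intv{\sigma(\eta),\sigma(\eta^+)}$ propagates through the final affine layer to exactly $\intv{\min\set{\theta,\theta'}, \max\set{\theta,\theta'}}$. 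The degenerate cases $\intv{\eta,\eta}$ and $\intv{\eta^+,\eta^+}$ then reduce immediately to the two pointwise equalities already established.

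The hard part will be the exact pointwise step: simultaneously satisfying the two floating-point equations for $\tau_{\theta,\theta'}(\eta)$ and $\tau_{\theta,\theta'}(\eta^+)$. Over the reals this is a trivial two-unknown linear system, but the rounding in both $\otimes$ and $\oplus$ generically breaks any clean symmetry, so the construction must exploit the tight magnitude ranges from \cref{cond:orig-1,cond:orig-2} to choose parameters that avoid adverse rounding, or else use a small multi-neuron extension in which the left-associative sum furnishes the extra degree of freedom needed to match both target values exactly. A secondary obstacle is to verify throughout the construction that no non-finite intermediate values arise, particularly near the boundaries $2^{\emax-6}$ and $2^{\emin+5}$ imposed by \cref{cond:orig-2}.
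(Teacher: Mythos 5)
Your architecture and your reduction of the interval claim to the two pointwise equalities are exactly right (adjacent floats, monotonicity of $\otimes$ and $\oplus$ with one constant operand), and this part matches the paper. The genuine gap is the pointwise step itself, which you correctly flag as "the hard part" but then only sketch, and the specific devices you propose do not work as stated. First, setting $b \defeq \theta \ominus (w \otimes \sigma(\eta))$ does \emph{not} make $(w \otimes \sigma(\eta)) \oplus b = \theta$ hold exactly in general: $x \oplus (\theta \ominus x) \neq \theta$ whenever $|x|$ is large relative to $|\theta|$ (e.g., $x = 2^{50}$, $\theta = 1$ gives $0$), so one must first control the magnitude of $w \otimes \sigma(\eta)$; the paper does this by bounding $\expo{w_1 \otimes \sigma(\gamma_i)} \leq \expo{\tilde\kappa}+2$ (via \cref{lem:onebit_difference} and a floating-point distributivity estimate) and then cancelling $w_1\otimes\sigma(\gamma_1)$ not through the bias but through an explicit sum of auxiliary terms, with $b = \kappa_1$ added last. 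Second, your auxiliary neurons cannot contribute arbitrary constants: in a $\sigma$-network without the first affine layer they contribute values of the form $\alpha_i \otimes \sigma(z_i)$, and \cref{cond:orig-1,cond:orig-2} only guarantee that $\sigma$ attains $0$ and some $K=\sigma(c_2)$ with $|K|\in[\tfrac{\feps}{2}+2\feps^2,\tfrac54-2\feps]$ in the relevant range, so effectively only floating-point multiples of $K$ are available. Whether $\alpha \otimes K$ can hit an exact power of two is precisely the dichotomy of \cref{lem:inverse} ($\mant{K}^{\parallel}$ versus $\mant{K}^{\dag}$), and this forces two different residue-cancellation constructions (\cref{lem:residue_control} versus \cref{lem:residue_control_111}); moreover, cancelling a quantity with up to $\mbit+1$ significant bits using only such multiples requires on the order of $\mbit$ auxiliary terms plus a fixed number of correction terms, not the "one or two" extra neurons you suggest.

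The deeper requirement your sketch does not address is that the correction terms must act \emph{simultaneously} on both branches: the same left-associative tail $\bigoplus_i \alpha_i\otimes\sigma(z_i)\oplus b$ must reduce the value at input $\eta$ to exactly $\theta$ while shifting the value at input $\eta^+$ to exactly $\theta'$, even though the two running sums see the corrections at different magnitudes and hence with different roundings. This is the crux of the paper's \cref{lemma:sigmaetatoc1}: one first chooses $w_1$ so that $w_1\otimes(\sigma(\eta^+)\ominus\sigma(\eta))$ equals the target gap up to a bounded number of ULPs, cancels $w_1\otimes\sigma(\eta)$ bit by bit with multiples of $K$, and only then invokes a residue-control gadget whose terms sum to zero on one branch while moving the other branch by exactly the needed ULP multiple. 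Until you supply constructions playing these roles (or an alternative argument that the restricted constant set suffices), the proof is incomplete; a secondary but fixable omission is verifying, as the paper does via the exponent bookkeeping in \cref{cond:orig-2}, that no intermediate overflow occurs.
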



\begin{figure}[t]
  \centering
  \includegraphics[width=0.9\linewidth]{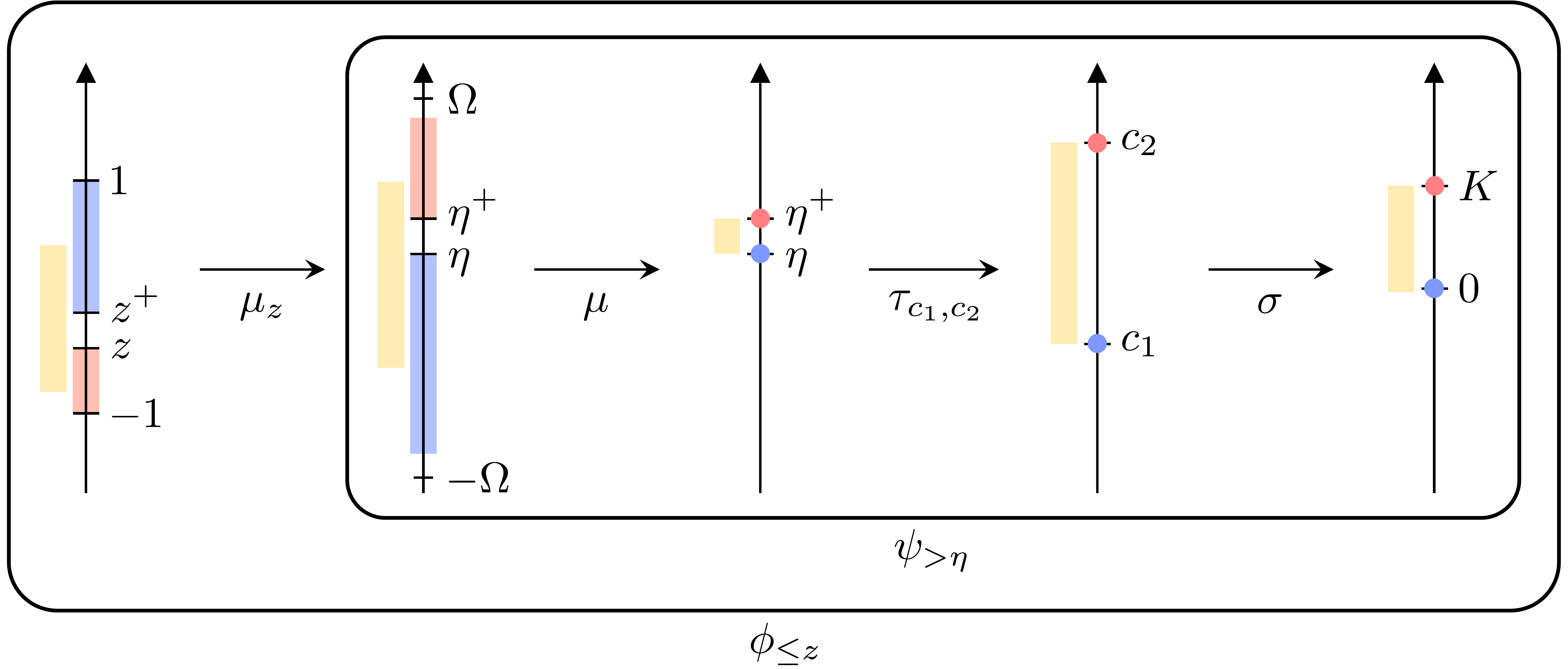}
  \caption{%
    Illustration of networks $\mu$, $\tau_{c_1,c_2}$, $\mu_z$
    (\cref{lem:sigmaindicator-2,lem:sigmaetatoc1-2,lem:eta_etaplus-2})
    and $\psi_{>\eta}$, $\phi_{\leq z}$ (\cref{eq:psi-construct,eq:phi-construct}),
    assuming (b) in \cref{lem:eta_etaplus-2}.
    A box/dot denotes an abstract interval.
  }
  \label{fig:indicator}
\end{figure}

\cref{lem:sigmaindicator-2} states that
we can construct a $\sigma$-network $\mu$ without the first affine layer,
whose interval semantics maps
all finite (abstract) intervals left of $\eta$ to the singleton interval $\intv{\eta, \eta}$,
all finite intervals right of $\eta^+$ to $\intv{\eta^+, \eta^+}$, and
all the remaining finite intervals to $\intv{\eta, \eta^+}$.
Similarly, \cref{lem:sigmaetatoc1-2} shows that
there exists a $\sigma$-network $\tau_{\theta,\theta'}$ without the first affine layer,
whose interval semantics maps $\intv{\eta, \eta}$ to $\intv{\theta, \theta}$,
$\intv{\eta^+, \eta^+}$ to $\intv{\theta', \theta'}$,
and $\intv{\eta, \eta^+}$ to the interval between $\theta$ and $\theta'$.
By composing these networks with $\sigma$, we construct $\psi_{>\eta}$ as
\begin{align}
  \label{eq:psi-construct}
  \psi_{>\eta} \defeq \sigma\circ\tau_{c_1,c_2}\circ\mu.
\end{align}
This function $\psi_{> \eta}$ is a $\sigma$-network without the first and last affine layers,
since $\tau_{c_1, c_2}$ are $\mu$ are without the first affine layer.
Moreover, $\smash{\psi_{>\eta}^\sharp} = \smash{(K\iota_{>\eta})^\sharp}$ on $\bbI_{[-1,1]}$
by the aforementioned properties of $\tau_{c_1, c_2}$ and $\mu$,
and by the next properties of $\sigma$ from \cref{cond:orig-1} of \cref{cond:activation2}:
$\sigma(c_1) = 0$, $\sigma(c_2) = K$, and $\sigma(x)$ lies between them for all $x$ between $c_1$ and $c_2$.
Lastly, we choose $L_\psi$ as the depth of $\psi_{>\eta}$.

We next construct $\phi_{\le z}$ and $\phi_{\ge z}$ using \cref{lem:eta_etaplus-2} (\cref{fig:indicator}).
The proof of this lemma is provided in \cref{sec:pflem:eta_etaplus-2}.

\begin{lemma}
  \label{lem:eta_etaplus-2}
  Let $z \in \fpq$ with $|z| \leq 1^+$.
  Then, there exists a depth-1 $\sigma$-network ${\mu_z} : \efpq \to \efpq$
  such that one of the following holds.
  \begin{itemize}
  \item[(a)]
    $\gamma\paren[\big]{ \smash{\mu_z^\sharp}(\intv{-1,z})  } \subset [-\fmax,\eta]$ and
    $\gamma\paren[\big]{ \smash{\mu_z^\sharp}(\intv{z^+,1}) } \subset [\eta^+,\fmax]$.
  \item[(b)]
    $\gamma\paren[\big]{ \smash{\mu_z^\sharp}(\intv{-1,z})  } \subset [\eta^+,\fmax]$ and
    $\gamma\paren[\big]{ \smash{\mu_z^\sharp}(\intv{z^+,1}) } \subset [-\fmax,\eta]$.
  \end{itemize}
\end{lemma}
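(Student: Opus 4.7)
The plan is to construct $\mu_z$ as a depth-1 $\sigma$-network, which by definition is simply the floating-point affine map $\mu_z(x) \defeq x \otimes w \oplus b$ for some $w, b \in \fpq$; no activation is ever applied, so all the freedom lies in the choice of $w$ and $b$. First I would exploit monotonicity: since $\otimes$ and $\oplus$ are each monotone in one argument when the other is a fixed float, $\mu_z$ is monotone in $x$ (increasing when $w > 0$, decreasing when $w < 0$). Thus $\mu_z^\sharp(\intv{-1,z})$ and $\mu_z^\sharp(\intv{z^+,1})$ are determined by the four endpoint values $\mu_z(-1), \mu_z(z), \mu_z(z^+), \mu_z(1)$, and it suffices to establish case~(a), which, for $w > 0$, reduces to the four scalar inequalities $-\fmax \le \mu_z(-1)$, $\mu_z(z) \le \eta$, $\mu_z(z^+) \ge \eta^+$, and $\mu_z(1) \le \fmax$.

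The crux is choosing $w$ and $b$ so that rounding makes $z \otimes w \oplus b$ land at or below $\eta$ while $z^+ \otimes w \oplus b$ lands at or above $\eta^+$; intuitively, one wants the exact values $wz + b$ and $wz^+ + b$ to fall on opposite sides of the rounding midpoint $(\eta + \eta^+)/2$. In the \emph{nominal} regime, where $\expo{\eta} - \expo{z} \in [\emin, \emax]$, I would take $w \defeq 2^{\expo{\eta}-\expo{z}}$ so that $\otimes$ by $w$ is exact and $wz$ sits on the same ulp grid as $\eta$; then $wz^+ - wz = \eta^+ - \eta$, and $b \in \fpq$ can be picked near $\eta - wz$ so the midpoint falls strictly between $wz + b$ and $wz^+ + b$. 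The endpoint bounds $|\mu_z(\pm 1)| \le \fmax$ follow from $|\eta| \le 4 - 8\feps$, $|z| \le 1^+$, and $\ebit \ge 5$, which together keep $|w|$ and $|b|$ safely inside $\fpq$.

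The main obstacle, and the hardest case, is the \emph{extreme} regime in which the nominal $w$ would overflow, i.e., $|z|$ is so small that $\expo{\eta} - \expo{z} > \emax$. Here the ulp at $z$ is near $\fmin$, many orders of magnitude below the ulp at $\eta$, so no representable power-of-two $w$ can amplify $z^+ - z$ up to $\eta^+ - \eta$. For this regime---and also for any borderline cases in the nominal regime where the natural bias $\eta - wz$ lies just outside $\fpq$---I would invoke the technical contraction lemma (\cref{lem:contraction2}), which exploits the round-off in $\otimes$ and $\oplus$ to produce $w, b \in \fpq$ such that $wz + b$ and $wz^+ + b$ land on opposite sides of $(\eta + \eta^+)/2$ even when $w(z^+ - z)$ is strictly smaller than $\eta^+ - \eta$. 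The slack $|\eta| \ge 2^{\emin+5}$ from \cref{cond:orig-2} is precisely what ensures such a bias exists in $\fpq$ for every $z$ with $|z| \le 1^+$, and the bounds $|\mu_z(\pm 1)| \le \fmax$ are verified in each branch from the range constraints on $\eta$ and $z$.
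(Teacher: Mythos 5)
Your monotonicity reduction to the four endpoint values and your ``nominal'' construction (a power-of-two weight making $\otimes$ exact, plus a bias shifting $wz$ next to $\eta$) are indeed the core of the paper's proof. But the place you identify as the hardest case is misdiagnosed, and your fix for it does not work. There is no ``extreme regime'': since $\expo{\eta}\le 1$ by \ref{cond:orig-2} and the successor gap at a subnormal $z$ is $\fmin=2^{\emin-\mbit}$ (not $2^{\expo{z}-\mbit}$ for the normalized exponent), the amplification ever needed is at most $2^{\expo{\eta}-\emin}\le 2^{1-\emin}=2^{\emax}$, which is representable; the paper's proof encodes exactly this via the correction term $c_z=\max\{\emin-\expo{z},0\}$ in $w=2^{-\expo{z}+\expo{\eta}-c_z}$. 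More importantly, your proposed fallback of invoking \cref{lem:contraction2} is not admissible here: that lemma produces a function of the form $(\theta\otimes x)\oplus\bigoplus_i(\alpha_i\otimes\sigma(z_i))\oplus\eta$, a left-associated chain of many floating-point additions involving $\sigma$ at constants, whose effect depends on the sequential roundings; it is not of the depth-1 form $(w\otimes x)\oplus b$, and you cannot collapse the constant terms into a single bias because $\oplus$ is not associative. (It also solves a different problem --- pulling neighborhoods of $\sigma(\eta),\sigma(\eta^+)$ back to $\eta,\eta^+$ --- and does not output a pair $(w,b)$.) So the case your outline leans on that lemma for is left unproved as a depth-1 statement.

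A second gap is your claim that ``it suffices to establish case (a)'' with $w>0$. When $\sgn z\ne\sgn\eta$, the bias you need is $b\approx\eta-wz=\eta+|wz|$, a sum of two $(\mbit{+}1)$-bit numbers of comparable magnitude that must lie on the $2^{\expo{\eta}-\mbit}$ grid; in general it requires $\mbit+2$ significand bits and is not in $\fpq$, and replacing it by a rounded bias loses exactly the half-ulp of slack your midpoint argument needs (ties-to-even can then send $\mu_z(z)$ to $\eta^+$ or $\mu_z(z^+)$ to $\eta$). This is precisely why the lemma offers alternative (b): the paper handles mixed signs by negating the weight, reducing to the same-sign construction and obtaining the reversed containments. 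Relatedly, the identity $w(z^+-z)=\eta^+-\eta$ that your straddling argument relies on fails in the boundary subcases $\eta=-2^{\expo{\eta}}$ or $z=-2^{\expo{z}}$, where one of the successor gaps is halved; the paper treats these with modified weights (e.g.\ a factor $1+2^{-\mbit}$ or an extra $2^{-1}$), and your sketch does not account for them.
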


\cref{lem:eta_etaplus-2} ensures the existence of a depth-1 $\sigma$-network $\mu_z$,
whose interval semantics
maps $\intv{-1, z}$ and $\intv{z^+, 1}$ to an interval left of $\eta$ and an interval right of $\eta^+$.
By composing $\mu_z$ with the previous networks \mbox{$\tau_{\theta, \theta'}$ and $\mu$,
we construct $\phi_{\le z}$ as}
\begin{align}
  \label{eq:phi-construct}
  \phi_{\le z} \defeq
  \begin{cases}
    \sigma\circ\tau_{c_2,c_1}\circ\mu\circ\mu_z & \text{if (a) holds in \cref{lem:eta_etaplus-2}}
    \\
    \sigma\circ\tau_{c_1,c_2}\circ\mu\circ\mu_z & \text{if (b) holds in \cref{lem:eta_etaplus-2}}.
  \end{cases}
\end{align}
By a similar argument used above,
the function $\phi_{\le z}$ is a $\sigma$-network without the last affine layer,
and it satisfies the desired equation:
$\smash{\phi_{\le z}^\sharp} = \smash{(K\iota_{\leq z})^\sharp}$ on $\bbI_{[-1,1]}$.
We construct $\phi_{\ge z}$ analogously, but using $\mu_{z^-}$ instead of $\mu_z$.
Since the depths of $\phi_{\le z}$ and $\phi_{\ge z}$ are identical for all $z$,
we denote this depth by $L_\phi$.
This completes the construction of $\psi_{>\eta}$, $\phi_{\leq z}$, and $\phi_{\geq z}$,
finishing the proof of \cref{lem:indc}.

\subsection{Proof of Lemma~\ref{lem:indc-to-iua}}
\label{sec:pflem:indc-to-iua}

To prove \cref{lem:indc-to-iua}, we assume that the activation function $\sigma$
is $([a,b]_{\fpq}, \allowbreak \eta, \allowbreak K, \allowbreak L_\phi, \allowbreak L_\psi)$-separable
for some $\eta, K \in \fpq$ with $|\eta|\in[2^{\emin+5}, \allowbreak 4-8\feps]$
and $|K|\in[\tfrac{\feps}{2}+2\feps^2, \allowbreak \tfrac{5}{4}-2\feps]$.
Given this, we construct a $\sigma$-network
whose interval semantics exactly computes that of the target function
$h : \smash{\efpq{}^d} \to \smash{\efpq} \setminus \set{\bot}$ for all abstract boxes in $[a,b]^d$.
In our construction, we progressively implement the following functions using $\sigma$-networks:
(i) scaled indicator functions of arbitrary boxes,
(ii) scaled indicator functions of arbitrary sets,
and (iii) the target function.

We first construct a $\sigma$-network $\tilde{\nu}_\mcB$, for any abstract box $\mcB$ in $[a,b]^d$,
that implements the scaled indicator function $K \iota_\mcB$ under the interval semantics.

\begin{lemma}
  \label{lem:indc-box}
  For any $\mcB \in \smash{(\bbI_{[a,b]})^d}$,
  there exists a depth-$L$ $\sigma$-network $\tilde{\nu}_{\mcB} : \efpq{}^d \to \efpq$ without the last affine layer
  such that $\smash{\tilde{\nu}_{\mcB}^\sharp} = \smash{(K\iota_{\mcB})^\sharp}$ on $\smash{(\bbI_{[a,b]})^d}$,
  where $L \defeq L_\phi + (L_\psi-1)(\lceil\log_{2^M}d\rceil+1)$.
\end{lemma}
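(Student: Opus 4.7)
}

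The plan is to exploit the multiplicative structure of a box's indicator function: for $\mcB = (\intv{a_1,b_1},\ldots,\intv{a_d,b_d})$ with $a_i,b_i \in [a,b]_{\fpq}$, a point $\bfx$ lies in $\gamma(\mcB)$ iff the conjunction $\bigwedge_{i=1}^d (\iota_{\geq a_i}(x_i)=1 \wedge \iota_{\leq b_i}(x_i)=1)$ holds. Working with the scaled indicator $K\iota_\mcB$, this becomes: $K\iota_\mcB(\bfx) = K$ iff all of the $2d$ scaled one-dimensional indicator values $K\iota_{\geq a_i}(x_i), K\iota_{\leq b_i}(x_i)$ equal $K$, and it equals $0$ otherwise. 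I will realize this conjunction by a tree-structured $\sigma$-network: the leaves compute the $2d$ coordinatewise indicators via the $\phi$-networks from separability, and the internal nodes compute $2^M$-way ANDs by floating-point summation followed by thresholding via the $\psi_{>\eta}$ network from separability.

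\textbf{Step 1 (coordinatewise indicators).} By the $([a,b]_\fpq,\eta,K,L_\phi,L_\psi)$-separability of $\sigma$, for each $i \in [d]$ there exist depth-$L_\phi$ $\sigma$-networks $\phi_{\geq a_i}, \phi_{\leq b_i} : \efpq \to \efpq$ without the last affine layer such that $\phi_{\geq a_i}^\sharp = (K\iota_{\geq a_i})^\sharp$ and $\phi_{\leq b_i}^\sharp = (K\iota_{\leq b_i})^\sharp$ on $\bbI_{[a,b]}$. Stacking these $2d$ networks in parallel, padded to a common depth $L_\phi$, yields a single depth-$L_\phi$ $\sigma$-network (without the last affine layer) whose interval semantics sends each abstract box $(\mcI_1,\ldots,\mcI_d) \in (\bbI_{[a,b]})^d$ to an abstract vector in $\bbI^{2d}$ whose $j$-th component is one of $\intv{0,0}$, $\intv{K,K}$, or $\intv{0,K}$ (or $\intv{K,0}$), depending on the position of $\mcI_i$ relative to $a_i$ or $b_i$.

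\textbf{Step 2 (tree-AND via $2^M$-way grouping).} I will iteratively collapse the $N_k$ surviving values into $\lceil N_k/2^M\rceil$ values per round, starting from $N_0 = 2d$. In each round, partition the values into groups of size $n \leq 2^M$; for each group with values $u_1,\ldots,u_n$, compute $s := u_1 \oplus \cdots \oplus u_n$. Since $|K| \leq 5/4-2\feps$ and $n \leq 2^M$, and each $u_j \in \{0,K\}$ in the concrete case (while in the abstract case each coordinate interval has endpoints in $\{0,K\}$), each intermediate partial sum $jK$ fits well within the exponent range and the significand, so $\oplus$ acts as exact addition here. Then apply a single floating-point affine map $\aff$ chosen so that $\aff(nK) > \eta$ but $\aff((n{-}1)K), \ldots, \aff(0) \leq \eta$ — this is feasible because $|K| \geq \feps/2$ creates a gap of size $\geq \feps/2$ between consecutive sums, and $|\eta| \in [2^{\emin+5}, 4-8\feps]$ lies in a range where we can safely choose the shift and scale in $\fpq$. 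Applying $\psi_{>\eta}$ to $\aff(s)$ then yields $K$ iff the group's AND is $K$, and $0$ otherwise; the analogous exactness at the interval level follows because $\bigoplus^\sharp$ is monotone and $\psi^\sharp_{>\eta} = (K\iota_{>\eta})^\sharp$. After $\lceil \log_{2^M}(2d)\rceil \leq \lceil \log_{2^M} d\rceil + 1$ rounds a single value remains, and by construction its interval semantics on $(\bbI_{[a,b]})^d$ matches $(K\iota_\mcB)^\sharp$.

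\textbf{Step 3 (depth accounting and main obstacle).} The leaf layer contributes $L_\phi$ affine layers, ending in $\sigma$ (since $\phi$ is without the last affine layer). In each combination round, the affine map $\aff$ of the round is absorbed into the first (identity/pass-through) affine layer of $\psi_{>\eta}$, and the terminal identity affine of $\psi_{>\eta}$ is either merged into the next round's $\aff$ or, for the final round, left as the final identity of $\tilde\nu_\mcB$; thus each round adds exactly $L_\psi-1$ affine layers. If fewer than $\lceil\log_{2^M} d\rceil+1$ rounds actually suffice, the remaining rounds can be padded by a trivial identity-AND (a single input group of size $1$) without altering the semantics. Summing gives $L = L_\phi + (L_\psi-1)(\lceil\log_{2^M} d\rceil+1)$, and $\tilde\nu_\mcB$ ends with $\psi_{>\eta}$, hence is without the last affine layer. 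The main obstacle is verifying that every $\oplus$ in the group summation is exact and that $\aff$ with coefficients in $\fpq$ can be chosen to strictly separate $nK$ from $(n{-}1)K$ via the fixed threshold $\eta$; both rely crucially on the numerical bounds on $K$ and $\eta$ guaranteed by the separability hypothesis and on the assumption $M \geq 3$.
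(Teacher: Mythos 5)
Your overall architecture---coordinatewise indicators supplied by separability, $2^M$-ary AND gates realized as a floating-point summation followed by the threshold network $\psi_{>\eta}$, arranged in a tree with depth padding---is the same as the paper's, and your depth accounting is consistent with the claimed $L$. The genuine problem is the numerical core of your AND gadget in Step~2. The claim that the partial sums $jK$ are computed exactly by $\oplus$ is false for general $K$ in the allowed range: e.g.\ for $M=3$ and $K = \tfrac54-2\feps = 1+2^{-3}$, one has $K\oplus K = 2.25$ exactly, but $2.25\oplus K = \round{3.375} = 3.5 \neq 3.375$, since $3.375$ needs more than $M+1$ significand bits. Exactness is not what saves the construction; what one can show is only that the rounded partial sums $s_j \defeq \bigoplus_{i=1}^{j}K$ are strictly increasing (an ulp argument using $j\le 2^M$). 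More importantly, even granting strict monotonicity, your gadget still requires $w,b\in\fpq$ with $(w\otimes s_{n-1})\oplus b \le \eta < (w\otimes s_n)\oplus b$, where the $s_j$ can have magnitude up to roughly $2^{M}$ while $|\eta|\le 4-8\feps$. You assert this is ``feasible,'' but this two-point floating-point separation is precisely the kind of statement that needs its own construction and error analysis; \cref{lem:eta_etaplus-2} cannot be invoked, as it only treats inputs of magnitude at most $1^+$.

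The paper sidesteps this entirely by reversing the order of scaling and summation: each indicator output is first multiplied by a weight $\alpha$ chosen (via \cref{lem:endbit_control}, packaged as \cref{lem:sign}) so that $\alpha\otimes K$ is about one ulp of $\eta$, and these scaled values are summed together with a bias $\beta$ near $\eta$; the accumulating sum then stays at the scale of $\eta$, and the property that the threshold is crossed after exactly $n$ contributions (and not after $n-1$) can be verified directly. Also note the paper first ANDs the pair $\phi_{\ge a_i},\phi_{\le b_i}$ within each coordinate and then runs a $2^M$-ary tree over the $d$ coordinates, which is where the ``$+1$'' in the depth comes from, whereas you run the tree over all $2d$ leaves; that difference is harmless. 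To repair your plan, either prove the missing affine-separation lemma for the specific pairs $(s_{n-1},s_n)$, or adopt the pre-scaled summation; with that fixed, your interval-level case analysis (disjoint box, partial overlap, containment) and the padding/merging of affine layers go through as in the paper.
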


In the proof of \cref{lem:indc-box}, we design $\tilde\nu_{\mcB}$
using the networks $\psi_{>\eta}$, $\phi_{\le z}$, and $\phi_{\ge z}$ constructed in \cref{sec:pflem:indc}.
Specifically, for an abstract box $\mcB=(\intv{a_1,b_1}, \allowbreak \dots, \allowbreak \intv{a_d,b_d})$,
we define a $\sigma$-network $\tilde{\nu_i} : \efpq \to \efpq$ as
\begin{align}
  \tilde\nu_i(x)
  \defeq \psi_{>\eta} \paren[\Big]{
    \paren[\Big]{ \alpha \otimes \phi_{\ge a_i}(x) } \oplus
    \paren[\Big]{ \alpha \otimes \phi_{\le b_i}(x) } \oplus \beta
  },
\end{align}
where $\alpha, \beta \in \fpq$ are constants such that
$\beta \leq \eta$,
$(\alpha \otimes K) \oplus \beta \leq \eta$, and
$(\alpha \otimes K) \oplus (\alpha \otimes K) \oplus \beta > \eta$.
Then, we can show that
$\smash{\tilde{\nu}_i^\sharp} = \smash{(K\iota_{\intv{a_i,b_i}})^\sharp}$ on $\bbI_{[a,b]}$.
When $d$ is small (e.g., $d \leq 2^{M+1}$),
we construct $\tilde{\nu}_{\mcB}$ using $\tilde{\nu}_i$ and $\psi_{> \eta}$, as follows:
\begin{align}
  \tilde{\nu}_{\mcB}(x_1,\dots,x_d)
  \defeq \psi_{>\eta} \paren[\Bigg]{
    \paren[\Bigg]{ \bigoplus_{i=1}^d \alpha' \otimes \tilde{\nu}_i(x_i) } \oplus \beta'
  },
\end{align}
where $\alpha',\beta'\in\fpq$ are suitably chosen so that
$\smash{\tilde{\nu}_{\mcB}^\sharp} = \smash{(K\iota_{\mcB})^\sharp}$ on $\smash{(\bbI_{[a,b]})^d}$.
When $d$ is large (e.g., $d>2^{M+1}$),
this construction does not work
since $\smash{\bigoplus_{i=1}^d} \alpha'\otimes\tilde\nu_i(x_i)$ may not be computed as we want
due to rounding errors (e.g., $\bigoplus_{i=1}^{n} 1 = 2^{M+1} < n$ for all $n > 2^{M+1}$).
In such a case, we construct $\tilde{\nu}_\mcB$ hierarchically using more layers, but based on a similar idea.
A rigorous proof of \cref{lem:indc-box},
including the proof that appropriate $\alpha, \alpha', \beta, \beta' \in \fpq$ exist,
is presented in \cref{sec:pflem:indc-box}.

Using $\tilde\nu_{\mcB}$,
we next construct a $\sigma$-network $\tilde{\nu}_{\mcS}$, for any set $\mcS$ in $([a, b]_\fpq)^d$,
whose interval semantics computes that of the scaled indicator function $K \iota_\mcS$.

\begin{lemma}
  \label{lem:indc-set}
  Suppose that for any $\mcB \in \smash{(\bbI_{[a,b]})^d}$,
  there exists a depth-$L$ $\sigma$-network $\tilde{\nu}_{\mcB}$ without the last affine layer
  such that $\smash{\tilde{\nu}_{\mcB}^\sharp} = \smash{(K\iota_{\mcB})^\sharp}$ on $\smash{(\bbI_{[a,b]})^d}$.
  Then, for any $\mcS \subseteq \smash{([a,b]_{\fpq})^d}$,
  there exists a depth-$(L+L_\psi-1)$ $\sigma$-network $\smash{\tilde{\nu}_{\mcS}} : \efpq{}^d \to \efpq$
  without the last affine layer
  such that $\smash{\tilde{\nu}_\mcS^\sharp} = \smash{(K\iota_{\mcS})^\sharp}$ on $\smash{(\bbI_{[a,b]})^d}$.
\end{lemma}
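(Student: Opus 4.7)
The plan is to write $\mcS$ as a finite union of axis-aligned subboxes contained in $\mcS$, run each such subbox through the network $\tilde\nu_{\mcB^{(j)}}$ guaranteed by the hypothesis, form an affine floating-point combination of those outputs, and then feed the result into the thresholding subnetwork $\psi_{>\eta}$ supplied by the separability assumption. The crucial structural fact I will exploit is that, although the interval abstraction of a sum over singletons in $\mcS$ is far too loose (it blurs the ``all-in'' case with the ``mixed'' case), a decomposition into \emph{maximal} axis-aligned subboxes of $\mcS$ fixes this by ensuring that whenever $\gamma(\mcB)\subseteq\mcS$ some single subbox already witnesses this fact.

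First I will enumerate the collection $\{\mcB^{(1)},\dots,\mcB^{(M)}\}$ of all inclusion-maximal axis-aligned boxes contained in $\mcS$; since $([a,b]_\fpq)^d$ is finite, so is $M$. This collection has the key \emph{covering property}: every axis-aligned subbox of $\mcS$ lies inside some $\mcB^{(j)}$ (in particular every singleton $\{x\}\subseteq\mcS$ is covered, so $\bigcup_j\mcB^{(j)}=\mcS$). Invoking the hypothesis I obtain depth-$L$ $\sigma$-networks $\tilde\nu_{\mcB^{(j)}}$ without the last affine layer with $\tilde\nu_{\mcB^{(j)}}^\sharp=(K\iota_{\mcB^{(j)}})^\sharp$ on $(\bbI_{[a,b]})^d$, and I set
\[
\tilde\nu_\mcS(\bfx)\defeq\psi_{>\eta}\paren[\Big]{\paren[\Big]{\bigoplus_{j=1}^M \alpha\otimes\tilde\nu_{\mcB^{(j)}}(\bfx)}\oplus\beta}
\]
for constants $\alpha,\beta\in\fpq$ chosen below with $\sgn(\alpha)=\sgn(K)$ so that $\alpha\otimes K>0$. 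The sum-and-bias expression is a single affine layer that fuses both with the (identity) last affine layer of each $\tilde\nu_{\mcB^{(j)}}$ and with the (identity) first affine layer of $\psi_{>\eta}$, so the total depth is $L+L_\psi-1$ and $\tilde\nu_\mcS$ inherits the ``without last affine layer'' property from $\psi_{>\eta}$. The degenerate case $\mcS=\emptyset$ (with $M=0$) is handled separately by a constant-zero $\sigma$-network.

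To prove $\tilde\nu_\mcS^\sharp=(K\iota_\mcS)^\sharp$ on $(\bbI_{[a,b]})^d$, I will split on an abstract box $\mcB$. When $\gamma(\mcB)\subseteq\mcS$, the covering property gives some $\mcB^{(j^*)}\supseteq\gamma(\mcB)$ with $\tilde\nu_{\mcB^{(j^*)}}^\sharp(\mcB)=\intv{K,K}$; by monotonicity of $\oplus$ on same-sign summands the summed lower bound after the bias strictly exceeds $\eta$, so $\psi_{>\eta}$ returns $\intv{K,K}$. When $\gamma(\mcB)\cap\mcS=\emptyset$ every contribution is $\intv{0,0}$, the summed interval is $\intv{\beta,\beta}\subseteq(-\infty,\eta]$, and $\psi_{>\eta}$ returns $\intv{0,0}$. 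In the remaining mixed case at least one contribution equals $\intv{\min\{0,K\},\max\{0,K\}}$ while \emph{no} contribution equals $\intv{K,K}$ (since $\mcB^{(j)}\subseteq\mcS$ would force $\gamma(\mcB)\subseteq\mcS$), so the summed interval straddles $\eta$ and $\psi_{>\eta}$ returns $\intv{\min\{0,K\},\max\{0,K\}}$, matching $(K\iota_\mcS)^\sharp(\mcB)$ in all three cases.

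The hard part will be the ulp-level choice of $\alpha,\beta\in\fpq$: they must simultaneously satisfy (i)~$\beta\le\eta$, (ii)~$(\alpha\otimes K)\oplus\beta>\eta$ (so a single positive contribution of $\alpha\otimes K$ is \emph{not} absorbed into $\beta$ at $\beta$'s ulp), and (iii)~$\bigoplus_{j=1}^M\alpha\otimes K$ stays finite to avoid any overflow. These are simultaneously achievable because $M$ is finite while the float range is exponential in the exponent bit width; concretely, I will pick $\beta$ to be a float just below $\eta$ and $\alpha$ so that $\alpha\otimes K$ equals roughly one ulp at $\beta$. The bookkeeping closely mirrors the analogous constant choice already carried out in the proof of \cref{lem:indc-box}, so essentially the same floating-point analysis transfers.
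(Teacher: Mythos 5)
Your construction is essentially the paper's: the paper likewise forms one affine layer that sums $w\otimes\tilde\nu_{\mcB}(\bfx)$ over a family of boxes covering $\mcS$ (it simply takes \emph{all} abstract boxes contained in $\mcS$, which makes your inclusion-maximal-box covering argument unnecessary, though harmless), adds a bias at $\eta$, and feeds the result into $\psi_{>\eta}$; your three-way case split (disjoint / mixed / contained) is the same as the paper's and is sound, granted the constants behave as you require.

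The gap is in your condition (iii) and its justification. You cannot get non-overflow from ``$M$ is finite while the float range is exponential in the exponent bit width.'' The number of boxes in your family is not bounded by any format-dependent quantity: for a checkerboard-like $\mcS$ the maximal boxes are singletons, so $M$ grows like $|[a,b]_\fpq|^d$ and, for moderate $d$, exceeds any bound expressible in $\fmax$; meanwhile condition (ii) forces $\alpha\otimes K$ to be at least about half an ulp of $\eta$ (you cannot shrink it), so the \emph{exact} sum $M\cdot(\alpha\otimes K)$ can exceed $\fmax$. What actually saves the construction --- and what the paper singles out as the one subtle point of this lemma --- is saturation of the left-associated floating-point sum: choosing $w\otimes K\in\bigl(2^{\expo{\eta}-\mbit-1},(1+2^{-1})\times2^{\expo{\eta}-\mbit}\bigr)$, a single increment pushes the bias past $\eta$, yet once the accumulator's exponent exceeds $\expo{\eta}$ by a couple, each further increment is below half an ulp and is rounded away, so $\bigoplus_j \alpha\otimes K$ stays within a fixed small multiple of $2^{\expo{\eta}}$ \emph{independently of the number of terms}. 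This is not optional bookkeeping: if the upper endpoint of the pre-activation interval could reach $+\infty$, the identity $\psi_{>\eta}^\sharp=(K\iota_{>\eta})^\sharp$ (which the constructed $\psi_{>\eta}$ enjoys only on finite intervals, cf.\ \cref{lem:sigmaindicator-2}) no longer applies, and both your mixed and contained cases break. Nor can you import the analysis from \cref{lem:indc-box}: there the hierarchical construction caps the number of summands per layer at $2^{\mbit}$, so no saturation argument is needed, whereas here the summand count is unbounded and the saturation argument is exactly the missing step you must supply.
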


In the proof of \cref{lem:indc-set},
we construct ${\tilde{\nu}_{\mcS}}$ using $\tilde{\nu}_\mcB$ and $\psi_{>\eta}$, as follows:
\begin{align}
  \tilde{\nu}_{\mcS}(\bfx)
  \defeq \psi_{>\eta} \paren[\Bigg]{
    \paren[\Bigg]{ \bigoplus_{\mcB\in\mcT} \alpha'' \otimes \tilde\nu_{\mcB}(\bfx) } \oplus \eta
  },
\end{align}
where $\mcT$ denotes the collection of all abstract boxes in $\mcS$,
and $\alpha'' \in \fpq$ is a constant such that
$\eta < \paren[\big]{ \smash{\bigoplus_{i=1}^n} \alpha'' \otimes K } \oplus \eta < \infty$ for all $n \geq 1$.
We remark that it is possible to make the summation not overflow even for a large $n$,
by cleverly exploiting the rounding errors from $\oplus$. 
With a proper choice of $\alpha''$, we can further show that
$\smash{\tilde{\nu}_{\mcS}^\sharp} = \smash{(K\iota_{\mcS})^\sharp}$ on $\smash{(\bbI_{[a,b]})^d}$.
A formal proof of \cref{lem:indc-set} is given in \cref{sec:pflem:indc-set}.

Using $\tilde\nu_{\mcS}$, we finally construct a $\sigma$-network
that coincides, under the interval semantics, with the target function $h$ over $([a,b]_\fpq)^d$.
This result (\cref{lem:iua}) and the above results (\cref{lem:indc-box,lem:indc-set})
directly imply \cref{lem:indc-to-iua}.

\begin{lemma}
  \label{lem:iua}
  Assume that for any $\mcS \subseteq \smash{([a,b]_{\fpq})^d}$,
  there exists a depth-$L'$ $\sigma$-network $\smash{\tilde{\nu}_{\mcS}}$ without the last affine layer
  such that $\smash{\tilde{\nu}_\mcS^\sharp} = \smash{(K\iota_{\mcS})^\sharp}$ on $\smash{(\bbI_{[a,b]})^d}$.
  Then, for any $h : \efpq{}^d \to \efpq \setminus \set{\bot}$, 
  there exists a $\sigma$-network $\nu : \efpq{}^d \to \efpq$ such that
  \mbox{$\smash{\nu^\sharp} = \smash{h^\sharp}$ on $\smash{(\bbI_{[a,b]})^d}$.}
\end{lemma}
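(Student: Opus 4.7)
The plan is to realize $h$ as a telescoping floating-point combination of scaled indicator networks of the super-level sets of $h$, assembled by a single final affine layer appended to a parallel composition of the hypothesized networks. Because $([a,b]_{\fpq})^d$ is finite and $h$ avoids $\bot$, the image $h(([a,b]_{\fpq})^d) \subseteq \efpq$ is a finite set, which I enumerate in strictly increasing order as $v_1 < v_2 < \cdots < v_N$ (possibly with $v_1 = -\infty$ or $v_N = +\infty$). For each $k \in [N]$, define the super-level set $\mcS_k \defeq \set{\bfx \in ([a,b]_{\fpq})^d \mid h(\bfx) \geq v_k}$; these sets satisfy the identity $h = v_1 + \sum_{k=2}^N (v_k - v_{k-1})\,\iota_{\mcS_k}$ in exact arithmetic, and each telescoping coefficient $v_k - v_{k-1}$ is strictly positive. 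By the hypothesis of the lemma, each $\mcS_k$ admits a depth-$L'$ $\sigma$-network $\tilde{\nu}_{\mcS_k}$ without the last affine layer with $\tilde{\nu}_{\mcS_k}^\sharp = (K\iota_{\mcS_k})^\sharp$ on $(\bbI_{[a,b]})^d$.

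I would then assemble $\nu$ by parallel-composing $\tilde{\nu}_{\mcS_2}, \ldots, \tilde{\nu}_{\mcS_N}$ (using block-diagonal weights throughout the shared depth, as described after \cref{fig:network}) and appending a final affine layer with bias $v_1$ and coefficients $w_2, \ldots, w_N \in \fpq$ chosen so that the left-associative floating-point sum realizes the telescope exactly. On a concrete input $\bfx$ with $h(\bfx) = v_j$, the $k$-th branch evaluates to $K$ for $k \leq j$ and to $0$ for $k > j$; since $x \oplus 0 = x$ for finite $x$, the output collapses to the prefix sum of the telescope, which should equal $v_j$. On an abstract box $\mcB$ with $h(\gamma(\mcB)) = \set{v_{j_1} < \cdots < v_{j_m}}$, the branch outputs are $\intv{K,K}$ for $k \leq j_1$, $\intv{0,K}$ for $j_1 < k \leq j_m$, and $\intv{0,0}$ for $k > j_m$; because every coefficient is strictly positive, monotonicity of interval arithmetic collapses the weighted sum to $\intv{v_{j_1}, v_{j_m}} = h^\sharp(\mcB)$.

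The hard part is making the telescope \emph{exact under floating-point arithmetic}, not merely in the reals: the weights must be chosen so that both the products $w_k \otimes K$ and the chain of $\oplus$s produce sharp endpoints on concrete inputs and sharp interval endpoints on boxes. My plan is to use the freedom in choosing each $w_k$ to match the partial telescope value level by level and, whenever the flat left-associative sum fails to be exact, to regroup the terms into a balanced hierarchy of $\oplus$s realized by a few additional layers, so that within each subgroup the operands have comparable magnitudes and $\oplus$ is exact by a Sterbenz-style argument. The remaining wrinkle is $v_1 = -\infty$ or $v_N = +\infty$, which cannot appear as an affine bias or as a finite difference: I would carve the corresponding extreme level set into a dedicated branch whose large but finite weight is engineered so that its $\oplus$ with the running sum deliberately overflows to the correct infinity exactly on that level set---mirroring the overflow-exploitation trick already used in the proof of \cref{lem:indc-set}. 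Assembling these pieces yields a single $\sigma$-network $\nu$ satisfying $\nu^\sharp = h^\sharp$ on $(\bbI_{[a,b]})^d$, which together with \cref{lem:indc-box} and \cref{lem:indc-set} closes the proof of \cref{lem:indc-to-iua}.
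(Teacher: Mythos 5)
Your overall architecture (parallel indicator branches for super-level sets, one final affine layer, interval monotonicity collapsing to the two prefix sums) matches the paper, but you telescope over the \emph{image values} $v_1<\cdots<v_N$ of $h$, and that is where the proposal has a genuine gap: you must make every floating-point prefix sum $v_1\oplus(w_2\otimes K)\oplus\cdots\oplus(w_j\otimes K)$ equal $v_j$ exactly, where the gaps $v_k-v_{k-1}$ are arbitrary (they need not even be representable, e.g.\ $1-\fmin$). A single product $w_k\otimes K$ can only be steered onto a target up to a couple of ULPs of the gap, while the window that makes the running sum round to exactly $v_k$ has width about one ULP of $v_k$; when the gap and $v_k$ lie in the same binade the achievable products can skip this window entirely (this is exactly the delicacy that forces \cref{lem:inverse} and the residue-control machinery elsewhere in the paper). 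Your two proposed repairs do not close this: a ``balanced hierarchy of $\oplus$s realized by a few additional layers'' is not available in a $\sigma$-network, because any extra affine layer is followed by the activation $\tilde\sigma$, which mangles the intermediate partial sums for a general $\sigma$ satisfying \cref{cond:activation2}; and Sterbenz-type exactness concerns subtraction of nearby quantities, not adding an approximated gap to a running total. So as written the ``hard part'' you identify is precisely the part left unproved.

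The paper takes a different and simpler route that sidesteps this entirely: it splits $h=h_+-h_-$ and telescopes over \emph{all} consecutive nonnegative floats $z_0<z_1<\cdots$ up to the maximum of $h_\pm$, not just the values attained by $h$. Then each increment is a single float gap $z_i-z_{i-1}$, \cref{lem:endbit_control} supplies a weight with $w_i\otimes K\in(\tfrac12(z_i-z_{i-1}),\tfrac32(z_i-z_{i-1}))$, and \cref{lem:approx_sum} shows the left-associative sum of such loosely approximated gaps is \emph{exactly} $z_j$ at every prefix, because the rounding of $\oplus$ absorbs the sub-ULP errors; negative outputs are handled by the $h_-$ branches with negated weights (avoiding your $v_1$ bias, which also cannot be $-\infty$), and $+\infty$ appears simply as the top level $z_{(|\fpq|+1)/2}$ rather than via a bespoke overflow branch. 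If you want to salvage your image-value telescope, you would need per-level exactness arguments with auxiliary constant-$K$ correction branches in the spirit of \cref{lemma:sigmaetatoc1}, which is substantially more work than the paper's construction.
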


We now illustrate the main idea of the proof of \cref{lem:iua}.
For a simpler argument, we assume that $h$ is non-negative;
the proof for the general case is similar (see \cref{sec:pflem:iua}).
Let $0 = z_0 < z_1 < \cdots < z_{n} = +\infty$ be all non-negative floats (except $\bot$) in increasing order,
and let $\mcS_i \defeq \set{\bfx \in \smash{([a,b]_{\fpq})^d} \mid h(\bfx)\ge z_i}$
be the level set of $h$ for $z_i$.
Under this setup, we construct $\nu$ using $\tilde{\nu}_{\mcS_i}$, as follows:
\begin{align}
  \nu(\bfx) \defeq \bigoplus_{i=1}^{m} \alpha_i \otimes \tilde{\nu}_{\mcS_i}(\bfx),
\end{align}
where $m \in \bbN \cup \set{0}$ and $\alpha_i \in \fpq$ are chosen so that
$z_m = \max\set{ h(\bfx) \mid \bfx \in \smash{([a,b]_\fpq)^d} }$ and
$\alpha_i \otimes K \approx z_i-z_{i-1}$ for all $i \in [m]$.
If $\alpha_i\otimes K$ is close enough to $z_i-z_{i-1}$,
then the floating-point summation $\smash{\bigoplus_{i=1}^k} \alpha_i \otimes K$ is exactly equal to
the exact summation $\smash{\sum_{i=1}^k} z_{i}-z_{i-1} = z_k$ for all $k \in [m]$,
by the rounding errors of $\oplus$.
Using this observation,
we can show that $\nu(\bfx)=h(\bfx)$ for all $\bfx \in \smash{([a,b]_\fpq)^d}$,
and more importantly, $\smash{\nu^\sharp} = \smash{h^\sharp}$ on $\smash{(\bbI_{[a,b]})^d}$.
The full proof of \cref{lem:iua} is in \cref{sec:pflem:iua}.


\section{Related Work}
\label{sec:related-work}

\paragraph{\bf Universal approximation.}

Universal approximation theorems for neural networks are widely studied in
the literature, which include results for
feedforward networks~\citep{cybenko89,hornik89,hornik90,pinkus99},
convolutional networks~\citep{zhou20},
residual networks~\citep{lin18},
and transformers~\citep{yun21}.
With the advent of low-precision computing for neural networks
(e.g., 8-bit E5M2, 8-bit E4M3~\citep{Wang2018,micikevicius2022};
float16~\citep{micikevicius2018}; bfloat16~\citep{tensorflow16}),
there has been growing interest among researchers in characterizing
their expressiveness power in this setting.
New UA theorems for ``quantized'' neural networks, which use
finite-precision network parameters with \textit{exact} real arithmetic,
have been studied in \citep{gonon2023,ding2018}.
These networks differ from the floating-point networks considered in this work,
because our networks use \textit{inexact} floating-point arithmetic.
%

\tcr{%
To the best of current knowledge, \citep{park24,hwang25b} are the only works that
study UA theorems for floating-point neural networks.
\citep{park24} proves UA theorems for ReLU and step activation functions.
Our IUA \cref{thm:main}, by virtue of \cref{eq:ua},
is a strict generalization of \citep{park24} in two senses:
(i) it applies to a much broader class of activations that satisfy \cref{cond:activation2},
which subsumes ReLU and step functions; and
(ii) it provides a result for abstract interpretation via interval
analysis, of which the pointwise approximation considered in \citep{park24} is a special case.
Concurrent with this article,
\citep{hwang25b} generalizes \citep{park24} to support a wider range of activation functions and larger input domains.
Our \cref{thm:main} partially subsumes \citep{hwang25b} in that
it is a result for interval approximation, whereas \citep{hwang25b} considers only pointwise approximation.
Conversely, a special case of our \cref{thm:main} for pointwise approximation (i.e., \cref{eq:ua})
is subsumed by \citep{hwang25b}
in that it applies to smaller classes of activation functions and input domains.%
}

\paragraph{\bf Interval universal approximation.}

%
The first work to establish an IUA theorem for neural networks used
interval analysis with the ReLU activation~\citep{baader20}, which was
later extended to the more general class of so-called ``squashable''
activation functions~\citep{wang2022interval}.
Whereas these previous IUA theorems assume the neural network can compute over
arbitrary real numbers with infinitely precise real arithmetic, the IUA
result (\cref{thm:main}) in this work applies to ``machine-implementable''
neural networks that use floating-point numbers and operations.
To the best our knowledge, no previous work has established
an IUA theorem for floating-point neural networks.
These different computational models lead to substantial differences in
both the proof methods (cf.~\cref{sec:iua-main-result,sec:iua-proof}) and
the specific technical results---\cref{sec:comparison} gives a detailed
discussion of how \cref{thm:main} differs from previous IUA and robustness
results~\citep[Theorem 1.1]{baader20}; \citep[Theorem 3.7]{wang2022interval}.

\paragraph{\bf \tcr{Provable robustness.}}
There is an extensive literature on robustness verification and robust training
for neural networks, which is surveyed in, e.g.,
\citep[Chapter~1]{baader20}; \citep{Li2023,Singh2023}.
Notable methods among these works are~\citep{Singh2018,Singh2019}, which
verify the robustness of a neural network using abstract interpretation
with the zonotope and polyhedra domains for a restricted class of activations,
and are sound with respect to floating-point arithmetic.
Compared to these methods, our contribution is a theoretical result on the
inherent expressiveness of provably robust floating-point
networks under the interval domain for a broad class of activation
functions, rather than new verification algorithms or abstract domains.
Indeed, our existence result directly applies to the zonotope and polyhedra
domains, as they are more precise than the interval domain.
More specific IUA theorems tailored to these domains may yield more
compact constructions that witness the existence of a provably
robust floating-point neural network.
Recently, \citep{Jin24} shows that
even if a neural network is provably robust over real arithmetic,
it can be non-robust over floating-point arithmetic and remain vulnerable to adversarial attacks.
This highlights the importance of establishing robustness in the floating-point setting.

\begin{credits}
\subsubsection{\ackname}

G.~Hwang and Y.~Park were supported by
Korea Institute for Advanced Study (KIAS)
Individual Grants AP092801 and AP090301,
via the Center for AI and Natural Sciences at KIAS.
G.~Hwang was also supported by
National Research Foundation of Korea (NRF) Grants
RS-2025-00515264 and RS-2024-00406127, funded by the
Korea Ministry of Science and ICT (MSIT);
and the Gwangju Institute of Science and Technology (GIST)
Global University Project in 2025.
Y.~Park was also supported by the Sejong
University faculty research fund
in 2025.
S.~Park was supported by
the Korea
Institute of Information \& Communications Technology Planning \& Evaluation (IITP)
Grant RS-2019-II190079,
funded by the Korea MSIT;
the Information Technology Research Center (IITP-ITRC) Grant
IITP-2025-RS-2024-00436857,
funded by the Korea MSIT;
and the Culture, Sports, and Tourism R\&D Program through the Korea
Creative Content Agency (KOCCA) Grants RS-2024-00348469 and RS-2024-00345025,
funded by the Korea Ministry of Culture, Sports and
Tourism (MCST) in 2024.
%
W.~Lee and F.~Saad were supported by the United States
National Science Foundation (NSF) under Grant No.~2311983 and funds from
the Computer Science Department at Carnegie Mellon University.
Any opinions, findings, and conclusions or recommendations expressed
in this material are those of the authors and do not necessarily
reflect the views of the funding agencies.

\subsubsection{\discintname}
The authors have no competing interests to declare that are relevant to
the content of this article.
\end{credits}

\newpage
\bibliographystyle{splncs04}
\bibliography{paper}
\addcontentsline{toc}{section}{References}

\newpage
\appendix

\section{Preliminaries for the Appendix}

\subsection{Notation}

We introduce additional conventions and notations that are used throughout the appendix.
First, we interpret any number in the form of $b = b_0.b_1 b_2 \ldots$ ($b_i \in \set{0,1}$)
as a binary expansion, unless otherwise specified:
\begin{equation*}
   b =  b_0 + b_1 \times 2^{-1} + b_2\times 2^{-2} +\cdots.
\end{equation*}
Second, we interpret floating-point addition and summation operators ($\oplus$ and $\bigoplus$)
in the left-associative way, even when they are mixed together. For instance,
\begin{equation*}
    z \oplus \bigoplus_{i=1}^{n} x_i
    \defeq
    (\cdots ((z \oplus x_1) \oplus x_2) \cdots) \oplus x_n.
\end{equation*}
That is, we first expand out all the floating-point addition operations
and then perform each operation from left to right.
\todo{WL. For the next version: Use a different notation for
  $a \oplus \bigoplus_i b_i$, $a \oplus \bigoplus_i b_i \oplus \bigoplus_j c_j$, etc.}
Lastly, in the interval semantics, we abuse notation so that $c \in \fpq$ denotes the abstract interval $\intv{c,c} \in \bbI$. For instance,
\begin{equation*}
    \langle a,b \rangle \oplus^\sharp c = \langle a,b\rangle \oplus^\sharp \langle c,c\rangle.
\end{equation*}

\subsection{Relaxed Version of Condition~\ref{cond:activation2}}

In the appendix, we use a relaxed version of \cref{cond:activation2} to simplify the proof.

\begin{condition}
\label{cond:activation_2r}
An activation function $\sigma:\efpq\to\efpq$ satisfies the following conditions:

\begin{enumerate}[leftmargin=3em, label={\rm{(C\arabic*${}^+$)}}, ref={\rm{(C\arabic*${}^+$)}}]
\item \label{cond:relx-1}
There exist $c_1,c_2 \in \fpq$ such that $\sigma(c_1) = 0$,
\begin{align*}
  \sigma(c_2)\in \mcR
  & \defeq [-\tfrac{5}{4}+2\feps, -\tfrac{\feps}{2}-2\feps^2)]_{\fpq}
  \cup [\tfrac{\feps}{2}+2\feps^2, \tfrac{5}{4}-2\feps]_{\fpq}
  \\
  & \; = \left[(1+2^{-\mbit+1})\times2^{-\mbit-2},1+2^{-2}-2^{-\mbit}\right]_{\fpq} \\
  & \qquad \cup \left[-(1+2^{-2}-2^{-\mbit}),-(1+2^{-\mbit+1})\times2^{-\mbit-2}\right]_{\fpq},
\end{align*}
and $\sigma(x)$ lies between $\sigma(c_1)$ and $\sigma(c_2)$ for all $x$ \mbox{between $c_1$ and $c_2$.}

\item \label{cond:relx-2}
  There exists $\eta \in \fpq$ with  $\eta\in(-(2^2)^-,(2^2)^- )_{\fpq}$ such that
  \begin{itemize}
  \item
    $5+ \emin \le \expo{\eta} \le 1$,
  \item
    $\emin+ 5 \le  \max \{ \expo{\sigma(\eta)}, \expo{\sigma(\eta^+)}  \}  \le \expo{\eta} + \emax -5$, and
  \item
    for all $x,y\in \fpq$ with $x\le \eta<\eta^+ \le y$,
    \begin{align*}
        \sigma(x)\le \sigma(\eta)<\sigma(\eta^+)\le \sigma(y)
        \qquad\text{or}\qquad
        \sigma(x)\ge \sigma(\eta)>\sigma(\eta^+)\ge \sigma(y).
    \end{align*}
  \end{itemize}

\item \label{cond:relx-3}
   There exists $\lips \in[0, 2^{\emax-6} \cdot 2^{\min\{ \max\{ \expo{\sigma(\eta)}, \expo{\sigma(\eta^+)} \}   ,\mbit+2\} }]$
    such that
    for any $x,y\in \fpq$ with $x\le \eta<\eta^+ \le y$,
\begin{align*}
    &|\sigma(x) - \sigma(\eta)| \le \lips |x-\eta|
    \qquad\text{and}\qquad
    |\sigma(y) - \sigma(\eta^+)| \le \lips |y - \eta^+|.
\end{align*}
\end{enumerate}
\end{condition}

\begin{lemma}\label{lem:activation_2r}
    If $\sigma$ satisfies \cref{cond:activation2}, then $\sigma$ satisfies \cref{cond:activation_2r}.
\end{lemma}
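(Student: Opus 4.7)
The plan is to verify, condition by condition, that the three bullets of \cref{cond:activation2} (i.e., \cref{cond:orig-1,cond:orig-2,cond:orig-3}) imply their relaxed counterparts (i.e., \cref{cond:relx-1,cond:relx-2,cond:relx-3}) using the same constants $c_1, c_2, \eta, \lips$. Since $\sigma(c_2) \in \efpq$ and the magnitude constraint in \cref{cond:orig-1} rules out $\pm \infty$ and $\nan$, the value $\sigma(c_2)$ is necessarily in $\fpq$, and the magnitude bound $|\sigma(c_2)| \in [\tfrac{\feps}{2}+2\feps^2, \tfrac{5}{4}-2\feps]$ gives $\sigma(c_2) \in \mcR$ by definition of $\mcR$. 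Hence \cref{cond:relx-1} holds (the auxiliary bound $\max\{|c_1|,|c_2|\} \ge 2^{\emin+1}$ present in \cref{cond:orig-1} is simply dropped and not needed).

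For \cref{cond:relx-2}, I will rewrite each interval constraint of \cref{cond:orig-2} as an exponent constraint. First, since $\feps = 2^{-\mbit-1}$, a direct arithmetic computation gives $(2^2)^- = 4 - 2^{1-\mbit}$ and $4-8\feps = 4 - 2^{2-\mbit}$, so $|\eta| \le 4-8\feps < (2^2)^-$, placing $\eta$ in $(-(2^2)^-, (2^2)^-)_\fpq$. The bound $|\eta| \ge 2^{\emin+5}$ yields $\lfloor \log_2|\eta|\rfloor \ge \emin+5$, hence $\expo{\eta} \ge \emin+5$; and $|\eta| < 2^2$ yields $\expo{\eta} \le 1$. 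For the image bounds, $|\sigma(\eta)|, |\sigma(\eta^+)| \ge 2^{\emin+5}$ directly gives $\expo{\sigma(\eta)}, \expo{\sigma(\eta^+)} \ge \emin+5$, so the max is also at least $\emin+5$. For the upper bound, I use $|\eta| < 2^{\expo{\eta}+1}$ (valid since $\eta$ is normal, as $\expo{\eta} \ge \emin+5 > \emin$) to estimate $|\sigma(\eta)| \le 2^{\emax-6} |\eta| < 2^{\emax - 5 + \expo{\eta}}$, yielding $\expo{\sigma(\eta)} \le \expo{\eta} + \emax - 6 \le \expo{\eta}+\emax-5$; symmetrically for $\sigma(\eta^+)$. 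The monotonicity clause transfers verbatim.

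For \cref{cond:relx-3}, I need to show that the interval $[0, 2^{\emax-7} \min\{|\sigma(\eta)|, 2^{\mbit+3}\}]$ from \cref{cond:orig-3} is contained in $[0, 2^{\emax-6} \cdot 2^{\min\{M_\sigma, \mbit+2\}}]$, where $M_\sigma \defeq \max\{\expo{\sigma(\eta)}, \expo{\sigma(\eta^+)}\}$. After dividing by $2^{\emax-7}$, this reduces to $\min\{|\sigma(\eta)|, 2^{\mbit+3}\} \le 2 \cdot 2^{\min\{M_\sigma, \mbit+2\}}$. I will split on whether $|\sigma(\eta)| \le 2^{\mbit+3}$ or not. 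If yes, then $|\sigma(\eta)| \le 2\cdot 2^{\expo{\sigma(\eta)}} \le 2\cdot 2^{M_\sigma}$, and separately $|\sigma(\eta)| \le 2^{\mbit+3} = 2\cdot 2^{\mbit+2}$, so $|\sigma(\eta)| \le 2\cdot 2^{\min\{M_\sigma,\mbit+2\}}$. If no, then $\expo{\sigma(\eta)} \ge \mbit+3$, so $M_\sigma \ge \mbit+3 > \mbit+2$, giving $\min\{M_\sigma,\mbit+2\} = \mbit+2$ and the required bound $2^{\mbit+3} \le 2\cdot 2^{\mbit+2}$ holds with equality. The Lipschitz inequalities themselves then transfer verbatim.

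No step of this plan involves any obstacle: the proof is a routine but careful unpacking of the quantitative bounds using only the definitions of $\feps$, $\expo{\cdot}$, and arithmetic on powers of two. The only place requiring a sanity check is the translation $|\eta| \le 4-8\feps \Longrightarrow \eta \in (-(2^2)^-,(2^2)^-)_\fpq$, which uses the exact values of $(2^2)^-$ and $4-8\feps$ computed above.
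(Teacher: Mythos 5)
Your proposal is correct and follows essentially the same route as the paper's proof: keep the same constants $c_1, c_2, \eta, \lips$ and unpack each bound of \cref{cond:activation2} into the exponent-form bounds of \cref{cond:activation_2r}, using $\feps = 2^{-\mbit-1}$ for \cref{cond:relx-1}, $|\sigma(\eta)|,|\sigma(\eta^+)| \le 2^{\emax-6}|\eta| \le 2^{\emax-5+\expo{\eta}}$ for \cref{cond:relx-2}, and $2^{\emax-7}|\sigma(\eta)| \le 2^{\emax-6+\expo{\sigma(\eta)}}$ together with $2^{\emax-7}\cdot 2^{\mbit+3} = 2^{\emax-6}\cdot 2^{\mbit+2}$ for \cref{cond:relx-3}. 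Your version is just slightly more explicit (the arithmetic for $(2^2)^-$ versus $4-8\feps$ and the case split on $|\sigma(\eta)|$ vs.\ $2^{\mbit+3}$), but the argument is the same.
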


\begin{proof}
We prove each of \cref{cond:relx-1,cond:relx-2,cond:relx-3} as follows.

\begin{itemize}[leftmargin=3em]
\item[\cref{cond:relx-1}]
Because $\feps= 2^{-\mbit-1}$, we have the desired result.

\item[\cref{cond:relx-2}]
Because
\begin{align*}
    | \sigma(\eta) |,| \sigma(\eta^+) | \le 2^{\emax-6} \cdot |\eta| \le 2^{\emax-5+\expo{\eta}},
\end{align*}
we have the desired result.

\item[\cref{cond:relx-3}]
Because
\begin{align*}
    2^{\emax-7} \times |\sigma(\eta)| &\le  2^{\emax -7+(\expo{\sigma(\eta)}+1) } \le 2^{\emax-6+\expo{\sigma(\eta)} } \le 2^{\emax-6+\max \{ \expo{\sigma(\eta)} ,  \expo{\sigma(\eta^+)}\} }, \\
    2^{\emax-7}  \times 2^{\mbit+3} &= 2^{\emax-6} \times 2^{\mbit+2},
\end{align*}
we have the desired result.

\end{itemize}
\end{proof}

\clearpage
\section{Proofs of the Results in \S\ref{sec:iua-conditions}}

\subsection{Proof of Lemma~\ref{lem:activation}}
\label{sec:pflem:activation}

\paragraph{\bf Proof of \cref{cond:suff-1} $\implies$ \cref{cond:orig-1}.}

Suppose $\rho$ satisfies \cref{cond:suff-1}. Since $\mbit \ge 3$, we have $\tfrac{\feps}{2}+2\feps^2$, $\tfrac{5}{4}-2\feps \in \fpq$. Hence $\round{\rho(c_2')} \in [\tfrac{\feps}{2}+2\feps^2, \tfrac{5}{4}-2\feps]$. In addition, since $ -\tfrac{\fmin}{2} \le\rho(c_1') \le \tfrac{\fmin}{2}$, we have $ \round{\rho(c_1')} = 0$.
Since $\round{\cdot}$ is order-preserving, $\round{\rho(x)}$ lies between $\round{\rho(c_1')}$ and $\round{\rho(c_2')}$ for all $x$ between $c_1'$ and $c_2'$.
Therefore $\round{\rho}$ satisfies \cref{cond:orig-1} of \cref{cond:activation2} with $c_1=c_1'$ and $c_2=c_2'$.

\paragraph{\bf Proof of \cref{cond:suff-2} $\implies$ \cref{cond:orig-2}.}

First suppose
\begin{equation*}
        \rho(x)\le \rho(\delta-\tfrac18)<\rho({\delta+\tfrac18})\le \rho(y),
\end{equation*}
for all $x,y\in\bbR$ satisfying $x\le \delta-\tfrac18<\delta+\tfrac18 \le y$.
Since $|\rho(x)-\rho(y)|>\frac18|x-y|$  for all $x,y\in[\delta-\tfrac1{8},\delta+\tfrac1{8}]$, $\rho$ is monotonically increasing on $[\delta - \tfrac{1}{8},\delta + \tfrac{1}{8}]$. Hence $\round{\rho}$ is also monotonically increasing on $[\delta - \tfrac{1}{8},\delta + \tfrac{1}{8}] \cap \fpq$.

We define $\eta \in \fpq$ as
\begin{align*}
    \eta \defeq \min \{ t \in (\delta - \tfrac{1}{8},\delta + \tfrac{1}{8}) \cap \fpq : \round{\rho(t)} < \round{\rho(t^+)} \}.
\end{align*}
Since the minimum of the distance between the floating-point numbers in $[2^{-2},1]$ or $[-1,-2^{-2}]$ is $ 2^{-\mbit-2}$ and
\begin{align*}
     \rho(\delta+\tfrac{1}{8}) - \rho(\delta-\tfrac{1}{8})   > \frac{1}{8} \times \frac{1}{4} = 2^{-5} \ge  2^{-\mbit-2},
\end{align*}
there exist $\gamma_1,\gamma_2 \in [\delta - \tfrac{1}{8},\delta + \tfrac{1}{8}] \cap \fpq$ such that $\round{\rho(\gamma_1)} \ne \round{\rho(\gamma_2)}$. Hence $\eta$ is well-defined. Note that since $\eta \in (\delta - \tfrac{1}{8},\delta + \tfrac{1}{8}) $ we have $|\eta| \in ( \tfrac{1}{4} , 1) \subset [2^{\emin+5},4-8\feps]$. Also note that since $\tfrac{1}{4},1\in\fpq$, $|\round{\rho(\eta)} |, |\round{\rho(\eta^+)} | \in [\tfrac{1}{4},1]$ leading to
\begin{align*}
   |\round{\rho(\eta)} |, |\round{\rho(\eta^+)} | &\in [\tfrac{1}{4},1] \subset [2^{-9},2^{7}] \subset [2^{\emin+5},2^{\emax-8}] \\
   & \subset  [2^{\emin+5},2^{\emax-6}\cdot \eta].
\end{align*}

Let $x,y \in \fpq$, with $x \le \eta < \eta^+ \le y$.
Since $\rho(x) \le \rho(\delta + \tfrac18) \le \rho(\eta)$, we have
\begin{align*}
    \round{\rho(x)} \le \round{\rho(\delta-\tfrac18)} \le \round{\rho(\eta)}.
\end{align*}
If $ \eta^+ \le y \le \delta + \tfrac18$, since $\round{\rho}$ is monotonically increasing on $[\delta - \tfrac{1}{8},\delta + \tfrac{1}{8}] \cap \fpq$,  we have
\begin{align*}
     \round{\rho(\eta^+)} \le  \round{\rho(y)}.
\end{align*}
If $y \ge \delta + \tfrac18$, since $\rho(y) \ge \rho(\delta + \tfrac18) \ge \rho(\eta^+)$ we have
\begin{align*}
      \round{\rho(\eta^+)} \le \round{\rho(\delta-\tfrac18)}    \le \round{\rho(y)}.
\end{align*}
Symmetrically, if
\begin{equation*}
        \rho(x)\ge \rho(\delta-\tfrac18)<\rho({\delta+\tfrac18})\ge \rho(y),
\end{equation*}
we have
\begin{align*}
    \round{\rho(x)} \ge \round{\rho(\eta)}>\round{\rho(\eta^+)}\ge \round{\rho(y)},
\end{align*}
for $x \le \eta < \eta^+ \le y$.

\paragraph{\bf Proof of \cref{cond:suff-2} and \cref{cond:suff-3} $\implies$ \cref{cond:orig-3}.}

Let $\lips_1 = \max \{ \lips , 2^{\emin+1}\}$. Since $\rho$ is $\lips$-Lipschitz, we have
\begin{align*}
    | \rho(\alpha) - \rho(\beta) | \le \lips | \alpha-\beta| \le \lips_1 | \alpha-\beta|.
\end{align*}
for $\alpha,\beta \in \bbR$.

Let $x,y \in \fpq$ where $x \le \eta < \eta^+ \le y$.

First, suppose $\round{\rho(x)}$ is normal.
Let $ C = \max \{ |x|, |\eta| \}$. Since $| \round{t} - t | \le |t| \times 2^{\mbit}$ for normal $t \in \fpq$, we have
\begin{align*}
    | \round{\rho(x)}  - \round{\rho(\eta)} | &\le  | \round{\rho(x)}  - \rho(x) | + | \rho(x) - \rho(\eta) | + | \rho(\eta) -  \round{\rho(\eta)}  | \\
    &\le \lips_1 ( |x| + |\eta|) \times 2^{-\mbit} +  \lips_1 |x-\eta| \\
    &\le 2 \lips_1 C \times 2^{-\mbit} +  \lips_1|x-\eta|  \le 5 \lips_1|x-\eta|,
\end{align*}
where we use
\begin{align*}
      | x - \eta | \ge C \times 2^{-\mbit -1}.
\end{align*}
Now, suppose $\round{\rho(x)}$ is subnormal. Since $| \round{t} - t | \le \tfrac12 \times \fmin$ for subnormal $t \in \fpq$, we have
\begin{align*}
    | \round{\rho(x)}  - \round{\rho(\eta)} | &\le
    \tfrac12 \times \fmin + \lips_1|\eta| \times 2^{-\mbit} +  \lips_1|x-\eta| \\
    &\le 2 \lips_1 C \times 2^{-\mbit} +  \lips_1|x-\eta|  \le 5 \lips_1|x-\eta|,
\end{align*}
where we use
\begin{align*}
     2^{\emin+1}  \le \lips_1, \quad  C \ge |\eta| \ge 2^{-2}, \quad \tfrac12 \fmin \le  \lips_1 C \times 2^{-\mbit}.
\end{align*}
Therefore, we have
\begin{align*}
    | \round{\rho(x)}  - \round{\rho(\eta)} | \le 5 \lips_1 | x -\eta| \le \tilde{\lips} | x -\eta|,
\end{align*}
where
\begin{align*}
    \tilde{\lips} = 5\lips_1 \in  [0, 2^{\emax-9} ] =  [0, 2^{\emax-7} \cdot \min\{|\sigma(\eta)|,2^{\mbit+3}\}].
\end{align*}

Similarly, we can show
\begin{align*}
    | \round{\rho(\eta^+)}  - \round{\rho(y)} | \le \tilde{\lips} | \eta^+ -y |.
\end{align*}
\myqedd

\subsection{Proof of Corollary~\ref{cor:activation}}
\label{sec:pfcor:activation}

If $\rho$ satisfies the conditions of \cref{lem:activation}, $\rho$ satisfies \cref{cond:activation2}.
Since $\relu$, $\lrelu$, $\gelu$, $\elu$, $\mish$, $\SoftPlus$, $\Sigmoid$, and $\tanh$ are increasing on $[\tfrac{1}{4},1]$, we have
\begin{equation*}
        \rho(x)\le \rho(\delta-\tfrac18)<\rho({\delta+\tfrac18})\le \rho(y),
\end{equation*}
for some $\delta \in [\tfrac{3}{8}, \tfrac{7}{8}]$.

To check $\rho$ satisfy the conditions of \cref{lem:activation}, we need to check the following requirements.
\begin{itemize}
    \item $ |\rho(c_1')| \le \tfrac{\fmin}{2}$.
    \item $ |\rho(c_2')| \in [\tfrac{\feps}{2}+2\feps^2, \tfrac{5}{4}-2\feps]$.
    \item $ \delta \in [\tfrac{3}{8}, \tfrac{7}{8}]$.
    \item $ \max \{ |c_1|,|c_2| \} \ge 2^{\emin+1}=2^{-2^{\ebit-1}+3} \ge 2^{-13}$.
    \item $ |\rho(x)| \in [ \tfrac{1}{4} , 1]$ for $ x \in [\delta-\tfrac{1}{8}, \delta +\tfrac{1}{8}]$.
    \item $\inf_{\tfrac{1}{4}\le x\le 1} |\rho'(x)| > \tfrac{1}{8}$.
    \item $\lips \le \frac{1}{5} \cdot 2^{\emax-9}$.
\end{itemize}
If $\mbit \ge 3$ and $\ebit \ge 5$, according to \cref{table:floating_format}, we have
\begin{align*}
     -\fmax &\le -32768, \; [0.0391,1.125]  \subset [\tfrac{\feps}{2}+2\feps^2, \tfrac{5}{4}-2\feps], \\
     \tfrac \fmin 2 &\le 7.63 \times 10^{-6}, \;  \tfrac{1}{5} \cdot 2^{\emax-9} \ge 12.8.
\end{align*}
Hence, the above requirements are satisfied by \cref{table:lip} under  the condition $\mbit \ge 3$, $\ebit \ge 5$ as well as for various floating-point formats presented in \cref{table:floating_format}.

To verify $|\rho(-\fmax)| \le \tfrac \fmin 2$ for $\Sigmoid$ and $\SoftPlus$, it is sufficient to show
\begin{align*}
    \log \left( \rho(- 2^{2^{\ebit-1}} ) \right)  \le (-2^{\ebit-1}-\mbit +1) \log 2,
\end{align*}
since $\rho$ is monotonically increasing on $x<0$ and
\begin{align*}
    \rho(-\fmax)= \rho(-(2-2^{-\mbit})\times 2^{2^{\ebit-1}} ) \le \rho(- 2^{2^{\ebit-1}} ) \le  2^{-2^{\ebit-1}-\mbit +1} = \tfrac \fmin 2.
\end{align*}
Note that
\begin{align*}
    |\Sigmoid(x)| &= |\frac{1}{1+e^{-x}}| \le e^x, \quad x < 0, \\
    |\SoftPlus(x)| &= | \log(1+e^x) | \le e^x, \quad x < -1,
\end{align*}
and
\begin{align*}
(2^{\ebit-1}+\mbit -1) \log 2  \le (2^\ebit-1)\log2 \le 2^{\ebit+1} \le 2^{2^{\ebit-1}},
\end{align*}
which is due to $n \le 2^{n-1} -1$ for $3 \le n \in \bbN$ (Note that $\ebit \ge 5$, $\mbit \le 2^{\ebit-1}$ ).
Therefore we have
\begin{align}
    \log \left( \rho(- 2^{2^{\ebit-1}} ) \right) \le -2^{2^{\ebit-1}}  \le (-2^{\ebit-1}-\mbit +1) \log 2, \label{eq:cor_sigmoid}
\end{align}
for $\rho=\Sigmoid$ or $\rho=\SoftPlus$.

Finally, since $\relu$, $\lrelu$, $\gelu$, $\elu$, $\mish$ and $\tanh$ are increasing on $[0,1]$, $\SoftPlus$ and $\Sigmoid$ are increasing on $[-\infty,1]$,
and $\rho(\cdot)$ is order-preserving, $\rho(x)$ lies between $\rho(c_1')$ and $\rho(c_2')$ for all $x$ between $c_1'$ and $c_2'$.
\myqedd

\begin{table}[h]
\caption{%
  Properties of various activation functions for verifying the conditions.
  $D$ denotes $[\delta-\tfrac{1}{8}, \delta +\tfrac{1}{8}]$ and
  $\text{Lip}(\rho)$ denotes the Lipschitz constant of $\rho$.
  For $\Sigmoid$ and $\SoftPlus$, we show $|\rho(c_1')| \le \tfrac{\fmin}{2}$ in  \cref{eq:cor_sigmoid}.
  The numbers in the table are represented in decimal form and rounded to two decimal places.
}
\vspace{0.3cm}
\setlength{\tabcolsep}{3pt}
\hspace{-8pt}
\begin{tabular}{cccccc@{\;\;\;}ccccc}
\toprule
\begin{tabular}{@{}c@{}}
  Activation \\ function
\end{tabular}
& $c_1'$ & $ |\rho(c_1')|$ & $c_2'$ & $\rho(c_2')$ & $\delta$ & $\displaystyle
\inf_{x \in D} |\rho(x)|$ & $\displaystyle \sup_{x \in D} |\rho(x)|  $ & $\displaystyle \inf_{1/4 \le x\le 1} |\rho'(x)|$   &$\text{Lip}(\rho)$ &  \\
\midrule
$\relu$     & 0        & 0                     & 1 & 1    & 0.5 & $0.37$ & $0.63$ & 1    & 1    \\
$\lrelu$    & 0        & 0                     & 1 & 1    & 0.5 & $0.37$ & $0.63$ & 1    & 1    \\
$\gelu$     & 0        & 0                     & 1 & 0.84 & 0.6 & $0.32$ & $0.57$ & 0.70 & 1.13 \\
$\elu$      & 0        & 0                     & 1 & 1    & 0.5 & $0.37$ & $0.63$ & 1    & 1    \\
$\mish$     & 0        & 0                     & 1 & 0.87 & 0.5 & $0.26$ & $0.49$ & 0.75 & 1.09 \\
$\SoftPlus$ & $-\fmax$ & \cref{eq:cor_sigmoid} & 1 & 1.31 & 0.4 & $0.84$ & $0.99$ & 0.56 & 1    \\
$\Sigmoid$  & $-\fmax$ & \cref{eq:cor_sigmoid} & 1 & 0.73 & 0.5 & $0.59$ & $0.66$ & 0.20 & 0.25 \\
$\tanh$     & 0        & 0                     & 1 & 0.76 & 0.5 & $0.35$ & $0.56$ & 0.42 & 1    \\
\bottomrule
\end{tabular}
\label{table:lip}
\vspace{0.2in}
\caption{%
  Properties of floating-point format for verifying the conditions.
The numbers in the table are represented in decimal form and rounded to two decimal places.
}
\vspace{0.3cm}
\centering
\begin{tabular}{@{}ccccccccccc@{}}
\toprule
\begin{tabular}{@{}c@{}}\end{tabular} Format name  & $\ebit$   & $ \mbit$ & $-\fmax$ & $[\tfrac{\feps}{2}+2\feps^2, \tfrac{5}{4}-2\feps]$ & $\tfrac \fmin 2 $ & $ \tfrac{1}{5} \cdot 2^{\emax-9}$\\
\midrule
9-bit format & 5  & 3  & ${<}\,{-2^{15}}$   & [0.039,1.125]                & $7.63 \times 10^{-6}$   & 12.8 \\
bfloat16     & 8  & 7  & ${<}\,{-2^{127}}$  & [$1.93\times10^{-3}$,1.24]   & $9.18 \times 10^{-41}$  & $6.65\times10^{34}$ \\
float16      & 5  & 10 & ${<}\,{-2^{15}}$   & [$2.54\times10^{-4}$ 1.25]   & $5.96 \times 10^{-8}$   & 12.8 \\
float32      & 8  & 23 & ${<}\,{-2^{127}}$  & [$2.98\times10^{-8}$,1.25]   & $1.40 \times 10^{-45}$  & $6.65\times10^{34}$ \\
float64      & 11 & 52 & ${<}\,{-2^{1023}}$ & [$5.55\times10^{-17}$, 1.25] & $5.00 \times 10^{-324}$ & $3.51\times10^{304}$\\
\bottomrule
\end{tabular}
\label{table:floating_format}
\end{table}

\clearpage
\section{Proofs of the Results in \S\ref{sec:iua-implications}}
\label{sec:aaa}

\subsection{Proof of Theorem~\ref{thm:provable-robustness}}
\label{sec:pfthm:provable-robustness}

First, consider any $g : \efpq{}^d \to \efpq{}^n$ such that
for every $\bfx_0 \in \mcX$ and $\bfx \in \mcN_\delta(\bfx_0)$,
\begin{align}
  \label{eqthm:provable-robustness-g}
  g(\bfx) = (\underbrace{0, \ldots, 0}_{\mathclap{\text{$\class(f(\bfx_0))-1$}}}, 1, 0, \ldots, 0).
\end{align}
Then, $g$ makes the same prediction as $f$ on $\mcX$ by \cref{eqthm:provable-robustness-g},
and $g$ is $\delta$-robust on $\mcX$ since $f$ does so.
For each $i \in [n]$, let $g_i : \efpq{}^d \to \efpq{}$ be the $i$-th component of $g$: $g_i(\bfx) \defeq g(\bfx)_i$.
Then, by \cref{thm:main}, there exist $\sigma$-neural networks $\nu_1, \ldots, \nu_n : \efpq{}^d \to \efpq$ such that
for every $i \in [n]$ and $\mcB \in \bbI^d$ in $[-1,1]^d$,
\begin{align}
  \label{eqthm:provable-robustness-nui}
  \gamma( \nu_i^\sharp(\mcB) ) = \big[ \min g_i(\gamma(\mcB)), \max g_i(\gamma(\mcB)) \big] \cap \efpq.
\end{align}

Next, define $\nu : \efpq{}^d \to \efpq{}^n$ by a $\sigma$-neural network that stacks up $\nu_1, \ldots, \nu_n$ such that
for every $\mcB \in \bbI^d$,
\begin{align}
  \label{eqthm:provable-robustness-nu}
  \nu^\sharp(\mcB) = (\nu_1^\sharp(\mcB), \ldots, \nu_n^\sharp(\mcB)).
\end{align}
We can construct such $\nu$ because $\nu_1, \ldots, \nu_n$ have the same depth by the proof of \cref{thm:main}.
Then, $\nu$ makes the same prediction as $g$, and thus as $f$, by \cref{eqthm:provable-robustness-nui,eqthm:provable-robustness-nu}.
Moreover, we claim that $\nu$ is $\delta$-provably robust on $\mcX$.
To prove this, let $\bfx_0 \in \mcX$ and $\mcB \in \bbI^d$ with $\gamma(\mcB) = \mcN_{\delta}(\bfx_0)$.
Then,
\begin{align}
  \gamma(\nu^\sharp(\mcB))
  &= \prod_{i=1}^n \big[ \min g_i(\gamma(\mcB)), \max g_i(\gamma(\mcB)) \big] \cap \efpq
  \\
  &= {\set{0} \times \cdots \times \set{0}}
  \times \set{1} \times \set{0} \times \cdots \times \set{0},
\end{align}
where the first equality is by \cref{eqthm:provable-robustness-nui,eqthm:provable-robustness-nu}
and the second equality is by \cref{eqthm:provable-robustness-g}.
Since $\gamma(\nu^\sharp(\mcB))$ is a singleton set,
$\bfy, \bfy' \in \gamma(\nu^\sharp(\mcB))$ clearly implies $\class(\bfy) = \class(\bfy')$, as desired.
\myqedd

\subsection{Proof of Theorem~\ref{thm:turing}}
\label{sec:pfthm:turing}

\cref{thm:turing} is a direct corollary of \cref{lem:indc-to-iua} and the following lemma.
\myqedd

\begin{lemma}\label{lem:turing-indc}
Let $\sigma:\fpq\to\fpq$ be the identity function, i.e., $\sigma(x)=x$ for all $x\in\fpq$.
Then, $\sigma$ is $(\fpq,1,1,L_\phi,L_\psi)$-separable for some $L_\phi,L_\psi\in\bbN$.
\end{lemma}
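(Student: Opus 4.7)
The plan is to specialize and extend the proof of Lemma~\ref{lem:indc} from Section~\ref{sec:pflem:indc} to the identity activation $\sigma(x) = x$, enlarging the separability domain from $[-1, 1]_\fpq$ to all of $\fpq$. First, I would verify that the identity satisfies Condition~\ref{cond:activation2} with constants $c_1 = 0$, $c_2 = 1$, $\eta = 1$, and $\lips = 1$: strict monotonicity and $1$-Lipschitz continuity give (C2) and (C3) immediately, while $\sigma(0) = 0$ and $|\sigma(1)| = 1 \in [\tfrac{\feps}{2} + 2\feps^2, \tfrac{5}{4} - 2\feps]$ (under the standing assumptions $\mbit \ge 3$, $\ebit \ge 5$) give (C1). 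Lemma~\ref{lem:indc} then provides $([-1, 1]_\fpq, 1, 1, L_\phi^0, L_\psi^0)$-separability for some $L_\phi^0, L_\psi^0 \in \bbN$, yielding $\psi_{>1}$ together with $\phi_{\le z}, \phi_{\ge z}$ for every $z \in [-1, 1]_\fpq$.

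Next, I would extend $\phi_{\le z}, \phi_{\ge z}$ from $z \in [-1,1]_\fpq$ to arbitrary $z \in \fpq$. Since $\sigma$ is the identity, a $\sigma$-network is essentially a composition of floating-point affine maps, for which the key primitive is exact power-of-two scaling $x \mapsto x \otimes 2^{-k}$: this is rounding-free whenever no overflow or underflow occurs. Choosing $k$ based on $\expo{z}$ brings $z$ into the unit range as $z' = z \otimes 2^{-k} \in [-1, 1]_\fpq$, and pre-composing this scaling layer with the inner $\phi_{\le z'}$ (respectively $\phi_{\ge z'}$) yields the desired $\phi_{\le z}$ (respectively $\phi_{\ge z}$) of uniform depth $L_\phi = L_\phi^0 + O(1)$, padded via identity layers across all $z$. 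The network $\psi_{>1}$ requires no extension: its construction via \cref{eq:psi-construct} relies on Lemmas~\ref{lem:sigmaindicator-2} and~\ref{lem:sigmaetatoc1-2}, whose interval-semantic guarantees already cover the full range $[-\fmax, \fmax]_\fpq$. Once the networks are built, I would verify the interval-semantic equalities $\phi_{\le z}^\sharp = (\iota_{\le z})^\sharp$, $\phi_{\ge z}^\sharp = (\iota_{\ge z})^\sharp$, and $\psi_{>1}^\sharp = (\iota_{>1})^\sharp$ on all of $\bbI_\fpq$, which reduces to checking that the pre-scaling $T_z(x) = x \otimes 2^{-k}$ sends $[-\fmax, z]_\fpq$ to the correct side of $z'$ under $\phi_{\le z'}^\sharp$, and $[z^+, \fmax]_\fpq$ to the other side under $\phi_{\ge z'}^\sharp$.

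The hard part will be the corner cases at exponent boundaries and extremes of $\fpq$: when $z$ is subnormal; when $z$ sits at the top of its binade so that $z^+$ jumps to a new exponent; when inputs satisfy $|x \cdot 2^{-k}| > 1$ even after scaling (i.e., $|x|$ larger than $2^{k}$); and when scaling underflows small inputs into subnormals that could straddle the wrong side of $z'$. I expect to handle these via a case analysis on $\expo{z}$ and the sign of $z$, augmented where necessary by a floating-point saturation step of the form $x \mapsto (x \oplus C) \ominus C$ for a suitably large $C \in \fpq$: such composites absorb inputs of magnitude below $C \cdot \feps$ by rounding, providing a mechanism to clip out-of-range inputs before scaling. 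Achieving a single common depth $L_\phi$ for all $z \in \fpq$ amounts to padding shorter constructions with identity layers, which is harmless under $\sigma = $ identity.
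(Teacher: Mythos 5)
Your first step is fine: the identity does satisfy \cref{cond:activation2} (e.g.\ with $c_1=0$, $c_2=1$, $\eta=1$, $\lips=1$), so \cref{lem:indc} gives $([-1,1]_{\fpq},1,1,L_\phi^0,L_\psi^0)$-separability, and the $\psi_{>\eta}$ part indeed extends to all finite intervals since \cref{lem:sigmaindicator-2,lem:sigmaetatoc1-2} already operate on $\langle-\fmax,\fmax\rangle$. The genuine gap is in the step that is the actual content of the lemma: extending $\phi_{\le z},\phi_{\ge z}$ from $z\in[-1,1]_\fpq$ \emph{and inputs in $[-1,1]$} to arbitrary $z\in\fpq$ and inputs/boxes ranging over all of $[-\fmax,\fmax]$. \cref{def:separability} on $\fpq$ requires correctness on every abstract interval with finite endpoints, e.g.\ $\langle-\fmax,\fmax\rangle$ or $\langle z^+,\fmax\rangle$, whereas the networks produced by \cref{lem:indc} (built from $\mu_z$ of \cref{lem:eta_etaplus-2}) carry no guarantee outside $[-1,1]$: for inputs of magnitude up to $\fmax$ the inner affine map can overflow to $\pm\infty$, after which the downstream guarantees (stated only for intervals inside $[-\fmax,\fmax]$) no longer apply. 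Your pre-scaling $x\mapsto x\otimes 2^{-k}$ with $k$ chosen from $\expo{z}$ brings $z$ into $[-1,1]$ but not the rest of the domain (unless $\expo{z}\approx\emax$), so the composite still feeds the base network inputs far outside $[-1,1]$. And no single affine preprocessing can fix this: compressing $[-\fmax,\fmax]$ into $[-1,1]$ forces a scale factor of order $2^{-\emax}$, which wipes out the one-ulp gap between $z$ and $z^+$ (of size $2^{\expo{z}-\mbit}$) whenever $\expo{z}$ is not close to $\emax$; preserving the threshold and bounding the range are in direct tension. Your proposed repair, a saturation layer $x\mapsto(x\oplus C)\ominus C$, does not resolve this: it quantizes \emph{small} inputs but leaves large inputs essentially unchanged (it clips nothing), making $C$ large enough to matter pushes $x\oplus C$ to $+\infty$ for large positive $x$, and for $|z|\lesssim C\feps$ it collapses $z$ and $z^+$ to the same value, destroying the indicator.

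What is missing is precisely the mechanism the paper builds for this lemma: a multi-layer, rounding-driven contraction. The paper's proof does not go through \cref{lem:indc} at all; it first shifts by $z$ via $x\ominus z$ (halving both $x$ and $z$ when $|z|\ge 2^{\emax-\mbit-1}$ to avoid overflow), which places the threshold exactly at $0$ versus $\ge\fmin$, then iterates $f_0(x)=\sigma\bigl(2^{-1}\otimes\sigma(x\oplus\fmin)\bigr)$ on the order of $\emax-\emin+\mbit$ times; the fixed-point/rounding analysis shows this collapses all of $[-\fmax,0]_\fpq$ to exactly $0$ and all of $[\fmin,\fmax]_\fpq$ to exactly $\fmin$, after which exact power-of-two rescalings produce the $\{0,1\}$-valued indicators of uniform depth. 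Some iterated collapse of this kind (compressing the full dynamic range while keeping the two sides of the threshold at two distinct exact values) is unavoidable, and your proposal does not contain it; as written, the plan would fail for, e.g., subnormal or moderate-magnitude $z$ with input boxes reaching $\pm\fmax$.
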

\begin{proof}

We define $f_0$ as
\begin{align*}
    f_0(x) \defeq  \sigma\left( 2^{-1} \otimes \sigma \left( x  \oplus \fmin \right) \right).
\end{align*}
Then we have
\begin{align*}
    f_0^\sharp( \langle 0 ,  0 \rangle) &=  \langle 0 , 0 \rangle, \quad f_0^\sharp( \langle \fmin ,  \fmin \rangle) = \langle \fmin, \fmin \rangle, \\
    f_0^\sharp(\langle -2\fmin , 0 \rangle ) &=  \langle 0, 0 \rangle, \quad f_0^\sharp(\langle 2\fmin , 4\fmin \rangle ) =  \langle 2\fmin, 2\fmin \rangle.
\end{align*}

\paragraph{\fbox{\bf Case 1: $x > \fmin$ with $x = n \cdot \fmin$ for $n \in \bbN$.}}~

Define $\mcI_i$ as
\begin{align}
    \mcI_0 &\defeq [2,2^{\mbit+2} -1],  \\
    \mcI_k &\defeq [2^{\mbit+1+k},2^{\mbit+2+k} -1], \quad k \in \bbN.
\end{align}

If $n \in \mcI_0$, we have
\begin{align*}
    f_0(x) &= \sigma\left( 2^{-1} \otimes \sigma \left( (n+2) \fmin \right) \right) \le  \lrp{ \tfrac{n}{2}+ \tfrac{3}{2}} \fmin.
\end{align*}
Since the solution of the recurrence relation $a_{i+1} = \tfrac{1}{2}a_i + \tfrac{3}{2}$ is $a_i = (a_0-3)(\tfrac{1}{2})^i + 3$,
pick $m_1 \in  \bbN$ such that
\begin{align*}
    m_1 \ge  \ceil*{ \frac{\log( 2^{\mbit+2} -4)}{\log(2)} }_{\bbZ}.
\end{align*}
we have
\begin{align*}
    a_m = (a_0-3)(\tfrac{1}{2})^{m_1} + 3 \le (2^{\mbit+2} -4)(\tfrac{1}{2})^{m_1} + 3\le   1+ 3 = 4,
\end{align*}
which leads to $f^{\circ (m_1)}(x) \le 4 \fmin $ and $f^{\circ (m_1+2)}(x) = \fmin$ for $ 2\fmin \le x < 2^{\mbit+2}$.

If $n \in \mcI_k$, we have
\begin{align*}
    f_0(x) &= \sigma\left( 2^{-1} \otimes \sigma \left( n \fmin \right) \right) =   \tfrac{n}{2} \fmin.
\end{align*}
Hence $f_0 (x) = n_2 \fmin $ for $n_2 \in \mcI_{k-1}$.

Therefore we have $f_0^{\circ (\emax -\emin - 1 )}(x) = n_3 \fmin $ for $n_3 \in \mcI_0$ which leads to
$f_0^{\circ (\emax -\emin +m_1 + 1 )}(x) = \fmin $
for $\fmin \le x \le \fmax$.

\paragraph{\fbox{\bf Case 2: $x \le 0 $ with $x = - n \fmin $ for $n \in \bbN$.}}~

If $n \in \mcI_0$, we have
\begin{align*}
    f_0(x) &= \sigma\left( 2^{-1} \otimes \sigma \left( (-n) \fmin \right) \right) \ge  ( -\tfrac{n}{2}- \tfrac{1}{2} ) \fmin.
\end{align*}
Since the solution of the recurrence relation $b_{i+1} = \tfrac{1}{2}b_i + \tfrac{1}{2}$ is $b_i = (a_0-1)(\tfrac{1}{2})^i + 1$,
pick $m_2 \in  \bbN$ such that
\begin{align*}
    m_2 \ge  \ceil*{ \frac{\log( 2^{\mbit+2} -2)}{\log(2)} }_{\bbZ},
\end{align*}
we have
\begin{align*}
    a_m = (a_0-1)\lrp{\tfrac{1}{2}}^{m_1} + 1 \le (2^{\mbit+2} -2)\lrp{\tfrac{1}{2}}^{m_1} + 1\le 2,
\end{align*}
which leads to $f^{\circ (m_2)}(x) \ge -  2\fmin $ and $f^{\circ (m_1+1)}(x) = 0$ for $ 2^{\mbit+2} < x \le 0$.

If $n \in \mcI_k$, we have
\begin{align*}
    f_0(x) &= \sigma\left( 2^{-1} \otimes \sigma \left( -n \fmin \right) \right) =   - \tfrac{n}{2} \fmin.
\end{align*}

Therefore we have $f_0^{\circ (\emax -\emin - 1 )}(x) = -n_4 \fmin $ for $n_4 \in \mcI_0$ which leads to
$f_0^{\circ (\emax -\emin +m_2 + 1 )}(x) = \fmin $
for $-\fmax \le x \le 0$.
We define $g_0$ as $f_0(x) \defeq f^{\circ o(\emax -\emin +\max\{m_1,m_2\} + 1)}$, and we have
\begin{align*}
    g_0^\sharp( \langle -\fmax ,  0 \rangle) &=  \langle 0 , 0 \rangle, \quad g_0^\sharp( \langle \fmin ,  \fmax \rangle) = \langle \fmin, \fmin \rangle, \quad   g_0^\sharp( \langle -\fmax ,  \fmax \rangle) = \langle 0, \fmin \rangle.
\end{align*}
For $z \in \fpq$. we define $g_z$ as
\begin{align*}
    g_z(x) \defeq \begin{cases}
     g_0\left( \sigma( x \ominus z ) \right) \quad &\text{if} \quad  |z| < 2^{\emax -\mbit - 1} ,  \\
     g_0\left( \sigma( 2^{-1} \otimes x \ominus  \frac{z}{2} \right) \quad &\text{if} \quad |z| \ge 2^{\emax -\mbit - 1} ,
    \end{cases}
\end{align*}
and we have
\begin{align*}
    g_z^\sharp( \langle -\fmax ,  z \rangle) &=  \langle 0 , 0 \rangle, \quad g_z^\sharp( \langle z^+ ,  \fmax \rangle) = \langle \fmin, \fmin \rangle,  \quad g_0^\sharp( \langle -\fmax ,  \fmax \rangle) = \langle 0, \fmin \rangle.
\end{align*}
Finally we define $\iota_{> z}$ as
\begin{align*}
     \iota_{> z}(x) &\defeq \sigma \left( 2^{-\emin} \otimes  \sigma \left( 2^{-\mbit} \otimes g_z(x) \right) \right), \\
  \iota_{ \ge z}(x) &\defeq \sigma \left( 2^{-\emin} \otimes  \sigma \left( 2^{-\mbit} \otimes g_{z^-}(x) \right) \right), \\
   \iota_{< z}(x) &\defeq \sigma \left( 2^{-\emin} \otimes  \sigma \left( 2^{-\mbit} \otimes g_{-z}(-x) \right) \right), \\
  \iota_{\le z}(x) &\defeq \sigma \left( 2^{-\emin} \otimes  \sigma \left( 2^{-\mbit} \otimes g_{(-z)^-}(-x) \right) \right).
\end{align*}
\end{proof}

\clearpage
\section{Proofs of the Results in \S\ref{sec:pflem:indc}}
\label{sec:pflem:proof_results-0}

In \cref{lem:indc}, we suppose $\sigma$ satisfies \cref{cond:activation2}. By \cref{lem:activation_2r}, we suppose $\sigma$ satisfies \cref{cond:activation_2r}, the relaxed version of \cref{cond:activation2}.

\subsection{Proof of Lemma~\ref{lem:sigmaindicator-2}}
\label{sec:pflem:sigmaindicator-2}

Since $\sigma$ also satisfies \cref{cond:activation_2r}, it is sufficient to prove the following lemma (\cref{thm:sigmaindicator}).
To prove \cref{thm:sigmaindicator}, we need following preliminary technical lemmas: \cref{lem:subnorm_inverse,lem:telescoping,lem:contraction2} (presented in \cref{subsec:techlemma_for_sigmaindicator-2}) and \cref{lem:inverse} (presented in \cref{sec:common_techlemma}).
\myqedd

\begin{lemma}\label{thm:sigmaindicator}
    Suppose that $\sigma:\fpq\to\fpq$ satisfies \cref{cond:activation_2r}.
    Then, there exists a $\sigma$-network $f$ without the first and last affine layer such that
    \begin{align}
    f^\sharp \left( \langle -\fmax , \eta  \rangle \right) = \langle \sigma(\eta) ,
    \sigma(\eta) \rangle, \quad f^\sharp \left( \langle \eta^+ , \fmax \rangle \right) = \langle \sigma(\eta^+) , \sigma(\eta^+)  \rangle.
    \end{align}
\end{lemma}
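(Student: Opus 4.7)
The plan is to construct $f$ as $f = \sigma \circ g$, where $g:\fpq\to\fpq$ is a sub-network ending with an affine layer whose floating-point behavior on $[-\fmax,\eta]_\fpq$ is the constant $\eta$ and on $[\eta^+,\fmax]_\fpq$ is the constant $\eta^+$, with the same behavior preserved under the interval semantics $g^\sharp$. Composing with the final $\sigma$ then gives outputs exactly $\sigma(\eta)$ and $\sigma(\eta^+)$, and the composition fits the ``without the last affine layer'' form because the trailing $\sigma$ is followed only by a trivial identity affine. The ``without the first affine layer'' requirement forces $g$ to begin with $x\mapsto\sigma(x)$; by \cref{cond:relx-2}, $\sigma(x)$ lies on one side of $\sigma(\eta)$ when $x\le\eta$ and on the other side when $x\ge\eta^+$, so after one $\sigma$ application the task reduces to contracting each of these two disjoint halves to a single float and then transporting those two floats to $\eta$ and $\eta^+$.

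The contraction step is the technical heart of the proof, and I would carry it out by iterating a one-layer gadget supplied by \cref{lem:contraction2}. The underlying idea is that for a sufficiently large floating-point constant $C$, the expression $y\oplus C$ rounds every $y$ in a wide band to the same float, so a wide floating-point interval collapses to a singleton; a subsequent scaling and subtraction of $C$ then re-centers the singleton at a chosen location, and the intervening $\sigma$ layer permits iteration. \cref{lem:telescoping} extends this to a geometric chain of exponents, which is what is needed because the half $\sigma([-\fmax,\eta]_\fpq)$ can span many orders of magnitude from subnormals up to values of size roughly $2^{\expo{\eta}+\emax-5}$; the chain collapses each exponent band in turn. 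The other half $\sigma([\eta^+,\fmax]_\fpq)$ is processed symmetrically and in parallel through the coordinate-wise $\tilde\sigma$, with the monotonicity clause of \cref{cond:relx-2} guaranteeing that the two bands never cross during the contractions.

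Once both halves have been crushed to distinct representative floats, I complete $g$ with a final affine layer that sends the two representatives to $\eta$ and $\eta^+$. The floating-point coefficients needed for this affine (in particular a factor $\alpha$ with $\alpha\otimes p = \eta^+\ominus\eta$ up to correct rounding) are supplied by \cref{lem:inverse}, and \cref{lem:subnorm_inverse} handles the regime in which $\eta$ has exponent near $\emin$ so that subnormal outputs are treated correctly. The two monotonicity cases in \cref{cond:relx-2} (increasing versus decreasing near $\eta$) are reduced to a single case by a sign flip absorbed into the first intermediate affine.

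The hardest part will be the quantitative bookkeeping: every intermediate value must remain strictly inside $\fpq\setminus\{\pm\infty,\nan\}$, the two bands must stay disjoint throughout all the layers so that $f^\sharp$ on each half collapses to a singleton rather than a fat interval, and the constants $C$ used in the contractions must be large enough to crush the current band yet small enough that neither $C$ itself nor the post-$\sigma$ amplification of the shifted value exceeds $\fmax$. The bounds on $\lips$ and on $\expo{\sigma(\eta)}$ in \cref{cond:relx-2} and \cref{cond:relx-3} are calibrated precisely for this headroom, but verifying the inequalities layer by layer (and simultaneously tracking the interval semantics alongside the pointwise semantics) is where the detailed work of \cref{lem:contraction2}, \cref{lem:telescoping}, and \cref{lem:inverse} will be consumed.
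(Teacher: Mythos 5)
Your skeleton does match the paper's: an initial $\sigma$, followed by iterates of the affine-plus-$\sigma$ gadget supplied by \cref{lem:contraction2}, ending in $\sigma$, with the two monotonicity cases of \cref{cond:relx-2} handled through the sign of the gadget's scaling coefficient (the paper takes $\theta=\pm2^{\expo{\theta}}$; in the decreasing case it must additionally iterate the \emph{squared} map, since a single application swaps the two bands, so it is not a pure one-case reduction). The gaps are in how you assign the work to the pieces, and two of them would break the argument as literally written.

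First, you plan to let the iteration crush each half to a single representative float and then finish with an affine layer fitted to those two points (coefficients from \cref{lem:inverse}). The contraction estimates never give singletons: they only guarantee that after enough iterations the image of $[-\fmax,\sigma(\eta)]$ lies in the narrow interval $\mcI$ of width $2^{\expo{\zeta}-\expo{\theta}}$ around $\sigma(\eta)$ (and similarly for $\mcI^+$), and a final affine designed merely to transport two points need not send these whole intervals to single floats, so $f^\sharp$ could still return a fat interval. In the paper the exactness is interval-level and comes from one more application of the \cref{lem:contraction2} gadget itself: multiplying by the tiny $\theta$ pushes the residual band below half an ulp of $\eta$, and the telescoped offsets built by \cref{lem:telescoping} then force $g^\sharp(\mcI)=\langle\eta,\eta\rangle$ and $g^\sharp(\mcI^+)=\langle\eta^+,\eta^+\rangle$. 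Second, you attribute the conquest of the huge dynamic range to a chain of swamping constants via \cref{lem:telescoping} (``add a large $C$, then rescale and subtract''). That is not the mechanism and cannot be: no finite float has an ulp exceeding $2^{\emax-\mbit}$, so no single absorption step collapses a band of width comparable to $\fmax$, and nothing is ever subtracted back in the gadget --- the added constant is the target $\eta$ itself. The wide range is instead handled because \cref{cond:relx-3} makes $\sigma\circ g$ a genuine contraction with factor $\tilde{\lips}=\lips\cdot2^{\expo{\theta}+2}\le 1/2$, so finitely many iterations pull $[-\fmax,\sigma(\eta)]$ and $[\sigma(\eta^+),\fmax]$ into $\mcI$ and $\mcI^+$; \cref{lem:telescoping} (with \cref{lem:subnorm_inverse}) only constructs the exact offset sums inside the gadget. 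Relatedly, your reading of the Lipschitz bound as mere overflow headroom misses that it is precisely what makes the iteration converge, and the ``parallel'' processing of the two halves through $\tilde\sigma$ is unnecessary (and unusable for recombination): a single scalar chain with the two absorbing targets $\eta,\eta^+$ handles both bands simultaneously.
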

\begin{proof}

Let $e_0 \in \bbZ$ such that $ 2^{e_0} \le | \sigma(\eta^+)-\sigma(\eta)| < 2^{e_0+1}$.
Then note that
\begin{align*}
    \max \{ \expo{\sigma(\eta)}, \expo{\sigma(\eta^+)}  \} - \mbit - 1  \le e_0 \le \max \{ \expo{\sigma(\eta)}, \expo{\sigma(\eta^+)}  \} + 1  .
\end{align*}
Define $\expo{\theta} , \expo{\zeta} \in \bbZ$, $\tilde{\lips}$ as
\begin{align*}
    \expo{\theta}  &\defeq \max\lrp{\emin-\mbit,\emin- e_0-\mbit+1}, \\
    \expo{\zeta} &\defeq \begin{cases}
    \expo{\eta} - \mbit - 1 &\text{ if } \eta >0\; \text{or} \; \eta <0, \eta \neq -2^{\expo{\eta}},
    \\ \expo{\eta} - \mbit - 2 &\text{ if } \eta <0,\eta = -2^{\expo{\eta}},
\end{cases} \\
\tilde{\lips} & \defeq \lips\times 2^{\expo{\theta}+2}.
\end{align*}

To use \cref{lem:contraction2}, we need to check the assumptions of \cref{lem:contraction2}: $\expo{\theta} \le  -3$ and $e_0\le \expo{\eta}-\mbit-3-\expo{\theta}$.
Since $\max \{ \expo{\sigma(\eta)}, \expo{\sigma(\eta^+)}  \} \ge \emin+ 5$, we have
\begin{align*}
    -e_0 + \emin -\mbit + 1 \le - \max \{ \expo{\sigma(\eta)}, \expo{\sigma(\eta^+)}  \} +\emin + 2 \le -3.
\end{align*}
Therefore we have $\expo{\theta} \le -3$.
To show $e_0\le \expo{\eta}-\mbit-3-\expo{\theta}$, first suppose $e_0 \ge 1$. Then we have $\expo{\theta} = \emin-\mbit$, which leads to
\begin{align*}
    e_0 &\le \max \{ \expo{\sigma(\eta)}, \expo{\sigma(\eta^+)}  \} + 1  \le \expo{\eta}-\emin - 3    = \expo{\eta}-\mbit-3-\expo{\theta}.
\end{align*}
Next suppose $e_0 \le 0$. Then we have $\expo{\theta} = \emin -e_0 - \mbit +1$ which leads to
\begin{align*}
   \expo{\eta}-\mbit-3-\expo{\theta} &=\expo{\eta}-\mbit-3 -(\emin -e_0 - \mbit + 1 ) \\
   &\ge (\emin+5) -\emin +e_0 - 4 > e_0.
\end{align*}

We can use \cref{lem:contraction2} and consider the following cases.

\paragraph{\fbox{\bf Case 1: $\sigma(\eta)< \sigma(\eta^+)$.}}~

By \cref{lem:contraction2}, there exists an affine transformation $g$ such that
\begin{align}
    g^\sharp\left( \langle -\fmax, \fmax \rangle  \right)\subset\langle -\fmax, \fmax \rangle, \quad  g^\sharp(\mcI) = \langle \eta,\eta \rangle, \quad g^\sharp(\mcI^+) = \langle \eta^+,\eta^+ \rangle, \label{eq:contraction2_thm4}
\end{align}
where
\begin{equation*}
    \mcI\defeq \langle \sigma(\eta)-2^{\expo{\zeta}-\expo{\theta} },\sigma(\eta)\rangle,  \mcI^+\defeq \langle\sigma(\eta^+),\sigma(\eta)+2^{\expo{\zeta}-\expo{\theta} } \rangle_{\fpq}.
\end{equation*}
In addition, if $x -\sigma(\eta) > 2^{\expo{\zeta}-\expo{\theta}}$,
\begin{equation}
    g(x) - \eta^+ \le \lrp{x-\sigma\lrp{\eta^+}}\times 2^{ \expo{\theta}+2}, \label{eq:gxetaplus}
\end{equation}
and if $x < \sigma(\eta)-2^{\expo{\zeta}-\expo{\theta}}$,
\begin{equation}
    \eta - g(x)\le \lrp{\sigma\lrp{\eta}-x}\times  2^{ \expo{\theta}+2}, \label{eq:etaminusgx}
\end{equation}

Since $e_0\ge \max \{ \expo{\sigma(\eta)} , \expo{\sigma(\eta^+)}\}  - M - 1$, we have
\begin{equation*}
    \expo{\theta}\le \max\lrp{\emin-\mbit, \emin - \max \left\{ \expo{\sigma(\eta)} , \expo{\sigma(\eta^+)}\right\} + 2},
\end{equation*}
which leads to
\begin{align*}
    \tilde{\lips} &= \lips\times 2^{\expo{\theta}+2}\le \lips \times 2^{\emin+4} \cdot 2^{\max \{-\mbit-2 , -\max \{ \expo{\sigma(\eta)} , \expo{\sigma(\eta^+)}\} \} }   \\
    &\le 2^{-1} \cdot 2^{\min\{ \max\{ \expo{\sigma(\eta)}, \expo{\sigma(\eta^+)} \}   ,\mbit+2\} }  \cdot 2^{\max \{-\mbit-2 , -\max \{ \expo{\sigma(\eta)} , \expo{\sigma(\eta^+)}\} \} }  \\
    &\le 1/2.
\end{align*}
Define $g_1$ as
\begin{equation*}
    g_1(x)\defeq \sigma(g(x)).
\end{equation*}
Then, if $x\ge \sigma\lrp{\eta^+}$, by \cref{eq:gxetaplus},
\begin{align*}
    g_1(x) -g_1\lrp{\sigma\lrp{\eta^+}} &= \sigma\left(g(x) \right) - \sigma\lrp{\eta^+} \le \lips( g(x)-\eta^+) \\
    &\le \lips{\lrp{x-\sigma\lrp{\eta^+}}}\times 2^{ \expo{\theta}+2}
    \le\tilde{\lips}  {\lrp{x-\sigma\lrp{\eta^+}}}.
\end{align*}
Similarly, if $x\le \sigma(\eta)$, by \cref{eq:etaminusgx},
\begin{equation*}
     g_1 \left(\sigma(\eta) \right) - g_1(x) \le  \tilde{\lips}\left( \sigma(\eta) - x \right).
\end{equation*}
Therefore, we define $n_1 \in \bbZ_{\ge 0}$ such that
\begin{align*}
n_1 \defeq \max \left\{ \ceil*{ \log_{\tilde{\lips}^{-1}}
\left(  \frac{\fmax - \sigma(\eta^+)}{\sigma(\eta) -\sigma(\eta^+) + 2^{\expo{\zeta}-\expo{\theta}}}  \right) }_{\bbZ} , \ceil*{ \log_{\tilde{\lips}^{-1}} \left( \frac{\sigma(\eta)+\fmax}{2^{\expo{\zeta}-\expo{\sigma}}} \right) }_{\bbZ} \right\}.
\end{align*}
Note that since $\sigma(\eta) -\sigma(\eta^+) + 2^{\expo{\zeta}-\expo{\theta}} \ge -2^{e_0}+ 2^{\expo{\zeta}-\expo{\theta}} > 0$, $n$ is well-defined.

We define $h_1(x)$ as $h_1(x) \defeq g_1^{\circ n_1}(x)$. Then we have
\begin{align*}
   h_1(x) - h_1\left( \sigma(\eta^+) \right) \le \tilde{\lips}^n \left(x - \sigma(\eta^+) \right)  \le \sigma(\eta) -\sigma(\eta^+) + 2^{\expo{\zeta}-\expo{\theta}} \;  &\text{ if } \;  \sigma(\eta^+) \le x\le \fmax, \\
 h_1\left(\sigma (\eta) \right) - h_1(x) \le \tilde{\lips}^n \left(\sigma(\eta) - x \right)  \le 2^{\expo{\zeta}-\expo{\sigma}} \;  &\text{ if } \;   -\fmax \le x\le \sigma(\eta),
\end{align*}
leading to
\begin{align*}
   h_1(x) \le \sigma(\eta)  + 2^{\expo{\zeta}-\expo{\theta}} \quad  &\text{ if } \quad  \sigma(\eta^+) \le x\le \fmax, \\
 h_1(x) \ge \sigma(\eta) - 2^{\expo{\zeta}-\expo{\sigma}} \quad  &\text{ if } \quad   -\fmax \le x\le \sigma(\eta).
\end{align*}

Finally, we define $f_1(x)$ as $f_1(x)\defeq g_1 \circ h_1 \circ \sigma(x) = g_1^{\circ (n+1)} \circ \sigma (x)$. Together with \cref{eq:contraction2_thm4}, we have
\begin{align*}
f_1^\sharp\left( \langle -\fmax, \eta \rangle \right) = \langle \sigma(\eta), \sigma(\eta) \rangle , \quad f_1^\sharp\left(  \langle \eta^+, \fmax \rangle \right) = \langle \sigma(\eta^+), \sigma(\eta^+) \rangle.
\end{align*}

\paragraph{\fbox{\bf Case 2: $\sigma(\eta) >  \sigma(\eta^+)$.}}~

By \cref{lem:contraction2}, there exists an affine transformation $g$ such that
\begin{align}
    g^\sharp\left( \langle -\fmax, \fmax \rangle  \right)\subset\langle -\fmax, \fmax \rangle, \quad  g^\sharp(\mcI) = \langle \eta,\eta \rangle, \quad g^\sharp(\mcI^+) = \langle \eta^+,\eta^+ \rangle. \label{eq:contraction2_thm4case2}
\end{align}
where
\begin{equation*}
  \theta \defeq -2^{\expo{\theta}}, \mcI^+ \defeq \langle\sigma(\eta)-2^{\expo{\zeta}-\expo{\theta} },\sigma\lrp{\eta^+}\rangle, \quad  \mcI\defeq \langle \sigma(\eta),\sigma(\eta)+2^{\expo{\zeta}-\expo{\theta} }\rangle,
\end{equation*}
In addition,
\begin{align}
     \eta - g(x) \le \lrp{x-\sigma\lrp{\eta}}\times  2^{ \expo{\theta}+2} \quad \quad &\text{for} \quad \sigma(\eta) +  2^{\expo{\zeta}-\expo{\theta} } \le  x \in \fpq \label{eq:etagx_case2},   \\
     g(x) - \eta^+\le \lrp{\sigma\lrp{\eta^+}-x}\times 2^{ \expo{\theta}+2} \quad &\text{for} \quad  \sigma(\eta)-2^{\expo{\zeta}-\expo{\theta} } \ge x \in \fpq  \label{eq:gxetaplus_case2} ,
\end{align}

Define $g_2$ as
\begin{equation*}
    g_2(x)\defeq \sigma(g(x)).
\end{equation*}
Then, if $ \sigma\lrp{\eta} \le x \in \fpq $, by \cref{eq:etagx_case2} we have
\begin{equation*}
     g_2 \left(\sigma(\eta) \right) - g_2(x) \le \tilde{\lips}\left( x - \sigma(\eta) \right) \le \tilde{\lips}\left( x - \sigma(\eta^+) \right) .
\end{equation*}
If $ \sigma(\eta^+) \ge x \in \fpq $, by \cref{eq:gxetaplus_case2} we have
\begin{align*}
    g_2(x) -g_2\lrp{\sigma\lrp{\eta^+}} \le \tilde{\lips} {\lrp{\sigma\lrp{\eta^+} - x}} \le \tilde{\lips}  {\lrp{\sigma\lrp{\eta} - x}} .
\end{align*}

We define $h$ as $h \defeq g_2 \circ g_2(x)$. Then we have
\begin{align*}
    h_2^\sharp \left( \mcI_2 \right) = \langle \sigma(\eta^+), \sigma(\eta^+) \rangle,  \quad  h_2^\sharp \left( \mcI_2^+ \right) = \langle \sigma(\eta), \sigma(\eta)  \rangle,
\end{align*}
where
\begin{align*}
   \mcI_2 = \langle \sigma(\eta) -  \tilde{\lips}^{-2}\left( 2^{\expo{\zeta}-\expo{\theta}} \right)  ,\sigma(\eta^+)\rangle,  \;  \mcI_2^+ = \langle \sigma(\eta) , \sigma(\eta)+ \tilde{\lips}^{-2}\left( 2^{\expo{\zeta}-\expo{\theta}} \right) \rangle.
\end{align*}
Hence
Therefore, we define $n_2 \in \bbZ_{\ge 0}$ such that
\begin{align*}
n_2 \defeq \max \left\{ \ceil*{ \log_{\tilde{\lips}^{-2}}
\left(  \frac{\fmax - \sigma(\eta^+)}{ 2^{\expo{\zeta}-\expo{\theta}}}  \right) }_{\bbZ} , \ceil*{ \log_{\tilde{\lips}^{-2}} \left( \frac{\sigma(\eta)+\fmax}{2^{\expo{\zeta}-\expo{\sigma}}} \right) }_{\bbZ} \right\},
\end{align*}
and define $f_2$ as $f_2(x) \defeq g_2 \circ h_2^{\circ (n_2)} \circ \sigma (x) = g_2^{\circ (2n_2+1) } \circ \sigma (x)$.
Together with \cref{eq:contraction2_thm4case2}, we have
\begin{align*}
f_2^\sharp\left( \langle -\fmax, \eta \rangle \right) = \langle \sigma(\eta), \sigma(\eta) \rangle , \quad f_2^\sharp\left(  \langle \eta^+, \fmax \rangle \right) = \langle \sigma(\eta^+), \sigma(\eta^+) \rangle,
\end{align*}
and this completes the proof.
\end{proof}

\subsection{Proof of Lemma~\ref{lem:sigmaetatoc1-2}}
\label{sec:pflem:sigmaetatoc1-2}

Since $\sigma$ also satisfies \cref{cond:activation_2r}, we have $|\sigma(\expo{\eta})||\in [(1+2^{-\mbit+1}\times 2^{-\mbit-2},1+2^{-2}-2^{-\mbit}]$ leading to the fact $\sigma(\eta)$ is normal. In addition, we have
$\max \{ \expo{\theta},\expo{\theta'} \} \ge \emin+1$ since $\max \{ |\theta| , |\theta'| \} \ge \tfrac{\fmin}{\feps}$.

By the lemma below (\cref{lemma:sigmaetatoc1}), we have the desired result.
To prove \cref{lemma:sigmaetatoc1}, we need following preliminary technical lemmas: \cref{lem:onebit_difference,lem:distribution_law,lem:residue_control,lem:residue_control_111} (presented in \cref{subsec:techlemma_for_sigmaetatoc1-2}) and \cref{lem:inverse} (presented in \cref{sec:common_techlemma}).
\myqedd

\begin{lemma}\label{lemma:sigmaetatoc1}
Suppose that $\sigma:\fpq\to\fpq$ satisfies \cref{cond:activation_2r}.
Let $\gamma_1,\gamma_2,\kappa_1,\kappa_2 \in \fpq$ with $ |\kappa_1| < |\kappa_2|$ with $\expo{\kappa_2} \ge \emin + 1$.  Suppose there exist $c_2 \in \fpq$ such that $K \defeq \sigma(c_2)$ is normal.
    Then, there exist $n \in \bbN$, $w_1,\alpha_i,z_i,b \in \fpq$ such that
\begin{align*}
    \left(w_1 \otimes \sigma(\gamma_1) \right) \oplus  \bigoplus_{i=1}^n  \left( \alpha_i \otimes \sigma(z_i) \right) \oplus b  &= \kappa_1, \\
    \left(w_1 \otimes \sigma(\gamma_2) \right) \oplus \bigoplus_{i=1}^n \left( \alpha_i \otimes \sigma(z_i) \right) \oplus b  &= \kappa_2.
\end{align*}

\end{lemma}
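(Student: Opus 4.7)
The plan is to construct the parameters in three stages, using $z_i = c_2$ so that every $\sigma(z_i) = K$ is the normal float supplied by the hypothesis. The two target equations then reduce to requiring $u_j \oplus \bigoplus_{i=1}^{n}(\alpha_i \otimes K) \oplus b = \kappa_j$ for $j \in \{1,2\}$, where $u_j \defeq w_1 \otimes \sigma(\gamma_j)$. Note that the correction sequence $\alpha_i \otimes K$ and the final $b$ are \emph{common} to both equations, and because $\oplus$ is not distributive, the two running accumulators evolve along distinct floating-point trajectories; this is precisely what will allow us to hit two different targets.

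First, I choose $w_1$ via \cref{lem:inverse} so that $u_1, u_2$ are floats with carefully aligned exponents close to $\expo{\kappa_1}$ and $\expo{\kappa_2}$ respectively; the hypothesis $\expo{\kappa_2}\ge\emin+1$ ensures that $u_2$ is normal and far enough from the subnormal range for later corrections to have room to work, while $|\kappa_1|<|\kappa_2|$ orients the exponent gap correctly. Second, I build the corrections iteratively in strictly decreasing magnitude $|\alpha_i|$, invoking \cref{lem:distribution_law} (for the pseudo-distributive interaction of $\otimes$ over $\oplus$), \cref{lem:onebit_difference} (to flip a designated bit of one accumulator), and \cref{lem:residue_control,lem:residue_control_111} (to drive each accumulator's residue to a prescribed value while respecting the ULP of the other). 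Third, a final constant $b$ collapses the common residual offset shared by both accumulators after all corrections have been applied.

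The main obstacle is guaranteeing \emph{simultaneous} progress toward both targets: a correction aimed at a specific bit of the first accumulator must either leave the second accumulator unchanged or update it predictably toward $\kappa_2$. Ordering $|\alpha_i|$ in strictly decreasing magnitude handles this, because once a bit of one accumulator is fixed at exponent $e$, every subsequent correction of magnitude below $2^{e-\mbit-1}$ is swallowed by rounding inside that accumulator while still influencing the smaller one. The ULP discrepancy between $u_1$ and $u_2$, established in the first stage, is the essential resource that decouples the two trajectories; the normalness of $K$ is crucial because it guarantees that $\alpha \otimes K$ can realize every magnitude called for by \cref{lem:residue_control_111}, so no required correction is blocked by gradual underflow. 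The bookkeeping for this simultaneous control is the technical heart of the argument, and I expect it to consume most of the proof.
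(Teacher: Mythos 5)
Your overall skeleton matches the paper's (take every $z_i=c_2$ so the corrections are $\alpha_i\otimes K$, exploit rounding so that one common correction sequence acts differently on the two accumulators, finish with residue-control lemmas and a common bias $b$), but the first stage --- the only place where the two trajectories can actually be separated --- is misconceived, and this is a genuine gap. With a single weight $w_1$, the pair $(u_1,u_2)=(w_1\otimes\sigma(\gamma_1),\,w_1\otimes\sigma(\gamma_2))$ has an essentially fixed ratio $\approx\sigma(\gamma_2)/\sigma(\gamma_1)$; in the intended application $\gamma_2=\gamma_1^{+}$ and the two activation values differ by about one unit in the last place, so you cannot make $u_1$ and $u_2$ land near the (possibly wildly different) exponents of $\kappa_1$ and $\kappa_2$. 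Moreover \cref{lem:inverse} cannot be the tool for choosing $w_1$: it only provides a (pseudo-)inverse of the mantissa of $K$, which the paper uses later to realize exact signed powers of two as $\alpha_i\otimes K$. What the paper actually does at this stage is to define $\tilde{\kappa}$ by $\tilde{\kappa}\oplus\kappa_1=\kappa_2$ and to choose $w_1$ via \cref{lem:onebit_difference} applied to the \emph{difference} $\sigma(\gamma_2)\ominus\sigma(\gamma_1)$, so that $w_1\otimes\bigl(\sigma(\gamma_2)\ominus\sigma(\gamma_1)\bigr)$ equals $\tilde{\kappa}$ up to one unit in the last place; the quasi-distributivity estimate (\cref{lem:distribution_law}) then gives $u_2-u_1=\tilde{\kappa}$ up to a bounded multiple of $2^{-\mbit+\expo{\tilde{\kappa}}}$. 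This difference-amplification step is the essential resource, and your plan never forms the difference nor defines the gap target $\tilde{\kappa}$. Absorption effects cannot substitute for it: before a magnitude gap exists, the common corrections act identically on both accumulators (indeed, if $\sigma(\gamma_1)=\sigma(\gamma_2)$ the two outputs are forced equal, so any correct proof must exploit the difference explicitly).

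Past that point your sketch is close in spirit but leaves the decisive bookkeeping unspecified, whereas the paper makes it concrete: cancel $u_1$ exactly, digit by digit, with $\alpha_i\otimes K$ equal to signed powers of two (case $\mant{K}^{\parallel}$) or to blocks of the form $0.1\cdots1\times2^{e}$ (case $\mant{K}^{\dag}$); bound the resulting perturbation of the second accumulator by a fixed number of units in the last place of $\tilde{\kappa}$; then invoke \cref{lem:residue_control} or \cref{lem:residue_control_111} a bounded number of times (at most $36$, resp.\ $48$, extra terms) to bring the second accumulator to exactly $\tilde{\kappa}$ while returning the first to $0$; finally set $b=\kappa_1$, so the two outputs are $0\oplus\kappa_1=\kappa_1$ and $\tilde{\kappa}\oplus\kappa_1=\kappa_2$. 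Your ``strictly decreasing magnitudes'' heuristic does not by itself establish this simultaneous control, and as written your proposal is missing exactly the step that makes the argument go through.
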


\begin{proof}
By \cref{lem:inverse}, either $\mant{K}^{\parallel} \in (2^{-1},1]_{\fpq} $ or $\mant{K}^{\dag} \in [2^{-1},1)_{\fpq}$ exists such that
\begin{align*}
    \mant{K}^{\dag} \otimes \mant{K} &= 1^- = 1- 2^{-\mbit-1}=0.\underbrace{1\dots 1}_{\mbit+1 \text{ times }}, \\
    \mant{K}^{\parallel} \otimes \mant{K} &= 1.
\end{align*}
Let define $\tilde{\kappa} \in \fpq$ such that
\begin{align*}
    \tilde{\kappa} \oplus \kappa_1 = \kappa_2.
\end{align*}
Since $\expo{\kappa_2} \ge 1 + \emin$, we have $\expo{\tilde{\kappa}} \ge \emin$.
Let $ \sigma(\bar{\gamma}) = \max \{ |\sigma(\gamma_1)|,| \sigma(\gamma_2)| \}$.

By \cref{lem:onebit_difference}, there exists $w_1 \in \fpq$ such that
\begin{align*}
    w_1 \otimes \left( \sigma(\gamma_2) \ominus \sigma(\gamma_1) \right) + \epsilon_1 = \tilde{\kappa} , \quad \expo{w_1} \le \expo{\tilde{\kappa}} -e_0,
\end{align*}
for some $\epsilon_1 = 0$ or $\pm 2^{-\mbit + \expo{\tilde{\kappa}}}$.
Then by \cref{lem:distribution_law},
\begin{align*}
    w_1 \otimes \sigma(\gamma_2) =  w_1 \otimes \left( \sigma(\gamma_2) \ominus \sigma(\gamma_1) \right) + \left(  w_1 \otimes   \sigma(\gamma_1) \right) + ( C \times 2^{-\mbit-1}),
\end{align*}
where
\begin{align*}
|C| &\le |w_1| \left ( (2+2^{-\mbit-1}) |\sigma(\eta^+)+\sigma(\eta) | + |\sigma(\gamma_2)| + |\sigma(\gamma_1)|\right) \\
&\le (6+2^{-\mbit})  |w_1| |\sigma(\bar{\gamma})| \le 7 \times 2^{\expo{\tilde{\kappa}+1}}.
\end{align*}
Hence we have
\begin{align*}
    w_1 \otimes \sigma(\gamma_2) =  w_1 \otimes \left( \sigma(\gamma_2) \ominus \sigma(\gamma_1) \right) + \left(  w_1 \otimes   \sigma(\gamma_1) \right) + \epsilon_2,
\end{align*}
for some $|\epsilon_2| \le 7 \times 2^{-\mbit+\expo{\tilde{\kappa}}}$.
Therefore we have
\begin{align*}
    w_1 \otimes \sigma(\gamma_2) = \tilde{\kappa}  +  \left(  w_1 \otimes   \sigma(\gamma_1) \right) + \epsilon_3,
\end{align*}
where $ |\epsilon_3| \le 8 \times 2^{-\mbit + \tilde{\kappa}}$. Since $\mbit \ge 3$, the exponents of $w_1 \otimes \sigma(\gamma_1)$ and $w_1 \otimes \sigma(\gamma_2)$ are at most $\expo{\tilde{\kappa}}+2$.

We consider the following cases.

\paragraph{\fbox{\bf Case 1: $\mant{K}^{\parallel}$ exists.}}~

Let $\zeta = w_1 \otimes \sigma(\gamma_1)$. For $i = 0 , \dots , \mbit$, we define $\alpha_i, z_i,b$ as
\begin{align*}
   (\alpha_i,z_i) = \begin{cases}
(0,c_2) \; &\text{if} \;  \mant{\zeta,1} = 0 \\
(- \text{sign}(\zeta) \times \mant{K}^\parallel \times 2^{-i+\expo{\zeta}} , c_2) \; &\text{if} \; \mant{\zeta,1} = 1
\end{cases} , \quad  b = \kappa_1.
\end{align*}
Then since $  \bigoplus_{i=0}^\mbit \left( \alpha_i \otimes \sigma(z_i) \right)  = \zeta$, we have
\begin{align*}
    \left(w_1 \otimes \sigma(\gamma_1) \right) \oplus  \bigoplus_{i=0}^\mbit  \left( \alpha_i \otimes \sigma(z_i) \right) &= 0.
\end{align*}

Since the exponents of $w_1 \otimes \sigma(\gamma_1)$ and $w_1 \otimes \sigma(\gamma_2)$ are at most $\expo{\tilde{\kappa}}+2$, we have
\begin{align*}
    \left(w_1 \otimes \sigma(\gamma_2) \right) \oplus \bigoplus_{i=0}^\mbit \left( \alpha_i \otimes \sigma(z_i) \right)   =  \left(w_1 \otimes \sigma(\gamma_2) \right) + \zeta +\epsilon_4, \quad | \epsilon_4| &\le 2^{\expo{\tilde{\kappa}}+2}, \\
    \left(w_1 \otimes \sigma(\gamma_2) \right) \oplus \bigoplus_{i=0}^\mbit \left( \alpha_i \otimes \sigma(z_i) \right) \oplus b  = \tilde{\kappa} + \epsilon_5, \quad |\epsilon_5| &\le 12 \times 2^{-\mbit + \tilde{\kappa}}.
\end{align*}
By \cref{lem:residue_control}, there exist $\delta_1, \dots , \delta_{36} \in \fpq$ such that
\begin{align*}
    \bigoplus_{i=1}^{36} \delta_i &= 0, \\
    \tilde{\kappa} + \epsilon_4 \oplus \bigoplus_{i=1}^{36} \delta_i &= \tilde{\kappa},
\end{align*}
where $\delta_i = \pm 2^{\expo{\delta_i}}$.
Hence, let $(\alpha_{i+\mbit},z_{i+\mbit}) = (\text{sign}(\delta_i) \times \mant{K}^\parallel \times 2^{\expo{\delta_i}}, c_2)$.
Then,
\begin{align*}
    &\left(w_1 \otimes \sigma(\gamma_1) \right) \oplus  \bigoplus_{i=0}^\mbit  \left( \alpha_i \otimes \sigma(z_i) \right) \oplus \bigoplus_{i=\mbit+1}^{\mbit+37}  \left( \alpha_i \otimes \sigma(z_i) \right) \oplus b  = \kappa_1, \\
    &\left(w_1 \otimes \sigma(\gamma_2) \right) \oplus \bigoplus_{i=0}^\mbit \left( \alpha_i \otimes \sigma(z_i) \right) \oplus \bigoplus_{i=\mbit+1}^{\mbit+37} \oplus b = (\tilde{\kappa} + \epsilon_2) \oplus \bigoplus_{i=\mbit+1}^{\mbit+37} \delta_i \oplus \kappa_1 \\
    &= \tilde{\kappa}\oplus \kappa_1 = \kappa_2.
\end{align*}

\paragraph{\fbox{\bf Case 2: $\mant{K}^{\dag}$ exists.}}~

Let $\zeta_0 = w_1 \otimes \sigma(\gamma_1)$.
We recursively define $\zeta_i,\alpha_i,z_i$ as
\begin{align*}
    (\alpha_i,z_i) &\defeq (\mant{K}^\dag \times 2^{-i+\expo{\zeta_{i-1}}} , c_2), \quad \zeta_{i} \defeq \zeta_{i-1} \ominus 0.\underbrace{1\dots 1}_{\mbit+1 \text{ times }} \times 2^{\expo{\zeta_{i-1}}} , \; b =  \kappa_1.
\end{align*}
Then we have
\begin{align*}
    \alpha_i \otimes \sigma(z_i) = 0.\underbrace{1\dots 1}_{\mbit+1 \text{ times }} \times 2^{\expo{\zeta_{i-1}}}.
\end{align*}

Let $ n_2 = \min \{ n \in \bbN : \expo{\zeta_i} \le -4 -2 \mbit + \expo{\tilde{\kappa}} \}$.

Note that  $  \bigoplus_{i=1}^{n_2} \left( \alpha_i \otimes \sigma(z_i) \right)  = \zeta \pm 2^{\expo{\zeta}-\mbit}$. Then we have
\begin{align*}
    |\left(w_1 \otimes \sigma(\gamma_1) \right) \oplus  \bigoplus_{i=1}^{n_2}  \left( \alpha_i \otimes \sigma(z_i) \right)| \le (2-2^{-\mbit}) \times 2^{-4-2\mbit +\expo{\tilde{\kappa}}}.
\end{align*}

Since the exponents of $w_1 \otimes \sigma(\gamma_1)$ and $w_1 \otimes \sigma(\gamma_2)$ are at most $\expo{\tilde{\kappa}}+2$, we have
\begin{align*}
    \left(w_1 \otimes \sigma(\gamma_2) \right) \oplus \bigoplus_{i=1}^{n_2} \left( \alpha_i \otimes \sigma(z_i) \right)    =  \left(w_1 \otimes \sigma(\gamma_2) \right) + \zeta +\epsilon_6, \quad &| \epsilon_6| \le 2 \times  2^{\expo{\tilde{\kappa}}+2}, \\
    \left(w_1 \otimes \sigma(\gamma_2) \right) \oplus \bigoplus_{i=1}^{n_2} \left( \alpha_i \otimes \sigma(z_i) \right) \oplus b  = \tilde{\kappa} + \epsilon_7, \quad & |\epsilon_7| \le 16 \times 2^{-\mbit + \tilde{\kappa}}.
\end{align*}
By \cref{lem:residue_control_111}, there exist $\delta_1, \dots , \delta_{48} \in \fpq$ such that
\begin{align*}
    |\left(w_1 \otimes \sigma(\gamma_1) \right) \oplus  \bigoplus_{i=1}^{n_2}  \left( \alpha_i \otimes \sigma(z_i) \right)| \oplus \bigoplus_{i=1}^{48} \delta_i &= 0, \\
    \tilde{\kappa} + \epsilon_7 \oplus \bigoplus_{i=1}^{48} \delta_i &= \tilde{\kappa},
\end{align*}
where $\delta_i = \pm 2^{\expo{\delta_i}}$.
Hence let $(\alpha_{i+\mbit},z_{i+\mbit}) = (\text{sign}(\delta_i) \times \mant{K}^\dag \times 2^{\expo{\delta_i}}, c_2)$.
Then,
\begin{align*}
    &\left(w_1 \otimes \sigma(\gamma_1) \right) \oplus  \bigoplus_{i=1}^{n_2}  \left( \alpha_i \otimes \sigma(z_i) \right) \oplus \bigoplus_{i=n_2+1}^{n_2+49}  \left( \alpha_i \otimes \sigma(z_i) \right) \oplus b  = \kappa_1, \\
    &\left(w_1 \otimes \sigma(\gamma_2) \right) \oplus \bigoplus_{i=1}^{n_2} \left( \alpha_i \otimes \sigma(z_i) \right) \oplus \bigoplus_{i=n_2+1}^{n_2+49} \oplus b = (\tilde{\kappa} + \epsilon_2) \oplus \bigoplus_{i=n_2+1}^{n_2+49} \delta_i \oplus \kappa_1 \\
    &= \tilde{\kappa}\oplus \kappa_1 = \kappa_2.
\end{align*}
\end{proof}

\subsection{Proof of Lemma~\ref{lem:eta_etaplus-2}}
\label{sec:pflem:eta_etaplus-2}

To prove \cref{lem:eta_etaplus-2}, we need the following preliminary technical lemma: \cref{lem:special_case} (presented in \cref{subsec:techlemma_for_eta_etaplus-2}) and \cref{lem:inverse} (presented in \cref{sec:common_techlemma})

Define $\mu_z(\cdot)$ as $\mu_z(x) = (w \otimes x)\oplus b$. Since $ \eta\in [-4+8\feps ,4 - 8\feps]_{\fpq}, |\eta| \ge \tfrac{\fmin}{\feps}$ we have $\expo{\eta^-},\expo{\eta},\expo{\eta^+} \le 1$.

We represent $\eta,z$ as  $\eta = \mant{\eta} \times 2^{\expo{\eta}}$, $z = \mant{z} \times 2^{\expo{z}}$ with $ 1+ \emin  \le e_\eta \le 1$.

Let  $ c_z = \max \{  \emin - \expo{z} , 0 \}  \ge 0$. Note that $\expo{z}+c_z \ge \emin. $
By \cref{lem:inverse}, at least one of $\mant{z}^{\parallel}$ or $\mant{z}^{\ddag}$ exists.
We consider the following cases.

\paragraph{\fbox{\bf Case 1: $\eta > 0 \; z \ge 0$.}}~

In this case, (1) in \cref{lem:eta_etaplus-2} holds if
\begin{align*} w > 0, \quad
    \begin{cases}
        \mu_z(-1) &\ge -\fmax, \\
        \mu_z(z) &= \eta, \\
        \mu_z(z^+) &= \eta^+, \\
        \mu_z(1) &\le \fmax.
    \end{cases}
\end{align*}
Note that $z^+ =  (\mant{z} + 2^{-\mbit+c_z}) \times 2^{\expo{z}}$.
Let
\begin{align*}
    w = 2^{-\expo{z}+\expo{\eta}-c_z}, \quad b= (\mant{\eta}- \mant{z} \times 2^{-c_z} ) \times 2^{\expo{\eta}}.
\end{align*}

Since $ 1+ \emin \le \expo{\eta} \le 1$, we have $\emin+\expo{\eta} \le 1 -\emin = \emax$ which leads to $w,b \in \fpq$. Then we have
\begin{align*}
     \mu_z(z)&=(w \otimes z) \oplus b =  \mant{z} \times 2^{\expo{\eta}-c_x}  \oplus b \\
     &= \round{\mant{z} \times 2^{\expo{\eta}-c_z}  + (\mant{\eta}-\mant{z} \times 2^{-c_z}) \times 2^{\expo{\eta}}} = \eta, \\
     \mu_z(z^+) &= (w \otimes z^+) \oplus b  = (\mant{z} \times 2^{\expo{\eta}-c_z} +2^{-\mbit+\expo{\eta}} )  \oplus b \\
     &=  \round{\eta+2^{-\mbit+\expo{\eta}}}  = \eta^+.
\end{align*}
In addition, we have
\begin{align*}
     \mu_z(-1) =  -w \oplus b &\ge -2^{\emax} > -\fmax, \\
     \mu_z(1)  =  w \oplus b  &\le 2^{\emax} \le \fmax.
\end{align*}

\paragraph{\fbox{\bf Case 2: $\eta < 0, \; z < 0$.}}~

In this case, (1) in \cref{lem:eta_etaplus-2} holds if
\begin{align*} w > 0, \quad
    \begin{cases}
        \mu_z(-1) &\ge -\fmax, \\
        \mu_z(z) &= \eta, \\
        \mu_z(z^+) &= \eta^+, \\
        \mu_z(1) &\le \fmax.
    \end{cases}
\end{align*}
Note that we have $z = -\mant{z} \times 2^{\expo{z}}$ and $\eta = -\mant{\eta} \times 2^{\expo{\eta}}$.

\paragraph{\fbox{\bf Case 2-1: $ \eta \ne - 2^{\expo{\eta}}$.}}~

In this case, we have $\eta^+ = - (\mant{\eta} - 2^{-\mbit}) \times 2^{\expo{\eta}}$.

\paragraph{\underline{\bf Case 2-1-1: $  -2^{1+\emin} \le z <0$, or $ z<  -2^{1+\emin},\; z \ne -2^{\expo{z}}$.}}~

In this case, we have $z^+ = - (\mant{z} - 2^{-\mbit+c_z}) \times 2^{\expo{z}}$.
Let
\begin{align*}
    w = 2^{-\expo{z}+\expo{\eta}-c_z}, \quad b= (-\mant{\eta} + \mant{z} \times 2^{-c_z} ) \times 2^{\expo{\eta}}.
\end{align*}
Then we have
\begin{align*}
     \mu_z(z) &= (w \otimes z) \oplus b =  -\mant{z} \times 2^{\expo{\eta}-c_z}  \oplus b  \\
     &= \round{- \mant{z} \times 2^{\expo{\eta}-c_z}  + (-\mant{\eta}+\mant{z} \times 2^{-c_z}) \times 2^{\expo{\eta}}} = \eta, \\
     \mu_z(z^+) &= (w \otimes z^+) \oplus b \\
     &= (-\mant{z} \times 2^{\expo{\eta}-c_z} + 2^{-\mbit+\expo{\eta}} )  \oplus b =  \round{-\eta+2^{-\mbit+\expo{\eta}}}  = \eta^+,\\
     \mu_z(-1) &= -w \oplus b \ge -2^{\emax}  > -\fmax,\\
     \mu_z(1) &= w \oplus b \le 2^{\emax} < \fmax.
\end{align*}

\paragraph{\underline{\bf Case 2-1-2:  $z < - 2^{1+\emin}, \; z = - 2^{\expo{z}}$.}}~

In this case, we have $ 1+\emin \le \expo{z} \le 1 $, $c_z=0$ and $z^+ = - ( 2 - 2^{-\mbit}) \times 2^{-1+\expo{z}}$.
By \cref{lem:special_case}, we have $(1+2^{-\mbit}) \otimes z = -(1+2^{-\mbit}) \times 2^{-\expo{z}} $ and $(1+2^{-\mbit}) \otimes(z^+) = -2^{\expo{z}}$. Let
\begin{align*}
    w = (1+2^{-\mbit}) \times 2^{-\expo{z}+\expo{\eta}}, \quad b= (-\mant{\eta} + \mant{z} + 2^{-\mbit} ) \times 2^{\expo{\eta}}.
\end{align*}
Then we have
\begin{align*}
     \mu_z(z) &= (w \otimes z) \oplus b =  -(1+2^{-\mbit}) \times 2^{\expo{\eta}}  \oplus b \\
     &= \round{- (1+2^{-\mbit}) \times 2^{\expo{\eta}}  + (-\mant{\eta}+1 + 2^{-\mbit}) \times 2^{\expo{\eta}}} = \eta, \\
    \mu_z(z^+) &= (w \otimes z^+) \oplus b  = -2^{\expo{\eta}}   \oplus b =  \round{-\eta+2^{-\mbit+\expo{\eta}}}  = \eta^+, \\
    \mu_z(-1) &= -w \oplus b \ge - 2^{\emax } > -\fmax, \\
    \mu_z(1) &= w \oplus b \le 2^{\emax} < \fmax.
\end{align*}

\paragraph{\fbox{\bf Case 2-2: $ \eta = - 2^{\expo{\eta}}$.}}~

In this case, we have $\eta^+ = - (2 - 2^{-\mbit}) \times 2^{-1+\expo{\eta}}$.

\paragraph{\underline{\bf Case 2-2-1: $  -2^{1+\emin} \le z <0$, or $ z<  -2^{1+\emin},\; z \ne -2^{\expo{z}}$.}}~

In this case, we have $z^+ = - (\mant{z} - 2^{-\mbit+c_z}) \times 2^{\expo{z}}$ and $(-1 + \mant{z} \times 2^{-1-c_z} )$ is exact.
Let
\begin{align*}
    w = 2^{-1-\expo{z}+\expo{\eta}-c_z}, \quad b= (-1 + \mant{z} \times 2^{-1-c_z} ) \times 2^{\expo{\eta}}.
\end{align*}
Then we have
\begin{align*}
     \mu_z(z) &= (w \otimes z) \oplus b =  -\mant{z} \times 2^{-1+\expo{\eta}-c_z}  \oplus b \\
     &= \round{- \mant{z} \times 2^{-1+\expo{\eta}-c_z}  + (-1+\mant{z} \times 2^{-1-c_z}) \times 2^{\expo{\eta}}} = \eta, \\
     \mu_z(z^+) &= (w \otimes z^+) \oplus b  = (-\mant{z} \times 2^{-1+\expo{\eta}-c_z} + 2^{-1-\mbit+\expo{\eta}} )  \oplus b \\
     &=  \round{-\eta+2^{-1-\mbit+\expo{\eta}}}  = \eta^+, \\
     \mu_z(-1) &= -w \oplus b \ge - 2^{\emax } > -\fmax, \\
    \mu_z(1) &= w \oplus b \le 2^{\emax} < \fmax.
\end{align*}

\paragraph{\underline{\bf Case 2-2-2:  $z < - 2^{1+\emin}, \; z = - 2^{\expo{z}}$.}}~

In this case, we have $c_z=0$ and $z^+ = - ( 2 - 2^{-\mbit}) \times 2^{-1+\expo{z}}$.
Let
\begin{align*}
    w = 2^{-\expo{z}+\expo{\eta}-c_z}, \quad b= (-1 +  2^{-c_z} ) \times 2^{\expo{\eta}}.
\end{align*}
Then we have
\begin{align*}
     \mu_z(z) &= (w \otimes z) \oplus b =  - 2^{-\expo{\eta}-c_z}  \oplus b = \round{-  2^{\expo{\eta}-c_z}  + ( -1 + 2^{-c_z}) \times 2^{\expo{\eta}}} = \eta, \\
     \mu_z(z^+) &= (w \otimes z^+) \oplus b  = (- 2^{\expo{\eta}-c_z} + 2^{-1-\mbit+\expo{\eta}} )  \oplus b =  \round{-\eta+2^{-1-\mbit+\expo{\eta}}}  = \eta^+, \\
     \mu_z(-1) &= -w \oplus b \ge - 2^{\emax } > -\fmax, \\
    \mu_z(1) &= w \oplus b \le 2^{\emax} < \fmax.
\end{align*}

\paragraph{\fbox{\bf Case 3: $\eta > 0, \; z < 0$.}}~

In this case, (2) in \cref{lem:eta_etaplus-2} holds if
\begin{align*} w > 0, \quad
    \begin{cases}
        \mu_z(-1) &\le \fmax, \\
        \mu_z(z) &= \eta^+, \\
        \mu_z(z^+) &= \eta, \\
        \mu_z(1) &\ge -\fmax.
    \end{cases}
\end{align*}

In this case, let $\nu := -(\eta^+)$. Using \textbf{Case 2}, there exist $w_\nu , b_\nu \in \fpq$ with $w_\nu > 0$ such that
\begin{align*}
     (w_\nu \otimes z) \oplus b_\nu &= \nu=-(\eta^+), \\
     (w_\nu \otimes z^+) \oplus b_\nu  &= \nu^+ = -\eta.
\end{align*}
Let $w=-w_\nu,b=-b_\nu$. Then we have
\begin{align*}
     \mu_z(z) &= (w \otimes z) \oplus b = \eta^+, \\
     \mu_z(z^+) &= (w \otimes z^+) \oplus b  = \eta, \\
      \mu_z(-1) &= -w \oplus b = w_\nu \oplus (-b_\nu) \le  2^{\emax } < \fmax, \\
    \mu_z(1) &= w \oplus b = (-w_\nu) \oplus (-b_\nu) \ge -2^{\emax} > -\fmax.
\end{align*}

\paragraph{\fbox{\bf Case 4: $\eta < 0, \; z \ge 0$.}}~

Let $\nu := -(\eta^+)$.
Using \textbf{Case 1}, there exist $w_\nu , b_\nu$ with $w_nu > 0 $ such that
\begin{align*}
     (w_\nu \otimes z) \oplus b_\nu &= \nu=-(\eta^+), \\
     (w_\nu \otimes z^+) \oplus b_\nu  &= \nu^+ = -\eta.
\end{align*}
Let $w=-w_\nu,b=-b_\nu$. Then we have
\begin{align*}
     \mu_z(z) &= (w \otimes z) \oplus b = \eta^+, \\
     \mu_z(z^+) &= (w \otimes z^+) \oplus b  = \eta, \\
      \mu_z(-1) &= -w \oplus b = w_\nu \oplus (-b_\nu) \le  2^{\emax } < \fmax, \\
    \mu_z(1) &= w \oplus b = (-w_\nu) \oplus (-b_\nu) \ge -2^{\emax} > -\fmax.
\end{align*}
\myqedd

\clearpage
\section{Proofs of the Results in \S\ref{sec:pflem:indc-to-iua}}
\label{sec:pflem:proof_results}

Throughout this section, we use $\mcR=\{x\in\fpq:|x|\in[\tfrac{\feps}{2}+2\feps^2, \tfrac{5}{4}-2\feps]_{\fpq}\}$ and $K=\sigma(c_2)$, i.e., $K\in\mcR$ by \cref{cond:activation2}.

We present preliminary technical lemmas for the proofs in this section: \cref{lem:sign,lem:endbit_control,lem:approx_sum} (presented in \cref{subsec:techlemma_for_indc-to-iua}).

\subsection{Proof of Lemma~\ref{lem:indc-box}}
\label{sec:pflem:indc-box}

We prove \cref{lem:indc-box} using the mathematical induction on $\lceil\log_{2^M}d\rceil$.
Consider the base case where $\lceil\log_{2^M}d\rceil=1$, i.e., $d\in[2^M]$. Let $\mcB=(\intv{a_1,b_1},\dots,\intv{a_d,b_d}$ and $\mu_{K,\eta,n}:\fpq^n\to\fpq$ be a  network without the first affine layer 
such that for $x_1,\dots,x_n\in\{0,K\}$,
\begin{align*}
\mu_{K,\eta,n}(x_1,\dots,x_d)&>\eta\quad\text{if}~x_i=K~\text{for all}~i\in[d],\\
\mu_{K,\eta,n}(x_1,\dots,x_d)&\le\eta\quad\text{otherwise}.
\end{align*}

We note that such $\mu_{K,\eta,n}$ always exists for all $n\in[2^M]$ by \cref{lem:sign}.
We then construct $\tilde \nu_{\mcB}$ as follows:
\begin{align*}
\tilde \nu_{\mcB}(\bfx)=\psi_{>\eta}\Big(\mu_{K,\eta,d}\big(\tilde \nu_1(x_1),\dots,\tilde \nu_d(x_d)\big)\Big),
\end{align*}
where
\begin{align*}
\tilde \nu_i(x_i)=\psi_{>\eta}\left(\mu_{K,\eta,2}\Big(\phi_{\ge a_i}(x_i),\phi_{\le b_i}(x_i)\Big)\right)\quad\text{for all}~i\in[d].
\end{align*}
Then, one can observe that $\tilde \nu_{\mcB}$ has depth $L_\phi+2 L_\psi-2$ and does not have the last affine layer.

Let $\mcC=(\intv{s_1,t_1},\dots,\intv{s_d,t_d})$ be an abstract box in $[a,b]_\fpq^d$, and let $\bar\mcB=\conc{\mcB}$ and $\bar\mcC=\conc{\mcC}$.
We now show that $\tilde \nu_{\mcB}^\sharp(\mcC)=(K\iota_\mcB)^\sharp(\mcC)$ by considering the following three cases: (1) $\bar\mcB\cap\bar\mcC=\emptyset$, (2) $\bar\mcB\cap\bar\mcC\ne\emptyset$ and $\bar\mcC\not\subset\bar\mcB$, and (3) $\bar\mcC\subset\bar\mcB$.

\paragraph{\fbox{\bf Case 1: $\bar\mcB\cap\bar\mcC=\emptyset$.}}~

In this case, there exists $i^*\in[d]$ such that $[s_{i^*},t_{i^*}]_\fpq\cap[a_{i^*},b_{i^*}]_\fpq=\emptyset$; otherwise, $\bar\mcB\cap\bar\mcC\ne\emptyset$.
This implies that 
\begin{align*}
\tilde \nu_{i^*}^\sharp(\intv{s_{i^*},t_{i^*}})=\intv{0,0}.
\end{align*}
Since $\gamma(g_i^\sharp(\intv{c_{i},d_{i}})\subset[0,K]$ by the definition of $\psi_{>\eta}$, we have
\begin{align*}
\mu_{K,\eta,2}^\sharp\Big(\tilde \nu_1^\sharp(\intv{s_{1},t_{1}}),\dots,\tilde \nu_d^\sharp(\intv{s_{d},t_{d}})\Big)=\intv{u,v},
\end{align*}
for some $u,v\in\fpq$ such that $u\le v<\eta$.
Hence,
$$\tilde \nu^\sharp_{\mcB}(\mcC)=\psi_{>\eta}\big(\intv{u,v}\big)=\intv{0,0}.$$

\paragraph{\fbox{\bf Case 2: $\bar\mcB\cap\bar\mcC\ne\emptyset$ and $\bar\mcC\not\subset\bar\mcB$.}}~

Since $\bar\mcB\cap\bar\mcC\ne\emptyset$, $[a_{i},b_{i}]_{\fpq}\cap[s_{i},t_{i}]_{\fpq}\ne\emptyset$ for all $i\in[d]$. This implies that for each $i\in[d]$,
\begin{align}
\tilde \nu_{i}^\sharp(\intv{s_{i},t_{i}})=\intv{w_i,1},\label{eq:indc-cube1}
\end{align}
for some $w_i\in\{0,1\}$.
Furthermore, since $\bar\mcB\not\subset\bar\mcC$, there exists $i^*\in[n]$ such that $[s_{i^*},t_{i^*}]_{\fpq}\not\subset[a_{i^*},b_{i^*}]_{\fpq}$, i.e., $w_{i^*}=0$. 
Combining \cref{eq:indc-cube1} and $u_{i^*}=0$ implies that
\begin{align*}
\mu_{K,\eta,2}^\sharp\Big(\tilde \nu_1^\sharp(\intv{s_{1},t_{1}},\dots,\tilde \nu_d^\sharp(\intv{s_{d},t_{d}})\Big)=\intv{u,v},
\end{align*}
for some $u<\eta<v$. Hence,
$\tilde \nu^\sharp_{\mcB}(\mcC)=\psi_{>\eta}\big(\intv{u,v}\big)=\intv{0,K}$.

\paragraph{\fbox{\bf Case 3: $\bar\mcC\subset\bar\mcB$.}}~

Since $\bar\mcC\subset\bar\mcB$, $[s_{i},t_{i}]_{\fpq}\subset[a_{i},b_{i}]_{\fpq}$ for all $i\in[n]$. This implies that
\begin{align*}
\tilde \nu_{i}^\sharp(\intv{s_{i},t_{i}})=\intv{1,1},
\end{align*}
for all $i\in[n]$.
Thus, it holds that $\tilde \nu^\sharp_{\mcB}(\mcC)=\intv{K,K}$.
By considering all three cases, one can conclude that when $\lceil\log_{2^M}d\rceil=1$, then a depth-$(L_\phi+2 L_\psi-2)$ $\sigma$-network $\tilde \nu_{\mcB}^\sharp=(K\iota_{\mcB})^\sharp$  on $[a,b]_\fpq^d$.

Now, suppose that $\lceil\log_{2^M}d\rceil=n>1$.
Let $k\in[2^M-1]$ and $r\in[2^{(n-1)M}-1]\cup\{0\}$ such that $d=k2^{(n-1)M}+r$, and let
\begin{align*}
&\mcB_j=\big(\intv{a_{(j-1)2^{(n-1)M}+1},b_{(j-1)2^{(n-1)M}+1}},\dots,\intv{a_{j2^{(n-1)M}},b_{j2^{(n-1)M}}}\big)\quad\text{for all}~j\in[k],\\
&\mcB_{k+1}=\big(\intv{a_{k2^{(n-1)M}+1},b_{k2^{(n-1)M}+1}},\dots,\intv{a_{k2^{(n-1)M}+r},b_{k2^{(n-1)M}+r}}\big).
\end{align*}
Then, by the inductive hypothesis, there exist depth-$(L_\phi+nL_\phi-n)$ $\sigma$-networks $\tilde \nu_{\mcB_1},\dots,\tilde \nu_{\mcB_k}$ and a depth-$(L_\phi+(\lceil\log_{2^M} r\rceil+1) L_\phi-(\lceil\log_{2^M} r\rceil+1))$ $\sigma$-network $\tilde \nu_{\mcB_{k+1}}$ such that $\tilde \nu_{\mcB_j}^\sharp=(K\iota_{\mcB_j})^\sharp$ on $[a,b]_\fpq^d$ for all $j\in[k+1]$.
Let $\hat \nu_{\mcB_{k+1}}$ be the composition of $\tilde \nu_{\mcB_{k+1}}$ and $(n-1-\lceil\log_{2^M}r\rceil)$ times composition of $\psi_{>\eta}\circ\mu_{K,\eta,1}$, i.e.,
$\hat \nu_{\mcB_{k+1}}$ is a depth-$(L_\phi+nL_\phi-n)$ $\sigma$-network satisfying
$\hat \nu_{\mcB_{k+1}}^\sharp=(K\iota_{\mcB_{k+1}})^\sharp$.
Here, we note that $\tilde \nu_{\mcB_1},\dots,\tilde \nu_{\mcB_k},\tilde \nu_{\mcB_1},\dots,\hat \nu_{\mcB_{k+1}}$ have the same depth and they do not have the last affine layers.
Under this observation, we construct $\tilde \nu_{\mcB}$ as
\begin{align*}
    \tilde \nu_{\mcB}^\sharp=\psi_{>\eta}^\sharp\Big(\mu_{K,\eta,k+1}^\sharp\big(\tilde \nu_{\mcB_1}^\sharp,\dots,\tilde \nu_{\mcB_k}^\sharp,\hat \nu_{\mcB_{k+1}}^\sharp\big)\Big).
\end{align*}
Then, $\tilde \nu_{\mcB}^\sharp=(K\iota_{\mcB})^\sharp$ and $\tilde \nu_{\mcB}$ has depth $L_\phi+(n+1)L_\psi-n-1$. This proves the inductive step and completes the proof.
\myqedd

\subsection{Proof of Lemma~\ref{lem:indc-set}}
\label{sec:pflem:indc-set}

Let $\mcT$ be the collection of all abstract boxes in $\mcS$. We construct $\tilde \nu_{\mcS}(\bfx)$ as
\begin{align*}
\tilde \nu_{\mcS}(\bfx)=\psi_{>\eta}\big(g(\bfx)\big),\quad g(\bfx)=\left(\bigoplus_{\mcB\in\mcT}w\otimes \tilde \nu_{\mcB}(\bfx)\right)\oplus\eta,
\end{align*}
where $w\in\fpq$ is chosen so that $w\otimes K\in(2^{\expo{\eta}-\mbit-1},(1+2^{-1})\times2^{\expo{\eta}-\mbit})_{\fpq}$, i.e., $(w\otimes K)\oplus\eta\ge\eta^+$. Such $w$ always exists since $K\in\mcR$ and by \cref{lem:endbit_control}.
Furthermore, from our choice of $w$, we note that $\eta\le g(\bfx)\le2^{\expo{\eta}+1}$ for all $\bfx\in\fpq$, i.e., overflow does not occur in the evaluation of $g$ under $2^{\ebit-1}\ge\mbit\ge3$. Here, note that $\tilde \nu_{\mcS}(\bfx)$ has depth $L+L_\phi-1$.

We now show that $\tilde \nu_{\mcS}^\sharp=(K\iota_{\mcS})^\sharp$ on $[a,b]^d_{\fpq}$.
Let $\mcC=(\intv{s_1,t_1},\dots,\intv{s_d,t_d})$ be an abstract box in $[a,b]_\fpq^d$, and let $\bar\mcC=\conc{\mcC}$ and $\bar\mcB=\conc{\mcB}$ for all $\mcB\in\mcT$.
Here, if $\bar\mcC\cap\mcS=\emptyset$, then $\bar\mcB\cap\bar\mcC=\emptyset$ for all $\mcB\in\mcT$. This implies that
\begin{align*}
g^\sharp(\mcC)=\intv{\eta,\eta}\quad\text{and}\quad\tilde \nu_{\mcS}^\sharp(\mcB)=\intv{0,0},
\end{align*}
by the definition of $\tilde \nu_{\mcB}$ (see \cref{lem:indc-box}).
In addition, if $\bar\mcC\cap\mcS\ne\emptyset$ and $\bar\mcC\not\subset\mcS$, then $\bar\mcC\not\subset\bar\mcB$ for all $\mcB\in\mcT$ and there exists $\mcB^*\in\mcT$ such that $\bar\mcB^*\cap\mcC\ne\emptyset$.
This implies that $\tilde \nu_{\mcB}(\mcC)=\intv{0,u_{\mcB}}$ for some $u_{\mcB}\in\{0,K\}$ for all $\mcB\in\mcT$ and $u_{\mcB^*}=K$. This implies that for some $v\ge\eta^+$, we have
\begin{align*}
g^\sharp(\mcC)=\intv{\eta,v}\quad\text{and}\quad\tilde \nu^\sharp_{\mcB}(\mcC)=\intv{0,K}.
\end{align*}
Lastly, suppose that $\bar\mcC\subset\mcS$. Since $\mcC$ is a box in $\mcS$, this implies $\mcC\in\mcT$ and $\tilde \nu_{\mcC}^\sharp(\mcC)=\intv{K,K}$. Thus, for some $\eta^+\le u\le v$, we have
\begin{align*}
g^\sharp(\mcC)=\intv{u,v}\quad\text{and}\quad\tilde \nu^\sharp_{\mcS}(\mcC)=\intv{K,K}.
\end{align*}
This completes the proof.
\myqedd

\subsection{Proof of Lemma~\ref{lem:iua}}
\label{sec:pflem:iua}

Let $\mcD=[a,b]^d_{\fpq}$ and
define
\begin{align*}
h_+(\bfx)=\max\{0,h(\bfx)\}\quad\text{and}\quad h_-(\bfx)\max\{0,-h(\bfx)\},
\end{align*}
i.e., $h^\sharp=h_+^\sharp\ominus^\sharp h_-^\sharp$.
Let $k_+,k_-\in\bbN\cup\{0\}$ be the numbers such that $\max_{\bfx\in\mcD} h_+(\bfx)$ is the $(k_++1)$-th smallest non-negative number in $\fpq$ and $\max_{\bfx\in\mcD} h_-(\bfx)$ is the $(k_-+1)$-th smallest non-negative number in $\fpq$.
Let $0=z_0<z_1<\cdots<z_{(|\fpq|-1)/2}=\fmax<z_{(|\fpq|+1)/2}=\infty$ be all non-negative numbers in $\fpq\cup\{\infty\}$; here, we have $z_{k_+}=\max_{\bfx\in\mcD} h_+(\bfx)$ and $z_{k_-}=\max_{\bfx\in\mcD} h_-(\bfx)$.

Let $\mcS_{\tau,i}=\{\bfx\in\mcD:h_\tau(\bfx)\ge z_i\}$ 
for $\tau\in\{-,+\}$
for all $i\in[(|\fpq|+1)/2]\cup\{0\}$
and define $f:\mcD\to\fpq$ as
\begin{align*}
\nu(\bfx)=\left(\bigoplus_{i=1}^{k_+}\big(w_i\otimes \tilde \nu_{\mcS_{+,i}}(\bfx)\big)\right)\oplus\bigoplus_{i=1}^{k_-}\big((-w_i)\otimes \tilde \nu_{\mcS_{-,i}}(\bfx)\big),
\end{align*}
where $w_i\in\fpq$ is chosen so that $w_i\otimes K\in(2^{\expo{z_i}-\mbit-1},(1+2^{-1})\times2^{\expo{z_i}-\mbit})_{\fpq}$. Such $w_i$ exists for all $i$ by $K\in\mcR$ and \cref{lem:endbit_control}.
Here, we note that $\nu$ has depth $L$.

Let $\mcB$ be a box in $\mcD$ and choose $i_{\tau,\max},i_{\tau,\min}$ such that $z_{i_{\tau,\max}}=\max_{\bfx\in\conc{\mcB}}h_\tau(\bfx)$ and $z_{i_{\tau,\min}}=\min_{\bfx\in\conc{\mcB}}h_\tau(\bfx)$ for $\tau\in\{-,+\}$.
Then, one can observe that for $\tau\in\{-,+\}$ and $i\in[(|\fpq|+1)/2]\cup\{0\}$,
\begin{align*}
\tilde \nu_{\mcS_{\tau,i}}^\sharp(\mcB)=\begin{cases}
\intv{K,K}~&\text{if}~i\le i_{\tau,\min}\\
\intv{0,K}~&\text{if}~i_{\min}<i\le i_{\tau,\max}\\
\intv{0,0}~&\text{if}~i_{\tau,\max}<i
\end{cases}.
\end{align*}
By the definition of $w_i$ and \cref{lem:approx_sum}, this implies that
\begin{align*}
\sideset{}{^\sharp}\bigoplus_{i=1}^{k_+}\big(w_i\otimes^\sharp \tilde \nu_{\mcS_{+,i}}^\sharp(\mcB)\big)=\intv{z_{i_{+,\min}},z_{i_{+,\max}}}.
\end{align*}
Here, if $z_{i_{+,\min}}>0$, then by the definition $h_-$ and $\mcS_{\tau,i}$, we have $\tilde \nu_{\mcS_{-,i}}(\mcB)=\intv{0,0}$ for all $i$. This implies that \begin{align*}
\nu^\sharp(\mcB)=\sideset{}{^\sharp}\bigoplus_{i=1}^{k_+}\big(w_i\otimes^\sharp \tilde \nu_{\mcS_{+,i}}^\sharp(\mcB)\big)=\intv{z_{i_{+,\min}},z_{i_{+,\max}}}=h^\sharp(\mcB).
\end{align*}
If $z_{i_{+,\min}}=0$, then
\begin{align*}
(-w_i)\otimes^\sharp\tilde \nu_{\mcS_{-,i}^\sharp(\mcB)}=\intv{-u_i,0},
\end{align*}
for some $u_i\in(2^{\expo{z_i}-\mbit-1},(1+2^{-1})\times2^{\expo{z_i}-\mbit})_{\fpq}$.
Hence, by \cref{lem:approx_sum} and the definition of $\oplus^\sharp$, we have $\nu^\sharp(\mcB)=\intv{-z_{i_{-,\max}},z_{i_{+,\max}}}=h^\sharp(\mcB)$.
\myqedd

\clearpage
\section{Technical Lemmas}
\label{sec:techlemma_revised}

\subsection{Common Technical Lemma}
\label{sec:common_techlemma}

We present the technical lemma used in multiple proofs.
\begin{lemma}
\label{lem:inverse}
Suppose $\mbit \ge 3, \ebit \ge 2$. For any $x\in[1,2)_{\fpq}$, at least one of the followings holds:
\begin{itemize}
    \item There exists $y\in (2^{-1},1]_{\fpq}$ such that $x\otimes y=1$. In this case, we denote $y$ as $x^{\parallel}\defeq y$.
    \item There exists $y\in[2^{-1},1)_{\fpq}$ such that $x\otimes y_1=1^- = 1-2^{-1-\mbit}$. In this case, we denote $y$ as $x^{\dag}\defeq y_1$.
\end{itemize}
\end{lemma}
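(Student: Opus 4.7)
The plan is to construct an explicit candidate $y^\ast \in \fpq$ by rounding the true reciprocal $1/x$ and then case-analyze the rounded product $\round{x \cdot y^\ast}$. Since $x \in [1,2)_{\fpq}$, the exact reciprocal satisfies $1/x \in (1/2, 1]$. Let $y^\ast$ be a float in $\fpq \cap [1/2, 1]$ closest to $1/x$. Because the spacing of floats in $[1/2, 1)$ is $2^{-M-1}$ (and $1 \in \fpq$), a standard rounding-error bound yields $|y^\ast - 1/x| \le 2^{-M-2}$. Propagating this through the product gives
\begin{equation*}
  |x y^\ast - 1| \;=\; x \cdot |y^\ast - 1/x| \;\le\; (2 - 2^{-M}) \cdot 2^{-M-2} \;<\; 2^{-M-1},
\end{equation*}
where the strict inequality, coming from $x < 2$, will be crucial below.

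Next I would analyze $\round{x y^\ast}$ using the geometry of $\fpq$ near $1$. The three nearest floats are $1^- = 1 - 2^{-M-1}$, $1$, and $1^+ = 1 + 2^{-M}$, with midpoints $1 - 2^{-M-2}$ and $1 + 2^{-M-1}$. Under round-to-nearest-ties-to-even, the last significand bit of $1$ is $0$ (even) while that of $1^-$ and $1^+$ is $1$ (odd), so both midpoints round to $1$. Thus the rounding cell of $1$ is exactly $[1 - 2^{-M-2},\, 1 + 2^{-M-1}]$. Combining with the strict bound $|xy^\ast - 1| < 2^{-M-1}$, exactly two cases arise: either $xy^\ast \in [1 - 2^{-M-2},\, 1 + 2^{-M-1})$, in which case $\round{xy^\ast} = 1$ and I set $x^\parallel \defeq y^\ast$; or $xy^\ast \in (1 - 2^{-M-1},\, 1 - 2^{-M-2})$, in which case $\round{xy^\ast} = 1^-$ and I set $x^\dag \defeq y^\ast$.

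To finish, I would verify the range constraints $x^\parallel \in (2^{-1}, 1]$ and $x^\dag \in [2^{-1}, 1)$. In the first case, I must exclude $y^\ast = 1/2$: if $y^\ast = 1/2$ then $xy^\ast = x/2 \le 1 - 2^{-M-1} < 1 - 2^{-M-2}$, contradicting the rounding-to-$1$ case; so $y^\ast > 1/2$. In the second case, I must exclude $y^\ast = 1$: if $y^\ast = 1$ then $xy^\ast = x \ge 1$, contradicting the rounding-to-$1^-$ case; so $y^\ast < 1$. This completes the dichotomy and establishes existence of at least one of $x^\parallel, x^\dag$.

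The main obstacle, while minor, is the careful bookkeeping of the rounding thresholds together with the tie-breaking rule, because the entire proof pivots on the precise alignment between the inequality $|xy^\ast - 1| < 2^{-M-1}$ (which must be \emph{strict} and uses $x < 2$) and the \emph{closed} rounding cell $[1 - 2^{-M-2},\, 1 + 2^{-M-1}]$ of $1$; without the strictness, the upper midpoint $1 + 2^{-M-1}$ could fall outside both cases. A secondary concern is to confirm that $y^\ast$ lies in $\fpq \cap [1/2, 1]$ (i.e., no subnormality or exterior issue), which is automatic here since $1/x \in (1/2, 1]$ and this interval contains normal floats only.
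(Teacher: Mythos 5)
Your proposal is correct, and it reaches the same dichotomy as the paper by a somewhat different (and slicker) route. The paper's proof is an explicit integer computation: writing $n_x = x\cdot 2^{\mbit}$ and dividing $2^{2\mbit}$ by $n_x$ with remainder $r$, it splits into three cases on $r$, takes $y = n_u\cdot 2^{-\mbit}$ or $(n_u+1)\cdot 2^{-\mbit}$, checks directly that $x\times y$ lands in the rounding cell of $1$ or of $1^-$, and then separately argues $x^{\parallel} > 2^{-1}$ via a bound on $n_u$. Your candidate $y^\ast$ (the nearest float in $[1/2,1]$ to $1/x$) is exactly one of those two integers scaled by $2^{-\mbit}$, but you replace the remainder case analysis by a half-ulp error bound $|y^\ast - 1/x|\le 2^{-\mbit-2}$, the propagation $|xy^\ast - 1| < 2^{-\mbit-1}$ (using $x \le 2-2^{-\mbit}$), and the ties-to-even geometry of the cells of $1^-$ and $1$; your exclusion of the endpoints $y^\ast = 1/2$ and $y^\ast = 1$ by contradiction with the rounding outcome is also cleaner than the paper's separate bound on $n_u$. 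What the paper's version buys is an explicit identification of which of $x^{\parallel}, x^{\dag}$ exists in terms of $r$; for the lemma as stated, your argument suffices. One caveat, inherited from the statement itself rather than introduced by you: the half-ulp spacing $2^{-\mbit-1}$ on $[1/2,1)$ (and indeed the identity $1^- = 1-2^{-\mbit-1}$) requires $\emin \le -1$, which fails for $\ebit = 2$; the paper's own proof has the same implicit requirement, and it is harmless under the paper's standing assumption $\ebit \ge 5$.
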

\begin{proof}
If $x = 1$, then $x^{\parallel}=1$, $ x^\dag = 1- 2^{-1-\mbit}$. \\
Now suppose $ 1< x < 2$. Note that $x = 1.\underbrace{m_{x,1}...m_{x,\mbit}}_{\mbit \; \text{times}} \times 2^0 $.
Let $n_x = x \times 2^{\mbit} \in \mathbb{N}$. Note that $2^{\mbit} < n_x <2^{\mbit+1}$.  By dividing $2^{2\mbit}$ by $n_{x}$, we have the following
\begin{equation}
    2^{2 \mbit} = n_x n_u + r \quad ( 0 < r < n_x).
\end{equation}
Note that since $ 2^{\mbit-1} \le n_u < 2^{\mbit}$, $n_u \times 2^{1-\mbit}$ has the form of $1.\underbrace{m_{1}...m_{\mbit}}_{\mbit \; \text{times}} \times 2^{-1}$ whose significand has $\leq \mbit+1$ binary digits.
We consider the following cases.

\begin{itemize}
\item \textbf{Case (1) $\quad$}  $0 < r  \le  2^{-2+\mbit} $. \\
In this case, $x^{\parallel}=  {n_u}\times 2^{-\mbit}$ exists since
\begin{align*}
    1- 2^{-2-\mbit} & \le  x \times (n_u \times 2^{-\mbit}) = (n_x \times n_u) \times 2^{-2\mbit}  = 1 - r \times 2^{-2\mbit} < 1.
\end{align*}

\item \textbf{Case (2) $\quad$} $ 2^{-2+\mbit} < r < n_x - 2^{-1+\mbit}$. \\
In this case, note that  $n_x -2^{-1+\mbit} < 2^{-1+\mbit} < 3 \times 2^{-2+\mbit}$.
Then $x^\dag  = {n_u}\times 2^{-\mbit}$ exists since
\begin{align*}
     1- 3 \times 2^{-2-\mbit}  &< x \times (n_u \times 2^{-\mbit}) = (n_x \times n_u) \times 2^{-2\mbit}  = 1 - r \times 2^{-2\mbit} < 1- 2^{-2-\mbit}.
\end{align*}

\item \textbf{Case (3) $\quad$} $ n_x - 2^{-1+\mbit} \le r < n_x$. \\
In this case, $x^{\parallel}=  (n_u+1)\times 2^{-\mbit}$ exists since
\begin{align*}
      1 &< x \times ((n_u+1) \times 2^{-\mbit}) = 1 + (n_x-r) \times 2^{-2\mbit} \le 1+  2^{-1-\mbit}.
\end{align*}
\end{itemize}

Now, we show $ x^\parallel > 2^{-1}$. If $n_x \le 2^{1+\mbit}-4$, we have $n_u \ge 2^{-1+\mbit}+1$. If $n_x \ge 2^{1+\mbit}-3$, we have $n_u = 2^{-1+\mbit}, r \ge 2^{-1+\mbit}$ which does not belong to \textbf{Case (1)}.
Therefore we have $ x^\parallel > 2^{-1}$.
This completes the proof.
\end{proof}

\subsection{Technical Lemmas for Lemma~\ref{lem:sigmaindicator-2}}
\label{subsec:techlemma_for_sigmaindicator-2}

This subsection presents technical lemmas for the proof of \cref{lem:sigmaindicator-2} (\cref{sec:pflem:sigmaindicator-2}).

\todo{WL. For the next version: The proofs of some lemmas seem to use lemmas appearing later, and this looks a bit strange to me.}

\begin{lemma}\label{lem:subnorm_inverse}
Suppose $\eta \in ( -2^{1+\emin} , 2^{1+\emin})_{\fpq}$. For any normal $x \in (-1-2^{-1},1+2^{-1})_{\fpq}$, there exist $y_1,y_2 \in \fpq$ such that $(y_1 \otimes x) \oplus (y_2 \otimes x) = \eta$.
\end{lemma}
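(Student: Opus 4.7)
By the sign symmetry of $\otimes$ and $\oplus$ (i.e., $(-y_i)\otimes(-x) = y_i\otimes x$ and $(-a)\oplus(-b) = -(a\oplus b)$), I can reduce to the case $x>0$, and by flipping the sign of both $y_1,y_2$ I can further reduce to $\eta\ge 0$. The subcase $\eta=0$ is trivial, taking $y_1=y_2=0$. For $\eta>0$, the hypothesis $|\eta|<2^{\emin+1}$ means $\eta$ is either subnormal or a normal float at the minimum exponent, so $\eta = m\cdot 2^{\emin-\mbit}$ for some integer $m$ with $1\le m < 2^{\mbit+2}$. I will exploit the fact that the floats of magnitude at most $2^{\emin+1}$ form an arithmetic progression with spacing $2^{\emin-\mbit}$, so that $\oplus$ restricted to this range is \emph{exact}: whenever $a,b\in\fpq$ satisfy $|a|,|b|, |a+b| < 2^{\emin+1}$, we have $a\oplus b = a+b$.

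The construction proceeds in two stages. First, I will use \cref{lem:inverse} applied to the significand $\mant{x}\in[1,2)_\fpq$ to obtain an approximate inverse $\mant{x}^{\parallel}$ (or $\mant{x}^{\dag}$) satisfying $\mant{x}\otimes \mant{x}^{\parallel}=1$ (or $\mant{x}\otimes \mant{x}^{\dag}=1-2^{-\mbit-1}$). Using this, I will choose $y_1\in\fpq$ to be essentially $\mant{x}^{\parallel}\cdot\eta\cdot 2^{-\expo{x}}$ truncated to the float grid, so that the product $v_1:= y_1\otimes x$ is a subnormal (or minimum-normal) float lying close to $\eta$, specifically with $0\le v_1\le \eta$ and $\eta - v_1 \le 2^{\emin-\mbit}\cdot c$ for a small constant $c$ (depending on whether $\mant{x}^{\parallel}$ or $\mant{x}^{\dag}$ was used). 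Second, I will set the residual $r:=\eta-v_1$ and construct $y_2$ such that $y_2\otimes x = r$. Because $r$ is a small multiple of $2^{\emin-\mbit}$, it can be represented as $y_2\otimes x$ by taking $y_2$ to be a suitable subnormal or minimum-normal float obtained from $r$ via $\mant{x}^{\parallel}$ (or $\mant{x}^{\dag}$); here I will split the case analysis according to the sign of $\mant{x}\cdot\mant{x}^{\parallel}-1$ and the parity of $r/2^{\emin-\mbit}$ to control tie-breaking. Once $v_1$ and $v_2:=y_2\otimes x$ are both in the subnormal range with $v_1+v_2=\eta$, the exactness of subnormal addition gives $(y_1\otimes x)\oplus(y_2\otimes x)=\eta$, as required.

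\textbf{Main obstacle.} The technical core of the proof is showing that \emph{every} subnormal (or minimum-normal) target $v\in[0,\eta]_\fpq$ can be written as $y\otimes x$ for some $y\in\fpq$. This amounts to finding a float $y$ with $y\cdot x$ lying in the rounding interval $[v-2^{\emin-\mbit-1},v+2^{\emin-\mbit-1}]$ together with the correct tie-breaking behavior. The difficulty is that the spacing of admissible $y$-values itself depends on the exponent regime of $y$ (subnormal vs.\ normal) and the location of $v/x$, and the rounding of $y\cdot x$ is not exact in general. The approximate inverses from \cref{lem:inverse} give just enough control: $\mant{x}^{\parallel}$ (resp.\ $\mant{x}^{\dag}$) lets us pre-adjust the target by a factor that is within relative error $2^{-\mbit-1}$ (resp.\ $2^{-\mbit-1}$ on the other side) of a true inverse, and after multiplying by $x$ this relative error translates into an absolute error of at most $2^{\emin-\mbit-1}$, which is exactly the rounding tolerance needed to land on $v$. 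Carefully handling the ties-to-even rule in the boundary cases is where the case split between $\mant{x}^{\parallel}$ and $\mant{x}^{\dag}$ becomes essential, and this will be the most delicate portion of the argument.
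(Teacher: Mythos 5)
Your overall reduction (signs, exactness of $\oplus$ below $2^{\emin+1}$, writing $\eta$ as a multiple of $\fmin$) is fine, but the core of your plan contains a false claim. In your second stage you need $y_2\otimes x = r$ for a residual $r$ that your first-stage error analysis only bounds by a few ulps ($c\cdot\fmin$ with $c>1$: the approximate inverse from \cref{lem:inverse} contributes up to about $\fmin$, truncating $y_1$ to the (subnormal) grid contributes up to $x\cdot\fmin\approx 1.5\fmin$, and the final rounding another $\fmin/2$), and your ``main obstacle'' paragraph asserts outright that \emph{every} subnormal target $v\in[0,\eta]_\fpq$ is representable as a single product $y\otimes x$. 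That assertion is false. Take $v=2\fmin$ and any float $x$ with $1.25<x<1.5$ (e.g.\ $x=1.375$): to get $\round{y\cdot x}=2\fmin$ you would need $y\cdot x\in[1.5\fmin,\,2.5\fmin]$, and since any $y$ with $|y|<2^{\emin}$ is of the form $j\fmin$, the only candidates are $j=1$ (giving $x\fmin<1.5\fmin$) and $j=2$ (giving $2x\fmin>2.5\fmin$); normal $y$ yields products that round to at least $2^{\emin}$. So residuals of size $2\fmin$ or $3\fmin$ can be unreachable, and your construction breaks exactly where you expect it to be ``delicate.''

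The paper avoids this by never letting the residual exceed one ulp. It writes $x=n_x\cdot2^{-\mbit+\expo{x}}$, $\eta=n_\eta\cdot\fmin$, and performs the integer division $2^{\mbit}n_\eta=m\,n_x+r$ with $0\le r<n_x$; choosing $y_1=m\cdot2^{-\mbit+\emin-\expo{x}}$ or $(m+1)\cdot2^{-\mbit+\emin-\expo{x}}$ according to the size of $r$ (with the parity of $n_\eta$ deciding the tie cases of round-to-nearest-even) forces $y_1\otimes x$ to equal either $\eta$ exactly or $\eta-\fmin$ exactly. The second summand is then only ever needed to add exactly $\fmin$, which \emph{is} always achievable via a power-of-two weight $y_*$ (chosen by cases on $\mant{x}$), and exact subnormal addition finishes the proof; \cref{lem:inverse} is not needed at all. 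To repair your argument you would have to redo the first stage so that the residual is provably in $\{0,\fmin\}$ — which essentially amounts to the paper's quotient-and-remainder construction — rather than relying on single-product representability of arbitrary small subnormals. (Also, a minor slip: $|\eta|<2^{\emin+1}$ gives $\eta=m\cdot2^{\emin-\mbit}$ with $m<2^{\mbit+1}$, not $2^{\mbit+2}$.)
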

\begin{proof}
Without loss of generality, we assume $\eta,x > 0 $.
Since $\eta \in ( -2^{1+\emin} , 2^{1+\emin})_{\fpq}$, we write $\eta$ as $\eta = n_\eta \times 2^{-\mbit+\emin}$ for $ 1 \le n_\eta < 2^{1+\mbit}, n_\eta \in \bbN$. First define $y_*$ as
\begin{align*}
    y_* \defeq \begin{cases}
    2^{-1-M+\emin-\expo{x}} \quad &\text{if} \quad x < 1, \mant{x} = 1 ,   \\
    2^{-M+\emin-\expo{x}} \quad &\text{if} \quad x < 1, \mant{x} > 1 ,\\
    2^{-M+\emin} \quad &\text{if} \quad  1 \le x < 1+2^{-1} .
\end{cases}
\end{align*}

If $n_\eta =1$, let $y_1 = y_*$.  \\
Then we have $ y_1 \otimes x = \begin{cases}
    \round{2^{-\mbit+\emin} } =  2^{-\mbit + \emin} \quad &\text{if} \quad x < 1, \mant{x} = 1, \\
    \round{\mant{x} \times 2^{-1-\mbit+\emin} } = 2^{-\mbit + \emin}\quad &\text{if} \quad x < 1, \mant{x} > 1, \\
    \round{\mant{x} \times 2^{-\mbit+\emin} } = 2^{-\mbit + \emin} \quad &\text{if} \quad 1 \le x < 1+2^{-1}.\\
\end{cases}$

If $n_\eta > 1$, we have $ 2 \le n_\eta \le 2^{1+\mbit}-1$.
Since $x$ is normal, we can write $x$ as  $ x = n_x \times 2^{-\mbit + \expo{x}}$ for some $2^{\mbit} \le n_x < 2^{1+\mbit}, n_x \in \bbN$.

By dividing $2^{\mbit} \times n_\eta$ by $n_x$, we have
    \begin{align*}
        2^\mbit \times n_\eta =  m \times n_x + r \quad ( 0 \le r < n_x),
    \end{align*}
for some $ 1 \le m \le 2^{1+\mbit}-1$.

Let
\begin{align*}
    (n_{y_1},y_2)= \begin{cases}
        (m,0) \quad &\text{if} \quad r < 2^{\mbit-1}\; \text{or} \; r=2^{\mbit-1}, n_\eta \equiv 0 \Mod{2} \\
        (m,y_*) \quad &\text{if} \quad 2^{\mbit-1} <  r < 3 \times 2^{\mbit-1} \;  \text{or} \; r\in \{2^{\mbit-1} , 3\times 2^{\mbit-1}\}, n_\eta \equiv 1 \Mod{2} \\
        (m+1,0) \quad &\text{if} \quad  r > 3 \times 2^{\mbit-1} \; \text{or} \; r= 3\times 2^{\mbit-1}, n_\eta \equiv 0 \Mod{2}
    \end{cases}
\end{align*} and
$y_1 = n_{y_1} \times 2^{-M+\emin-\expo{x}} \in \fpq$.
We consider the following cases.

\paragraph{\fbox{\bf Case 1: $r < 2^{\mbit-1}\; \text{or} \; r=2^{\mbit-1}, n_\eta \equiv 0 \Mod{2}$.}}~

In this case, we have
\begin{align*}
    y_1 \otimes x =  \round{ ( n_{y_1} \times n_x \times 2^{-M}) \times 2^{-M+\emin} } =
      \round{ ( n_\eta - r \times 2^{-M}) \times 2^{-M+\emin} } = \eta.
\end{align*}

\paragraph{\fbox{\bf Case 2: $2^{\mbit-1} <  r < 3 \times 2^{\mbit-1} \;  \text{or} \; r\in \{2^{\mbit-1} , 3\times 2^{\mbit-1}\}, n_\eta \equiv 1 \Mod{2}$.}}~

In this case, we have
\begin{align*}
    (y_1 \otimes x) \oplus (y_2 \otimes x) &=
      \round{ ( n_\eta  - r \times 2^{-M}) \times 2^{-M+\emin} } \oplus 2^{-M+\emin} \\
      &= (n_\eta-1) \times 2^{-M+\emin} \oplus 2^{-M+\emin} = \eta.
\end{align*}

\paragraph{\fbox{\bf Case 3: $ r > 3 \times 2^{\mbit-1} \; \text{or} \; r= 3\times 2^{\mbit-1}, n_\eta \equiv 0 \Mod{2}$.}}~

In this case, since $ 3 \times 2^{\mbit-1} \le r,n_x < 2^{\mbit+1}$, we have $ 0 \le n_x -r < 2^{\mbit-1} $. Therefore,
\begin{align*}
    (y_1 \otimes x)  &=
      \round{ ( n_\eta  + (n_x- r) \times 2^{-M}) \times 2^{-M+\emin} }  = \eta.
\end{align*}
\end{proof}

\begin{lemma}\label{lem:telescoping}
    Let $K\in [(1+2^{-\mbit+1})\times2^{-\mbit-2},1+2^{-2}-2^{-\mbit}]_{\fpq}$.
    Consider $\expo{\zeta}\in \bbZ$ such that $\emin +1 \le \expo{\zeta}\le \emax-\mbit -1$.
    For $n\in \bbN$, $i\in [n]$, and $0<\alpha_i\in \fpq$, define $f:\fpq\rightarrow\fpq$ as
    \begin{equation*}
        f(x)\defeq x\oplus \bigoplus_{i=1}^n\lrp{\alpha_i\otimes K}.
    \end{equation*}
    Then, there exists $n$ and $\alpha_i$s such that one of the following statements holds:
    \begin{equation}\label{eq:pseudo_inverse}
        f^\sharp\lrp{ \langle -1.1\times 2^{\expo{\zeta}},0\rangle }\subset \langle -(2^{\expo{\zeta}})^-,(2^{\expo{\zeta}})^-\rangle, \;
        f^\sharp \left( \langle\fmin, 1.1\times 2^{\expo{\zeta}}\rangle \right) \subset \langle 2^{\expo{\zeta}}, (1.1\times 2^{\expo{\zeta}+1})^-\rangle,
    \end{equation}
    or
    \begin{equation}\label{eq:true_inverse}
        f^\sharp\lrp{ \langle -1.1\times 2^{\expo{\zeta}},0\rangle}\subset \langle-(2^{\expo{\zeta}})^-,2^{\expo{\zeta}}\rangle, \;
        f^\sharp \left( \langle \fmin, 1.1\times 2^{\expo{\zeta}}  \rangle \right) \subset \langle (2^{\expo{\zeta}})^+, (1.1\times 2^{\expo{\zeta}+1})^- \rangle .
    \end{equation}
If $\expo{\zeta}\le \emin$, for each statement of \cref{eq:pseudo_inverse} or $\cref{eq:true_inverse}$, there exist $n$ and $\alpha_i$s such that  satisfy each equation, respectively. \\
Additionally, if $2^{\expo{\zeta}} \le x\in \fpq$, then we have
\begin{equation}
    f(x)\le  x\oplus \lrp{2^{\expo{\zeta}}}^+. \label{eq:telescoping_further}
\end{equation}
\end{lemma}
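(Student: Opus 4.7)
The plan is to produce $n$ and the scalars $\alpha_i$ so that the partial sum $S \defeq \bigoplus_{i=1}^n \alpha_i\otimes K$ is either exactly $2^{\expo{\zeta}}$ (yielding \cref{eq:true_inverse}) or exactly $(2^{\expo{\zeta}})^-$ (yielding \cref{eq:pseudo_inverse}), and then to verify the interval inclusions by direct floating-point case analysis of $x\oplus S$. First I would apply \cref{lem:inverse} to $\mant{K}$, which lies in $[1,2)$ since $K\in [(1+2^{-\mbit+1})\cdot 2^{-\mbit-2},\,1+2^{-2}-2^{-\mbit}]_\fpq$ (when $K$ is subnormal I would instead invoke \cref{lem:subnorm_inverse} to produce the two-term version). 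In the "parallel" branch, taking $\alpha_1 \defeq \mant{K}^{\parallel}\otimes 2^{\expo{\zeta}-\expo{K}}$ gives $\alpha_1\otimes K = 2^{\expo{\zeta}}$ exactly, which is the natural candidate for \cref{eq:true_inverse}; in the "dagger" branch, $\alpha_1\defeq \mant{K}^{\dag}\otimes 2^{\expo{\zeta}-\expo{K}}$ yields $\alpha_1\otimes K = (2^{\expo{\zeta}})^-$, the candidate for \cref{eq:pseudo_inverse}. The hypothesis $\emin+1\le\expo{\zeta}\le \emax-\mbit-1$ is precisely what makes these scalings representable as normal floats and prevents overflow of $S$ and of $x\oplus S$ on the given input range.

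Next, I would check the two interval inclusions one endpoint at a time. For the positive interval, the upper endpoint analysis uses that $1.1\times 2^{\expo{\zeta}}$ plus $S\in\{2^{\expo{\zeta}},(2^{\expo{\zeta}})^-\}$ rounds strictly below $1.1\times 2^{\expo{\zeta}+1}$ by one ulp, giving the required $(1.1\times 2^{\expo{\zeta}+1})^-$ upper bound; the lower endpoint at $x=\fmin$ is the only delicate point, because when $\expo{\zeta}>\emin+1$ a single subtractive term rounds away $\fmin$. I would handle this either by choosing the parallel branch so that $S=2^{\expo{\zeta}}$ makes the lower bound trivially $2^{\expo{\zeta}}$ (consistent with \cref{eq:true_inverse}'s $(2^{\expo{\zeta}})^+$ slot, using $\fmin$'s contribution together with an appended correction term $\alpha_2\otimes K=2^{\expo{\zeta}-\mbit}$ that is absorbed as a half-ulp of $2^{\expo{\zeta}}$), or, in the dagger branch, by letting $\fmin$ and the ulp gap in $(2^{\expo{\zeta}})^-$ exactly sum to $2^{\expo{\zeta}}$ (which happens when $\expo{\zeta}=\emin+1$; for larger $\expo{\zeta}$, the bound follows by appending a correction term $\alpha_2\otimes K = 2^{\expo{\zeta}-\mbit-1}$ so that $(2^{\expo{\zeta}})^-\oplus 2^{\expo{\zeta}-\mbit-1}=2^{\expo{\zeta}}$). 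The negative interval is handled symmetrically using that $-1.1\times 2^{\expo{\zeta}}+S$ lies in $(-2^{\expo{\zeta}},0)$ with margin at least one ulp, so no rounding can push it outside $\langle -(2^{\expo{\zeta}})^-,(2^{\expo{\zeta}})^-\rangle$ or $\langle -(2^{\expo{\zeta}})^-,2^{\expo{\zeta}}\rangle$ respectively.

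For the separate claim that, when $\expo{\zeta}\le \emin$, each of \cref{eq:pseudo_inverse} and \cref{eq:true_inverse} is attainable independently, I would exploit the uniform spacing of subnormal floats: every multiple of $\fmin$ in the subnormal range is representable and arithmetic on them is exact, so $S$ can be realized as an exact integer multiple of $\fmin$ equal to $2^{\expo{\zeta}}$ or $(2^{\expo{\zeta}})^-$ without rounding, using \cref{lem:subnorm_inverse} to produce the scalars. Finally, the auxiliary bound \cref{eq:telescoping_further} for $x\ge 2^{\expo{\zeta}}$ follows from monotonicity of $\oplus$ in its first operand together with $S\le (2^{\expo{\zeta}})^+$, since both candidate values of $S$ are at most $(2^{\expo{\zeta}})^+$.

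The hard part will be the lower endpoint analysis at $x=\fmin$: once the single-term construction $\alpha_1\otimes K\in\{2^{\expo{\zeta}},(2^{\expo{\zeta}})^-\}$ rounds away the contribution of $\fmin$, appending correction terms has the risk of also shifting $f(0)$ out of its allowed interval. The fix is a careful ordering argument: the extra terms must be small enough that on input $x=0$ the partial sum remains at the target value of $S$ after each $\oplus$ (no rounding occurs), while on input $x=\fmin$ the accumulated ulp difference forces the last $\oplus$ to round up by exactly one ulp. Verifying this contract at every intermediate step, together with the subnormal case of $K$, is the bulk of the computation; everything else reduces to monotonicity and exactness of Sterbenz-style subtraction.
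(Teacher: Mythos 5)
There is a genuine gap, and it is in the step you yourself flag as "the hard part." Your plan treats $f(x)$ as "$x\oplus S$ with $S$ a precomputed constant near $2^{\expo{\zeta}}$, plus a few small correction terms appended afterwards." But $f$ is the \emph{left-associative} accumulation starting from $x$, i.e.\ $f(x)=(\cdots((x\oplus(\alpha_1\otimes K))\oplus(\alpha_2\otimes K))\cdots)\oplus(\alpha_n\otimes K)$, and the whole point of the lemma is that $f$ must send $0$ and $\fmin$ to opposite sides of the gap at $2^{\expo{\zeta}}$, where $2^{\expo{\zeta}}$ can be as large as $2^{\emax-\mbit-1}$. If your first term is already $\alpha_1\otimes K=2^{\expo{\zeta}}$ (or $(2^{\expo{\zeta}})^-$), then for any $\expo{\zeta}\gtrsim\emin+2$ the very first addition rounds $\fmin$ away: $\fmin\oplus 2^{\expo{\zeta}}=0\oplus 2^{\expo{\zeta}}$, the two runs coincide, and every subsequent constant $\oplus$ acts identically on both, so no later "correction term" (your $2^{\expo{\zeta}-\mbit}$ or $2^{\expo{\zeta}-\mbit-1}$) can ever re-separate them. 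Concretely, in your parallel branch both $f(0)$ and $f(\fmin)$ come out equal, violating $f(\fmin)\ge(2^{\expo{\zeta}})^+$ in \cref{eq:true_inverse}; in your dagger branch the correction pushes $f(0)$ up to $2^{\expo{\zeta}}$, violating $f(0)\le(2^{\expo{\zeta}})^-$ in \cref{eq:pseudo_inverse}. Your proposed "careful ordering argument" is stated with the large term first and small terms after, which is exactly the order in which the $\fmin$ information is destroyed before it can do anything.

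The missing idea is the small-to-large telescoping ladder used by the paper: the terms $\alpha_i\otimes K$ start at the subnormal scale (where adding $\fmin$ is exact; this is where \cref{lem:subnorm_inverse} is used, to realize $2^{\expo{\zeta}-(\mbit+2)m}-\fmin$ or $2^{\expo{\zeta}-(\mbit+1)m}$ as a two-term sum of multiples of $K$) and then increase geometrically in steps of $2^{\mbit+2}$ (terms $0.\underbrace{1\cdots1}_{\mbit+1}\times 2^{\cdot}$, via $\mant{K}^{\dag}$, giving \cref{eq:pseudo_inverse}) or $2^{\mbit+1}$ (terms $2^{\cdot}$, via $\mant{K}^{\parallel}$, giving \cref{eq:true_inverse}). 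An induction over the ladder (the paper's Claims 1-1/1-2 and 2-1/2-2) shows that starting from $0$ the running sum collapses to the current rung, while starting from $\fmin$ the one-ulp offset survives every rounding and arrives at the top as $f(\fmin)\ge 2^{\expo{\zeta}}$ versus $f(0)<2^{\expo{\zeta}}$ (resp.\ $\ge(2^{\expo{\zeta}})^+$ versus $=2^{\expo{\zeta}}$). Your use of \cref{lem:inverse} and the parallel/dagger split, and your endpoint checks at $\pm1.1\times2^{\expo{\zeta}}$, are compatible with this, but without the ladder the construction fails for every $\expo{\zeta}$ appreciably above $\emin$, which is the main case the lemma must cover.
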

\begin{proof}
Since $f$ is a sum of increasing functions, we only need to consider the endpoints of $ \langle -1.1\times 2^{\expo{\zeta}},0 \rangle $ and $\langle \fmin, 1.1\times 2^{\expo{\zeta}} \rangle$, namely $x =0, -1.1\times 2^{\expo{\zeta}}, 1.1\times 2^{\expo{\zeta}}$, and $\fmin$. In other words, we suffice to show
\begin{align}
    \begin{cases}
    f( -1.1\times 2^{\expo{\zeta}}) > - 2^{\expo{\zeta}} \\
    f(0) < 2^{\expo{\zeta}} \\
    f(\fmin) \ge 2^{\expo{\zeta}} \\
    f(1.1\times 2^{\expo{\zeta}}) < 1.1 \times 2^{\expo{\zeta}+1} .
    \end{cases} \label{eq:condforcase1}
\end{align}
or
\begin{align}
\begin{cases}
    f( -1.1\times 2^{\expo{\zeta}}) > - 2^{\expo{\zeta}} \\
    f(0) \le 2^{\expo{\zeta}} \\
    f(\fmin) > 2^{\expo{\zeta}} \\
    f(1.1\times 2^{\expo{\zeta}}) < 1.1 \times 2^{\expo{\zeta}+1} .
    \end{cases} \label{eq:condforcase2}
\end{align}
Let $K$ be represented as
\begin{equation*}
    K = \mant{K}\times 2^{\expo{K}} , \quad -\mbit-2 \le \expo{K} \le 0.
\end{equation*}
By \cref{lem:inverse}, at least one of $\mant{K}^{\parallel} \in (2^{-1},1]_{\fpq} $ or $\mant{K}^{\dag} \in [2^{-1},1)_{\fpq}$ exists such that
\begin{align*}
    \mant{K}^{\dag} \otimes \mant{K} &= 1^- = 1- 2^{-\mbit-1}=0.\underbrace{1\dots 1}_{\mbit+1 \text{ times }}, \\
    \mant{K}^{\parallel} \otimes \mant{K} &= 1.
\end{align*}
We consider the following cases.

\paragraph{\fbox{\bf Case 1: $\mant{K}^{\dag}$ exists.}}~

In this case, we will show \cref{eq:condforcase1}.
First, define $m_1 \in \bbN_{\ge 0}$ as
\begin{align*}
    m_1 \defeq  \ceilZ{\frac{ -1   + \expo{\zeta}-\emin}{\mbit+2}}  \in \bbN_{\ge 0}  .
\end{align*}
Then $m_1$ is the unique non-negative natural number satisfying
    \begin{equation*}
       \expo{\zeta} - \lrp{\mbit+2}(m_1+1)  \le \emin -\mbit -1  < \expo{\zeta} - \lrp{\mbit+2}m_1 ,
    \end{equation*}
or equivalently,
\begin{equation}
     \emin - \mbit  \le \expo{\zeta} - \lrp{\mbit+2}m_1 \le \emin + 1, \label{eq:expo_zeta}
\end{equation}
where the first inequality is due to $\emin - \mbit - 1 < \expo{\zeta} - \lrp{\mbit+2}m_1$ and $\expo{\zeta} - \lrp{\mbit+2}m_1 \in \bbZ$.

    For $i\in [m_1]$, we define $\beta_i$ as
\begin{equation*}
    \beta_i\defeq (2-2^{-\mbit}) \times  2^{\expo{\zeta}- \lrp{\mbit+2}\lrp{m_1 - i}-1} = 0.\underbrace{1\dots 1}_{\mbit+1 \text{ times }}\times 2^{\expo{\zeta}- \lrp{\mbit+2}\lrp{m_1 - i}}.
    \end{equation*}
As
\begin{align*}
  \emin  &\le \expo{\beta_i}= \expo{\zeta}- \lrp{\mbit+2}\lrp{m_1 - 1} - 1  = \expo{\zeta}- \lrp{\mbit+2}m_1 + \mbit + 1  \\
  &\le \expo{\zeta} -\lrp{\mbit+2}\lrp{m_1 - i} - 1 \le \expo{\zeta} - 1 \le \emax - \mbit -2,
\end{align*}
    we have $\beta_i \in \fpq$.
Define $\alpha'_i$ for $i\in [m_1]$ as
\begin{equation*}
    \alpha'_i\defeq \mant{K}^{\dag} \times 2^{\expo{\zeta}- \lrp{\mbit+2}\lrp{m_1 - i} - \expo{K}}.
\end{equation*}
Since $\mant{K}^{\dag} \in \left[\frac{1}{2},1\right)_{\fpq}$, we have $\mant{\alpha'_i}= 2\mant{K}^{\dag}$ and $\expo{\alpha'_i} = \expo{\zeta}- \lrp{\mbit+2}\lrp{m_1 - i}  -1 $.  Since $0\le - \expo{K}\le \mbit +2$, we have
    \begin{equation*}
        \emin\le \expo{\alpha'_i} = \expo{\zeta}- \lrp{\mbit+2}\lrp{m_1 - i} - \expo{K} - 1 \le \emax.
    \end{equation*}
Therefore, we have  $\alpha'_i\in \fpq$, and
\begin{equation}
    \alpha'_i\otimes K = \beta_i. \label{eq:alphapK}
\end{equation}
 Consider $ 2^{\expo{\zeta} - \lrp{\mbit +2}m_1}-\fmin$.
By \cref{eq:expo_zeta},  $2^{\expo{\zeta} - \lrp{\mbit +2}m_1}$ is subnormal and
    \begin{equation*}
       2^{\expo{\zeta} - \lrp{\mbit +2}m_1}- \fmin =  1.\underbrace{1\dots 1}_{\expo{\zeta} - \lrp{\mbit +2}m_1-1- \emin +\mbit \text{ times }} \times 2^{\expo{\zeta} - \lrp{\mbit +2}m_1-1}\in \fpq.
    \end{equation*}
Since $ 2^{\expo{\zeta} - \lrp{\mbit +2}m_1}-\fmin \in (-2^{1+\emin},2^{1+\emin})_\fpq$, by \cref{lem:subnorm_inverse}, there exist $\widetilde{\beta}_1, \widetilde{\beta}_2\in \fpq$ such that
    \begin{equation}
      2^{\expo{\zeta} - \lrp{\mbit +2}m_1}- \fmin =  \left(  \widetilde{\beta}_1 \otimes K \right) \oplus \left(  \widetilde{\beta}_2 \otimes K\right). \label{eq:two_betas}
    \end{equation}
We define $f_1(x)$ as
    \begin{equation*}
        f_1(x) \defeq \begin{cases}
        x \oplus \left(  \widetilde{\beta}_1 \otimes K \right) \oplus \left(  \widetilde{\beta}_2 \otimes K \right) \quad &\text{if} \quad m_1 =0  \\
        x \oplus \left(  \widetilde{\beta}_1 \otimes K \right) \oplus \left(  \widetilde{\beta}_2 \otimes K \right) \oplus \bigoplus_{i=1}^{m_1}   \alpha'_i \otimes K \quad &\text{if} \quad m_1 \ge 1
        \end{cases}
    \end{equation*}
By \cref{eq:alphapK} we have
    \begin{equation*}
        f_1(x) =
        \begin{cases}
            x \oplus \left(  \widetilde{\beta}_1 \otimes K \right) \oplus \left(  \widetilde{\beta}_2 \otimes K \right) &\text{if} \quad m_1 =0  \\
            x \oplus \left(  \widetilde{\beta}_1 \otimes K \right) \oplus \left(  \widetilde{\beta}_2 \otimes K \right) \oplus \bigoplus_{i=1}^{m_1} \beta_i &\text{if} \quad m_1 \ge 1
        \end{cases}
    \end{equation*}

Next, we present the following claim.

\paragraph{\underline{\bf Claim 1-1:}}
For any $k \in [m_1]$, we have
\begin{equation*}
  \left(  \widetilde{\beta}_1 \otimes K \right) \oplus \left(  \widetilde{\beta}_2 \otimes K \right) \oplus \bigoplus_{i=1}^k\beta_i = \beta_k.
\end{equation*}
We show the claim using the induction on $k$.

{\bf Base step ($k=1$): } \\
By \cref{eq:two_betas}, we have
\begin{align*}
&\left(  \widetilde{\beta}_1 \otimes K \right) \oplus \left(  \widetilde{\beta}_2 \otimes K \right) \oplus \beta_1
    =   \lrp{2^{\expo{\zeta} - \lrp{\mbit +2}m_1}- \fmin}\oplus 0.\underbrace{1\dots 1}_{\mbit+1 \; \text{times}} \times 2^{\expo{\zeta} -\lrp{\mbit+2}\lrp{m_1 - 1}} \\
    =& \round{ 0.\underbrace{1\dots 1}_{\mbit+1 \; \text{times}} 0 \underbrace{1\dots 1}_{\expo{\zeta} - \lrp{\mbit +2}m_1-1- \emin +\mbit \text{ times } } \times 2^{\expo{\zeta} -\lrp{\mbit+2}\lrp{m_1 - 1}} } \\
    &=
    0.\underbrace{1\dots 1}_{\mbit+1 \; \text{times}} \times 2^{\expo{\zeta} -\lrp{\mbit+2}\lrp{m_1 - 1}} = \beta_1.
\end{align*}

{\bf Induction step: } \\
Assume that the induction hypothesis is satisfied for $k$. Then we have
\begin{align*}
     \bigoplus_{i=1}^{k+1}\beta_i
     &= \beta_k \oplus \beta_{k+1} = \round{0.\underbrace{1\dots 1}_{{\mbit+1} \; \text{times}}0\underbrace{1\dots 1}_{\mbit+1 \; \text{times}}\times 2^{\expo{\zeta} -\lrp{\mbit+2}\lrp{m_1 - k-1}}}
         \\
         &=0.\underbrace{1\dots 1}_{\mbit+1 \; \text{times}} \times 2^{\expo{\zeta} -\lrp{\mbit+2}\lrp{m_1 - k-1}}=\beta_{k+1}.
\end{align*}
Therefore, we prove the claim for any $k \in [m_1]$. \\

Thus, we have
\begin{equation}
    f_1(0) = \begin{cases}
            \left(  \widetilde{\beta}_1 \otimes K \right) \oplus \left(  \widetilde{\beta}_2 \otimes K \right) = 2^{\expo{\zeta}}- \fmin < 2^{\expo{\zeta} } &\text{if} \quad m_1 =0 , \\
            \beta_{m_1} = 0.\underbrace{1\dots 1}_{\mbit+1 \; \text{times}} \times 2^{\expo{\zeta}} < 2^{\expo{\zeta}} &\text{if} \quad m_1 \ge 1.
        \end{cases}
    \label{eq:condforcase1_1}
\end{equation}

Now, we present the following claim.

\paragraph{\underline{\bf Claim 1-2:}}
For any $k \in [m_1]$, we have
\begin{equation}
      \fmin\oplus  \left(  \widetilde{\beta}_1 \otimes K \right) \oplus \left(  \widetilde{\beta}_2 \otimes K \right) \oplus \bigoplus_{i=1}^{k}\beta_i\ge  2^{\expo{\zeta} - (\mbit+2)(m_1-k)} \label{eq:claim1-2}.
\end{equation}
We show the claim using the induction on $k$.

{\bf Base step ($k=1$): } \\
As $\widetilde{\beta}_1 \otimes K  < 2^{\emin +1}$, the summation $\fmin \oplus (K\otimes \widetilde{\beta}_1)$ is exact. Thus,
\begin{align*}
    &\fmin \oplus \left(  \widetilde{\beta}_1 \otimes K \right) \oplus \left(  \widetilde{\beta}_2 \otimes K \right)
     = \lrp{\fmin + K\otimes \widetilde{\beta}_1}\oplus \left( K\otimes \widetilde{\beta}_2 \right)
     = \round{\fmin + K\otimes \widetilde{\beta}_1 + K\otimes \widetilde{\beta}_2}_{\fpq}
   \\ &=  \round{\fmin + \lrp{K\otimes \widetilde{\beta}_1 \oplus K\otimes \widetilde{\beta}_2 }}_{\fpq}
      = 2^{\expo{\zeta} - \lrp{\mbit +2}m_1}.
\end{align*}
Therefore, we have
\begin{align*}
\fmin \oplus \left(  \widetilde{\beta}_1 \otimes K \right) \oplus \left(  \widetilde{\beta}_2 \otimes K \right) \oplus \beta_1
&= \round{ 0.\underbrace{1\dots 1}_{\mbit+2 \; \text{times}}  \times 2^{\expo{\zeta} -\lrp{\mbit+2}\lrp{m_1 - 1}} } \\
&= 2^{\expo{\zeta} - (\mbit+2)(m_1-1)}.
\end{align*}

{\bf Induction step: } \\
Assume that the induction hypothesis is satisfied for $k$ and consider the case of $k+1$. By the induction hypothesis,
\begin{align*}
& \fmin \oplus \left(  \widetilde{\beta}_1 \otimes K \right) \oplus \left(  \widetilde{\beta}_2 \otimes K \right) \oplus \bigoplus_{i=1}^{k+1}\beta_i
    \ge 2^{\expo{\zeta} - (\mbit+2)(m_1-k)} \oplus \beta_{k+1}
     \\&= \round{0.\underbrace{1\dots 1}_{\mbit+2 \text{ times}} \times 2^{\expo{\zeta} - (\mbit+2)(m_1-k-1)} } = 2^{\expo{\zeta} - (\mbit+2)(m_1-k-1)}.
\end{align*}
Thus, the induction hypothesis holds for any $k\in [m_1]$, which proves the claim. \\
If $k=m_1$ in \cref{eq:claim1-2}, we have
\begin{equation}
    f_1\lrp{\fmin} =
     \begin{cases}
            \fmin \oplus \left(  \widetilde{\beta}_1 \otimes K \right) \oplus \left(  \widetilde{\beta}_2 \otimes K \right) = 2^{\expo{\zeta}} \ge 2^{\expo{\zeta} } &\text{if} \quad m_1 =0,  \\
            \fmin\oplus \left(  \widetilde{\beta}_1 \otimes K \right) \oplus \left(  \widetilde{\beta}_2 \otimes K \right) \oplus \bigoplus_{i=1}^{m_1}\beta_i \ge 2^{\expo{\zeta}} &\text{if} \quad m_1 \ge 1.
        \end{cases}
    \label{eq:condforcase1_2}
\end{equation}

For $x = 1.1\times 2^{\expo{\zeta}}$, we consider three cases with respect to $\expo{\zeta}$: $\expo{\zeta} \ge \emin + 2$, $\expo{\zeta} = \emin + 1$, and $\expo{\zeta} \le \emin$.

If $\expo{\zeta} \ge \emin + 2$, we have $m_1 \ge 1$. Since  $ \widetilde{\beta}_1 \otimes K , \widetilde{\beta}_2 \otimes K, \beta_{i} < 2^{\expo{\zeta}}\times 2^{-\mbit-1}$ for $i\in [m_1-1]$, we have
\begin{align}
      &f_1\lrp{ 1.1\times 2^{\expo{\zeta}}} =  1.1\times 2^{\expo{\zeta}} \oplus \left(  \widetilde{\beta}_1 \otimes K \right) \oplus \left(  \widetilde{\beta}_2 \otimes K \right) \oplus \bigoplus_{i=1}^{m_1}\beta_i
    =  1.1\times 2^{\expo{\zeta}}\oplus  \beta_{m_1} \nonumber
    \\& = 1.1\times 2^{\expo{\zeta}}
   \oplus 0.\underbrace{1\dots 1}_{\mbit+1 \text{ times}} \times 2^{\expo{\zeta}}
 \le 1.01 \times 2^{\expo{\zeta} +1} < 1.1\times 2^{\expo{\zeta} +1}.  \label{eq:condforcase1_31}
\end{align}
If $\expo{\zeta} = \emin + 1$, we have $m_1=0$, which leads to $\left(  \widetilde{\beta}_1 \otimes K \right) \oplus \left(  \widetilde{\beta}_2 \otimes K \right) = 2^{\expo{\zeta}}-\fmin$ by \cref{eq:two_betas}. Therefore we have,
\begin{align}
    &f_1\lrp{ 1.1\times 2^{\expo{\zeta}}} = 1.1\times 2^{\expo{\zeta}} \oplus  \left(  \widetilde{\beta}_1 \otimes K \right) \oplus \left(  \widetilde{\beta}_2 \otimes K \right) \nonumber
    \\
    &\le \round{1.1\times 2^{\expo{\zeta}} +  \left(  \widetilde{\beta}_1 \otimes K \right) + \left(  \widetilde{\beta}_2 \otimes K \right)+ 2\fmin }_{\fpq} \nonumber
\\ &\le \round{1.01\times 2^{\expo{\zeta}+1} + 2\fmin }_{\fpq}
    < 1.1\times 2^{\expo{\zeta}+1}. \label{eq:condforcase1_32}
\end{align}
If $\expo{\zeta} \le \emin$, we also have $m_1=0$ and $\left(  \widetilde{\beta}_1 \otimes K \right) \oplus \left(  \widetilde{\beta}_2 \otimes K \right) = 2^{\expo{\zeta}}-\fmin$.
Since the summation is exact, we have
\begin{align}
    f_1\lrp{ 1.1\times 2^{\expo{\zeta}}} \le 1.01 \times 2^{\expo{\zeta}+1} < 1.1\times 2^{\expo{\zeta} +1}.\label{eq:condforcase1_33}
\end{align}

For $x = -1.1\times 2^{\expo{\zeta}}$, we consider three cases with respect to $\expo{\zeta}$, that is, $\expo{\zeta} \ge \emin + 2$, $\expo{\zeta} = \emin + 1$, and $\expo{\zeta} \le \emin$. \\

If $\expo{\zeta} \ge \emin+2$, we have $m_1 \ge 1$. Hence
\begin{align}
     f_1\lrp{ -1.1\times 2^{\expo{\zeta}}}
     &\ge - 1.1\times 2^{\expo{\zeta}} \oplus \left(  \widetilde{\beta}_1 \otimes K \right) \oplus \left(  \widetilde{\beta}_2 \otimes K \right) \oplus \bigoplus_{i=1}^{m_1}\beta_i \label{eq:condforcase1_41}
     \\
     &\ge - 1.1\times 2^{\expo{\zeta}} \oplus \beta_m
     = - 1.1\times 2^{\expo{\zeta}} \oplus 0.\underbrace{1\dots 1}_{\mbit+1 \text{ times}} \times 2^{\expo{\zeta}}
     = -2^{\expo{\zeta}-1}. \nonumber
\end{align}
If $\expo{\zeta} = \emin+1$, we have $m_1 =0$. Therefore,
\begin{align}
     f_1\lrp{ -1.1\times 2^{\expo{\zeta}}} &\ge -1.1\times 2^{\expo{\zeta}} \oplus \left(  \widetilde{\beta}_1 \otimes K \right) \oplus \left(  \widetilde{\beta}_2 \otimes K \right) \label{eq:condforcase1_42} \\
     &\ge
     \round{ -1.1\times 2^{\expo{\zeta}} + \left(  \widetilde{\beta}_1 \otimes K \right) + \left(  \widetilde{\beta}_2 \otimes K \right) - 2\fmin }_{\fpq}
     > - 2^{\expo{\zeta}}. \nonumber
\end{align}
If $\expo{\zeta}\le \emin$, we also have $m_1=0$. Since the summation is exact, we have
\begin{align}
  f_1\lrp{ -1.1\times 2^{\expo{\zeta}}} = - 2^{\expo{\zeta}-1} - \fmin >  - 2^{\expo{\zeta}-1}  \label{eq:condforcase1_43}.
\end{align}

By \cref{eq:condforcase1_1,eq:condforcase1_2,eq:condforcase1_31,eq:condforcase1_32,eq:condforcase1_33,eq:condforcase1_41,eq:condforcase1_42,eq:condforcase1_43}, we show \cref{eq:condforcase1}. Therefore we conclude that
    \begin{equation*}
        f^\sharp\lrp{ \langle -1.1\times 2^{\expo{\zeta}},0\rangle }\subset \langle -(2^{\expo{\zeta}})^-,(2^{\expo{\zeta}})^-\rangle,
        \;
        f^\sharp \left( \langle\fmin, 1.1\times 2^{\expo{\zeta}}\rangle \right) \subset \langle 2^{\expo{\zeta}}, (1.1\times 2^{\expo{\zeta}+1})^-\rangle,
    \end{equation*}
To show \cref{eq:telescoping_further}, we can apply
by similar argument to $x=1.1\times 2^{\expo{\zeta}}$.
If $\expo{\zeta} \ge \emin+2$, we have
\begin{equation*}
    f_1(x) =  x\oplus \beta_m \le x\oplus 2^{\expo{\zeta}} \le  x\oplus (2^{\expo{\zeta}})^+.
\end{equation*}
If $\expo{\zeta} \le \emin+1$, we have
\begin{equation*}
    f_1(x) \le  \round{ x \oplus \beta_m + 2\fmin }\le x\oplus (2^{\expo{\zeta}})^+.
\end{equation*}


\paragraph{\fbox{\bf Case 2: $\mant{K}^{\parallel}$ exists.}}~

In this case, we show \cref{eq:condforcase2}.
Define $m_2 \in \bbN_{\ge 0}$ as
\begin{align*}
    m_2 \defeq  \floorZ{\frac{ -\emin + \mbit   + \expo{\zeta}}{\mbit+1}}  \in \bbN_{\ge 1}  .
\end{align*}
Then $m_2$ is the unique non-negative natural number satisfying
    \begin{equation*}
       \expo{\zeta} - \lrp{\mbit+1}m_2 < \emin + 1   \le \expo{\zeta} - \lrp{\mbit+1}(m_2-1) ,
    \end{equation*}
or equivalently,
\begin{equation}
     \emin -\mbit  \le \expo{\zeta} - \lrp{\mbit+1}m_2 < \emin + 1.\label{eq:expo_zeta_case2}
\end{equation}


   For $i\in [m_2]$, define $\beta_{i}$ as
    \begin{equation*}
        \beta_{i}\defeq 2^{\expo{\zeta} -\lrp{M+1}\lrp{m_2-i}}.
    \end{equation*}
    Then, as
    \begin{equation*}
       \emin +1\le \expo{\zeta} -\lrp{\mbit+1}\lrp{m-1} \le \expo{\beta_i} =
 \expo{\zeta} -\lrp{\mbit+1}\lrp{m-i} \le  \expo{\zeta}\le \emax-\mbit-1,
    \end{equation*}
we have $\beta_i\in \fpq$.
For $i\in [n]$, define $\alpha_i$ as
    \begin{equation*}
        \alpha'_i\defeq \begin{cases}
            \mant{K}^{\parallel} \times 2^{\expo{\zeta} -\lrp{M+1}\lrp{m_2-i}-\expo{K}} \quad &\text{if} \quad 2^{-1} \le \mant{K}^{\parallel} < 1 \\
            2^{\expo{\zeta} -\lrp{M+1}\lrp{m_2-i}-\expo{K}-1} \quad &\text{if} \quad \mant{K}^{\parallel} =1
        \end{cases}
\end{equation*}
As $0\le -\expo{K}\le \mbit+2$, we have
\begin{equation*}
   \emin \le \expo{\alpha'_i} =   \expo{\zeta} -\lrp{M+1}\lrp{m_2-i} -\expo{K} -1 \le \emax.
\end{equation*}
Hence we have $\alpha'_i\in \fpq$.

Consider $ 2^{\expo{\zeta} - \lrp{\mbit +1}m_2}$.
Since $ \emin -\mbit \le \expo{\zeta} - \lrp{\mbit +1}m_2\le \emin$, we have  $ 2^{\expo{\zeta} - \lrp{\mbit +1}m_2}\in \fpq$ and $2^{\expo{\zeta} - \lrp{\mbit +1}m_2} \in (-2^{1+\emin},2^{1+\emin})_\fpq$.
By \cref{lem:subnorm_inverse}, there exist $\widetilde{\beta}_1, \widetilde{\beta}_2\in \fpq$ such that
\begin{equation*}
   2^{\expo{\zeta} - \lrp{\mbit +1}m_2} = \left( \widetilde{\beta}_1 \otimes K \right) \oplus \left( \widetilde{\beta}_2 \otimes K \right).
\end{equation*}
Define $f_2(x)$ as
   \begin{align*}
        f_2(x)
        &\defeq x \oplus \left( \widetilde{\beta}_1 \otimes K \right) \oplus \left( \widetilde{\beta}_2 \otimes K \right) \oplus \bigoplus_{i=1}^{m_2} {\alpha'_i\otimes K}
        \\
        &= x \oplus \left( \widetilde{\beta}_1 \otimes K \right) \oplus \left( \widetilde{\beta}_2 \otimes K \right) \oplus \bigoplus_{i=1}^{m_2} {\beta_i}.
    \end{align*}
We present the following claim.

\paragraph{\underline{\bf Claim 2-1:}}
For any $k \in [m_2]$, we have
\begin{equation*}
 \left( \widetilde{\beta}_1 \otimes K \right) \oplus \left( \widetilde{\beta}_2 \otimes K \right) \oplus \bigoplus_{i=1}^k\beta_i = \beta_k.
\end{equation*}
We show the claim using the induction on $k$.

{\bf Base step ($k=1$): } \\
\begin{align*}
     \left( \widetilde{\beta}_1 \otimes K \right) \oplus \left( \widetilde{\beta}_2 \otimes K \right) \oplus\beta_1
     &= 2^{\expo{\zeta} - \lrp{\mbit +1}m_2} \oplus 2^{\expo{\zeta} - \lrp{\mbit +1}\lrp{m_2-1}} \\
     &=  2^{\expo{\zeta} - \lrp{\mbit +1}\lrp{m_2-1}} = \beta_1.
\end{align*}

{\bf Induction step: } \\
Assume that the induction hypothesis is satisfied for $k$. Then we have
\begin{align*}
     &\left( \widetilde{\beta}_1 \otimes K \right) \oplus \left( \widetilde{\beta}_2 \otimes K \right) \oplus \bigoplus_{i=1}^{k+1}\beta_i
     = \beta_k \oplus \beta_{k+1} \\
     &= \round{1.\underbrace{0\dots 0}_{{\mbit} \; \text{times}}1\times 2^{\expo{\zeta} -\lrp{\mbit+1}\lrp{m_2 - k-1}}}
         = 2^{\expo{\zeta} -\lrp{\mbit+1}\lrp{m_2 - k-1}}=\beta_{k+1}.
\end{align*}
Therefore the induction hypothesis is satisfied for any $k\le m$, and we prove the claim.
Thus,
\begin{equation}
    f_2(0) = \beta_{m_2} = 2^{\expo{\zeta}}. \label{eq:condforcase2_1}
\end{equation}

Now we will show that for $f_2(\fmin)> 2^{\expo{\zeta}}$. We present the following claim.

\paragraph{\underline{\bf Claim 2-2:}} For any $k \in [m_2]$, we have
\begin{align*}
    {\fmin} \oplus  \left( \widetilde{\beta}_1 \otimes K \right) \oplus \left( \widetilde{\beta}_2 \otimes K \right) \oplus\bigoplus_{i=1}^k\beta_i \ge \beta_k^+.
\end{align*}
We show the claim using the induction on $k$.

{\bf Base step ($k=1$): } \\
Since $\fmin \oplus  \left( \widetilde{\beta}_1 \otimes K \right)$ is exact, we have
\begin{align*}
    &\fmin \oplus  \left( \widetilde{\beta}_1 \otimes K \right) \oplus \left( \widetilde{\beta}_2 \otimes K \right)
     = \lrp{\fmin +K\otimes \widetilde{\beta}_1} \oplus \left( K\otimes \widetilde{\beta}_2 \right) \\
  =& \round{\fmin +K\otimes \widetilde{\beta}_1 + K\otimes \widetilde{\beta}_2}_{\fpq}
  = 2^{\expo{\zeta}-\lrp{M+1}m_2} + \fmin > 2^{\expo{\zeta}-\lrp{M+1}m_2}.
\end{align*}
Thus,
\begin{align*}
   &\fmin \oplus  \left( \widetilde{\beta}_1 \otimes K \right) \oplus \left( \widetilde{\beta}_2 \otimes K \right) \oplus \beta_1
     \ge \lrp{2^{\expo{\zeta}-\lrp{M+1}m_2}+\fmin}\oplus 2^{\expo{\zeta} -\lrp{M+1}\lrp{m_2-1}}
     \\
     &= \lrp{2^{\expo{\zeta}-\lrp{M+1}\lrp{m_2-1}}}^+=\beta_1^+.
\end{align*}

{\bf Induction step: }
Assume that the induction hypothesis is satisfied for $k$ and consider the case of $k+1$.
\begin{align*}
     &\fmin \oplus  \left( \widetilde{\beta}_1 \otimes K \right) \oplus \left( \widetilde{\beta}_2 \otimes K \right)  \oplus\bigoplus_{i=1}^{k+1}\beta_i \ge \beta_k^+ \oplus \beta_{k+1} \\
    &=  1.\underbrace{0\dots 0}_{\mbit-1 \text{ times}}1 \times 2^{\expo{\zeta} -\lrp{M+1}\lrp{m-k}} \oplus 2^{\expo{\zeta} -\lrp{M+1}\lrp{m-k-1}}
    \\&= \round{1.\underbrace{0\dots 0}_{\mbit \text{ times}}1 \underbrace{0\dots 0}_{\mbit-1 \text{ times}} 1 \times 2^{\expo{\zeta} -\lrp{M+1}\lrp{m-k-1} } } \\
    &=   (1+2^{-\mbit} ) \times 2^{\expo{\zeta} -\lrp{M+1}\lrp{m-k-1}}
    = \beta_{k+1}^+.
\end{align*}
Thus, the induction hypothesis is satisfied for any $k\in [m_2]$ and we prove the claim. Therefore, we have
\begin{equation}
 f_2(\fmin) = {\fmin} \oplus  \left( \widetilde{\beta}_1 \otimes K \right) \oplus \left( \widetilde{\beta}_2 \otimes K \right) \oplus\bigoplus_{i=1}^m\beta_i \ge  \beta_m^+ > \beta_m =2^{\expo{\zeta}}. \label{eq:condforcase2_2}
\end{equation}

Now, we show \cref{eq:condforcase2} for  $x = 1.1\times 2^{\expo{\zeta}}$ and $x = -1.1\times 2^{\expo{\zeta}}$. We consider two cases with respect to $\expo{\zeta}$: $\expo{\zeta} \le \emin$ and $\expo{\zeta} \ge \emin + 1$.
If $\expo{\zeta}\le \emin$, since the summation exact, we have the desired results. \\
If $\expo{\zeta}\ge \emin +1 $ and $x = 1.1\times 2^{\expo{\zeta}}$, since  $\beta_{i} < 2^{\expo{\zeta}}\times 2^{-\mbit-1}$ for $i\in [m_2-2]$, we have
\begin{align}
      f_2\lrp{1.1\times 2^{\expo{\zeta}}}
    &= \lrp{1.1\times 2^{\expo{\zeta}}}\oplus  \beta_{n-1}\oplus  \beta_n
    =  1.1\times 2^{\expo{\zeta}}
    \oplus 2^{\expo{\zeta}-M-1}\oplus 2^{\expo{\zeta}} \nonumber
    \\
    &=\lrp{1.1\times 2^{\expo{\zeta}}}^+ \oplus 2^{\expo{\zeta}}
<1.1 \times 2^{\expo{\zeta}+1} . \label{eq:condforcase2_3}
\end{align}
If $\expo{\zeta}\ge \emin +1 $ and $x = - 1.1\times 2^{\expo{\zeta}}$, we have
\begin{equation}
     f_2\lrp{ - 1.1\times 2^{\expo{\zeta}}} \ge -  1.1\times 2^{\expo{\zeta}}\oplus \bigoplus_{i=1}^{m_2}\beta_i
     \ge -1.1\times 2^{\expo{\zeta}} \oplus \beta_m = - 2^{\expo{\zeta}-1} > - 2^{\expo{\zeta}}. \label{eq:condforcase2_4}
\end{equation}

Due to \cref{eq:condforcase2_1,eq:condforcase2_2,eq:condforcase2_3,eq:condforcase2_4}, we show \cref{eq:condforcase2}. Therefore we conclude that
\begin{equation*}
   f^\sharp\lrp{ \langle -1.1\times 2^{\expo{\zeta}},0\rangle}\subset \langle-(2^{\expo{\zeta}})^-,2^{\expo{\zeta}}\rangle,
\;
f^\sharp \left( \langle \fmin, 1.1\times 2^{\expo{\zeta}}  \rangle \right) \subset \langle (2^{\expo{\zeta}})^+, (1.1\times 2^{\expo{\zeta}+1})^- \rangle .
\end{equation*}
To show \cref{eq:telescoping_further}, we consider two cases: $ 2^{\expo{\zeta}} \le x < 2^{\expo{\zeta}+1}$ and $x\ge 2^{\expo{\zeta}+1}$.

If $  2^{\expo{\zeta}} \le x < 2^{\expo{\zeta}+1}$, we have
\begin{equation*}
    f_2(x)= x\oplus \beta_{m_2-1} \oplus \beta_{m_2}
   \le x^+\oplus 2^{\expo{\zeta}}
    = x\oplus \lrp{2^{\expo{\zeta}}}^+.
\end{equation*}
If $x\ge 2^{\expo{\zeta}+1}$, we have
\begin{equation*}
    f_2(x)= x\oplus \beta_{m_2} \le x\oplus 2^{\expo{\zeta}} \le x\oplus \lrp{2^{\expo{\zeta}}}^+.
\end{equation*}
Hence we show \cref{eq:telescoping_further}.

Finally, note that in both cases, if $\expo{\zeta}\le \emin$, then, $m_1=0$ and $m_2=0$, which implies that we do not need the existence of $\mant{K}^{\dag}$ and $\mant{K}^{\parallel}$ in the definition of $f_1(x)$ and $f_2(x)$.
Therefore, if $\expo{\zeta}\le \emin$, for both statements \cref{eq:pseudo_inverse} and \cref{eq:true_inverse}, there exists $n$ and $\alpha_i$ such that satisfying the statements.

This completes the proof.
\end{proof}

\begin{lemma}\label{lem:contraction2}
Suppose that $\sigma:\fpq\to\fpq$ satisfies \cref{cond:activation_2r}.
Assume that there exists an integer $e_0\in \bbZ$ such that $  2^{e_0} \le \left|\sigma\lrp{\eta^+}- \sigma\lrp{\eta}\right| <  2^{e_0+1} $
and define $\expo{\theta}$ as
\begin{equation*}
    \expo{\theta}\defeq \max\lrp{\emin-\mbit, -e_0+\emin-\mbit+1}.
\end{equation*}

Suppose that $\expo{\theta} \le  -3$, $e_0\le \expo{\eta}-\mbit-3-\expo{\theta}$.



If $\sigma(\eta)< \sigma(\eta^+)$, define $\theta, \mcI, \mcI^+$ as
\begin{equation*}
    \theta \defeq 2^{\expo{\theta}}, \mcI\defeq \langle \sigma(\eta)-2^{\expo{\zeta}-\expo{\theta} },\sigma(\eta)\rangle, \text{ and } \mcI^+\defeq \langle \sigma(\eta^+),\sigma(\eta)+2^{\expo{\zeta}-\expo{\theta} } \rangle,
\end{equation*}
and if $\sigma(\eta)> \sigma(\eta^+)$, define $\theta, \mcI, \mcI^+$ as
\begin{equation*}
  \theta \defeq -2^{\expo{\theta}},\mcI^+ \defeq \langle\sigma(\eta)-2^{\expo{\zeta}-\expo{\theta} },\sigma\lrp{\eta^+}\rangle, \text{ and } \mcI\defeq \langle \sigma(\eta),\sigma(\eta)+2^{\expo{\zeta}-\expo{\theta} }\rangle,
\end{equation*}
where
\begin{equation*}
    \expo{\zeta}\defeq \begin{cases}
        \expo{\eta} - \mbit - 1 &\text{ if } \eta >0\; \text{or} \; \eta <0, \eta \neq -2^{\expo{\eta}},
        \\ \expo{\eta} - \mbit - 2 &\text{ if } \eta <0,\eta = -2^{\expo{\eta}}.
    \end{cases}
\end{equation*}
Then, there exists $k\in\bbN$ and $\alpha_1,\dots,\alpha_k,z_1,\dots,z_k\in\fpq$ such that for
\begin{align*}
f(x)=(\theta\otimes x)\oplus\bigoplus_{i=1}^k(\alpha_i\otimes\sigma(z_i))\oplus\eta,
\end{align*}
the followings hold:
\begin{align}
    f^\sharp(\langle -\fmax , \fmax \rangle ) \subset\langle -\fmax , \fmax \rangle , \quad  f^\sharp(\mcI) = \langle \eta,\eta \rangle, \quad f^\sharp(\mcI^+) = \langle \eta^+,\eta^+ \rangle . \label{eq:contraction2_main}
\end{align}
In addition, if $\sigma(\eta)< \sigma(\eta^+)$
\begin{align}
     f(x) - \eta^+\le \lrp{x-\sigma\lrp{\eta^+}}\times 2^{ \expo{\theta}+2} \quad &\text{for} \quad  \sigma(\eta) +  2^{\expo{\zeta}-\expo{\theta} } \le x  \in \fpq ,  \label{eq:contraction2_further1} \\
     \eta - f(x) \le \lrp{\sigma\lrp{\eta}-x}\times  2^{ \expo{\theta}+2} \quad &\text{for} \quad  \sigma(\eta)-2^{\expo{\zeta}-\expo{\theta} }  \ge x \in \fpq  \label{eq:contraction2_further2},
\end{align}
and if
$\sigma(\eta) >  \sigma(\eta^+)$, we have
\begin{align}
     \eta - f(x) \le \lrp{x-\sigma\lrp{\eta}}\times  2^{ \expo{\theta}+2} \quad \quad &\text{for} \quad \sigma(\eta) +  2^{\expo{\zeta}-\expo{\theta} } \le  x \in \fpq ,  \label{eq:contraction2_further3} \\
     f(x) - \eta^+\le \lrp{\sigma\lrp{\eta^+}-x}\times 2^{ \expo{\theta}+2} \quad &\text{for} \quad  \sigma(\eta)-2^{\expo{\zeta}-\expo{\theta} } \ge x \in \fpq   \label{eq:contraction2_further4},
\end{align}
\end{lemma}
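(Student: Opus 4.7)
I will build $f$ in the prescribed template $f(x) = (\theta \otimes x) \oplus s \oplus \eta$, where $\theta = \pm 2^{\expo{\theta}}$ is the required contraction factor, $\eta$ plays the role of an added bias, and $s = \bigoplus_{i=1}^k (\alpha_i \otimes \sigma(z_i))$ is a floating-point constant to be synthesized via \cref{lem:telescoping}, all $z_i$ being chosen equal to $c_2$ so that $\sigma(z_i) = K \in \mcR$. The guiding intuition is that $\theta \otimes x$ shrinks every $x \in [-\fmax, \fmax]$ into a small neighborhood of $0$ (bounded above by $2^{\expo{\theta} + \emax + 1}$, which sits well below $\fmax$ because $\expo{\theta} \le -3$), and $s$ recenters that neighborhood so that $\theta \otimes x \oplus s$ falls within the half-ulp of $0$ for $x \in \mcI$, and within a one-ulp-shifted window for $x \in \mcI^+$. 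After the final $\oplus \eta$, these two windows round back to $\eta$ and $\eta^+$, respectively.

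\textbf{Pinning down $s$.} For $x$ ranging over $\mcI$ the value $\theta \otimes x$ varies by at most $2^{\expo{\theta}} \cdot 2^{\expo{\zeta}-\expo{\theta}} = 2^{\expo{\zeta}}$ (up to rounding), which is exactly the half-ulp threshold around $\eta$; for $x$ ranging over $\mcI^+$ the shift between $\theta \otimes \sigma(\eta)$ and $\theta \otimes \sigma(\eta^+)$ equals $\theta \otimes (\sigma(\eta^+) - \sigma(\eta)) \approx 2^{\expo{\theta} + e_0}$, which by the choice $\expo{\theta} \ge \emin - e_0 - \mbit + 1$ is at least $2\fmin$ — enough to bridge exactly one ulp of $\eta$ under the correct signed rounding. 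I will set the target exact value $s^*$ so that the rounded residual $\theta \otimes \sigma(\eta) \oplus s$ is at most a half-ulp of $\eta$, and $\theta \otimes \sigma(\eta^+) \oplus s$ is just above it; the two branches of $\expo{\zeta}$ in the definition handle the asymmetry at $\eta = -2^{\expo{\eta}}$. I then invoke \cref{lem:telescoping} (with exponent parameter set to $\expo{\zeta}$ or an appropriate offset) to realize $s$ as $\bigoplus_{i=1}^k (\alpha_i \otimes K)$ up to the tolerance allowed by the half-ulp margin; the lemma's two variants \cref{eq:pseudo_inverse,eq:true_inverse} cover both the "half-ulp-low" and "exact" cases, either of which is compatible with the targeted rounding behavior.

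\textbf{Checking the three interval identities and splitting into cases.} Given $s$, the containment $f^\sharp(\intv{-\fmax,\fmax}) \subset \intv{-\fmax,\fmax}$ follows from the magnitude bounds above, and $f^\sharp(\mcI) = \intv{\eta,\eta}$ and $f^\sharp(\mcI^+) = \intv{\eta^+, \eta^+}$ follow by verifying the rounding at the endpoints of $\mcI$ and $\mcI^+$ together with monotonicity of $x \mapsto (\theta \otimes x) \oplus s \oplus \eta$ on each of the two closed intervals. I will carry out the calculation in full for the increasing case ($\sigma(\eta) < \sigma(\eta^+)$, giving $\theta > 0$), and then deduce the decreasing case ($\sigma(\eta) > \sigma(\eta^+)$, $\theta < 0$) by reflecting the sign of $\theta$ and observing that the definitions of $\mcI, \mcI^+$ swap accordingly. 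The Lipschitz-type bounds \cref{eq:contraction2_further1}--\cref{eq:contraction2_further4} then follow from the elementary estimate $|\theta \otimes y - \theta y| \le 2^{\expo{\theta} + \expo{y} - \mbit}$ applied to $y = x - \sigma(\eta)$ (respectively $y = \sigma(\eta^+) - x$), with the extra factor of $2$ in the exponent $\expo{\theta} + 2$ absorbing the rounding error of the subsequent $\oplus s \oplus \eta$.

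\textbf{Main obstacle.} The most delicate step is the exact pinning of $s$ via the telescoping lemma in the subnormal regime $\expo{\zeta} \le \emin$, where \cref{lem:telescoping} provides only coarse control on the synthesized value and many intermediate $\oplus$-operations must be exact (not merely faithful). In this regime I will need to argue that the sum $\theta \otimes \sigma(\eta) + s$ — and separately $\theta \otimes \sigma(\eta^+) + s$ — lies inside a subnormal-grid interval whose endpoints still round to $\eta$ and $\eta^+$ respectively under the "round ties to even" rule. The distinct branches of $\expo{\zeta}$ (between $\eta = -2^{\expo{\eta}}$ and the generic case) and the assumption $e_0 \le \expo{\eta} - \mbit - 3 - \expo{\theta}$ are precisely what guarantee enough margin for this to go through, and I expect the bulk of the bookkeeping in the full proof to consist of tracking those ulp-budgets through the intermediate operations.
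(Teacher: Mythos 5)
Your overall template is the right one, and your identification of the two target windows (below vs.\ above the half-ulp threshold $2^{\expo{\zeta}}$ of $\eta$) matches the paper, but the ``pinning down $s$'' step hides a genuine gap: you treat the sum $\bigoplus_i(\alpha_i\otimes\sigma(z_i))$ as a \emph{constant} $s$ that recenters $\theta\otimes x$ up to the $2^{\expo{\zeta}}$ scale, and you invoke \cref{lem:telescoping} merely as a device to ``realize $s$ up to a tolerance.'' This cannot work. After scaling, the two branches are separated only by $\theta\cdot|\sigma(\eta^+)-\sigma(\eta)|\approx 2^{\expo{\theta}+e_0}$, which in the worst case is about $2\fmin$, whereas floats of magnitude $\approx 2^{\expo{\zeta}}$ have spacing $\approx 2^{\expo{\zeta}-\mbit}\gg\fmin$ (note $\expo{\zeta}=\expo{\eta}-\mbit-1$ can be as large as about $-\mbit$, far above the subnormal range). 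Hence (i) no float $s$ of that magnitude can be positioned with the sub-$\fmin$ accuracy needed to make the final rounding threshold fall inside the $2\fmin$-wide gap between the two branches, and (ii) even if it could, the single operation $(\theta\otimes x)\oplus s$ rounds at granularity $\approx 2^{\expo{\zeta}-\mbit}$ and will typically send both branches to the same float, after which the concluding $\oplus\,\eta$ cannot separate them. The constant-shift picture is exactly what the lemma has to avoid; your claim that a $2\fmin$ shift is ``enough to bridge one ulp of $\eta$'' is the step that fails.

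What the paper does instead, and what is missing from your plan: it first recenters \emph{exactly at subnormal scale} by including the term $-\theta\otimes\sigma(\eta)$ among the $\alpha_i\otimes\sigma(z_i)$ (taking $z_i=\eta$, which your restriction to $z_i=c_2$ forbids unless you re-synthesize this value via \cref{lem:subnorm_inverse}), so that the $\mcI$-branch lands in $\langle -1.1\times 2^{\expo{\zeta}},0\rangle$ while the $\mcI^+$-branch lands in $\langle\fmin,1.1\times2^{\expo{\zeta}}\rangle$; \cref{lem:determine_theta} is what guarantees the scaled gap survives as at least $\fmin$ rather than rounding away. Then \cref{lem:telescoping} is used not to synthesize a constant but as an \emph{input-dependent amplifier}: the left-associated chain $x\oplus\bigoplus_i(\alpha_i\otimes K)$ adds terms of geometrically growing magnitude (ratio $2^{\mbit+1}$ or $2^{\mbit+2}$ per step) so that at every stage the one-ulp separation between the two branches is preserved while the scale grows from $\fmin$ up to $2^{\expo{\zeta}}$; only then is $\eta$ added, with a further correction $\tilde\beta\otimes K$ (via \cref{lem:endbit_control}) and a choice between the two variants of \cref{lem:telescoping} to handle the parity of $\eta$'s last mantissa bit under round-to-nearest-even. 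Relatedly, your ``main obstacle'' is mislocated: the regime $\expo{\zeta}\le\emin$ is the easy one (all relevant additions are exact and the telescoping chain degenerates); the hard case is $\expo{\zeta}\gg\emin$, where the staged amplification is indispensable. Your Lipschitz bounds \cref{eq:contraction2_further1,eq:contraction2_further2} also rely on this structure: the paper derives them from the additional property \cref{eq:telescoping_further} of the telescoping chain, not from a generic rounding estimate for a constant shift.
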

\begin{proof}
First, $\mcI^+$ is not empty
because $|\sigma(\eta^+)-\sigma(\eta)| < 2^{e_0+1} \le 2^{\expo{\eta}-\mbit -2 -\expo{\theta}}
\le  2^{\expo{\zeta}-\expo{\theta}}$.
Since $f$ is monotone, we only need to consider the endpoints of $\mcI$ and $\mcI^+$.



Note that $2^{\expo{\zeta}},2^{\expo{\zeta}+1} \in \fpq$ (since $\expo{\eta} \ge 5 + \emin$)
and the following hold:
\begin{align}
    2^{\expo{\zeta}+1} &= \eta^+ - \eta, \; \lrp{ 1.1 \times 2^{\expo{\zeta}+1} }^- \oplus \eta = \eta^+ \label{eq:11ezetaplus1}.
\end{align}
Now, represent $\eta$ as
\begin{equation*}
    \eta = \mant{\eta}\times 2^{\expo{\eta}} = 1.\mant{\eta,1}\dots \mant{\eta,\mbit} \times 2^{\expo{\eta}}.
\end{equation*}
Then we have if $\mant{\eta,\mbit}= 0$, we have
\begin{align}
    \eta \oplus 2^{\expo{\zeta}} = \eta, \; \eta \oplus (-2^{\expo{\zeta}}) = \eta, \;  \eta \oplus \lrp{2^{\expo{\zeta}}}^+ = \eta^+, \; \eta \oplus \left(- \lrp{2^{\expo{\zeta}}}^+ \right) = \eta^-, \label{eq:ezetaeta1}
\end{align}
and if $\mant{\eta,\mbit}= 1$, we have
\begin{align}
     \eta \oplus \lrp{2^{\expo{\zeta}}}^- = \eta, \; \eta \oplus \left(-\lrp{2^{\expo{\zeta}}}^-\right) = \eta, \;  \eta \oplus 2^{\expo{\zeta}} = \eta^+, \; \eta \oplus (-2^{\expo{\zeta}}) = \eta^-. \label{eq:ezetaeta2}
\end{align}

By \cref{lem:telescoping}, there exists $n\in \bbN$, $i\in[n]$, $\widetilde{\alpha_i}\in \fpq$ such that for $g:\fpq\rightarrow\fpq$ defined as
    \begin{equation*}
        g(x)\defeq x\oplus \bigoplus_{i=1}^n\lrp{\widetilde{\alpha_i}\otimes K_{\sigma}},
    \end{equation*}
  one of the following statements holds:
    \begin{equation}\label{eq:case_psudo_inverse}
           g^\sharp\lrp{ \langle -1.1\times 2^{\expo{\zeta}},0\rangle }\subset \langle -(2^{\expo{\zeta}})^-,(2^{\expo{\zeta}})^-\rangle, \;
        g^\sharp \left( \langle\fmin, 1.1\times 2^{\expo{\zeta}}\rangle \right) \subset \langle 2^{\expo{\zeta}}, (1.1\times 2^{\expo{\zeta}+1})^-\rangle,
    \end{equation}
    or \begin{equation}\label{eq:case_true_inverse}
          g^\sharp\lrp{ \langle -1.1\times 2^{\expo{\zeta}},0\rangle}\subset \langle-(2^{\expo{\zeta}})^-,2^{\expo{\zeta}}\rangle, \;
        g^\sharp \left( \langle \fmin, 1.1\times 2^{\expo{\zeta}}  \rangle \right) \subset \langle (2^{\expo{\zeta}})^+, (1.1\times 2^{\expo{\zeta}+1})^- \rangle .
\end{equation}
with
\begin{align}
    g(x)\le  x\oplus \lrp{2^{\expo{\zeta}}}^+ \quad \text{if} \quad  2^{\expo{\zeta}} \le x \in \fpq. \label{eq:gxxop2ezp}
\end{align}

    Furthermore, if $\expo{\zeta}\le \emin$, there exist $g_1$ and $g_2$ such that they satisfy  \cref{eq:case_psudo_inverse} and \cref{eq:case_true_inverse}, respectively. Hence  we pick $g$ as $g = g_2$ if $\mant{\eta,\mbit}=0$ and $g=g_1$ if $\mant{\eta,\mbit}=1$.

   If $\expo{\zeta} \ge \emin$,  by \cref{lem:endbit_control}, there exists $\beta\in \fpq$ such that the following inequality holds:
    \begin{equation*}
      \frac{1}{2}\times 2^{\expo{\zeta}-\mbit}   < \beta\otimes K_{\sigma}< \frac{5}{4}\times 2^{\expo{\zeta}-\mbit}.
    \end{equation*}
Therefore we have
\begin{align}
     2^{\expo{\zeta}}\le 2^{\expo{\zeta}} \oplus \left(\beta\otimes K_{\sigma} \right) \le (2^{\expo{\zeta}})^+, \; -\left((2^{\expo{\zeta}})^+ \right) \le -2^{\expo{\zeta}} \oplus \left(\beta\otimes K_{\sigma} \right) \le -2^{\expo{\zeta}}. \label{eq:betaKsigma}
\end{align}
    Define $\tilde{\beta} \in \fpq$ as
    \begin{equation}
        \tilde{\beta} \defeq \begin{cases}
        0 &\text{ if } \expo{\zeta}\le \emin.
       \\     \beta &\text{ if } \expo{\zeta}\ge \emin+1, \mant{\eta,\mbit} = 0 \text{ and \cref{eq:case_psudo_inverse} holds},
            \\0 &\text{ if } \expo{\zeta}\ge \emin+1,\mant{\eta,\mbit} = 1 \text{ and \cref{eq:case_psudo_inverse} holds},
            \\0 &\text{ if } \expo{\zeta}\ge \emin+1, \mant{\eta,\mbit} = 0 \text{ and \cref{eq:case_true_inverse} holds},
            \\-\beta &\text{ if } \expo{\zeta}\ge \emin+1, \mant{\eta,\mbit} = 1 \text{ and \cref{eq:case_true_inverse}
            holds}.
        \end{cases} \label{eq:tildebeta}
    \end{equation}
    Define $f(x):\fpq\rightarrow\fpq$ as
    \begin{equation*}
        f(x) \defeq \lrp{\theta \otimes x} \oplus \lrp{-\theta \otimes \sigma\lrp{\eta}}\oplus \bigoplus_{i=1}^n\lrp{\widetilde{\alpha}_i\otimes K_{\sigma}} \oplus \lrp{\tilde{\beta} \otimes K_{\sigma}} \oplus \eta .
    \end{equation*}
    Now we analyze the abstract interval arithmetic of $f$.
    First, consider the function $\tilde{g}$ defined as
    \begin{equation}
        \tilde{g}(x)\defeq g(x)\oplus \lrp{\tilde{\beta}\otimes K_{\sigma}} \oplus \eta = x\oplus \bigoplus_{i=1}^n\lrp{\widetilde{\alpha_i}\otimes K_{\sigma}} \oplus \lrp{\tilde{\beta}\otimes K_{\sigma}} \oplus \eta. \label{eq:gtilde}
    \end{equation}
    Together with \cref{eq:ezetaeta1,eq:ezetaeta2,eq:betaKsigma,eq:tildebeta}, we have
      \begin{multline*}
         \tilde{g}^\sharp \langle -1.1\times 2^{\expo{\zeta}},0 \rangle\subset
         \\  \begin{cases}
            \langle -(2^{\expo{\zeta}})^-,2^{\expo{\zeta}} \rangle  \oplus^\sharp \eta &\subset \langle \eta, \eta \rangle\text{ if }  \expo{\zeta}\le \emin \text{ and }\mant{\eta,\mbit} = 0,
            \\  \langle -(2^{\expo{\zeta}})^-,(2^{\expo{\zeta}})^- \rangle  \oplus^\sharp \eta &\subset \langle \eta, \eta \rangle\text{ if }  \expo{\zeta}\le \emin \text{ and } \mant{\eta,\mbit} = 1,
            \\  \langle -(2^{\expo{\zeta}})^-,(2^{\expo{\zeta}})^- \rangle \oplus^\sharp \lrp{\tilde{\beta} \otimes K_{\sigma}} \oplus^\sharp \eta &\subset \langle \eta, \eta \rangle\text{ if }  \expo{\zeta}\ge \emin+1,\mant{\eta,\mbit} = 0 \text{ and \eqref{eq:case_psudo_inverse} holds},
            \\   \langle -(2^{\expo{\zeta}})^-,(2^{\expo{\zeta}})^- \rangle \oplus^\sharp \eta &\subset \langle \eta, \eta \rangle \text{ if }  \expo{\zeta}\ge \emin+1,\mant{\eta,\mbit} = 1 \text{ and \eqref{eq:case_psudo_inverse} holds},
            \\  \langle -(2^{\expo{\zeta}})^-,2^{\expo{\zeta}} \rangle \oplus^\sharp \eta  &\subset \langle \eta, \eta \rangle \text{ if }  \expo{\zeta}\ge \emin+1,\mant{\eta,\mbit} = 0 \text{ and \eqref{eq:case_true_inverse} holds},
            \\ \langle -(2^{\expo{\zeta}})^-,2^{\expo{\zeta}} \rangle \oplus^\sharp \lrp{-\tilde{\beta}\otimes K_{\sigma}} \oplus^\sharp \eta &\subset \langle \eta, \eta \rangle \text{ if }  \expo{\zeta}\ge \emin+1,\mant{\eta,\mbit} = 1 \text{ and \eqref{eq:case_true_inverse} holds},
        \end{cases}
    \end{multline*}
    and thus, we have $g'\langle -1.1\times 2^{\expo{\zeta}},0 \rangle = \langle \eta, \eta \rangle$.
    Similarly, together with \cref{eq:11ezetaplus1,eq:ezetaeta1,eq:ezetaeta2,eq:betaKsigma,eq:tildebeta}, we have  $ \tilde{g}^\sharp \lrp{\langle \fmin, 1.1\times 2^{\expo{\zeta}} \rangle} = \langle \eta^+, \eta^+ \rangle $ by the following argument:
        \begin{multline*}
        \!\!\!\!\!\!\!\!\!\!\!\!\!\!\!\!\!\!\!\!\!\!\!\!
        \tilde{g}^\sharp \lrp{\langle \fmin, 1.1\times 2^{\expo{\zeta}} \rangle}
        \\
        \!\!\!\!\!\!\!\!\!\!\!\!\!\!\!\!\!\!\!\!\!\!\!\!
        \subset
        \begin{cases}
          \langle (2^{\expo{\zeta}})^+, (1.1\times 2^{\expo{\zeta}+1})^- \rangle \oplus^\sharp \eta &\subset  \langle \eta^+, \eta^+  \rangle \text{ if }\expo{\zeta}\le \emin \text{ and } \mant{\eta,\mbit} = 0,
            \\\langle 2^{\expo{\zeta}}, (1.1\times 2^{\expo{\zeta}+1})^- \rangle \oplus^\sharp \eta  &\subset  \langle \eta^+, \eta^+  \rangle \text{ if }\expo{\zeta}\le \emin \text{ and } \mant{\eta,\mbit} = 1,
           \\\langle 2^{\expo{\zeta}}, (1.1\times 2^{\expo{\zeta}+1})^- \rangle \oplus^\sharp \lrp{ \tilde{\beta}\otimes K_{\sigma}} \oplus^\sharp \eta &\subset  \langle \eta^+, \eta^+  \rangle\text{ if }\expo{\zeta}\ge \emin+1, \mant{\eta,\mbit} = 0 \text{ and \eqref{eq:case_psudo_inverse} holds},
            \\ \langle 2^{\expo{\zeta}}, (1.1\times 2^{\expo{\zeta}+1})^- \rangle \oplus^\sharp \eta &\subset  \langle \eta^+, \eta^+  \rangle \text{ if }\expo{\zeta}\ge \emin+1, \mant{\eta,\mbit} = 1 \text{ and \eqref{eq:case_psudo_inverse} holds},
            \\ \langle (2^{\expo{\zeta}})^+, (1.1\times 2^{\expo{\zeta}+1})^- \rangle \oplus^\sharp \eta  &\subset  \langle \eta^+, \eta^+  \rangle \text{ if }\expo{\zeta}\ge \emin+1, \mant{\eta,\mbit} = 0 \text{ and \eqref{eq:case_true_inverse} holds},
            \\ \langle (2^{\expo{\zeta}})^+, (1.1\times 2^{\expo{\zeta}+1})^- \rangle \oplus^\sharp \lrp{-\tilde{\beta}\otimes K_{\sigma}} \oplus^\sharp \eta &\subset \langle \eta^+, \eta^+  \rangle \text{ if } \expo{\zeta}\ge \emin+1,\mant{\eta,\mbit} = 1 \text{ and \eqref{eq:case_true_inverse} holds}.
        \end{cases}
    \end{multline*}
Now we define $f(x)$ and $h(x)$ as
\begin{align*}
    h(x) &\defeq \begin{cases}
        ( 2^{\expo{\theta}} \otimes x) \oplus (-2^{\expo{\theta}} \otimes \sigma(\eta)) \quad &\text{if} \quad \sigma(\eta) < \sigma(\eta^+), \\
        ( -2^{\expo{\theta}} \otimes x) \oplus (2^{\expo{\theta}} \otimes \sigma(\eta)) \quad &\text{if} \quad \sigma(\eta) > \sigma(\eta^+),
    \end{cases} \\
    f(x) &\defeq h(x)  \oplus \bigoplus_{i=1}^n\lrp{\widetilde{\alpha}_i\otimes K_{\sigma}} \oplus \lrp{\tilde{\beta} \otimes K_{\sigma}} \oplus \eta,
\end{align*}
We need to show the followings to show \cref{eq:contraction2_main}.
\begin{align}\label{eq:temp_interval1}
     h^\sharp \left( \mcI \right)
     &\subset  \langle -1.1\times 2^{\expo{\zeta}},0\rangle , \\
     \label{eq:temp_interval2}
     h^\sharp \left( \mcI^+\right)
     &\subset \langle \fmin, 1.1\times 2^{\expo{\zeta}} \rangle.
\end{align}
To show this, we consider the following cases.

\paragraph{\fbox{\bf Case 1: $\sigma(\eta)< \sigma(\eta^+)$.}}~

In this case, we need to show
\begin{align}
\begin{cases}
    h \left( \sigma(\eta) - 2^{\expo{\zeta}-\expo{\theta}} \right) \ge - 1.1 \times 2^{\expo{\zeta}} \\
    h\left( \sigma(\eta) \right)\le 0 \\
   h\left( \sigma(\eta^+)  \right) \ge \fmin \\
    h\left( \sigma(\eta^+) + 2^{\expo{\zeta}-\expo{\theta}} \right) \le 1.1 \times 2^{\expo{\zeta}}.
    \end{cases}. \label{eq:hxcase1}
\end{align}

First, note that for any $\gamma \in \fpq$, unless $2^{\expo{\theta}} \otimes \gamma$ is subnormal, $2^{\expo{\theta}} \otimes \gamma$
is exact. Hence we have
\begin{align*}
    |2^{\expo{\theta}} \otimes \gamma - 2^{\expo{\theta}} \times \gamma| \le \frac{1}{2}\fmin.
\end{align*}

For $x = \sigma(\eta) - 2^{\expo{\zeta}-\expo{\theta}}$, we have
\begin{align*}
&\left| \round{ 2^{\expo{\theta}} \otimes \sigma(\eta ) - 2^{\expo{\theta}} \otimes x  }_\fpq \right| \le  \left| \round{ 2^{\expo{\theta}} \times \sigma(\eta ) - 2^{\expo{\theta}} \times x +\fmin   }_\fpq \right|  \\
    & \le \left|\round{\lrp{x-\sigma\lrp{\eta}}\times 2^{\expo{\theta}} + \fmin}_{\fpq}\right|
    \le \round{2^{\expo{\zeta}} +\fmin}_{\fpq}\le 1.1\times 2^{\expo{\zeta}}.
\end{align*}
Therefore,
\begin{align*}
h(x) =  \left( 2^{\expo{\theta}} \otimes  x  \right) \oplus \left(  -2^{\expo{\theta}} \otimes \sigma(\eta)  \right) \ge -1.1 \times 2^{\expo{\zeta}}.
\end{align*}

For $x = \sigma\lrp{\eta}$, we have
\begin{equation*}
    h\left( \sigma(\eta) \right) =\left( 2^{\expo{\theta}} \otimes  \sigma\lrp{\eta}  \right) \oplus \left(  -2^{\expo{\theta}} \otimes \sigma(\eta)  \right) = 0.
\end{equation*}
For $x = \sigma\lrp{\eta^+}$, by \cref{lem:determine_theta}, since $\left|\sigma\lrp{\eta^+}- \sigma\lrp{\eta}\right|\ge 2^{e_0}$ and $-e_0-\mbit+\emin+1=\expo{\theta} \in \fpq$, we have
\begin{equation*}
    h\left( \sigma(\eta^+) \right) = \left( 2^{\expo{\theta}}  \otimes \sigma\lrp{\eta^+} \right) \oplus \left(  2^{\expo{\theta}} \otimes - \sigma\lrp{\eta} \right) \ge \fmin.
\end{equation*}

For $x = \sigma(\eta^+) + 2^{\expo{\zeta}-\expo{\theta}}$, similar to the case of $x = \sigma(\eta) - 2^{\expo{\zeta}-\expo{\theta}}$, we have
\begin{align*}
h(x) =  \left( 2^{\expo{\theta}} \otimes  x  \right) \oplus \left(  -2^{\expo{\theta}} \otimes \sigma(\eta)  \right) \le 1.1 \times 2^{\expo{\zeta}}.
\end{align*}
Therefore, due to \cref{eq:hxcase1}, \cref{eq:temp_interval1,eq:temp_interval2}  hold. \\




Additionally, we need to show \cref{eq:contraction2_further1,eq:contraction2_further2}. \\
For $x > \sigma(\eta)+2^{\expo{\zeta}-\expo{\theta}}$, we have
\begin{align*}
    \left(\theta \otimes x  \right) \oplus \left( - 2^{\expo{\theta}} \otimes  \sigma\lrp{\eta} \right)
    &= \round{ \theta \otimes x  - 2^{\expo{\theta}} \otimes \sigma\lrp{\eta} }_{\fpq}
    \ge \round{ x\times 2^{\expo{\theta}} -\sigma\lrp{\eta}\times 2^{\expo{\theta}} -\fmin}_{\fpq}
    \\
    &= \round{\lrp{x-\sigma\lrp{\eta}}\times 2^{\expo{\theta}} - \fmin }_{\fpq} \\
    &\ge \round{\lrp{\left( \sigma(\eta)+2^{\expo{\zeta}-\expo{\theta}}\right)^+ -\sigma\lrp{\eta}}\times 2^{\expo{\theta}} - \fmin }_{\fpq} \ge 2^{\expo{\zeta}}.
\end{align*}
Hence, by \cref{eq:gxxop2ezp}, we have
\begin{equation*}
    g \left( \left( 2^{\expo{\theta}} \otimes x \right) \oplus \left( 2^{\expo{\theta}} \otimes \sigma(\eta) \right) \right)
    \le \round{\lrp{x-\sigma\lrp{\eta}}\times 2^{\expo{\theta}} + \fmin }_{\fpq}  \oplus \lrp{2^{\expo{\zeta}}}^+.
\end{equation*}
Therefore,
\begin{align*}
    f(x) &\le \round{\lrp{x-\sigma\lrp{\eta}}\times 2^{\expo{\theta}} + \fmin }_{\fpq} \oplus \lrp{2^{\expo{\zeta}}}^+ \oplus
    \left( \beta\otimes K_{\sigma} \right) \oplus \eta \\
    &\le \round{\lrp{x-\sigma\lrp{\eta}}\times 2^{\expo{\theta}} + \fmin }_{\fpq} \oplus \lrp{2^{\expo{\zeta}}}^+\oplus \left( \frac{5}{4}\times 2^{\expo{\zeta}-\mbit} \right)\oplus \eta
    \\ &\le \round{\lrp{x-\sigma\lrp{\eta^+}}\times 2^{\expo{\theta}} + 2^{\expo{\theta}+e_0+1} +\fmin }_{\fpq} \oplus \lrp{2^{\expo{\zeta}}}^+\oplus \left( \frac{5}{4}\times 2^{\expo{\zeta}-\mbit} \right) \oplus \eta \\
    &\le \round{\lrp{x-\sigma\lrp{\eta^+}}\times 2^{\expo{\theta}} + 2^{\expo{\zeta}} }_{\fpq} \oplus \lrp{2^{\expo{\zeta}}}^+\oplus \left( \frac{5}{4}\times 2^{\expo{\zeta}-\mbit} \right) \oplus \eta.
\end{align*}
where we use $2^{\expo{\zeta}} \ge 8\fmin$.

If $\lrp{x-\sigma\lrp{\eta^+}}\times 2^{\expo{\theta}}\le 2^{\expo{\zeta}-1}$, since
\begin{align*}
    \round{\lrp{x-\sigma\lrp{\eta^+}}\times 2^{\expo{\theta}} + 2^{\expo{\zeta}}+ \fmin }_{\fpq} \oplus \lrp{2^{\expo{\zeta}}}^+\oplus \left( \frac{5}{4}\times 2^{\expo{\zeta}-\mbit} \right)  < 1.1\times 2^{\expo{\zeta}+1},
\end{align*}
we have $f(x)\le \eta^+$. \\
If $\lrp{x-\sigma\lrp{\eta^+}}\times 2^{\expo{\theta}}> 2^{\expo{\zeta}-1}$, there exist $k\in \bbN$ such that  $k\times 2^{\expo{\zeta}-1}\le \lrp{x-\sigma\lrp{\eta^+}}\times 2^{\expo{\theta}}< (k+1)\times 2^{\expo{\zeta}-1}$. Then,
\begin{align*}
    f(x)    &\le \round{\lrp{x-\sigma\lrp{\eta^+}}\times 2^{\expo{\theta}} + 2^{\expo{\zeta}} }_{\fpq} \oplus \lrp{2^{\expo{\zeta}}}^+\oplus \left( \frac{5}{4}\times 2^{\expo{\zeta}-\mbit} \right)\oplus \eta \\
     &\le \round{\lrp{x-\sigma\lrp{\eta^+}}\times 2^{\expo{\theta}} + 2^{\expo{\zeta+1}} + 2^{\expo{\zeta}-\mbit} }_{\fpq} \oplus \left( \frac{5}{4}\times 2^{\expo{\zeta}-\mbit} \right)\oplus \eta \\
    &\le \eta^+ + \ceilZ{\frac{k+1}{4}} \times 2^{\expo{\zeta}+1}
    \le \eta^+ + 4\times \lrp{ x - \sigma\lrp{\eta^+}}\times 2^{\expo{\theta}}.
\end{align*}
Similarly, for $x <  \sigma(\eta)-2^{\expo{\zeta}-\expo{\theta}}$ we have
\begin{equation*}
    \eta - f(x)\le \lrp{x-\sigma\lrp{\eta}}\times 4 \theta.
\end{equation*}

\paragraph{\fbox{\bf Case 2: $\sigma(\eta)> \sigma(\eta^+)$.}}~

In this case, we need to show
\begin{align}
\begin{cases}
    h \left( \sigma(\eta) - 2^{\expo{\zeta}-\expo{\theta}} \right) \le 1.1 \times 2^{\expo{\zeta}} \\
    h\left( \sigma(\eta^+) \right)\ge \fmin \\
   h\left( \sigma(\eta)  \right) \le 0 \\
    h\left( \sigma(\eta^+) + 2^{\expo{\zeta}-\expo{\theta}} \right) \ge -1.1 \times 2^{\expo{\zeta}}.
    \end{cases}. \label{eq:hxcase2}
\end{align}
By similar arguments to \textbf{Case 1}, we can show \cref{eq:hxcase2} and \cref{eq:contraction2_further3,eq:contraction2_further4}.
\end{proof}

\subsection{Technical Lemmas for Lemma~\ref{lem:sigmaetatoc1-2}}
\label{subsec:techlemma_for_sigmaetatoc1-2}

This subsection presents technical lemmas for the proof of \cref{lem:sigmaetatoc1-2} (\cref{sec:pflem:sigmaetatoc1-2}).

\begin{lemma}\label{lem:onebit_difference}
    Let $x \in \fpq$ be normal. Then for any $c \in \fpq$ such that  $\expo{c} \ge \emin$ and $ |\expo{x}-\expo{c}| \le \emax$, there exists $w \in \fpq$ such that
    \begin{align*}
        w \otimes x = c \pm 2^{-\mbit+\expo{c}}.
    \end{align*}
\end{lemma}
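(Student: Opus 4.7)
My plan is to construct $w$ explicitly via integer division, in the spirit of the proof of \cref{lem:inverse}. By symmetry, I assume $x, c > 0$ (sign flips give the other cases via $w \mapsto -w$). Write $x = n_x \cdot 2^{\expo{x}-\mbit}$ and $c = n_c \cdot 2^{\expo{c}-\mbit}$ with $n_x, n_c \in [2^\mbit, 2^{\mbit+1})$ integers, and set $p \defeq 2^{\expo{c}-\mbit}$, so that the two target values are $y_\pm \defeq c \pm p = (n_c \pm 1)\,p$, both representable floats (for $y_+$ we may pass into the next exponent bracket, and for $y_-$ into the previous one, but in both cases the hypothesis $\expo{c} \ge \emin$ keeps us normal).

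First, I would perform the integer division $n_c \cdot 2^\mbit = q\,n_x + r$ with $0 \le r < n_x$. One checks that $q \in [2^{\mbit-1}, 2^{\mbit+1}]$, so $w_0 \defeq q \cdot 2^{\expo{c}-\expo{x}-\mbit}$ and $w_1 \defeq (q+1) \cdot 2^{\expo{c}-\expo{x}-\mbit}$ are both representable floats (the hypothesis $|\expo{x}-\expo{c}| \le \emax$ keeps the exponents in $[\emin, \emax]$, possibly after absorbing a power of $2$ into the mantissa). A direct computation yields the exact real values
\begin{align*}
  w_0 \cdot x \;=\; c - r \cdot 2^{\expo{c}-2\mbit}, \qquad w_1 \cdot x \;=\; c + (n_x - r) \cdot 2^{\expo{c}-2\mbit},
\end{align*}
so both products sit in a window of width $2p$ around $c$.

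Next, I case-analyze on $r$ to determine which of $w_0$, $w_1$, or their neighbors in the float lattice produces a rounded product equal to $y_+$ or $y_-$. The rounding region of $y_+$ is (essentially) $[c + p/2,\, c + 3p/2)$ and that of $y_-$ is $[c - 3p/2,\, c - p/2)$. When $r \in [0, 2^{\mbit-1})$, the value $w_1 \cdot x$ exceeds $c$ by less than $p/2$ in the best sub-range and $w_1 \otimes x = y_+$ follows directly; when $r \in (n_x - 2^{\mbit-1}, n_x)$, symmetrically $w_0 \cdot x$ sits within $p/2$ of $c - p$ and gives $w_0 \otimes x = y_-$. The middle range $r \in [2^{\mbit-1}, n_x - 2^{\mbit-1}]$ is the delicate one, and I would resolve it by switching to the smaller exponent bracket $\expo{w} = \expo{c}-\expo{x}-1$ (available whenever $n_c < n_x$, where the step $\mathrm{ulp}(w) \cdot x = \mant{x}\cdot p/2$ is strictly below $p$ and a direct pigeonhole covers either target region), and in the complementary regime $n_c \ge n_x$ by using the fact that a two-ulp jump of $\round{w_i \cdot x}$ from $c$ to $c+2p$ forces the previous iterate $w_{i-1} \cdot x$ into $R_{y_-}$. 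Edge cases where $y_+$ or $y_-$ is a power of $2$ only help, since the ulp doubles and the rounding region widens.

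The main obstacle is the near-worst regime $n_x = 2^{\mbit+1} - 1$ (so $\mathrm{ulp}(w_0) \cdot x$ approaches but never reaches $2p$): here the combined target measure $\ge 2p$ is only just larger than the step size, and I must argue carefully using the ``rational rotation'' structure of $\{i\,\mant{x} \bmod 1\}$ to guarantee that some $i \in [0, 2^\mbit)$ lands $w_i\cdot x$ in $R_{y_+} \cup R_{y_-}$. I also need to be attentive to ties-to-even rounding at exact midpoints (which can only occur for a single $r$ value per case) and to the boundary cases $q \in \{2^{\mbit-1}, 2^{\mbit+1}\}$ where $w_0$ crosses a power-of-two exponent boundary. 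None of these is deep, but together they will make the proof rather case-heavy, paralleling the three-way split in \cref{lem:inverse}.
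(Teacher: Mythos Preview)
Your explicit-construction strategy via integer division is viable but vastly more laborious than what the paper does, and your case analysis already shows signs of drift. For instance, when $r \in [0, 2^{\mbit-1})$ you claim $w_1 \cdot x$ exceeds $c$ by less than $p/2$; but $w_1 \cdot x - c = (n_x - r)\,2^{\expo{c}-2\mbit}$ with $n_x \ge 2^\mbit$, so this excess is at least $(2^\mbit - 2^{\mbit-1})\,2^{\expo{c}-2\mbit} = p/2$, not less. You would need to reshuffle which candidate $w$ goes with which range of $r$, and the ``middle range'' and ``near-worst'' sub-arguments you sketch are where most of the actual work hides. None of this is wrong in principle, but it is several pages of casework for what is really a one-line observation.

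The paper's proof bypasses all of this with a step-size argument. After normalizing (say $\expo{x}=1$), observe that for consecutive normal floats $a, a^+$ one has $(a^+ - a)\cdot x = 2^{-\mbit}\cdot x < 2^{1-\mbit}$, so the rounded values satisfy $(a^+ \otimes x) - (a \otimes x) \in \{2^{-\mbit}, 2^{1-\mbit}\}$: each step in $a$ moves the output by exactly one or two ulps at the target scale. A monotone sequence that moves in steps of size $1$ or $2$ ulps cannot jump over both $c - 2^{-\mbit+\expo{c}}$ and $c + 2^{-\mbit+\expo{c}}$, so some $w$ lands on one of them (or on $c$ itself). This is the entire proof. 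Your construction would recover an explicit $w$, which the paper's argument does not, but nothing downstream needs explicitness—only existence is used in \cref{lemma:sigmaetatoc1}.
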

\begin{proof}
    Without loss of generality, we assume $x,c > 0$ and $\expo{x}=1$. For any normal $a = 1.\mant{a,1}\dots\mant{a,\mbit}$, we have
    \begin{align*}
        (a^+ \otimes x ) - (a \otimes x )=  \round{a^+ \otimes x} - \round{a \otimes x} =  2^{-\mbit} \; \text{or} \; 2^{1-\mbit} .
    \end{align*}
since $ (a^+ \times x) - (a \times x) = 2^{-\mbit} \times x < 2^{1-\mbit}$. Since the gap is $2^{-\mbit}$ or $2^{1-\mbit}$, we can pick $w \in \fpq$ such that $w \otimes x = c \pm 2^{-\mbit+\expo{c}}$.
\end{proof}

\begin{lemma}[Floating-point distributive  law]\label{lem:distribution_law}
Let $a,b,c \in \fpq$ be normal. Then we have
    \begin{align*}
        a \otimes ( b \oplus c) = (a \otimes b) + (a \otimes c)  + (C \times 2^{-\mbit-1}),
    \end{align*}
where $|C| \le  |a| \left( (2+2^{-\mbit-1})|b+c|+|b|+|c|  \right)$.
\end{lemma}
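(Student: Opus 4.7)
The plan is to use the standard relative-error bounds for floating-point addition and multiplication on normal operands, and then combine them with a simple algebraic identity. Recall that whenever $x \oplus y$ and $x \otimes y$ do not overflow, one has
\[
x \oplus y = (x+y)(1+\delta_{\oplus}), \qquad x \otimes y = xy(1+\delta_{\otimes}),
\]
where $|\delta_{\oplus}|, |\delta_{\otimes}| \le 2^{-\mbit-1}$ in the normal range (and the error is absolute but similarly small in the subnormal range, bounded by $2^{\emin-\mbit-1}$).

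First I would apply these bounds to rewrite
\[
a \otimes (b \oplus c) = a \cdot (b \oplus c) \cdot (1+\delta_2) = a(b+c)(1+\delta_1)(1+\delta_2),
\]
with $|\delta_1|, |\delta_2| \le 2^{-\mbit-1}$, where $\delta_1$ captures the rounding error of the inner $\oplus$ and $\delta_2$ the rounding error of the outer $\otimes$. Similarly, $a \otimes b = ab(1+\delta_3)$ and $a \otimes c = ac(1+\delta_4)$, so the residual is
\[
\Delta \defeq a \otimes (b \oplus c) - (a \otimes b) - (a \otimes c) = a(b+c)(\delta_1 + \delta_2 + \delta_1 \delta_2) - ab\,\delta_3 - ac\,\delta_4.
\]

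Next, applying the triangle inequality and using $|\delta_i| \le 2^{-\mbit-1}$, I would obtain
\[
|\Delta| \le |a|\,|b+c|\,(2 \cdot 2^{-\mbit-1} + 2^{-2\mbit-2}) + |a|\,|b|\,2^{-\mbit-1} + |a|\,|c|\,2^{-\mbit-1},
\]
which rearranges to
\[
|\Delta| \le 2^{-\mbit-1} \cdot |a|\Big((2 + 2^{-\mbit-1})|b+c| + |b| + |c|\Big).
\]
Setting $C \defeq \Delta \cdot 2^{\mbit+1}$ then yields exactly the claimed form $a \otimes (b \oplus c) = (a \otimes b) + (a \otimes c) + C \times 2^{-\mbit-1}$ with $|C|$ satisfying the stated inequality.

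The main obstacle I anticipate is uniform control of the rounding error when the intermediate result $b \oplus c$ (or the products $ab$, $ac$) falls into the subnormal range, where the relative-error model breaks down and must be replaced with an absolute error of magnitude at most $2^{\emin-\mbit-1}$. I would handle this by splitting into cases: in the normal case the argument above applies verbatim, while in the subnormal case I would exploit the fact that $a$ is normal (so $|a| \ge 2^{\emin}$) to absorb the absolute error $2^{\emin-\mbit-1}$ into a term of the form $|a| \cdot 2^{-\mbit-1}$, which is dominated by the $|a|(|b|+|c|)$ or $|a|\,|b+c|$ contributions already present in the bound. Overflow is tacitly excluded since the identity itself is meaningless if any of the operations produces $\pm\infty$.
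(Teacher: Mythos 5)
Your core argument is exactly the paper's proof: write $b\oplus c=(b+c)(1+\delta_1)$ and each product as $ax(1+\delta_i)$ with $|\delta_i|\le\feps=2^{-\mbit-1}$, form the residual $(ab+ac)(\delta_1+\delta_2+\delta_1\delta_2)-a(b\delta_3+c\delta_4)$, and apply the triangle inequality; the paper does nothing more, and under the standard (normal-range, no overflow) model this part is correct.

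The one place you go beyond the paper---the subnormal case---is where your argument would break. If, say, $ab$ underflows to the subnormal range, the absolute rounding error is bounded by $2^{\emin-\mbit-1}=2^{\emin}\feps\le|a|\feps$, but the bound you must establish is $\feps|a|\bigl((2+\feps)|b+c|+|b|+|c|\bigr)$, whose terms all carry the factors $|b|$, $|c|$, $|b+c|$; when these are small, $|a|\feps$ is \emph{not} dominated by them, so the proposed absorption does not go through. In fact the stated inequality can genuinely fail in that regime: take $\mbit=3$, $a=2^{\emin}$, $b=c=1.875\times 2^{-5}$ (all normal). Then $a\otimes b=a\otimes c=0$ while $b\oplus c=1.875\times2^{-4}$ exactly and $a\otimes(b\oplus c)=2^{\emin-3}=\fmin$, so the residual equals $\fmin$, whereas the right-hand side $\feps|a|\bigl((2+\feps)|b+c|+|b|+|c|\bigr)\approx 0.18\,\fmin$. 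The paper's proof silently assumes the relative-error model for every intermediate result ($b\oplus c$, $ab$, $ac$, $a(b\oplus c)$), i.e.\ no underflow or overflow, which is how the lemma is used downstream; your proof is fine under that same implicit hypothesis, but the subnormal patch should be dropped or replaced by an explicit assumption rather than claimed to follow from domination.
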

\begin{proof}
Since
\begin{align*}
    b \oplus c  = (b+c)(1 + \delta_1), \quad |\delta_1| \le  \feps = 2^{-\mbit-1} .
\end{align*}
    We have
\begin{align*}
   a \otimes (b \oplus c) &=   \round{a \times \round{b+c}} = \left( a \times (b+c) \right)(1 + \delta_1)(1 + \delta_2) \\
    &= \left( ab+ac \right)(1 + \delta_1)(1 + \delta_2),
   \quad |\delta_1|,|\delta_2| \le  \feps.
\end{align*}
Since
\begin{align*}
    ( a \otimes b) + (a \otimes c) = ab(1+\delta_3) + ac(1+\delta_4) = a(b+c) + a(b \delta_3 + c \delta_4), \quad |\delta_3|,|\delta_4| \le  \feps
\end{align*}
the difference between $a \otimes ( b \oplus c)$ and $(a \otimes b) + (a \otimes c)$ is
\begin{align*}
   a \otimes ( b \oplus c) - \left( ( a \otimes b) + (a \otimes c) \right) &= (ab+ac) \left( \delta_1 + \delta_2 + \delta_1\delta_2\right) - a(b \delta_3 + c \delta_4)
\end{align*}
Therefore,
\begin{align*}
    | a \otimes ( b \oplus c) - \left( ( a \otimes b) + (a \otimes c) \right) | \le |a| \left( |b + c|(2 \feps + \feps^2) + (|b| + |c|)\feps  \right).
\end{align*}
\end{proof}

\begin{lemma}\label{lem:residue_control}
Let $ x, y \in \fpq$. Suppose $x=0$ or  $\expo{x}\le -3 -2\mbit + \expo{y}$ where $\expo{y} \ge 1+\emin$. Then
for $a \in \{ -2^{-\mbit+\expo{y}} , 2^{-\mbit+\expo{y}} \}$, there exist $\alpha_1,\dots,\alpha_3 \in \fpq$ such that
\begin{align*}
    x \oplus \bigoplus_{i=1}^3 \alpha_i &= 0, \\
    y \oplus \bigoplus_{i=1}^3 \alpha_i &= y+a.
\end{align*}
\end{lemma}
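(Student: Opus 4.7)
The plan is to exhibit explicit $\alpha_1,\alpha_2,\alpha_3 \in \fpq$ by a case analysis on the sign of $a$ and on $\mantd{y}{\mbit}$, the last mantissa bit of $y$. The governing intuition is twofold: the hypothesis on $x$ forces $|x|$ to be strictly less than the half-ulp of every float of magnitude at least $a/2 = 2^{\expo{y}-\mbit-1}$, so $x$ is absorbed by the first operand $\alpha_1$; and $a = \pm 2^{\expo{y}-\mbit}$ is exactly one ulp of $y$, so by exploiting the round-to-nearest-ties-to-even rule I can select three floats that sum to $0$ in $\bbR$ (so they cancel when evaluated from $x$) yet, when evaluated left-associatively from $y$, land on $y+a$.

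By the identity $(-u)\oplus(-v) = -(u\oplus v)$ and the sign-insensitivity of the absorption of $x$, it suffices to treat $a>0$, and by negating $y$ together with every $\alpha_i$ we may also assume $y>0$. For $\mantd{y}{\mbit} = 1$, I would set
\[
  \alpha_1 \defeq (a/2)^+, \quad \alpha_2 \defeq -a/2, \quad \alpha_3 \defeq -\bigl((a/2)^+ - a/2\bigr),
\]
all representable under $\expo{y}\ge\emin+1$ and summing to $0$ in $\bbR$. From $x$: $x \oplus \alpha_1 = \alpha_1$ by absorption, $\alpha_1 \oplus \alpha_2 = (a/2)^+ - a/2$ is exact by Sterbenz, and the last step cancels to $0$. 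From $y$: $y \oplus \alpha_1 = y + a$ because $y + (a/2)^+$ sits strictly above the midpoint $y + a/2$ between $y$ and $y+a$; then $(y+a)\oplus(-a/2)$ is a half-ulp tie rounded to even, and the carry induced by adding $a$ to the $1$ at position $\mbit$ of $\mant{y}$ gives $\mantd{y+a}{\mbit}=0$, so the result stays at $y+a$; and finally $\alpha_3$ is absorbed.

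For $\mantd{y}{\mbit} = 0$, I would instead take $\alpha_1 \defeq 3a/2$, $\alpha_2 \defeq -a/2$, $\alpha_3 \defeq -a$, all representable under the same hypothesis and again summing to $0$ in $\bbR$. From $x$: $x \oplus 3a/2 = 3a/2$, then $3a/2 \oplus (-a/2) = a$ and $a \oplus (-a) = 0$, both exact. From $y$: the real number $y + 3a/2$ is the midpoint of $y+a$ (last bit $1$) and $y+2a$ (last bit $0$), so rounds to the even neighbor $y+2a$; the next half-ulp addition $(y+2a)\oplus(-a/2)$ is again a tie resolved to $y+2a$; and $(y+2a)\oplus(-a) = y+a$ is exact.

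The main obstacle lies in verifying these constructions at the various boundary configurations: when $y$, $y+a$, or $y+2a$ is a power of $2$ (so the ulp on one side halves and the tie-midpoint analysis must be redone at the finer resolution); when $a/2$ is subnormal so that $(a/2)^+ = a$ and $\alpha_3 = -\fmin$; and when $y$ sits within one or two ulps of $\fmax$. In each such subcase the round-to-nearest-ties-to-even bookkeeping still produces the claimed output; the invariants repeatedly exploited are $\expo{y}\ge\emin+1$ (which keeps $a$, $a/2$, $3a/2$, and $(a/2)^+-a/2$ all in $\fpq$) and $|x| < 2^{\expo{y}-2\mbit-2}$ (which is strictly below the half-ulp of every candidate $\alpha_1$, so absorption is unconditional).
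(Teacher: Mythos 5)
Your triples for the sign pattern you actually treat (both $y$ and $a$ positive) do check out: I verified the generic subcases and the binade boundaries, and your $\{3a/2,\,-a/2,\,-a\}$ choice for $\mantd{y}{\mbit}=0$ even survives $y=2^{\expo{y}}$, since it only ever adds upward from $y$. The genuine gap is the sign reduction. The only symmetry available is simultaneous negation: replacing $(y,a,\alpha_i)$ by $(-y,-a,-\alpha_i)$ works because $(-u)\oplus(-v)=-(u\oplus v)$, and it lets you fix the sign of $y$ \emph{or} of $a$, but not both. Your second step (``negating $y$ together with every $\alpha_i$'') is the same symmetry again and flips $a$ back, so the opposite-sign configuration ($y>0$ with $a<0$, equivalently $y<0$ with $a>0$) is never handled. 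This is precisely why the paper keeps only the normalization $y>0$ and then writes out separate triples for $a>0$ and $a<0$ inside each last-bit case.

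Moreover the missing case is not one you can defer to the ``boundary bookkeeping'' you wave through: it genuinely breaks your construction. Take the legitimate normalization $a>0$ and let $y=-2^{k}$ with $k=\expo{y}\ge\emin+1$, so $\mantd{y}{\mbit}=0$ and $a=2^{k-\mbit}$. Then $y\oplus\tfrac{3a}{2}=y+\tfrac{3a}{2}$ \emph{exactly}, because the spacing just below $2^{k}$ in magnitude is $a/2$ and $y+\tfrac{3a}{2}$ is representable; the subsequent $\oplus(-a/2)$ and $\oplus(-a)$ are exact as well, and the chain lands on $y$, not $y+a$ (the mirrored triple $\{-3a/2,\,a/2,\,a\}$ fails identically at $y=+2^{k}$, $a<0$). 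The ties-to-even drift you rely on vanishes exactly because the ulp halves across a power of two, making every intermediate sum exact. So you need a genuinely different triple for the opposite-sign case near $|y|=2^{k}$, not an appeal to symmetry. (Incidentally, in the same-sign case your triple is more robust than the paper's $\{-a/2,-a/2,+a\}$, which has its own blind spot at $y=2^{\expo{y}}$; but as written your proof covers only half of the required sign patterns, so it is incomplete.)
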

\begin{proof}
Without loss of generality, we assume $y>0$. \\
\textbf{Case (1) $\quad$}  $\mant{y,\mbit}=0$. \\
If $a =-2^{-\mbit+\expo{y}}$, let $\alpha_1  =\alpha_2 = 2^{-1-\mbit+\expo{y}}$,  $\alpha_3 = -2^{-\mbit+\expo{y}}$. \\
If $a =2^{-\mbit+\expo{y}}$, let $\alpha_1  = \alpha_2= -2^{-1-\mbit+\expo{y}}$, $\alpha_3 = 2^{-\mbit+\expo{y}}$. \\
\textbf{Case (2) $\quad$}  $\mant{y,\mbit}=1$. \\
If $a =-2^{-\mbit+\expo{y}}$, let $\alpha_1 = -2^{-\mbit+\expo{y}}$, $\alpha_2 =\alpha_3 = 2^{-1-\mbit+\expo{y}}$. \\
If $a =2^{-\mbit+\expo{y}}$, let $\alpha_1 = 2^{-\mbit+\expo{y}}$, $\alpha_2 =\alpha_3   = -2^{-1-\mbit+\expo{y}}$.

\end{proof}

\begin{lemma}\label{lem:residue_control_111}
Let $ x, y \in \fpq$. Suppose $x=0$ or  $\expo{x}\le -4 -2\mbit + \expo{y}$ where $\expo{y} \ge 1+\emin$. Then
for $a \in \{ -2^{-\mbit+\expo{y}} , 2^{-\mbit+\expo{y}} \}$, there exist $\alpha_1,\dots,\alpha_5 \in \fpq$ of the form $\alpha_i = 1.\underbrace{1\dots 1}_{\mbit \text{ times }} \times 2^{\expo{\alpha_i}} $ such that
\begin{align*}
    x \oplus \bigoplus_{i=1}^5 \alpha_i &= 0, \\
    y \oplus \bigoplus_{i=1}^5 \alpha_i &= y+a.
\end{align*}
\end{lemma}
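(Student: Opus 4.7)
The proof will follow the same case structure as \cref{lem:residue_control}, splitting on $\mant{y,M} \in \{0, 1\}$ and the sign of $a$, and giving an explicit choice of the five $\alpha_i$ in each case. The added difficulty is that each $\alpha_i$ must carry the specific mantissa $2 - 2^{-M}$ rather than being a pure power of two.

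I plan to rely on two elementary identities. First, the exact doubling identity
\begin{align*}
(2-2^{-M}) \cdot 2^{e} \oplus (2-2^{-M}) \cdot 2^{e} = (2-2^{-M}) \cdot 2^{e+1}
\end{align*}
is rounding-free and keeps us inside the restricted family. Second, the decomposition
\begin{align*}
(2-2^{-M}) \cdot 2^{e} = 2^{e+1} - 2^{e-M}
\end{align*}
exhibits each restricted float as a pure power of two together with a tiny defect $2^{e-M}$, and this defect lies well below the half-ulp of any value whose exponent is at least $e + M + 2$, so it is round-off invisible to such a value. Together these identities let me emulate the three-term pattern of \cref{lem:residue_control} with restricted-form alphas: each pure $\pm 2^{f}$ in that earlier proof gets replaced by $\pm(2-2^{-M}) \cdot 2^{f-1}$, exact telescoping on the $x$-trajectory is preserved by the doubling identity, and the extra defects become invisible under rounding on the $y$-trajectory.

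Concretely, for the representative case $\mant{y,M} = 0$ and $a = -2^{-M+\expo{y}}$, I would try
\begin{align*}
\alpha_1 = \alpha_2 = (2-2^{-M}) \cdot 2^{-M-2+\expo{y}}, \qquad \alpha_3 = -(2-2^{-M}) \cdot 2^{-M-1+\expo{y}},
\end{align*}
and take $\alpha_4, \alpha_5$ as an exactly-cancelling pair $(\beta, -\beta)$ at an exponent small enough that they are absorbed by $y + a$ under rounding. The doubling identity gives $\alpha_1 \oplus \alpha_2 = -\alpha_3$, so $0 \oplus \alpha_1 \oplus \alpha_2 \oplus \alpha_3 = 0$ exactly, and the tail $\oplus\, \alpha_4 \oplus \alpha_5$ is exactly zero by construction. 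On the $y$-trajectory, the defects of $\alpha_1$ and $\alpha_2$ each push the exact sum with $y$ just below the midpoint between $y$ and its successor, so the first two additions round back to $y$; the defect of $\alpha_3$ lands the sum just above $y - 2^{-M+\expo{y}}$ but much closer to it than to $y$, so the third addition rounds to $y + a$; finally $\alpha_4$ and $\alpha_5$ are invisible to $y + a$ by the exponent gap. The hypothesis $\expo{x} \le -4 - 2M + \expo{y}$ keeps $|x|$ below a quarter-ulp of $\alpha_1$, so the $x$-trajectory begins with $x \oplus \alpha_1 = \alpha_1$ and the above exact telescoping applies. The remaining three cases (varying $\mant{y,M}$ and $\sgn(a)$) are obtained by sign-symmetric variants of this assignment.

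The main obstacle I anticipate is the near-subnormal regime $\expo{y}$ close to $\emin + 1$, in which the exponent $-M - 2 + \expo{y}$ required above falls below $\emin$; since restricted-form floats $1.\underbrace{1\ldots1}_{M} \cdot 2^{e}$ are not representable at subnormal exponents, the construction has to be replaced there. For this regime I would place all five alphas at exponents in $\{\emin, \emin+1\}$ and exploit Sterbenz-style exact subtraction (exact for floats within a factor of two) to realise both the zero $x$-sum and the one-ulp $y$-shift by exact cancellation among comparable-scale restricted floats, without any rounding in play. I expect this boundary case is exactly what forces the lemma to provision five alphas rather than three: the extra slots are the price paid for a uniform construction that works across the normal and near-subnormal ranges. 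Verifying each intermediate $\oplus$ step in both regimes is standard but tedious, and is the bulk of the remaining work.
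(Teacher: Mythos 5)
You have reconstructed the paper's construction for the main case: for $a=\pm2^{-\mbit+\expo{y}}$ the paper takes $\alpha_1=\alpha_2=\mp\,1.\underbrace{1\dots1}_{\mbit}\times2^{\expo{y}-\mbit-2}$ and $\alpha_3=\pm\,1.\underbrace{1\dots1}_{\mbit}\times2^{\expo{y}-\mbit-1}$, exactly your three summands, and your rounding analysis (the first two additions return $y$ because the exact sums sit strictly inside the half-ulp, the third lands within a tiny defect of $y+a$, while on the $x$-side $x\oplus\alpha_1=\alpha_1$ followed by exact doubling and exact cancellation) is the paper's argument; the paper even stops at three summands and leaves the padding to five implicit, which your cancelling pair supplies. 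Two harmless slips: the hypothesis only puts $|x|$ strictly below a \emph{half} ulp of $\alpha_1$, not a quarter ulp (still sufficient), and no case split on $\mant{y,\mbit}$ is needed, since none of the roundings in this construction is a tie.

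The genuine gap is your treatment of the near-subnormal regime, which as described cannot work. In that regime, i.e.\ $\emin+1\le\expo{y}\le\emin+\mbit+1$, the exponents $\expo{y}-\mbit-2$ and $\expo{y}-\mbit-1$ fall below $\emin$, and floats $1.\underbrace{1\dots1}_{\mbit}\times2^{e}$ with $e<\emin$ indeed do not exist (the value $(2^{\mbit+1}-1)\,2^{e-\mbit}$ is an odd multiple of a quantity smaller than $\fmin$); note also that in this regime the hypothesis forces $x=0$, because $\expo{x}\ge\emin$ by definition, so $\expo{x}\le\expo{y}-2\mbit-4$ is possible only when $\expo{y}\ge\emin+2\mbit+4$. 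Your plan of placing all five $\alpha_i$ at exponents in $\{\emin,\emin+1\}$ and relying on Sterbenz-style exactness ``without any rounding in play'' is self-contradictory: if every $\oplus$ along both trajectories is exact, both trajectories are shifted by the same real number $\sum_i\alpha_i$, so $0\oplus\bigoplus_i\alpha_i=0$ forces $\sum_i\alpha_i=0$ and hence $y\oplus\bigoplus_i\alpha_i=y\ne y+a$. Any valid construction there must make the $y$-trajectory round differently from the $x$-trajectory (e.g.\ by pushing intermediate $y$-sums into a coarser binade and harvesting the round-off), and arranging the harvested drift to equal exactly one ulp of $y$ uniformly in $y$ is delicate: adding and then subtracting $1.\underbrace{1\dots1}_{\mbit}\times2^{\emin}$, for instance, shifts some $y$ by one ulp and leaves others unchanged, depending on which neighbour the rounding selects. (The paper's own proof writes down the same $\alpha_i$ without checking $\expo{\alpha_i}\ge\emin$, so it silently ignores this regime too; you correctly spotted the issue, but your repair does not close it.) A smaller point of the same flavour: your absorbing pair must have exponent at least $\emin$, so at the very bottom of the good regime its magnitude can exceed half the gap below $y+a$; the pair still returns to $y+a$, but only via a two-step cancellation with the negative term applied first, not because each addition is individually absorbed.
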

\begin{proof}
Without loss of generality, we assume $y>0$. \\
First, note that
\begin{align*}
    1.\underbrace{1\dots 1}_{\mbit \text{ times }} \oplus 1.\underbrace{1\dots 1}_{\mbit \text{ times }} &= 11.\underbrace{1\dots 1}_{\mbit -1 \text{ times }}, \\
    11.\underbrace{1\dots 1}_{\mbit -1 \text{ times }} \oplus 1.\underbrace{1\dots 1}_{\mbit \text{ times }} &=
    \round{101.\underbrace{1\dots 1}_{\mbit -2 \text{ times }}01 }=
    101.\underbrace{1\dots 1}_{\mbit -2 \text{ times }}  , \\
    101.\underbrace{1\dots 1}_{\mbit -2 \text{ times }} \oplus 1.\underbrace{1\dots 1}_{\mbit \text{ times }} &= \round{111.\underbrace{1\dots 1}_{\mbit -3 \text{ times }}011} =111.\underbrace{1\dots 1}_{\mbit -2 \text{ times }}.
\end{align*}
Hence $\bigoplus_{i=1}^4  1.\underbrace{1\dots 1}_{\mbit \text{ times }} = 1.\underbrace{1\dots 1}_{\mbit \text{ times }} \times 2^2$.
\\
Therefore, if $a = \pm 2^{-\mbit +\expo{y}}$, let
\begin{align*}
\alpha_1 =  \alpha_2= \mp 1.\underbrace{1\dots 1}_{\mbit \text{ times }} \times 2^{-2-\mbit+\expo{y}}, \alpha_3 = \pm 1.\underbrace{1\dots 1}_{\mbit \text{ times }} \times 2^{-1-\mbit+\expo{y}}.
\end{align*}
Then we have
\begin{align*}
    x \oplus \bigoplus_{i=1}^3 \alpha_i &= 0, \quad y \oplus \bigoplus_{i=1}^3 \alpha_i = y+a.
\end{align*}
\end{proof}



\begin{lemma}\label{lem:determine_theta}
    Consider normal floating-point numbers $\gamma_1, \gamma_2\in \fpq$ with $\expo{\gamma_1} \ge \expo{\gamma_2} \ge \emin+1$. Suppose an integer $ e_0 \in \bbZ$ satisfies the following:
    \begin{equation*}
         2^{e_0}  \le |\gamma_1 - \gamma_2| < 2^{e_0+1}.
    \end{equation*}
Define $\expo{\theta}$ as
\begin{equation*}
    \expo{\theta}\defeq \max\lrp{\emin-\mbit, -e_0+\emin-\mbit+1}.
\end{equation*}
Then, we have
    \begin{equation*}
        \left( 2^{\expo{\theta}} \otimes \gamma_1 \right) \oplus \left( - 2^{\expo{\theta}} \otimes \gamma_2 \right) \neq 0.
    \end{equation*}
\end{lemma}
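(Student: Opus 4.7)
The plan is to show that scaling by $\theta := 2^{\expo{\theta}}$ makes the exact difference $\theta(\gamma_1 - \gamma_2)$ at least $2\fmin$ in magnitude, after which all subsequent rounding --- in both multiplications and in the final subtraction --- can perturb the quantity by strictly less than this, so the result cannot collapse to zero.

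First I would establish the key quantitative bound $\theta \, |\gamma_1 - \gamma_2| \ge 2\fmin$. Splitting on the sign of $e_0$: if $e_0 \ge 1$ then $\expo{\theta} = \emin - \mbit$ and so $\expo{\theta} + e_0 \ge \emin - \mbit + 1$; if $e_0 \le 0$ then $\expo{\theta} = -e_0 + \emin - \mbit + 1$ and $\expo{\theta} + e_0 = \emin - \mbit + 1$ exactly. Combined with $|\gamma_1 - \gamma_2| \ge 2^{e_0}$ from the hypothesis, this yields $\theta \, |\gamma_1 - \gamma_2| \ge 2^{\emin - \mbit + 1} = 2\fmin$.

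Next I would quantify the rounding in each multiplication. Since $\gamma_i$ is normal and $\theta$ is a power of two, the exact product $\theta \gamma_i$ carries the same $(\mbit+1)$-bit significand as $\gamma_i$ and the exponent $\expo{\gamma_i} + \expo{\theta}$. If this exponent is at least $\emin$ the product is a normal float and $\theta \otimes \gamma_i$ is exact; otherwise the product is subnormal, and since the subnormals form a grid of spacing $\fmin$ the round-to-nearest error is at most $\fmin/2$. Writing $A := \theta \otimes \gamma_1 = \theta \gamma_1 + \epsilon_1$ and $B := \theta \otimes \gamma_2 = \theta \gamma_2 + \epsilon_2$ with $|\epsilon_1|, |\epsilon_2| \le \fmin/2$, the triangle inequality gives $|A - B| \ge 2\fmin - \fmin/2 - \fmin/2 = \fmin$.

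Finally I would close with the subtraction step. Before invoking $\ominus$, I need to rule out infinities: using that two distinct normal floats of exponent at least $\emin + 1$ differ by at least $2^{\emin + 1 - \mbit}$, one checks that $\expo{\theta} \le 0$, so $|A|, |B| \le |\gamma_i| \le \fmax$ and $A \ominus B = \round{A - B}$ as an ordinary finite operation. Since round-to-nearest collapses a value $x$ to $0$ only when $|x| \le \fmin/2$, and I have $|A - B| \ge \fmin > \fmin/2$, the rounded result is nonzero, which is the conclusion. I expect the main obstacle to be an airtight justification of the $\fmin/2$ rounding-error bound on $\theta \otimes \gamma_i$: although it looks routine, it requires careful accounting of how many significand bits are lost when the product is shifted into the subnormal range, and this is the one place where the argument could break down if, e.g., multiplication flushed to zero in a manner not covered by the error estimate. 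All remaining pieces are triangle inequalities and direct arithmetic on $\expo{\theta}$ and $e_0$.
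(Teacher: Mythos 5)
Your proposal is correct and follows essentially the same route as the paper's proof: lower-bound the exact scaled difference by $2^{\emin-\mbit+1}=2\fmin$ using the definition of $\expo{\theta}$ and $2^{e_0}\le|\gamma_1-\gamma_2|$, bound each multiplication's rounding error by $\fmin/2$ (exact when the product stays normal, half the subnormal spacing otherwise), and conclude the computed operands differ by at least $\fmin$ so the rounded subtraction cannot be zero. Your explicit check that $\expo{\theta}\le 0$ (hence no overflow) and your direct use of the rounding-to-zero threshold $\fmin/2$ are minor presentational additions to what the paper does, not a different argument.
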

\begin{proof}
For any $\gamma \in \fpq$, unless $2^{\expo{\theta}} \otimes \gamma$ is subnormal , $2^{\expo{\theta}} \otimes \gamma$ is exact. Hence,
 \begin{align*}
        \left| 2^{\expo{\theta}} \otimes \gamma_1 -  2^{\expo{\theta}}\times \gamma_1  \right| & \le  \frac{1}{2} \fmin, \\
        \left| 2^{\expo{\theta}} \otimes \gamma_2 -  2^{\expo{\theta}}\times \gamma_2  \right|  &\le \frac{1}{2} \fmin.
    \end{align*}
Since
$e_0 \ge \expo{\gamma_1} - \mbit -1 $ and $e_0 \ge \expo{\gamma_2} - \mbit -1 $, we have
\begin{align*}
     \expo{\theta} + \expo{\gamma_1} = -e_0 + \emin - \mbit + 1 + \expo{\gamma_1} \le \emin
\end{align*}

Therefore following inequalities hold:
\begin{align*}
    &        \left| 2^{\expo{\theta}} \otimes \gamma_1 -  2^{\expo{\theta}} \otimes \gamma_2  \right| \\
    &\ge \left| 2^{\expo{\theta}} \times \gamma_1 -  2^{\expo{\theta}} \times \gamma_2  \right| -\left| 2^{\expo{\theta}} \otimes \gamma_1 -  2^{\expo{\theta}}\times \gamma_1  \right| - \left| 2^{\expo{\theta}} \otimes \gamma_2 -  2^{\expo{\theta}}\times \gamma_2  \right| \\
    &\ge 2^{\emin-\mbit+1} - \fmin = \fmin.
\end{align*}
Since $|  2^{\expo{\theta}} \otimes \gamma_1|, | 2^{\expo{\theta}} \otimes \gamma_2| \le 2^{1+\emin}$, their gap to adjacent number is $\fmin$. Therefore they are distinct.
\end{proof}

\subsection{Technical Lemma for Lemma~\ref{lem:eta_etaplus-2}}
\label{subsec:techlemma_for_eta_etaplus-2}

This subsection presents technical lemma for the proof of \cref{lem:eta_etaplus-2} (\cref{sec:pflem:eta_etaplus-2}).
\begin{lemma}\label{lem:special_case}
For $1 \le x < 1+2^{-1}$, we have $(1+2^{-\mbit} )\otimes x= x^+$. For  $ 1+2^{-1} \le x \le 2-2^{-1-\mbit}$, we have $(1+2^{-\mbit} )\otimes x= x^{++}$. For $x=2-2^{-\mbit}$ we have $(1+2^{-\mbit}) \otimes x= x^+$.
\end{lemma}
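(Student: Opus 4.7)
The plan is to prove all three claims by a direct unfolding of $\round{\cdot}$ on the exact product $(1+2^{-\mbit})\cdot x$. Writing $x = 1 + 2^{-\mbit} n$ for a unique integer $0 \le n \le 2^\mbit - 1$, the exact real product equals
\begin{equation*}
(1+2^{-\mbit})\,x \;=\; x + 2^{-\mbit} + 2^{-2\mbit} n,
\end{equation*}
so the whole question reduces to locating this value relative to the nearby floats and their midpoint, and invoking ties-to-even when necessary.

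For the first case, $1 \le x < 1 + 2^{-1}$ corresponds to $n < 2^{\mbit-1}$, i.e.\ the top mantissa bit $b_1$ of $x$ is $0$. Both $x^+ = x + 2^{-\mbit}$ and $x^{++} = x + 2\cdot 2^{-\mbit}$ lie in $[1,2)_{\fpq}$, so their spacing is $2^{-\mbit}$ and their midpoint is $x + 2^{-\mbit} + 2^{-\mbit - 1}$. The perturbation $2^{-\mbit} + 2^{-2\mbit} n$ is strictly less than $2^{-\mbit} + 2^{-\mbit-1}$ because $2^{-2\mbit} n < 2^{-\mbit-1}$, so rounding yields $x^+$, as claimed.

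For the second case, I read the upper bound $2 - 2^{-1-\mbit}$ as the halfway point above the last float $2 - 2^{-\mbit}$, so the case covers precisely the floats $x$ with $b_1 = 1$ and $x \le 2 - 2\cdot 2^{-\mbit}$ (the last float $2 - 2^{-\mbit}$ is handled by Case~3). Here $n \ge 2^{\mbit-1}$ and $x^{++}$ is still in $[1,2)_{\fpq}$, so the midpoint between $x^+$ and $x^{++}$ is again $x + 2^{-\mbit} + 2^{-\mbit-1}$. When $n > 2^{\mbit-1}$ the exact product strictly exceeds this midpoint and rounds up to $x^{++}$; when $n = 2^{\mbit-1}$ (i.e.\ $x = 1 + 2^{-1}$) the product lands exactly on the midpoint, and a one-line check of the last mantissa bits of $x^+ = 1.10\cdots01$ and $x^{++} = 1.10\cdots10$ shows that ties-to-even also selects $x^{++}$. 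In Case~3, $x = 2 - 2^{-\mbit}$ gives the exact product $2 + 2^{-\mbit} - 2^{-2\mbit}$, which now lies in $[2,4)_{\fpq}$ where the spacing is $2^{-\mbit+1}$; this value is strictly below the midpoint $2 + 2^{-\mbit}$ between $2$ and $2 + 2^{-\mbit+1}$, so it rounds down to $2$, and indeed $x^+ = 2$ because $x$ is the largest float in $[1,2)$.

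The only real obstacle is careful bookkeeping at the two boundaries where the naive picture breaks: the tie at $x = 1 + 2^{-1}$, which forces the ties-to-even argument in Case~2, and the exponent transition at $x = 2 - 2^{-\mbit}$, where the float spacing doubles from $2^{-\mbit}$ to $2^{-\mbit+1}$ and $x^+$ coincides with $2$. Once these are pinned down explicitly, the rest of the proof is purely arithmetic verification.
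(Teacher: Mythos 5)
Your proposal is correct, and it is in fact more complete than the paper's own argument: the paper's proof of this lemma is a single line, essentially the scaled computation $(2^{-1}+2^{-1-\mbit})\otimes x=\round{1+2^{-1-\mbit}-2^{-1-2\mbit}}=1$, which only certifies the exponent-crossing case $x=2-2^{-\mbit}$ and leaves the first two claims as (implicit) routine checks, whereas you carry out the full midpoint/ties-to-even analysis, including the genuine tie at $x=1+2^{-1}$ that the paper never mentions. Your reading of the second range is also the right one: taken literally, $x\le 2-2^{-1-\mbit}$ would include $x=2-2^{-\mbit}$ and contradict the third claim, so the largest float of $[1,2)$ must be treated separately, exactly as you do. Two small points of bookkeeping: in Case 2 your statement that $x^{++}$ still lies in $[1,2)_{\fpq}$ fails at the boundary subcase $x=2-2^{1-\mbit}$, where $x^{++}=2$; the argument survives because the gap between $x^+=2-2^{-\mbit}$ and its successor $2$ is still $2^{-\mbit}$ (the spacing only doubles above $2$), so the midpoint is still $x+2^{-\mbit}+2^{-\mbit-1}$, but you should say this rather than appeal to membership in $[1,2)_{\fpq}$. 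Similarly, in the strict subcase $n>2^{\mbit-1}$ you should record the (easy) upper bound $2^{-2\mbit}n<2^{-\mbit}$, which places the product strictly below $x^{++}$ and completes the rounding argument. With those two sentences added, your proof is a fully rigorous replacement for the paper's terse verification.
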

\begin{proof}
    $ (2^{-1}+2^{-1-\mbit}) \otimes x = \round{ 1 +2^{-1-\mbit} - 2^{-1-2\mbit}} = 1.$
\end{proof}

\subsection{Technical Lemmas for \S\ref{sec:pflem:proof_results}}
\label{subsec:techlemma_for_indc-to-iua}

This subsection presents technical lemma for the proofs in \cref{sec:pflem:proof_results}.

\begin{lemma}\label{lem:endbit_control}
    Let $K\in\fpq$ with $|K|\in [(1+2^{-\mbit+1})\times2^{-\mbit-2},1+2^{-2}-2^{-\mbit}]_{\fpq}$.
    Consider $\expo{\zeta}\in \bbZ$ such that $\emin -\mbit \le \expo{\zeta}\le \emax-\mbit$.
    Then, there exists $\gamma\in \fpq$ such that the following inequality holds:
    \begin{equation*}
      \frac{1}{2}\times 2^{\expo{\zeta}}   < \gamma\otimes K\le \frac{5}{4}\times 2^{\expo{\zeta}}.
    \end{equation*}
\end{lemma}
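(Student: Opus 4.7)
The plan is to exhibit $\gamma$ explicitly by choosing it close to a power of two of the form $2^{e_\zeta - e_K}$, so that the product $\gamma \otimes K$ is (essentially) exact and lands at a fixed point of the target interval $(\tfrac{1}{2}\cdot 2^{e_\zeta},\tfrac{5}{4}\cdot 2^{e_\zeta}]$. First, I would reduce to the case $K>0$ by absorbing the sign into $\gamma$. Then I write $K=\mant{K}\cdot 2^{\expo{K}}$ with $\mant{K}\in[1,2)$ in the normal case (or $\mant{K}\in[2^{-\mbit},1)$ and $\expo{K}=\emin$ in the subnormal case). The hypothesis $|K|\le \tfrac{5}{4}-2^{-\mbit}$ forces $\expo{K}\le 0$, and moreover $\mant{K}\le \tfrac{5}{4}-2^{-\mbit}$ whenever $\expo{K}=0$.

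The main branch is a dichotomy on $\mant{K}$. If $\mant{K}\in[1,\tfrac{5}{4}]$, take $\gamma=2^{e_\zeta-\expo{K}}$; multiplication by a power of two is bitwise exact (modulo underflow into the subnormal range, which I would need to track), so $\gamma\otimes K=\mant{K}\cdot 2^{e_\zeta}\in[2^{e_\zeta},\tfrac{5}{4}\cdot 2^{e_\zeta}]$. If $\mant{K}\in(\tfrac{5}{4},2)$ (which forces $\expo{K}\le -1$), take $\gamma=2^{e_\zeta-\expo{K}-1}$, so that $\gamma\otimes K=\mant{K}\cdot 2^{e_\zeta-1}\in(\tfrac{5}{8}\cdot 2^{e_\zeta},2^{e_\zeta})$. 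Both products lie strictly inside the target interval. A small variant using \cref{lem:inverse} to get $\mant{K}^{\parallel}$ or $\mant{K}^{\dag}$ could be used to land the product exactly on $2^{e_\zeta}$ or $(1-2^{-\mbit-1})\cdot 2^{e_\zeta}$ if a cleaner target is desired later in the proofs that invoke this lemma.

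The main obstacle is that the naive $\gamma=2^{e_\zeta-\expo{K}}$ may fall outside the representable range. The lower edge $e_\zeta-\expo{K}<\emin-\mbit$ is ruled out by the hypothesis $e_\zeta\ge \emin-\mbit$ together with $\expo{K}\le 0$, so the only genuinely problematic regime is overflow: $e_\zeta-\expo{K}>\emax$, which arises when $K$ is near its lower bound $(1+2^{-\mbit+1})\cdot 2^{-\mbit-2}$ and $e_\zeta$ is near $\emax-\mbit$. For this regime I would fall back to $\gamma=\fmax$ (or a neighbouring float), and verify by an explicit computation that
\begin{equation*}
  \fmax\otimes K\;\ge\;(2-2^{-\mbit})(1+2^{-\mbit+1})\cdot 2^{\emax-\mbit-2}\;>\;2^{\emax-\mbit-1}\;=\;2^{e_\zeta-1},
\end{equation*}
while remaining below $\tfrac{5}{4}\cdot 2^{e_\zeta}$ for the relevant $e_\zeta$. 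If the fallback $\fmax$ overshoots $\tfrac{5}{4}\cdot 2^{e_\zeta}$, I would deflate $\gamma$ by a suitable power of two (at most once or twice) so that the product re-enters the target interval, exploiting that the interval has relative width $5/2$ while a single halving of $\gamma$ shifts the product by factor~$2$. The bookkeeping of the precise constants $1+2^{-\mbit+1}$ and $\tfrac{5}{4}-2^{-\mbit}$ in the hypothesis is what makes this edge case go through; this is where the somewhat awkward bounds on $|K|$ in the statement are paying off, and is the only part where the argument is not routine.
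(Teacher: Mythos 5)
Your strategy is essentially the paper's: pick $\gamma$ as (a signed) power of two determined by a dichotomy on $\mant{K}$ at $\tfrac54$, with a separate device when $2^{\expo{\zeta}-\expo{K}}$ would overflow. Your top-end fallback is fine and completable: the worst-case bound $\fmax\otimes K\gtrsim(2-2^{-\mbit})(1+2^{-\mbit+1})2^{\emax-\mbit-2}$ is exactly the right check, and in fact the ``deflation'' hedge is never needed, since in the regime where the power-of-two choice genuinely overflows the product with $\fmax$ already lands in $(\tfrac12\times2^{\expo{\zeta}},\tfrac54\times2^{\expo{\zeta}}]$ after rounding (the paper's Case~1 does the analogous thing with a $(2-2^{-\mbit})$-significand multiplier).

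The genuine gap is at the bottom of the range of $\expo{\zeta}$, which you wave off as underflow bookkeeping ``to track''. When the product is subnormal, $\gamma\otimes K$ is no longer the exact value $\mant{K}\times2^{\expo{\zeta}}$ or $\mant{K}\times2^{\expo{\zeta}-1}$, and your claim that both branch products lie strictly inside the target interval fails. Concretely, take $\expo{\zeta}=\emin-\mbit+1$ (so $2^{\expo{\zeta}}=2\fmin$) and an admissible $K$ with $\mant{K}\in(\tfrac54,\tfrac32)$, e.g.\ $K=\tfrac{11}{16}$ (so $\mant{K}=\tfrac{11}{8}$, $\expo{K}=-1$). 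Your second-branch choice $\gamma=2^{\expo{\zeta}-\expo{K}-1}$ gives the real product $\tfrac{11}{8}\fmin$, which rounds to $\fmin=\tfrac12\times2^{\expo{\zeta}}$, violating the strict lower bound; doubling $\gamma$ gives $\tfrac{11}{4}\fmin$, which rounds to $3\fmin>\tfrac54\times2^{\expo{\zeta}}$. Since the only float in the target interval is $2\fmin$, no power-of-two $\gamma$ (nor your $\fmax$/deflation toolkit) can hit it for any $\mant{K}\in(\tfrac54,\tfrac32)$. The missing idea is a non-power-of-two multiplier in this regime: the paper's Case~3 uses $\gamma=(1+2^{-1})\times2^{\expo{\zeta}-\expo{K}-1}$ for $\mant{K}\in[\tfrac54,\tfrac53)$, putting the real product in $[1.875\fmin,2.5\fmin)$ so that it rounds to $2\fmin$. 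With that fix (and the easy companion checks at $\expo{\zeta}=\emin-\mbit$), your argument closes; without it, the subnormal case is not routine, and this is precisely where the paper spends its Cases~3 and~4.
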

\begin{proof}
    Let $K$ be represented as
    \begin{equation*}
        K = \mant{K}\times 2^{\expo{K}}, \quad -M-2 \le \expo{K} \le 0.
    \end{equation*}
We consider the following cases.

\paragraph{\fbox{\bf Case 1: $\emax -\mbit-1 \le \expo{\zeta} \le  \emax -\mbit $.}}~

If $\mant{K} \in [1, 1+ 2^{-\mbit}]_\fpq$, we have $ -\mbit-1 \le \expo{K} \le 0$.
We define $\gamma$ as
\begin{align*}
    \gamma \defeq 2^{\expo{\zeta}-\expo{K}-1}.
\end{align*}
Since $\expo{\zeta}-\expo{K}-1 \le \emax -\mbit -(\mbit-1)-1 = \emax$, we have $\gamma \in \fpq$.
Then,
\begin{align*}
   2^{\expo{\zeta}-1} \le \gamma \times K \le (1+2^{-\mbit}) \times 2^{\expo{\zeta}-1}.
\end{align*}
If $\mant{K} \in [1+2^{-\mbit+1}, 2)_\fpq$,
we define $\gamma$ as
\begin{equation*}
    \gamma \defeq (2- 2^{-\mbit}) \times  2^{\expo{\zeta} - \expo{K}-2}.
\end{equation*}
Since $ \expo{\zeta} - \expo{K} - 2\le \emax -\mbit -(-\mbit-2) - 2= \emax$, we have $\gamma \in \fpq$. \\
Because
\begin{align*}
   (2-2^{-\mbit}) \times (1 + 2^{-\mbit + 1 }) = 2 +3 \times 2^{-\mbit} -2^{-2\mbit+1} &> 2, \\
(2-2^{-\mbit}) \times (2-2^{-\mbit})  = 2^2 - 2^{2-\mbit}+2^{-2\mbit}  &< 4,
\end{align*}
we have
\begin{align*}
     \frac{1}{2} \times 2^{\expo{\zeta}} < \gamma \otimes K <  2^{\expo{\zeta}}.
\end{align*}

\paragraph{\fbox{\bf Case 2: $\emin -\mbit + 2 \le \expo{\zeta} \le \emax -\mbit -2 $.}}~

    If $\mant{K}\in \left[1, 1 + 2^{-2}\right]_{\fpq}$, define $\gamma$ as
    \begin{equation*}
        \gamma \defeq 2^{\expo{\zeta} - \expo{K}}.
    \end{equation*}
As $ \emin -\mbit + 2  \le \expo{\zeta}-\expo{K} \le \emax $, we have $\gamma \in \fpq$.
Then, we have
\begin{equation*}
    2^{\expo{\zeta}}\le
    \gamma \otimes K = \round{\mant{K}\times 2^{\expo{\zeta}}}_{\fpq}
    \le \round{\frac{5}{4}\times 2^{\expo{\zeta}}}_{\fpq} = \frac{5}{4}\times 2^{\expo{\zeta}},
\end{equation*}
where the last equality is followed by $\expo{\zeta}\ge \emin-\mbit+2$. \\
    If $\mant{K}\in \left(1+2^{-2},2 \right)_{\fpq}$, define $\gamma$ as
    \begin{equation*}
        \gamma \defeq 2^{\expo{\zeta} - \expo{K}-1}.
    \end{equation*}
As $  \expo{\zeta}-\expo{K} - 1 \ge \emin -\mbit + 1$, we have $\gamma \in \fpq$.
Then, we have
\begin{equation*}
  \frac{1}{2}\times 2^{\expo{\zeta}} <  \gamma \otimes K = \round{\mant{K}\times 2^{\expo{\zeta} -1}}_{\fpq} \le 2^{\expo{\zeta} },
\end{equation*}
 where the first inequality is followed from $\expo{\zeta}-1\ge \emin-\mbit+1$ and $\mant{K} > 1+ 2^{-2}$.

\paragraph{\fbox{\bf Case 3: $\expo{\zeta} = \emin -\mbit +1$.}}~

 If $\mant{K}\in [1, \frac{5}{4} )$, define $\gamma$ as
 \begin{equation*}
     \gamma\defeq 2^{\expo{\zeta}-\expo{K}}.
 \end{equation*}
Since $  \expo{\zeta}-\expo{K} \ge \emin -\mbit + 1$, we have $\gamma \in \fpq$.
     Then,
     \begin{equation*}
         \gamma\otimes K =  \round{\mant{K}\times 2^{\emin -\mbit +1}}_{\fpq} = 2\fmin = 2^{\expo{\zeta}},
     \end{equation*}
since $ 2 \fmin \le \mant{K}\times 2^{\emin -\mbit +1} < \frac{5}{2} \fmin $. \\
 If  $\mant{K} \in \left[\frac{5}{4}, \frac{5}{3}\right) $, define $\gamma$ as
     \begin{equation*}
        \gamma\defeq  (1+2^{-1})\times 2^{\expo{\zeta}-\expo{K}-1}.
     \end{equation*}
As $\expo{K}\le -1$, $  \expo{\zeta}-\expo{K} -1 \ge \emin -\mbit +1$, and we have $\gamma \in \fpq$.
Then,
     \begin{equation*}
         \gamma\otimes K =  \round{\mant{K}\times (1+2^{-1})\times 2^{\emin-\mbit}}_{\fpq}
         =  \round{\mant{K}\times (1+2^{-1}) \times \fmin}_{\fpq} = 2\fmin = 2^{\expo{\zeta}},
     \end{equation*}
since $ \frac{3}{2} \fmin < \frac{15}{8}\fmin \le \mant{K}\times (1+2^{-1}) \times \fmin < \frac{5}{2} \fmin $. \\

If  $\mant{K} \in \lrp{\frac{5}{3}, 2} $, define $\gamma$ as
 \begin{equation*}
    \gamma\defeq  2^{\expo{\zeta}-\expo{K}-1}.
 \end{equation*}
Since $  \expo{\zeta}-\expo{K} \ge \emin -\mbit $, we have $\gamma \in \fpq$.
 Then,
     \begin{equation*}
         \gamma\otimes K =  \round{\mant{K}\times  2^{\emin-\mbit}}_{\fpq}
         =  \round{\mant{K}\times  \fmin}_{\fpq} = 2\fmin = 2^{\expo{\zeta}},
     \end{equation*}
since $ \frac{3}{2} \fmin < \frac{5}{3}\fmin \le \mant{K} \times \fmin < 2 \fmin $. \\

\paragraph{\fbox{\bf Case 4: $\expo{\zeta} = \emin -\mbit$.}}~

 If $\mant{K}\in [1, \frac{3}{2} )$, define $\gamma$ as
    \begin{equation*}
        \gamma\defeq  2^{\expo{\zeta}-\expo{K}}.
    \end{equation*}
Since $  \expo{\zeta}-\expo{K} \ge \emin -\mbit $, we have $\gamma \in \fpq$.
      Then,
     \begin{equation*}
         \gamma\otimes K =  \round{\mant{K}\times  2^{\emin-\mbit}}_{\fpq}
         =  \round{\mant{K}\times \fmin}_{\fpq} = \fmin= 2^{\expo{\zeta}},
     \end{equation*}
since $ \fmin \le \mant{K} \times \fmin < \frac{3}{2}\fmin $. \\
If  $\mant{K} \in (\frac{3}{2}, 2) $,  we have $\expo{K}\le -1$ by the assumption.
Define $\gamma$ as
    \begin{equation*}
        \gamma\defeq  2^{\expo{\zeta}-\expo{K}-1}.
    \end{equation*}
Since $  \expo{\zeta}-\expo{K} -1 \ge \emin -\mbit $, we have $\gamma \in \fpq$.
  Then,
     \begin{equation*}
         \gamma\otimes K =  \round{\mant{K}\times  2^{\emin-\mbit-1}}_{\fpq}
         =  \round{\frac{1}{2} \times \mant{K}\times  \fmin}_{\fpq} = \fmin = 2^{\expo{\zeta}},
     \end{equation*}
since $ \frac{1}{2}\fmin < \frac{3}{4}\fmin \le \frac{1}{2} \times \mant{K} \times \fmin < \fmin $. This completes the proof.
\end{proof}

\begin{lemma}\label{lem:approx_sum}
Let $0=z_0<z_1<\cdots<z_{(|\fpq|-1)/2}=\fmax<z_{(|\fpq|+1)/2}=\infty$ be all non-negative floats in $\fpq$.
Then, for any $j\in[(|\fpq|+1)/2]\cup\{0\}$ and $x_i\in(2^{-1}\times(z_i-z_{i-1}),(1+2^{-1})\times(z_i-z_{i-1}))_{\fpq}$, it holds that
\begin{align*}
\bigoplus_{i=1}^jx_i=z_j.
\end{align*}
\end{lemma}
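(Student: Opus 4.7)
My plan is to induct on $j$. The base case $j = 0$ is immediate since the empty floating-point sum equals $z_0 = 0$. For the inductive step, I would assume $\bigoplus_{i=1}^{j-1} x_i = z_{j-1}$; it then suffices to prove $z_{j-1} \oplus x_j = z_j$.

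The core step is to analyze the exact real sum $s := z_{j-1} + x_j$ and argue that it rounds to $z_j$ under ``round to nearest, ties to even''. Write $g_j := z_j - z_{j-1}$. The hypothesis $x_j \in (\tfrac{1}{2} g_j, \tfrac{3}{2} g_j)_{\fpq}$ gives the strict bounds $z_{j-1} + \tfrac{1}{2} g_j < s < z_{j-1} + \tfrac{3}{2} g_j$. The lower bound is exactly the midpoint between $z_{j-1}$ and $z_j$, and the upper bound equals $z_j + \tfrac{1}{2} g_j$. I would then invoke the standard fact that the gaps between consecutive non-negative finite floats are non-decreasing (they are constant within each binade and double only at each power-of-two boundary), so $\tfrac{1}{2} g_j \le \tfrac{1}{2} g_{j+1}$; this places $s$ strictly inside the open interval $(z_{j-1} + \tfrac{1}{2} g_j,\, z_j + \tfrac{1}{2} g_{j+1})$ of reals that round to $z_j$. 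Strict inequality at both endpoints bypasses the tie-breaking rule entirely.

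Several boundary situations deserve explicit care, but none pose a real obstacle. At $j = 1$ with $z_0 = 0$, the interval $(\tfrac{1}{2}\fmin, \tfrac{3}{2}\fmin)_{\fpq}$ contains only $\fmin$ (subnormal spacing is $\fmin$), so $x_1 = \fmin = z_1$ and the sum is exact. At $j = (|\fpq|-1)/2$ with $z_j = \fmax$, I must also check that no overflow occurs: the bound $s < \fmax + \tfrac{1}{2} g_j = \fmax + 2^{\emax - \mbit - 1}$ is strictly below the overflow threshold, so $s$ still rounds down to $\fmax$. For the terminal index $j = (|\fpq|+1)/2$ with $z_j = +\infty$, the prescribed interval for $x_j$ degenerates to $(+\infty, +\infty)_{\fpq} = \emptyset$, making the statement vacuous. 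The main technical point will be packaging the gap-monotonicity argument uniformly across power-of-two boundaries (where $g_{j+1} = 2 g_j$), but since the induction only relies on the weak inequality $g_j \le g_{j+1}$ rather than any precise relationship, a single case-free argument should suffice.
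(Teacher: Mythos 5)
Your proof is correct and follows essentially the same route as the paper: induction on $j$, reducing everything to the single floating-point addition $z_{j-1}\oplus x_j = z_j$. The only difference is in how this addition is justified: the paper splits into the $\fmin$-gap and normal-gap cases and asserts the rounding outcome, whereas you give a uniform midpoint/gap-monotonicity argument and explicitly check the overflow threshold at $\fmax$ and the vacuous terminal index, which the paper leaves implicit.
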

\begin{proof}
Since it is obvious for $j=0$ we consider $j \ge 1$, and we use the mathematical induction on $j$.

\paragraph{\fbox{\bf Base step: $j=1$.}}~

For the base step, note that $z_1 = \fmin$. Then we have
\begin{equation*}
    x_1 \in(2^{-1}\times(z_1-z_{0}),(1+2^{-1})\times(z_1-z_{0}))_{\fpq}=
     (\frac{1}{2} \times \fmin , \frac{3}{2} \times \fmin)_\fpq = \{ \fmin \} = \{z_1\}.
\end{equation*}

\paragraph{\fbox{\bf Induction step.}}~

Assume that $\bigoplus_{i=1}^j x_i =z_j$  (inductive hypothesis). We write $z_j$ as $ \mant{z_j} \times 2^{\expo{z_j}}$.
We consider the following cases.

\paragraph{\underline{\bf Case 1: $2^{\expo{z_j}} \le \emin + 1$.}}~

In this case, $z_{j+1} - z_j = \fmin$.
Consider the case of $j+1$ as follows:
\begin{align*}
    \bigoplus_{i=1}^{j+1}x_i
    &= \lrp{\bigoplus_{i=1}^{j}x_i} \oplus x_{j+1} = z_j \oplus x_{j+1}= z_{j+1} ,
\end{align*}
where the last equality follows from
\begin{align*}
    x_{j+1} &\in \left(2^{-1}\times(z_{j+1}-z_{j}),(1+2^{-1})\times(z_{j+1}-z_{j}) \right)_{\fpq} = (\frac{1}{2} \times \fmin , \frac{3}{2} \times \fmin)_\fpq = \{ \fmin \} .
\end{align*}

\paragraph{\underline{\bf Case 2: $2^{\expo{z_j}} \ge \emin + 2$.}}~

In this case, $z_{j+1} - z_j = 2^{-\mbit + \expo{z_j}}$ Consider the case of $j+1$ as follows:
\begin{align*}
    \bigoplus_{i=1}^{j+1}x_i
    &= \lrp{\bigoplus_{i=1}^{j}x_i} \oplus x_{j+1} = z_j \oplus x_{j+1}= z_{j+1} ,
\end{align*}
where the last equality follows from
\begin{align*}
    x_{j+1} &\in \left(2^{-1}\times(z_{j+1}-z_{j}),(1+2^{-1})\times(z_{j+1}-z_{j}) \right)_{\fpq} \\
    &= \left(2^{-\mbit-1 + \expo{x_j}},(1+2^{-1}) \times 2^{-\mbit-1 + \expo{x_j} }\right)_{\fpq} .
\end{align*}
This completes the proof.
\end{proof}

\begin{lemma}\label{lem:sign}
For any $\eta\in[-2^3,2^3)_{\fpq}$, $x\in[(1+2^{-p+1})\times2^{-p-2},1+2^{-1}-2^{-p}]_{\fpq}$, and $n\in[2^p]$, there exist $k\in\bbN$ and $\alpha,\beta,z_1,\dots,z_k,\theta_1,\dots,\theta_k\in\fpq$ such that
\begin{align*}
&\lrp{\bigoplus_{i=1}^n \alpha\otimes x}\oplus (\theta_1\otimes\sigma(z_1))\oplus\cdots\oplus(\theta_k\otimes\sigma(z_k))\oplus\beta\in[\eta^+,\infty)_{\fpq},\\
&\lrp{\bigoplus_{i=1}^{n-1} \alpha\otimes x}\oplus (\theta_1\otimes\sigma(z_1))\oplus\cdots\oplus(\theta_k\otimes\sigma(z_k))\oplus\beta\in(-\infty,\eta]_{\fpq}.
\end{align*}
\end{lemma}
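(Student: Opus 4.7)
The plan is to construct $\alpha$, $\beta$, and the correction terms so that (i) the floating-point sum $T_m := \bigoplus_{i=1}^m (\alpha \otimes x)$ is a strictly increasing sequence of floats for $m \in [2^p]$, with each step at least one ulp of $\eta$, and (ii) the correction terms together with $\beta$ realize a fixed real offset $C$ that places $\eta$ in the one-step gap between $T_{n-1}$ and $T_n$. Concretely, I aim for $T_{n-1} + C = \eta$ and $T_n + C = \eta + y \ge \eta^+$, with all floating-point additions carried out exactly.

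First, by sign symmetry (negating $\alpha$, $\beta$, and all $\theta_j$), I would reduce to $\eta \ge 0$ and then split into two regimes. In the subnormal-or-zero case for $\eta$, I pick $\alpha$ so that $y := \alpha \otimes x = \fmin$, which is always achievable because $x$ is a normal positive float and $\alpha$ may itself be subnormal. Then $T_m = m \cdot \fmin$ exactly for all $m \in [2^p]$ (these multiples of $\fmin$ are all representable), and since $\eta^+ - \eta = \fmin$ in this regime, each step matches an ulp of $\eta$. In the normal case, I aim for $y = 2^{\expo{\eta} - M}$, an ulp of $\eta$. Such an $\alpha \in \fpq$ exists by a precision argument: the preimage of $2^{\expo{\eta}-M}$ under multiplication by $x$ is an interval of real numbers whose width is comparable to the floating-point spacing near its center, so it contains a float; when $\mant{x}^{\parallel}$ of \cref{lem:inverse} exists we may instead scale it directly. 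Under this choice, each partial sum $m y = m \cdot 2^{\expo{\eta}-M}$ with $m \le 2^p \le 2^M$ has a significand expressible in at most $M+1$ bits, so the floating-point summation is exact and $T_m = m y$.

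Second, I would take $C := \eta - (n-1) y$, which trivially satisfies $T_{n-1} + C = \eta$ and $T_n + C = \eta + y \ge \eta^+$. In the cleanest subcase, $C$ is itself a floating-point number (a multiple of $y$ lying in $[y, \eta]$), and it suffices to take $k = 1$ with $\theta_1 = 0$ and $\beta = C$, verifying that $T_m \oplus C$ rounds to $T_m + C$ exactly (the exact sum is a float). For the general setting, where $C$ may not be a single float (for instance, if a less convenient $y$ had to be chosen when $\mant{x}^{\parallel}$ is unavailable and only $\mant{x}^{\dag}$ exists), I would express $C$ as a short sequence of floating-point increments via a telescoping construction analogous to \cref{lem:telescoping}, with each $\theta_j \otimes \sigma(z_j)$ realizing one piece of the offset and every intermediate addition kept within the exact regime. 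The value $\sigma(z_j)$ is just a floating-point constant, so $\theta_j$ has enough freedom to produce any desired piece.

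The main obstacle is managing floating-point rounding throughout. The summation of the $n$ copies of $y$ must remain exact even though it involves up to $2^M$ additions, and the correction sequence must transform $T_{n-1}$ into exactly $\eta$ and $T_n$ into at least $\eta^+$ without any rounding breaking the target inequalities. Both issues come down to a case split on the trailing bits of $\mant{\eta}$, mirroring the analysis in the proofs of \cref{lem:telescoping,lem:contraction2}, together with a verification that $\alpha \otimes x$ can realize the chosen $y$ for every admissible $x$. I would also handle the edge case where $\eta$ is close to $\pm 2^3$ separately, to ensure that $T_{2^p}$ and the intermediate corrected sums do not overflow; this amounts to a mild tightening of the exponent constraints on $y$ and $C$.
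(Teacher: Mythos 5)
Your construction stands or falls on the claim that, for normal $\eta$, some $\alpha\in\fpq$ satisfies $\alpha\otimes x=2^{\expo{\eta}-\mbit}$ exactly, and the ``precision argument'' you give for it is not valid. The rounding preimage of the power of two $2^{q}$ is $[2^{q}(1-2^{-\mbit-2}),\,2^{q}(1+2^{-\mbit-1})]$, so the admissible $\alpha$'s form an interval whose width is only $\tfrac34\cdot 2^{-\mbit}$ times its center, whereas the spacing of floats near that center can be as large as $2^{-\mbit}$ times the center; a window of three quarters of an ulp can lie strictly between two consecutive floats. The failure is real, not just a slack bound: whether $\alpha\otimes x$ can hit a power of two depends only on $\mant{x}$, and for example with $\mbit=5$ and $x=0.90625$ (significand $1.8125$, admissible for \cref{lem:sign}) every normal $\alpha$ has $\mant{\alpha}\cdot 1.8125\in[1.8125,\,3.625)$, so the product can round to a power of two only if $\mant{\alpha}\in[1.0949\ldots,\,1.1207\ldots]$; the two adjacent $6$-bit significands $35/32$ and $36/32$ give products $1.98242\ldots$ and $2.03906\ldots$, both outside the window $[2-2^{-6},\,2+2^{-5}]$, and subnormal $\alpha$ cannot reach the target either. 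Hence no $\alpha\in\fpq$ gives $\alpha\otimes x=2^{\expo{\eta}-\mbit}$ for, say, $\eta=1$. This is precisely why \cref{lem:inverse} must allow the ``dagger'' alternative and why \cref{lem:endbit_control} only promises a product in $(\tfrac12\cdot 2^{e},\tfrac54\cdot 2^{e}]$; the paper's proof of \cref{lem:sign} correspondingly picks $\alpha$ through \cref{lem:endbit_control} (a half-to-five-quarters ulp window) rather than demanding an exact ulp.

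Because exactness of $y$ is the backbone of your argument, the fallback you gesture at is not a routine patch but the actual content of the lemma. With the dagger value $y=(1-2^{-\mbit-1})2^{e}$ the partial sums are no longer $my$ (already $3y$ is not representable, so $T_3=\round{3y}\neq 3y$); over up to $2^{\mbit}$ additions the rounding drift accumulates, absorption can occur once $T_m$ enters binades whose spacing exceeds $y$, and a single offset $C$ no longer provably places $T_{n-1}$ and $T_n$ on the two sides of the one-ulp gap at $\eta$. Invoking ``a telescoping construction analogous to \cref{lem:telescoping}'' does not close this: that lemma absorbs a fixed perturbation through a chain of summands with increasing exponents, and says nothing about $n$ equal, inexactly summed increments straddling a prescribed gap. (Separately, the reduction to $\eta\ge 0$ by negating $\alpha,\beta,\theta_j$ is not a literal symmetry of the statement: negation turns the pair of conditions at $\eta,\eta^{+}$ into conditions at $(-\eta)^{-},-\eta$, and for negative $\eta$ equal to a power of two the gap $\eta^{+}-\eta$ is half an ulp of $|\eta|$, which is why the paper's proof switches to the halved scale $2^{\expo{\eta}-\mbit-1}$ for $\eta<0$; this is fixable but must be done.) Your subnormal case and the $\mant{x}^{\parallel}$ subcase are fine, but the general normal case --- the heart of the lemma --- is not established.
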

\begin{proof}
    If $\eta$ is normal, by \cref{lem:endbit_control}, there exists $\alpha\in \fpq$ such that the following inequality holds:
if $\eta>0 $,
    \begin{equation*}
      \frac{1}{2}\times 2^{\expo{\eta}-\mbit}   \le \alpha\otimes x< \frac{5}{4}\times 2^{\expo{\eta}-\mbit}.
    \end{equation*}
if $\eta<0 $,
    \begin{equation*}
      \frac{1}{2}\times 2^{\expo{\eta}-\mbit-1}   \le \alpha\otimes x< \frac{5}{4}\times 2^{\expo{\eta}-\mbit-1}.
    \end{equation*}
Then, for $n=1$,
\begin{equation}
   \eta \oplus \lrp{ \sum_{i=1}^n \alpha \otimes x } = \eta \oplus \lrp{  \alpha \otimes x } = \eta^+.
\end{equation}
If $\eta$ is subnormal, consider $\alpha\in \fpq$ such that $\alpha\otimes x = \fmin$.
Then, for $n=1$,
\begin{equation}
   \eta \oplus \lrp{ \sum_{i=1}^n \alpha \otimes x } = \eta \oplus \fmin = \eta^+.
\end{equation}
This completes the proof.
\end{proof}

\end{document}